\newcommand{\suma}{\Large$+$}
\newcommand{\con}{\Large $\pplus$}
\tikzset{%
  bullet/.style={ yshift = -.1mm
    , scale = .8},
  extfill/.style={fill = blue!20}, %
  frozenfill/.style={fill = gray, fill opacity = .4}, %
  ext/.style={trapezium, trapezium angle=67.5, draw,
  inner ysep=5pt, outer sep=0pt, extfill,
  minimum height=1.2cm, minimum width=0pt},
  extl/.style={isosceles triangle,
    isosceles triangle apex angle=60, minimum height=1.2cm,
    draw, extfill, minimum width=0pt},
  bias/.style={minimum height=0pt,
    draw, extfill, minimum width=2.4em},
  block/.style    = {draw, thick, rectangle, minimum height = 3em, minimum width = 5em},
  sum/.style      = {draw, circle, node distance = 2cm}, %
  con/.style      = {draw, circle, node distance = 2cm}, %
  input/.style    = {coordinate}, %
  int/.style    = {coordinate}, %
  missing/.style={
    draw=none,
    fill=none,
    yshift = 0.1cm,
    scale = 1.5,
    execute at begin node={\color{black}{$\vdots$}}
  },
  output/.style   = {coordinate}, %
  exts/.style={ext, 
  minimum height=1cm},
  pini/.style = {pin edge={to-,thick,black}}, %
  pino/.style = {pin edge={-to,thick,black}}, %
  blk/.style = {draw, thick, fill=blue!20, minimum size=2.8em, minimum width = 4.5em, text centered, text width = 4.5em},
  theblk/.style = {blk, text width = 5.2em, minimum height = 4em, fill = none, text centered},
}
\newcommand{\RightAngle}[4][6pt]{%
        \draw ($#3!#1!#2$)
        --($ #3!2!($($#3!#1!#2$)!.5!($#3!#1!#4$)$) $)
        --($#3!#1!#4$) ;
        }
\newcommand{\DrawMarkov}[1]
{
    \begin{tikzpicture}[->, >=stealth', auto, semithick, node distance=2.5cm]
    \tikzstyle{every state}=[fill=gray, fill opacity = .2,draw=black,thick,text=black,text opacity = 1, scale=1]
    \tikzstyle{prob}=[text = blue]
    \node[state]    (A)                     {$0$};
    \node[state]    (B)[right of=A]   {$1$};
    \path
    (A) edge[loop left]         node[prob, below, yshift = -.2em]{$1 - #1$}       (A)
    edge[bend left,above]       node[prob]{$#1$}     (B)
    (B) edge[bend left,below]   node[prob]{$#1$}     (A)
    edge[loop right]            node[prob, above, yshift = .2em]{$1 - #1$}      (B);
  \end{tikzpicture}
}
\definecolor{mgray}{HTML}{8A8B8C}
\colorlet{colori}{cyan}
\colorlet{colorb1}{red}
\colorlet{colorb2}{blue}
\colorlet{colorb}{colorb1!75!colorb2}
\colorlet{coloro}{colori!50!colorb}
\colorlet{colorp}{mgray!40}
\colorlet{colorv}{colori}
\colorlet{colorh}{colorb}
\newtheorem{property}{Property} %
\newtheorem{fact}{Fact} %
\newcommand{\qed}{$\blacksquare$}
\newcommand{\ds}{\displaystyle}
\newcommand{\secref}[1]{Section~\ref{#1}}
\newcommand{\appref}[1]{Appendix~\ref{#1}} %
\newcommand{\exref}[1]{Example~\ref{#1}}
\newcommand{\figref}[1]{Figure~\ref{#1}} %
\newcommand{\tabref}[1]{Table~\ref{#1}} %
\newcommand{\lemref}[1]{Lemma~\ref{#1}}
\newcommand{\thmref}[1]{Theorem~\ref{#1}}
\newcommand{\propref}[1]{Proposition~\ref{#1}}
\newcommand{\proptyref}[1]{Property~\ref{#1}}
\newcommand{\corolref}[1]{Corollary~\ref{#1}}
\newcommand{\factref}[1]{Fact~\ref{#1}}
\newcommand{\defref}[1]{Definition~\ref{#1}}
\newcommand{\ev}{{\mathbb{E}}}
\newcommand{\pr}{\mathbb{P}}
\newcommand{\prob}[1]{\pr\left\{#1\right\}}
\newcommand{\E}[1]{\ev\left[#1\right]}
\newcommand{\Ed}[2]{\ev_{#1}\left[#2\right]}
\newcommand{\bEd}[2]{\ev_{#1}\bigl[#2\bigr]}
\newcommand{\cqb}{\cQ_{\mathsf{B}}}
\newcommand{\cC}{\mathcal{C}}
\newcommand{\cD}{\mathcal{D}}
\newcommand{\cF}{\mathcal{F}}
\newcommand{\cG}{\mathcal{G}}
\newcommand{\cN}{\mathcal{N}}
\newcommand{\cU}{\mathcal{U}}
\newcommand{\cP}{\mathcal{P}}
\newcommand{\cQ}{\mathcal{Q}}
\newcommand{\cS}{\mathcal{S}}
\newcommand{\cX}{\mathcal{X}}
\newcommand{\cY}{\mathcal{Y}}
\newcommand{\cZ}{\mathcal{Z}}
\newcommand{\cDh}{\hat{\cD}}
\newcommand{\defeq}{\triangleq}
\newcommand{\eps}{\epsilon}
\newcommand{\T}{\mathrm{T}}
\newcommand{\Bt}{\tilde{B}}
\newcommand{\La}{\Lambda}
\newcommand{\norm}[1]{\|#1\|}
\newcommand{\bnorm}[1]{\bigl\|#1\bigr\|}
\newcommand{\bbnorm}[1]{\left\|#1\right\|}
\newcommand{\ip}[2]{\langle#1,#2\rangle}
\newcommand{\bip}[2]{\bigl\langle#1,#2\bigr\rangle}
\newcommand{\fip}[2]{\left\langle#1,#2\right\rangle}
\newcommand{\kld}[2]{D(#1\|#2)}
\newcommand{\bkld}[2]{D\bigl(#1\big\|#2\bigr)}
\newcommand{\fh}{\hat{f}}
\newcommand{\gh}{\hat{g}}
\newcommand{\xh}{\hat{x}}
\newcommand{\gammah}{\hat{\gamma}}
\newcommand{\pif}[1]{\pi_{#1}}
\newcommand{\pim}{\pi_{\mathsf{M}}}
\newcommand{\pic}{\pi_{\mathsf{C}}}
\newcommand{\pii}{\pi_{\mathsf{I}}}
\newcommand{\pib}{\pi_{\mathsf{B}}}
\newcommand{\pimn}[1]{\pi_{\mathsf{M}_{#1}}}
\newcommand{\picn}[1]{\pi_{\mathsf{C}_{#1}}}
\newcommand{\pibn}[1]{\pi_{\mathsf{B}_{#1}}}
\newcommand{\Pm}{P^{\mathsf{M}}}
\newcommand{\Ph}{\hat{P}}
\newcommand{\Xh}{\hat{X}}
\newcommand{\Yh}{{\hat{Y}}}
\newcommand{\Zh}{\hat{Z}}
\newcommand{\Sh}{\hat{S}}
\newcommand{\ft}{\tilde{f}}
\newcommand{\gt}{\tilde{g}}
\newcommand{\fb}{\bar{f}}
\newcommand{\gb}{\bar{g}}
\newcommand{\hb}{\bar{h}}
\newcommand{\gammab}{\bar{\gamma}}
\newcommand{\kb}{{\bar{k}}}
\newcommand{\Kb}{{\bar{K}}}
\newcommand{\sigmab}{\bar{\sigma}}
\newcommand{\sigmah}{\hat{\sigma}}
\DeclareMathOperator{\corr}{cov}
\DeclareMathOperator{\softmax}{softmax}
\DeclareMathOperator{\sign}{sgn}
\DeclareMathOperator{\relint}{relint}
\DeclareMathOperator{\tr}{tr}
\DeclareMathOperator*{\argmax}{arg\,max}
\DeclareMathOperator*{\argmin}{arg\,min}
\DeclareMathOperator*{\maximize}{maximize}
\DeclareMathOperator*{\minimize}{minimize}
\DeclareMathOperator{\spn}{span}
\DeclareMathAlphabet{\mathbbb}{U}{bbold}{m}{n}  %
\newcommand{\kron}{\mathbbb{1}}
\newcommand{\cst}[1]{\mathtt{1}_{#1}} %
\newcommand{\spnn}[1]{\spn^{#1}}
\newcommand{\cV}{\mathcal{V}}
\newcommand{\cW}{\mathcal{W}}
\newcommand{\orthp}{\mathbin{\boxplus}} %
\newcommand{\orthm}{\mathbin{\boxminus}} %
\newcommand{\proj}[2]{\Pi\left(#1;{#2}\right)}   %
\newcommand{\funcs}{\cF}
\newcommand{\sfuncs}{\cG}
\newcommand{\spc}[1]{\funcs_{#1}}   %
\newcommand{\spct}[1]{\tilde{\funcs}_{#1}}   %
\newcommand{\spctn}[2]{\spct{#1}^{\,#2}}   %
\newcommand{\spcn}[2]{\spc{#1}^{\,#2}}
\newcommand{\sspc}[1]{\sfuncs_{#1}} %
\newcommand{\sspcn}[2]{\sspc{#1}^{\,#2}} %
\newcommand{\sspcs}[2]{\sspc{#1}^{(#2)}} %
\newcommand{\spcs}[2]{\funcs_{#1|#2}}
\newcommand{\spcsn}[3]{\funcs_{#1|#2}^{\,#3}}
\newcommand{\pplus}{\mathbin{+\mkern-10mu+}}
\newcommand{\Unif}{\mathrm{Unif}}
\newcommand{\bs}{\mathsf{b}}
\newcommand{\cb}[2]{\bs_{#1}^{(#2)}} %
\newcommand{\Ds}{\mathscr{D}}
\newcommand{\Hs}{\mathscr{H}} %
\newcommand{\cside}{\cC_{\mathsf{MC}}}
\newcommand{\Pt}{\tilde{P}} %
\newcommand{\MAP}{{\mathrm{MAP}}}
\newcommand{\MLE}{{\mathrm{MLE}}}
\newcommand{\yh}{{\hat{y}}}
\newcommand{\lsym}{\ell} %
\newcommand{\llrrt}[2]{\tilde{\lsym}_{#1;#2}} %
\newcommand{\llrt}[1]{\tilde{\lsym}_{#1}} %
\newcommand{\lpmi}{\mathfrak{i}} %
\newcommand{\lpmih}{\hat{\lpmi}} %
\newcommand{\mdn}[1]{\md_{#1}} %
\newcommand{\mdnl}[1]{\md_{\leq #1}} %
\DeclareMathOperator{\md}{\zeta} %
\DeclareMathOperator{\rank}{rank} %
\DeclareMathOperator{\dom}{dom} %
\newcommand{\sfx}{{\sf x}}
\newcommand{\sfy}{{\sf y}}
\newcommand{\wx}{W_\sfx}
\newcommand{\wy}{W_\sfy}
\newcommand{\dx}{d_\sfx}
\newcommand{\dy}{d_\sfy}
\renewcommand{\vec}[1]{\underline{#1}}
\newcommand{\Hsm}{\Hs_{\rm{m}}}
\newcommand{\cbi}{\cC_{\sf BI}}
\newcommand{\Pest}{P^{(\mathrm{est})}}
\newcommand{\Pml}{P^{(\mathrm{ML})}}
\newcommand{\Lb}{L_{\sf B}}
\newcommand{\lml}{L^{(\mathrm{ML})}}
\newcommand{\Qb}{\bar{Q}}
\newcommand{\pint}{P^{\sf I}}
\newcommand{\pbi}{P^{\sf B}}
\newcommand{\ent}{{\mathsf{ent}}}
\newcommand{\pment}{P^{\ent}}
\newcommand{\opxn}[1]{\tau_{{#1}}}
\newcommand{\bms}{\mathrm{BMS}}
\newcommand{\rsymb}{\star} %
\newcommand{\refine}[1]{#1^\rsymb}
\newcommand{\dl}[1]{k_{#1}}
\newcommand{\ones}[1]{(1)^{#1}}
\newcommand{\cnone}{\{k;\, \spc{\cX} ;\, \spc{\cY}\}} %
\newcommand{\cnest}{\{\ones{k};\, \spc{\cX} ;\, \spc{\cY}\}} %
\newcommand{\cpi}{\cC_{\pi}} %
\newcommand{\cpir}{\refine{\cpi}} %
\newcommand{\csider}{\refine{\cside}} %
\newcommand{\cbir}{\refine{\cbi}} %
\newcommand{\llogs}{\llog_S} %
\newcommand{\llogsv}[1]{\llog_{S}^{(#1)}} %
\newcommand{\He}{\mathtt{He}}
\newcommand{\Lac}{\Sigma} %
\newcommand{\mytitle}{Neural Feature Learning in Function Space}
\ShortHeadings{\mytitle}{Xu and Zheng}
\title{\mytitle\thanks{This work was presented in part at 2022 58th Annual Allerton Conference on Communication, Control, and Computing (Allerton), Monticello, IL, USA, Sep. 2022
 \citep{xu2022multivariate}.}}  
\author{\name Xiangxiang Xu \email xuxx@mit.edu 
       \AND
       \name Lizhong Zheng \email lizhong@mit.edu \\[1em]
       \addr Department of Electrical Engineering and Computer Science\\
       Massachusetts Institute of Technology\\
       Cambridge, MA 02139-4307, USA}
\begin{document}

\editor{Kilian Weinberger}

\maketitle
\begin{abstract}%
  We present a novel framework for learning system design with neural feature extractors. First, we introduce the feature geometry, which unifies statistical dependence and feature representations in a function space equipped with inner products. This connection defines function-space concepts on statistical dependence, such as norms, orthogonal projection, and spectral decomposition, exhibiting clear operational meanings. In particular, we associate each learning setting with a dependence component and formulate learning tasks as finding corresponding feature approximations. We propose a nesting technique, which provides systematic algorithm designs for learning the optimal features from data samples with off-the-shelf network architectures and optimizers. We further demonstrate multivariate learning applications, including conditional inference and multimodal learning, where we present the optimal features and reveal their connections to classical approaches.
\end{abstract}

\begin{keywords}
  Feature geometry, information processing, neural feature learning, nesting technique, multivariate dependence decomposition
\end{keywords}

\section{Introduction}
Learning useful feature representations from data observations is a fundamental task in machine learning. Early developments of such algorithms focused on learning optimal linear features, e.g., linear regression, PCA (Principal Component Analysis) \citep{pearson1901liii}, CCA (Canonical Correlation Analysis) \citep{hotelling1936relations},  and LDA (Linear Discriminant Analysis) \citep{fisher1936use}. The resulting algorithms admit straightforward implementations, with well-established connections between learned features and statistical behaviors of data samples. However, practical learning applications often involve data with complex structures which linear features fail to capture. To address such problems, practitioners employ more complicated feature designs and build inference models based on these features, e.g., 
kernel methods  \citep{cortes1995support, hofmann2008kernel} and deep neural networks \citep{lecun2015deep}. The feature representations serve as the information carrier, capturing useful information from data for subsequent processing.
An illustration of such feature-centric learning systems is shown in \figref{fig:blk}, which consists of two parts:
\begin{enumerate}
\item A \emph{learning} module which generates a collection of features from the data. Data can take different forms, for example, input-output pairs\footnote{In literature, the input variables are sometimes referred to as independent/predictor variables, and the output variables are also referred to as dependent/response/target variables.} or some tuples.  The features can be either specified implicitly, e.g., by a kernel function in kernel methods, or explicitly parameterized as feature extractors, e.g., artificial neural networks. The features are learned via a training process, e.g., optimizing a training objective defined on the features.
\item An \emph{assembling} module which uses learned features to build an inference model or a collection of inference models. The inference models are used to provide information about data. For example, when the data take the form of input-output pairs, a typical inference model provides prediction or estimation of output variables based on the input variables.
 The assembling module determines the relation between features and resulting models, which can also be specified implicitly, e.g., in kernel methods. %
\end{enumerate}
\begin{figure}[!ht]
  \centering
  \resizebox{.7\textwidth}{!}{%
\begin{tikzpicture}[node distance=3.9cm,auto,>=latex']
    \node [theblk, pin={[pini]left:{\sf Data}}] (x) {{\sf Learning }};    %
    \node [theblk] (c) [right of=x, pin={[pino]right:{\sf Model}%
    }] {{ \sf Assembling }};    
    \node [coordinate] (end) [right of=c, node distance=1.2cm]{};
    \path[->, thick] (x) edge node [pos=.5, above] {\sf Feature}  (c); %
\end{tikzpicture}

}
  \caption{Schematic diagram of a general feature-centric learning system } 
  \label{fig:blk}  
\end{figure}
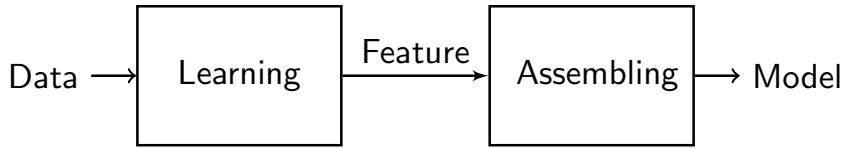

Learning systems are commonly designed with a predetermined assembling module, which allows the whole system to be learned in an end-to-end manner. One representative example of such designs is deep neural networks.
On one hand, this end-to-end characteristic makes it possible to employ large-scale and often over-parametrized neural feature extractors \citep{lecun2015deep, krizhevsky2017imagenet, he2016deep, vaswani2017attention}, which can effectively capture hidden structures in data. On the other hand, the choices of assembling modules and learning objectives are often empirically designed, with design heuristics varying a lot across different tasks. Such heuristic choices make the learned feature extractors hard to interpret and adapt, often viewed as black boxes. In addition, the empirical designs are typically inefficient, especially for multivariate learning problems, e.g., multimodal learning \citep{ngiam2011multimodal}, where there can exist many potentially useful assembling modules and learning objectives to consider.
To address this issue and obtain more principled designs, recent developments have adopted statistical and information-theoretical tools in designing training objectives. The design goal is to guarantee that learned features are informative, i.e., carry useful information for the inference tasks. To this end, a common practice is to incorporate information measures in learning objectives, such as mutual information  \citep{tishby2000information, tishby2015deep} and rate distortion function \citep{chan2022redunet}. However, information measures might not effectively
 characterize the usefulness of features for learning tasks, due to the essentially different information processing natures.
{For example, originally introduced in characterizing the optimal rate of a communication system \citep{shannon1948mathematical}, the mutual information is invariant to bijective transformations on variables---as the amount of information to communicate does not change after such transformations. 
As a consequence, when we consider the mutual information between feature representations and the input/output variables \citep{tishby2015deep}, features up to such bijective transformations, e.g., sigmoid or exponential mappings, are all equivalent and give the same mutual information. This is in contrast to their often highly distinct performances on learning tasks.
Due to this intrinsic discrepancy, it is often impractical to learn features based solely on the information measures. As a compromise, the information measures are often applied to analyze features learned from other objectives \citep{tishby2015deep}, or used as regularization terms in the training objective to facilitate learning \citep{belghazi2018mutual}, where the learned features generally depend on case-by-case design choices. 
}
In this work, we aim to establish a framework for learning feature representations that capture the statistical nature of data, and can be assembled to build different inference models without retraining. In particular, the features are learned to approximate and represent the statistical dependence of data, instead of solving specific inference tasks. To this end, we introduce a
 geometry on function spaces, coined \emph{feature geometry}, which we apply to connect statistical dependence with features. 
This connection allows us to represent statistical dependence by corresponding operations in feature spaces. Specifically, the features are learned by approximating the statistical dependence, and the approximation error measures the amount of information carried by features.  The resulting features capture the statistical dependence and thus are useful for general inference tasks.%

Our main contributions of this work are as follows.
\begin{itemize}
\item We establish a framework for designing learning systems that separate feature learning and feature usages, where the learned features can be assembled to build different inference models without retraining. 
In particular, we introduce the feature geometry, which converts feature learning problems to corresponding function-space operations on statistical dependence. The resulting optimal features capture the statistical dependence of data and can be adapted to different inference tasks.
\item  We propose a nesting technique for decomposing statistical dependence and learning the associated feature representations. The nesting technique provides a systematic approach to construct training objectives and develop learning algorithms, where we can learn optimal features by
employing the state-of-the-art deep learning practices, e.g., network architecture and optimizer designs. %
\item We present the applications of this unified framework in multivariate learning problems, where we design feature learning algorithms to decompose and represent the corresponding multivariate dependence.
As case studies, we consider two learning scenarios: learning for conditional inference and multimodal learning with missing modalities. We investigate the optimal features and demonstrate their relations to classical learning problems, such as maximum likelihood estimation and the maximum entropy principle  \citep{jaynes1957informationa, jaynes1957informationb}.
\end{itemize}

The rest of the paper is organized as follows. In \secref{sec:notation-pre}, we introduce the feature geometry, including operations on feature spaces and  function representations of statistical dependence. In \secref{sec:dep:approx}, we present the learning system design in a bivariate learning setting, where we demonstrate the feature learning algorithm and the design of assembling modules. In \secref{sec:nest}, we introduce the nesting technique, as a systematic approach to learning algorithm design in multivariate learning problems. We then demonstrate the applications of the nesting technique, where we study a conditional inference problem in 
 \secref{sec:side}, and a multimodal learning problem
in \secref{sec:mm}. Then, we present our experimental verification for the proposed learning algorithms in \secref{sec:exp}, where we compare the learned features with the theoretical solutions. Finally, we summarize related works in \secref{sec:relate} and provide some concluding remarks in
\secref{sec:conclusion}.

\section{Notations and Preliminaries} %
\label{sec:notation-pre}
For a random variable $Z$, we use
$\cZ$ to denote the corresponding alphabet, and use $z$ to denote a specific value in $\cZ$. We use $P_Z$ to denote the probability distribution of $Z$. 

For the sake of simplicity, we present our development with finite alphabets and associated discrete random variables. {The corresponding results can be readily extended to general alphabets under certain regularity conditions, which we discuss in \appref{app:notation:con} for completeness.}

\subsection{Feature Geometry}%
\subsubsection{Vector Space}
Given an inner product space with inner product $\ip{\cdot}{\cdot}$ and its induced norm $\norm{\cdot}$, we can define the projection and orthogonal complement as follows. 
\begin{definition}
  \label{def:proj}
  Give a subspace $\cW$ of $\cV$,
we denote the projection of a vector $v \in \cV$ onto $\cW$ by
 \begin{gather}
  \proj{v}{\cW} \defeq \argmin_{w \in \cW} \bbnorm{v - w}^2.
 \end{gather} %
In addition, we use $\cV \orthm \cW$ to denote the orthogonal complement of $\cW$ in $\cV$, viz.,
 $ \cV \orthm \cW \defeq \{v \in \cV \colon \ip{v}{w} = 0\text{ for all }w \in \cW\}$.
\end{definition}
We use ``$\orthp$'' to denote the direct sum of orthogonal subspaces, i.e., $\cV =  \cV_1 \orthp \cV_2$ indicates that $\cV = \cV_1 + \cV_2$ and $\cV_1 \perp \cV_2$. Then we have the following facts.
\begin{fact}
  \label{fact:orthm}
  If $\cV = \cV_1 \orthp \cV_2$, then $\cV_2 = \cV \orthm \cV_1$. In addition, if $\cW$ is a subspace of $\cV$, then $\cV = \cW \orthp (\cV \orthm \cW) $.
\end{fact}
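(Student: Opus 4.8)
The plan is to treat the two assertions separately, proving the first by a pair of set inclusions and the second by verifying the two defining conditions of the orthogonal direct sum ``$\orthp$''.

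For the first claim, I would show $\cV_2 = \cV \orthm \cV_1$ by double inclusion. The inclusion $\cV_2 \subseteq \cV \orthm \cV_1$ is immediate: by the definition of ``$\orthp$'', every $v_2 \in \cV_2$ satisfies $\ip{v_2}{v_1} = 0$ for all $v_1 \in \cV_1$, which is exactly membership in $\cV \orthm \cV_1$. For the reverse inclusion, I would take any $u \in \cV \orthm \cV_1$ and use $\cV = \cV_1 + \cV_2$ to write $u = v_1 + v_2$ with $v_1 \in \cV_1$ and $v_2 \in \cV_2$. Pairing $u$ against $v_1$ and exploiting $\cV_1 \perp \cV_2$ gives $0 = \ip{u}{v_1} = \norm{v_1}^2$, forcing $v_1 = 0$ and hence $u = v_2 \in \cV_2$.

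For the second claim, I would check both conditions of the orthogonal decomposition. Orthogonality, $\cW \perp (\cV \orthm \cW)$, is immediate from the definition of the orthogonal complement. The substantive content is the sum condition $\cV = \cW + (\cV \orthm \cW)$: given $v \in \cV$, I would set $w \defeq \proj{v}{\cW}$ and claim the decomposition $v = w + (v - w)$ with $w \in \cW$ and $v - w \in \cV \orthm \cW$, which yields both conditions at once.

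The main obstacle is establishing that the residual $v - \proj{v}{\cW}$ lies in $\cV \orthm \cW$, i.e. the orthogonality principle for the projection. I would obtain this from the variational characterization in \defref{def:proj}: since $w$ minimizes $\norm{v - w'}^2$ over $w' \in \cW$, for any fixed $w' \in \cW$ and any scalar $t$ the map $t \mapsto \norm{v - w - t w'}^2 = \norm{v - w}^2 - 2t\ip{v - w}{w'} + t^2 \norm{w'}^2$ attains its minimum at $t = 0$; reading off this quadratic forces $\ip{v - w}{w'} = 0$. As $w'$ ranges over $\cW$, this shows $v - w \in \cV \orthm \cW$, completing the sum condition. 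I note that the existence and uniqueness of the minimizer defining $\proj{v}{\cW}$ is guaranteed here because the finite-alphabet assumption makes the ambient space finite-dimensional, so I would not dwell on that point.
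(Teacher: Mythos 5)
Your argument is correct: the double-inclusion proof of the first claim and the variational derivation of the orthogonality principle for the second are both sound, and you rightly avoid circularity by not invoking \factref{fact:proj} (which appears only after this statement). Note that the paper states this as a \emph{Fact} and supplies no proof of its own, treating it as standard inner-product-space material; your write-up is exactly the textbook argument one would insert if a proof were demanded, including the correct observation that positive definiteness of the inner product (from $R \in \relint(\cP^\cZ)$) is what lets you conclude $v_1 = 0$ from $\norm{v_1}^2 = 0$, and that finite-dimensionality settles existence of the minimizer in \defref{def:proj}.
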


\begin{fact}[Orthogonality Principle] Given $v \in \cV$ and a subspace $\cW$ of $\cV$, then
  $w = \proj{v}{\cW}$ if and only if $w \in \cW$ and $v - w \in \cV \orthm \cW$.  In addition, we have %
  $v = \proj{v}{\cW} + \proj{v}{\cV \orthm \cW}$.
  \label{fact:proj}
\end{fact}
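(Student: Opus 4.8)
The plan is to prove the ``if and only if'' characterization first and then derive the decomposition from it together with \factref{fact:orthm}. Throughout I work in the finite-dimensional setting (finite alphabets), so the minimizer defining $\proj{v}{\cW}$ exists; the argument otherwise uses only the inner-product structure.

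First I would establish the easy direction ($\Leftarrow$). Suppose $w \in \cW$ and $v - w \in \cV \orthm \cW$. For an arbitrary $\tilde w \in \cW$, I write $v - \tilde w = (v - w) + (w - \tilde w)$ and note that $w - \tilde w \in \cW$ while $v - w \in \cV \orthm \cW$, so the cross term $\ip{v-w}{w-\tilde w}$ vanishes by the definition of the orthogonal complement. Expanding the norm then yields the Pythagorean identity $\norm{v - \tilde w}^2 = \norm{v - w}^2 + \norm{w - \tilde w}^2 \ge \norm{v - w}^2$, with equality exactly when $\tilde w = w$. Hence $w$ is the unique minimizer, i.e.\ $w = \proj{v}{\cW}$. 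This single computation delivers both minimality and uniqueness, and is the crux of the whole statement.

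For the converse ($\Rightarrow$), I would use a variational perturbation. Let $w = \proj{v}{\cW}$; then $w \in \cW$ by definition, and it remains to show that $v - w$ is orthogonal to every $u \in \cW$. Fixing such a $u$, the point $w + t u$ lies in $\cW$ for every $t \in \reals$, so the scalar function $g(t) = \norm{v - (w + t u)}^2 = \norm{v - w}^2 - 2 t \, \ip{v-w}{u} + t^2 \norm{u}^2$ attains its minimum at $t = 0$. Setting $g'(0) = -2 \, \ip{v-w}{u} = 0$ forces $\ip{v-w}{u} = 0$; since $u \in \cW$ was arbitrary, $v - w \in \cV \orthm \cW$.

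Finally, for the decomposition, I would invoke \factref{fact:orthm}, which gives $\cV = \cW \orthp (\cV \orthm \cW)$ and hence a unique splitting $v = w_1 + w_2$ with $w_1 \in \cW$ and $w_2 \in \cV \orthm \cW$. Applying the characterization just proved shows $w_1 = \proj{v}{\cW}$ directly. For $w_2$, I note that $v - w_2 = w_1 \in \cW$, and that \factref{fact:orthm} also yields $\cW = \cV \orthm (\cV \orthm \cW)$; applying the characterization with the subspace $\cV \orthm \cW$ in place of $\cW$ then gives $w_2 = \proj{v}{\cV \orthm \cW}$, completing $v = \proj{v}{\cW} + \proj{v}{\cV \orthm \cW}$. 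I expect the only point needing care is the bookkeeping in this last step — correctly matching the roles of the two complementary subspaces when citing \factref{fact:orthm} — rather than any genuine analytic difficulty.
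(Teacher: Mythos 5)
Your proof is correct. The paper states this as a \emph{Fact} and provides no proof of its own (it is the classical orthogonality principle for inner product spaces), and your argument is the standard one: the Pythagorean expansion for sufficiency and uniqueness, the variational perturbation $t \mapsto \norm{v - w - tu}^2$ for necessity, and the decomposition obtained by applying the characterization to both $\cW$ and $\cV \orthm \cW$ via \factref{fact:orthm}. The bookkeeping in the last step is handled correctly, including the identification $\cW = \cV \orthm (\cV \orthm \cW)$.
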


\subsubsection{Feature Space}
\label{sec:feature-space}
Given an alphabet $\cZ$, we use $\cP^\cZ$ to denote the collection of probability distributions supported on $\cZ$, and use $\relint(\cP^\cZ)$ to denote the relative interior of $\cP^\cZ$, i.e., the collection of distributions { $P \in \cP^\cZ$ with $P(z) > 0$ for all $z \in \cZ$.}

We use $\spc{\cZ} \defeq \{ \cZ \to \mathbb{R}\}$ to denote the collection of features (functions) on given $\cZ$. {Specifically, we use $\cst{\cZ} \in \spc{\cZ}$ to denote the constant feature that takes value 1, i.e., $\cst{\cZ}(z) \equiv 1$ for all $z \in \cZ$.}
  We define the inner product on $\spc{\cZ}$ as\footnote{Throughout our development, we use $\Ed{Q}{f(Z)}$ to denote the expectation of function $f$ with respect to $Q \in \cP^{\cZ}$, i.e., $\bEd{\Zh\sim Q}{f(\Zh)}$. Specifically, we will also use $\E{f(Z)}$ to represent $\Ed{P_Z}{f(Z)}$, i.e., the expectation with respect to the underlying distribution; similar conventions apply to conditional expectations.
}
 $ \ip{f_1}{f_2} \defeq \Ed{R}{f_1(Z)f_2(Z)}$, where $R \in \relint(\cP^{\cZ})$ is referred to as the \textbf{metric distribution}. {This defines operations, e.g., norm and projection, on $\spc{\cZ}$.} Specifically, we have the induced norm $\norm{f} \defeq \sqrt{\ip{f}{f}}$, and the projection of $f \in \spc{\cZ}$ onto a subspace $\sspc{\cZ}$ of $\spc{\cZ}$, i.e., $\proj{f}{\sspc{\cZ}}$, is defined according to \defref{def:proj}. 
{We also use $\spct{\cZ} \defeq \{f \in \spc{\cZ} \colon \Ed{R}{f(Z)} = 0 \}$ to denote the collection of zero-mean features on $\cZ$, which corresponds to the orthogonal complement of constant features, i.e.,  $\spct{\cZ} = \spc{\cZ} \orthm \spn\{\cst{\cZ}\}$}.

For each $k \geq 1$, we use $\spcn{\cZ}{k} \defeq \left(\spc{\cZ}\right)^k = \{ \cZ \to \mathbb{R}^k\}$ to denote the collection  of $k$-dimensional features.
{For $f \in \spcn{\cZ}{k}$, we use 
 $\spn\{f\} \subset \spc{\cZ}$ to denote the subspace of $\spc{\cZ}$ spanned by all dimensions of $f$}, i.e., $\spn\{f\} = \spn\{f_1, \dots, f_k\}$, 
and use $\proj{f}{\sspc{\cZ}}$ to denote the corresponding projection on each dimension, i.e.,  
$\proj{f}{\sspc{\cZ}} = [\proj{f_1}{\sspc{\cZ}}, \dots, \proj{f_k}{\sspc{\cZ}}]^\T \in \sspcn{\cZ}{k} \defeq (\sspc{\cZ})^{k}$.
For $f_1 \in \spcn{\cZ}{k_1}, f_2 \in \spcn{\cZ}{k_2}$, we denote $\La_{f_1, f_2} \defeq \Ed{R}{f_1(Z)f_2^\T(Z)}\in \mathbb{R}^{k_1\times k_2}$. Specifically, we define $\La_f \defeq \La_{f, f} = \Ed{R}{f(Z)f^\T(Z)}$ for feature $f \in \spcn{\cZ}{k}$.  We also use $[k]$ to denote the set $\{i \in \mathbb{Z}\colon 1\leq i \leq k \}$ with ascending order, and define $ f_{[k]} \in \spcn{\cZ}{k}$ as $f_{[k]}  \colon z \mapsto (f_1(z), \dots, f_{{k}}(z))^\T$
 for given $f_1, \dots, f_k \in \spc{\cZ}$. 

\subsubsection{ {Neural Feature Extractors} }
{Our development will focus on neural feature extractors, i.e., features represented by neural networks. To begin, we consider a neural network of a given architecture with $m$ trainable parameters and $k$-dimensional outputs. Let $\cZ$ denote the domain of input data $Z$, and denote the trainable parameters by}\footnote{We use underlined lowercase letters to denote Euclidean vectors, e.g., $\vec{\theta}$, $\vec{v}$, $\vec{b}$, to distinguish them from scalars.} {$\vec{\theta} \in \mathbb{R}^m$. Then, for each $\vec{\theta} \in \mathbb{R}^m$, we can denote the associated neural feature as $f^{(\vec{\theta})} \in \spcn{\cZ}{k}$. }

{For the sake of simplicity, our discussions focus on the ideal case where the neural network has sufficient expressive power}\footnote{Such idealized network can be constructed by using the universal approximation theorem; see, e.g.,\cite{cybenko1989approximation}, for detailed discussions.}, {i.e., each feature in $\spcn{\cZ}{k}$ can be well-approximated by a large $m$ and some $\vec{\theta} \in \mathbb{R}^m$.  Under this idealized assumption, optimizing over network parameters $\vec{\theta}$ is equivalent to finding the optimal $f \in \spcn{\cZ}{k}$. Therefore, we will omit the network parameter $\vec{\theta}$ and use $f$ to denote such neural feature extractor, which we represented as a trapezoid block}\footnote{The literature sometimes use the longer (shorter) bases to  indicate the larger (smaller) dimensions of input and output. However, our schematic representation does not make this distinction.}, as shown in \figref{fig:nfe:f}.

\begin{figure}[!ht]
  \centering    
  \subfloat[Feature Extractor]{\raisebox{.8em}{
      \resizebox{.24\textwidth}{!}{
\begin{tikzpicture}[auto, thick, node distance=2cm, >=stealth]

  \draw 
  node [ext, rotate = -90] (f) {}
  node [input, left of = f, node distance=1.2cm] (z)  {}
  node [xshift = -0.5cm] at (z) {\large $z$}
  node [xshift = -0.7mm] at (z) {\Large \textopenbullet}
  node at (f) {\Large$f$};
  \draw [->] (z) -- (f);
  \draw[->] (f) -- ++ (1.2, 0) node [right] {\large $f(z)$};

\end{tikzpicture}

       }
    }\label{fig:nfe:f}
  }%
  \hspace{.07\textwidth}
  \subfloat[Linear Layer]{\resizebox{.25\textwidth}{!}{%
\begin{tikzpicture}[auto, thick, node distance=2cm, >=stealth]

  \draw %
  node [extl, node distance=2.5cm] (W) {\Large $W$}
  node [bias, below of = W, node distance=1.4cm] (b) {\Large $\vec{b}$}
  node [input, left of = W, node distance=1.2cm] (v)  {}
  node [xshift = -0.5cm] at (v) {\large $\vec{v}$}
  node [xshift = -0.7mm] at (v) {\Large \textopenbullet};
  \draw [->] (v) -- (W);
  \draw [-] (W) -- (b);
  \draw[->] (W) -- ++ (1.5, 0) node [right] {\large $W\vec{v}+\vec{b}$}; %

\end{tikzpicture}

}
    \label{fig:nfe:wb}
}
  \hspace{.07\textwidth}
  \subfloat[Feature Subspace]{
    \raisebox{.8em}{      \resizebox{.27\textwidth}{!}{%
\begin{tikzpicture}[auto, thick, node distance=2cm, >=stealth]

  \draw 
  node [extl, node distance=2.5cm] (W) {\Large $W$}
  node [ext, frozenfill, left of = W, rotate = -90] (f) {}
  node [input, left of = f, node distance=1.2cm] (z)  {}
  node [xshift = -0.5cm] at (z) {\large $Z$}
  node [xshift = -0.7mm] at (z) {\Large \textopenbullet}
  node at (f) {\Large $\phi$};
  \draw [->] (z) -- (f);
  \draw[->] (f) -- (W);
  \draw[->] (W) -- ++ (1.5, 0) node [right] {};%

\end{tikzpicture}

}
    }
    \label{fig:nfe:sspc}
  }
  \label{fig:nfe}
  \caption{Schematic representations
 of neural feature extractors. 
\protect\subref{fig:nfe:f}: a general feature extractor $f \in \spcn{\cZ}{k}$; \protect\subref{fig:nfe:wb}: a linear layer with weight matrix $W$ and bias $\vec{b}$; \protect\subref{fig:nfe:sspc}:
the composition of feature extractor blocks, where each dimension of the output lies in the feature subspace $\spn\{\phi\}$.
}
\end{figure}

{ The linear layer is an important building block in neural feature extractors, which performs an affine transformation on the input, as shown in \figref{fig:nfe:wb}. In particular, a linear layer with input dimension $d$ and output dimension $k$ is specified by a weight matrix $W \in \mathbb{R}^{k \times d}$ and bias vector $\vec{b} \in \mathbb{R}^k$, which outputs $W\vec{v} + \vec{b}$ for an input $\vec{v} \in \mathbb{R}^d$. We can use linear layers to construct feature subspaces, as shown in \figref{fig:nfe:sspc}, where we have considered a given $d$-dimensional feature $\phi = (\phi_1, \dots, \phi_d)^\T \in \spcn{\cX}{d}$ and a linear layer without the bias term. Then, the collection of output features by varying weight matrix $W$ is}\footnote{If we also consider the bias term $\vec{b}$, the feature subspace will become $\spnn{k}\{\cst{\cZ}, \phi_1, \dots, \phi_d\}$, where $\cst{\cZ}$ represents the constant function $\cZ \ni z \mapsto 1$.}
{$\{W \phi\colon W \in \mathbb{R}^{k\times d} \} = \spnn{k}\{\phi\}$. 
Note that since the generated features are restricted to a subspace, the cascaded feature extractor has restricted expressive power, which can generally affect the feature learning. We will discuss such effects in our later developments (cf. \secref{sec:h:const}).
}

\subsubsection{Joint Functions}
Given alphabets $\cX, \cY$ and a metric distribution $R_{X, Y} \in \relint(\cP^{\cX \times \cY})$,  $\spc{\cX \times \cY}$ is composed of all joint functions of $x$ and $y$. In particular, 
for given $f \in \spc{\cX}$, $g \in \spc{\cY}$, we use $f \otimes g$ to denote their product $\left((x, y) \mapsto f(x) \cdot g(y)\right) \in \spc{\cX \times \cY}$, and refer to such functions
as \emph{product functions}. For each %
product function $\gamma \in \spc{\cX \times \cY}$, we can {find
 $\sigma = \norm{\gamma} \geq 0$, and $f \in \spc{\cX}, g \in \spc{\cY}$ with $\norm{f} = \norm{g} = 1$, such that
}
\begin{align}
 \gamma = \sigma \cdot (f \otimes g).\label{eq:form:rank1}  
\end{align}
We refer to \eqref{eq:form:rank1} as the \textbf{standard form} of product functions.

In addition, for given $f = (f_1, \dots, f_k)^\T \in \spcn{\cX}{k}$ and $g = (g_1, \dots, g_k)^\T \in \spcn{\cY}{k}$, we denote $ f \otimes g \defeq \sum_{i = 1}^k f_i \otimes g_i$. 
For two subspaces $\sspc{\cX}$ and $\sspc{\cY}$ of $\spc{\cX}$ and $\spc{\cY}$, respectively, we denote the tensor product of $\sspc{\cX}$ and $\sspc{\cY}$ as   $\sspc{\cX} \otimes \sspc{\cY} \defeq \spn\{f \otimes g\colon f \in \sspc{\cX}, g \in \sspc{\cY}\}$.

Note that by extending each $f = (x \mapsto f(x)) \in \spc{\cX}$ to $\left((x, y) \mapsto f(x)\right) \in \spc{\cX \times \cY}$, %
 $\spc{\cX}$ becomes %
 a subspace of $\spc{\cX \times \cY}$, with the metric distribution being the marginal distribution $R_X$ of $R_{X, Y}$. We then denote the orthogonal complement of $\spc{\cX}$ in  $\spc{\cX \times \cY}$  as
 \begin{align}
   \spcs{\cY}{\cX} \defeq \spc{\cX \times \cY} \orthm \spc{\cX}.\label{eq:spcs:def}   
 \end{align}
 We establish a correspondence between the distribution space $\cP^\cZ$ and the feature space $\spc{\cZ}$ by the density ratio function.
\begin{definition}
  Given a metric distribution $R \in \relint(\cP^\cZ)$, for each
  $P \in \cP^\cZ$, we define the (centered) density ratio function $\llrrt{P}{R} \in \spc{\cZ}$ as
  \begin{align*}
    \llrrt{P}{R}(z) \defeq \frac{P(z) - R(z)}{R(z)}, \quad \text{for all }z \in \cZ.
  \end{align*}
\end{definition}%
It is easy to verify that $\llrrt{P}{R}$ has mean zero
, i.e., $\llrrt{P}{R} \in \spct{\cZ}$. We will simply refer to $\llrrt{P}{R}$ as the density ratio or likelihood ratio and use $\llrt{P}$ to denote $\llrrt{P}{R}$ when there is no ambiguity about the metric distribution $R$.

In particular,  given random variables $X$ and $Y$ with the joint distribution $P_{X, Y}$, we denote the density ratio $ \llrrt{P_{X, Y}}{P_{X}P_Y}$ by $\lpmi_{X; Y}$, i.e.,
\begin{align}
  \lpmi_{X; Y}(x, y) = \frac{P_{X, Y}(x, y) - P_X(x)P_Y(y)}{P_X(x)P_Y(y)}, \quad \text{for all $x\in \cX, y \in \cY$.}
  \label{eq:cdk:def}
\end{align}
We refer to $\lpmi_{X; Y}$ as the \textbf{canonical dependence kernel} (CDK) function, which connects the $(X; Y)$ dependence with $\spc{\cX \times \cY}$. 

With the feature geometry, we can associate function-space concepts with corresponding operations on features, which we summarize as follows. %

\begin{property}
\label{propty:exp}
Consider the feature geometry on $\spc{\cX \times \cY}$ defined by the metric distribution $R_{X, Y} = P_{X}P_{Y}$. Then,  we have
 $\ip{f_1 \otimes g_1}{f_2 \otimes g_2}
    = \Ed{P_X}{f_1(X)f_2(X)} \cdot \Ed{P_Y}{g_1(Y)g_2(Y)}$ for given $f_1, f_2 \in \spc{\cX}, g_1, g_2 \in \spc{\cY}$. In addition, 
  For any $k \geq 1$ and $f \in \spcn{\cX}{k}, g \in \spcn{\cY}{k}$, we have
    \begin{gather}
       \proj{f}{\spn\{\cst{\cX}\}} = \Ed{P_X}{f(X)}, \quad \proj{f}{\spct{\cX}} = f - \Ed{P_X}{f(X)}, \\
      \ip{\lpmi_{X; Y}}{f \otimes g} = \Ed{P_{X, Y}}{f^{\T}(X)g(Y)}  - \left(\Ed{P_X}{f(X)}\right)^{\T}\Ed{P_Y}{g(Y)},\label{eq:exp:cdk}
\\
      \norm{f\otimes g}^2 = \tr(\La_f  \La_g),
      \label{eq:exp}      
    \end{gather}
  where $\La_f = \Ed{P_X}{f(X)f^\T(X)}, \La_g = \Ed{P_Y}{g(Y)g^\T(Y)}$.
\end{property}

\subsubsection{Feature Geometry on Data Samples}
\label{sec:geo:data}
In practice, the variables of interest typically have unknown and complicated
probability distributions, with only data samples available for learning. 
We can similarly define the feature geometry on data samples by considering the corresponding empirical distributions.

To begin, given a dataset consisting of $n$ samples of $Z$, 
represented as $\{z_i\}_{i = 1}^n$, we denote the corresponding empirical distribution $\Ph_Z \in \cP^\cZ$ by
\begin{align}
  \Ph_Z(z) \defeq \frac1n\sum_{i = 1}^n\kron_{\{z_i = z\}}, \quad \text{for all }z \in \cZ,
  \label{eq:emp:z}
\end{align}
where $\kron_{\{\cdot\}}$ denotes the indicator function. Then, for any function $f$ of $Z$, we have
$   \Ed{\Ph_Z}{f(Z)} = \sum_{z \in \cZ} \Ph_Z(z) \cdot f(z) = \frac{1}{n}\sum_{i = 1}^n f(z_i)$,
which is the empirical average of $f$ over the dataset.

Specifically, given a dataset $\Ds = \{(x_i, y_i)\}_{i=1}^n$ with $n$ sample pairs  of $(X, Y)$, the corresponding joint empirical distribution $\Ph_{X, Y} \in \cP^{\cX \times \cY}$ is given by
\begin{align}
  \Ph_{X, Y}(x, y)=\frac1n\sum_{i = 1}^n\kron_{\{x_i = x, y_i = y\}}, \quad \text{for all }x \in \cX, y \in \cY.
  \label{eq:emp:dist:xy}
\end{align}
It is easily verified that the marginal distributions of $\Ph_{X, Y}$ are the empirical distributions $\Ph_X$ of $\{x_i\}_{i =1}^n$ and $\Ph_Y$ of  $\{y_i\}_{i =1}^n$. Therefore, we can express the CDK function associated with the empirical distribution $\Ph_{X, Y}$ as [cf. \eqref{eq:cdk:def}]
 \begin{align}
   \lpmih_{X; Y}(x, y) = \frac{\Ph_{X, Y}(x, y) - \Ph_X(x)\Ph_Y(y)}{\Ph_X(x)\Ph_Y(y)}, \quad \text{for all $x\in \cX, y \in \cY$}.
   \label{eq:lpmih:def}
\end{align}

As a result, given the dataset $\Ds$, we can define the associated feature geometry on $\spc{\cX \times \cY}$ by using the metric distribution $R_{X, Y} = \Ph_{X}\Ph_{Y}$. From \proptyref{propty:exp}, we can evaluate the induced geometric quantities over data samples in $\Ds$, by replacing the distributions by the corresponding empirical distributions, and applying the empirical CDK function $\lpmih_{X; Y}$ in \eqref{eq:exp:cdk}. 

Such characteristic allows us to discuss theoretical properties and algorithmic implementations
in a unified notation. In our later developments, we will state both theoretical results and algorithms designs using the same notation of %
distribution, say, $P_{X, Y}$. This $P_{X, Y}$ corresponds to the underlying distribution
in theoretical analyses, and represents the corresponding empirical distribution in algorithm implementations. %

\begin{remark}
  Note that for finite $\cZ$, $\spc{\cZ}$ is a space with a finite dimension $|\cZ|$. Therefore, we can choose a basis of $\spc{\cZ}$ and represent each feature as corresponding coefficient vectors in Euclidean space $\mathbb{R}^{|\cZ|}$. Similarly, we can represent operators on function spaces as matrices. Such conventions have been introduced and adopted in previous works, e.g., \citep{HuangMWZ2024, xu2022information}, which we summarize in \appref{app:notation:finite} for completeness and comparisons.
\end{remark}

\subsection{Modal Decomposition} %
{We then investigate a representation of joint functions in $\spc{\cX \times \cY}$ by using features in $\spc{\cX}$ and $\spc{\cY}$. This representation will be our basic tool to connect statistical dependence with feature spaces. Throughout this section, we set the metric distribution of $\spc{\cX \times \cY}$ to the product form}\footnote{Under this choice, the norm on $\spc{\cX \times \cY}$ corresponds to the Frobenius (Hilbert--Schmidt) norm.}, {i.e., $R_{X, Y} = R_{X}R_{Y}$. This induced metric distributions on $\spc{\cX}$ and $\spc{\cY}$ are $R_X$ and $R_Y$, respectively.
}

\subsubsection{Definitions and Properties}%

We use the operator $\md$ on $\spc{\cX \times \cY}$ to denote the optimal rank-1 approximation, i.e., 
\begin{align}
  \md(\gamma) \defeq \argmin_{\substack{\gamma'\colon \gamma' = f\otimes g\\
  f \in \spc{\cX}, g \in \spc{\cY }}}
  \norm{\gamma - \gamma'},\quad  \text{for all $\gamma \in \spc{\cX \times \cY}$}.
  \label{eq:def:md}
\end{align}
In addition, for all $k \geq 1$, we define the operator $\mdn{k}$ as  
$\mdn{1} \defeq \md$ and
 \begin{align}
  \qquad \mdn{k}(\gamma) \defeq \md\left(\gamma - \sum_{i = 1}^{k- 1}\mdn{i}(\gamma)\right),
 \end{align}
which we refer to as the $k$-th mode of $\gamma$. Then, we use
\begin{align}
\mdnl{k}(\gamma) \defeq \sum_{i = 1}^k \mdn{i}(\gamma) \qquad \text{and}  \qquad r_k(\gamma) \defeq \gamma -  \mdnl{k}(\gamma)\label{eq:mdnl:r:def}  
\end{align}
 to denote the superposition of the top $k$ modes and the corresponding remainder, respectively.

  \begin{remark}
    If the minimization problem \eqref{eq:def:md} has multiple solutions, the corresponding $\md(\gamma)$ (and $\mdn{k}(\gamma)$) might not be unique. In this case,  $\mdn{1}(\gamma), \dots \mdn{k}(\gamma)$ represent one of such solutions obtained from the sequential rank-1 approximations.
  \end{remark}

  For each $\gamma \in \spc{\cX \times \cY}$, we define the rank of $\gamma$ as
  $\rank(\gamma) \defeq \inf \{k \geq 0\colon \norm{r_k(\gamma)} = 0\}$.
{Suppose $K \defeq \rank(\gamma)$, then from the definition \eqref{eq:mdnl:r:def}, we have $\gamma = \mdnl{K}(\gamma) + r_K(\gamma) = \mdnl{K}(\gamma)$, i.e., 
  \begin{align}
    \gamma = \sum_{i = 1}^K \mdn{i}(\gamma).
    \label{eq:modal:gamma}
  \end{align}
For each $i \in [K]$, let us denote the standard form [cf. \eqref{eq:form:rank1}] of $\mdn{i}(\gamma)$ by
 $\mdn{i}(\gamma) = \sigma_i \left(f_i^* \otimes g_i^*\right)$.}
 Therefore, from \eqref{eq:modal:gamma} we obtain
\begin{align}
    \gamma(x, y)
    = \sum_{i = 1}^{K} \sigma_i \cdot f_i^*(x) \cdot g_i^*(y), \quad \text{for all $x \in \cX$, $y \in \cY$},
    \label{eq:modal:dcmp}
  \end{align}
  where $ \norm{f_i^*} = \norm{g_i^*} = 1$ and $\sigma_i = \norm{\mdn{i}(\gamma)}$. We refer to \eqref{eq:modal:dcmp} as the \emph{modal decomposition} of $\gamma$, which is a special case of Schmidt decomposition \citep{schmidt1907theorie, ekert1995entangled}, or
 singular value decomposition (SVD) in function space. %
 We list several useful characterizations as follows. %

\begin{fact}
  \label{fact:md:ortho}
  Let $K \defeq \rank(\gamma)$, then $\sigma_1 \geq \sigma_2 \geq \dots \geq \sigma_K > 0$. In addition, for all $i, j = 1, \dots, K$, we have\footnote{
    We adopt the Kronecker delta notation
    \begin{align*}
      { \delta _{ij}={
      \begin{cases}
        0&{\text{if }}i\neq j,\\
        1&{\text{if }}i=j.
      \end{cases}}}
    \end{align*}
  } $\ip{f^*_i}{f^*_j} = \ip{g^*_i}{g^*_j} = \delta_{ij}$, and
  \begin{align*}
    \ip{\mdn{i}(\gamma)}{\mdn{j}(\gamma)} = 0, \qquad \text{if }i < j,\\
    \ip{\mdn{i}(\gamma)}{r_j(\gamma)} = 0, \qquad \text{if }i \leq j,
  \end{align*}
  {where $r_j(\cdot)$ is as defined in \eqref{eq:mdnl:r:def}.  }
\end{fact}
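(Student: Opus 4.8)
The plan is to build the entire statement on one workhorse: the orthogonality principle (\factref{fact:proj}) applied to the best rank-one approximation. First I would record the key lemma that for any $\beta \in \spc{\cX \times \cY}$ with standard form $\md(\beta) = \sigma\,(f^* \otimes g^*)$, the residual $\beta - \md(\beta)$ is orthogonal to both coordinate slices $\{f^* \otimes g \colon g \in \spc{\cY}\}$ and $\{f \otimes g^* \colon f \in \spc{\cX}\}$. The reason is that each slice is a genuine linear subspace containing $\md(\beta)$; since $\md(\beta)$ is the global minimizer over all product functions, it is in particular the minimizer within each slice, hence equals the projection of $\beta$ onto that slice, and \factref{fact:proj} yields the orthogonality. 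Specializing to $\beta = r_{k-1}(\gamma)$ and $\md(\beta) = \mdn{k}(\gamma)$ shows the remainder $r_k(\gamma)$ is orthogonal to $\{f_k^* \otimes g\}$ and $\{f \otimes g_k^*\}$; equivalently, for all $f,g$, $\ip{r_{k-1}(\gamma)}{f_k^* \otimes g} = \sigma_k \ip{g_k^*}{g}$ and $\ip{r_{k-1}(\gamma)}{f \otimes g_k^*} = \sigma_k \ip{f_k^*}{f}$, where I evaluate the product-function inner products via \proptyref{propty:exp}. These two identities are the functional analogues of the singular-vector equations and drive everything.

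Next I would prove the orthonormality $\ip{f_i^*}{f_j^*} = \ip{g_i^*}{g_j^*} = \delta_{ij}$ by induction on $j$, with hypothesis that all singular vectors of index $\leq j-1$ are mutually orthonormal. For $i < j$ I would compute $\ip{r_{j-1}(\gamma)}{f_i^* \otimes g_j^*}$ in two ways: by the slice identity it equals $\sigma_j \ip{f_i^*}{f_j^*}$, while expanding $r_{j-1}(\gamma) = r_i(\gamma) - \sum_{l=i+1}^{j-1}\mdn{l}(\gamma)$ shows it vanishes, since $r_i(\gamma)$ is orthogonal to the slice $\{f_i^* \otimes g\}$ and each cross term $\ip{\mdn{l}(\gamma)}{f_i^* \otimes g_j^*} = \sigma_l \ip{f_l^*}{f_i^*}\ip{g_l^*}{g_j^*}$ carries the factor $\ip{f_l^*}{f_i^*} = 0$ by the induction hypothesis. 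Dividing by $\sigma_j > 0$ gives $\ip{f_i^*}{f_j^*} = 0$; the symmetric computation with $f_j^* \otimes g_i^*$ gives $\ip{g_i^*}{g_j^*} = 0$.

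With orthonormality in hand the remaining orthogonality items are short. For $i < j$, \proptyref{propty:exp} gives $\ip{\mdn{i}(\gamma)}{\mdn{j}(\gamma)} = \sigma_i\sigma_j \ip{f_i^*}{f_j^*}\ip{g_i^*}{g_j^*} = 0$. For $\ip{\mdn{i}(\gamma)}{r_j(\gamma)} = 0$ with $i \leq j$, I would write $r_j(\gamma) = r_i(\gamma) - \sum_{l=i+1}^{j}\mdn{l}(\gamma)$: the first inner product vanishes because $\mdn{i}(\gamma)$ lies in the slice $\{f_i^* \otimes g\}$ to which $r_i(\gamma)$ is orthogonal, and the remaining terms vanish by the mode orthogonality just established.

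Finally, for the singular values, positivity $\sigma_j > 0$ for $j \leq K = \rank(\gamma)$ follows from the definition of rank: for such $j$ the input $r_{j-1}(\gamma)$ is nonzero, and any nonzero $\beta$ admits a unit product function with $\ip{\beta}{f \otimes g} \neq 0$ (product functions span $\spc{\cX\times\cY}$), so subtracting that rank-one term strictly decreases the norm and the best rank-one approximation has positive norm. For the ordering I would use the variational characterization $\sigma_k = \max_{\norm{f}=\norm{g}=1}\ip{r_{k-1}(\gamma)}{f \otimes g}$ (which also identifies $f_k^*,g_k^*$ as maximizers): writing $\sigma_{k+1} = \ip{r_k(\gamma)}{f_{k+1}^* \otimes g_{k+1}^*}$ and substituting $r_k(\gamma) = r_{k-1}(\gamma) - \mdn{k}(\gamma)$, the correction term $\sigma_k \ip{f_k^*}{f_{k+1}^*}\ip{g_k^*}{g_{k+1}^*}$ vanishes by the orthonormality, so $\sigma_{k+1} = \ip{r_{k-1}(\gamma)}{f_{k+1}^* \otimes g_{k+1}^*} \leq \sigma_k$. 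The main obstacle I anticipate is the inductive orthonormality step: both justifying that the global best rank-one approximation restricts to a genuine projection on each coordinate slice, and organizing the induction so that every cross term cancels against the hypothesis. Everything else is bookkeeping layered on top of these slice identities. (As an alternative, one could fix bases and reduce to the finite-dimensional matrix SVD per the remark above, but the self-contained geometric argument seems cleaner and matches the sequential definition of $\mdn{k}$.)
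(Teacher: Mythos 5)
Your proof is correct, but it takes a genuinely different route from the one the paper relies on: the paper offers no argument for this Fact at all, presenting it as a standard property of the Schmidt/singular value decomposition, with the intended justification being the reduction to the finite-dimensional matrix SVD via the canonical basis of \appref{app:notation} (under that correspondence $\mdn{i}(\gamma)$ becomes the $i$-th term of the matrix SVD and all the stated relations are classical). Your argument is instead self-contained and works directly from the paper's sequential rank-one-approximation definition of $\mdn{k}$. The one nontrivial ingredient --- that the best product-function approximation of $\beta$ is simultaneously the orthogonal projection of $\beta$ onto each coordinate slice $\{f^*\otimes g\colon g \in \spc{\cY}\}$ and $\{f\otimes g^*\colon f \in \spc{\cX}\}$, because each slice is a linear subspace of the set of product functions containing the global minimizer, so \factref{fact:proj} applies --- is exactly right and yields the functional analogue of the singular-vector equations. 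The induction for orthonormality closes properly, the positivity of $\sigma_j$ for $j\le K$ is established independently of orthonormality (so there is no circularity when you divide by $\sigma_j$), and the monotonicity argument via the variational characterization $\sigma_k=\max_{\norm{f}=\norm{g}=1}\ip{r_{k-1}(\gamma)}{f\otimes g}$ is sound since the cross term it discards vanishes by the orthonormality already proved. What your approach buys is independence from the existence and structure theory of the SVD: it applies verbatim whenever the sequential minimizers in \eqref{eq:def:md} exist, which is the form in which the paper actually defines the modes. The cost is length relative to the two-line reduction to matrix SVD that the paper implicitly has in mind; for finite alphabets the two are interchangeable.
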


   \begin{fact}
     \label{fact:spec}
     For all $i \in [K]$, we have{
       $\ds(f_i^*, g_i^*) = \argmax_{(f, g) \in \cD_i}\,
       \ip{\gamma}{f \otimes g}$
     where we have recursively defined each $\cD_i$ as
$\cD_i = \{(f, g) \in  \spc{\cX} \times \spc{\cY}\colon \norm{f} = \norm{g} = 1 \text{ and } \ip{f}{f_j^*} = \ip{g}{g_j^*} = 0\text{ for all }j \in [i - 1]\}$.}
   \end{fact}

\begin{fact}[Eckart--Young--Mirsky theorem,  \citealt{eckart1936approximation}]%
  \label{fact:low:rank}
  For all $\gamma \in \spc{\cX \times \cY}$ and $k \geq 1$, we have 
  \begin{align*}
    \mdnl{k}(\gamma) = \argmin_{\gamma'\colon \rank(\gamma') \leq k} \norm{\gamma - \gamma'} = \argmin_{\substack{\gamma'\colon \gamma' = f \otimes g,\\
    f \in \spcn{\cX}{k},\, g \in \spcn{\cY}{k}}} \norm{\gamma - \gamma'}.
  \end{align*}
\end{fact}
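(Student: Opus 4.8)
The plan is to collapse the two displayed minimizations into a single one, reduce it to a projection/eigenvalue problem, and then show the top-$k$ modal truncation is optimal. First I would check that the two feasible sets coincide. A function $f \otimes g = \sum_{i=1}^k f_i \otimes g_i$ with $f \in \spcn{\cX}{k}, g \in \spcn{\cY}{k}$ is a sum of $k$ product functions and hence has rank at most $k$ (rank is subadditive: concatenating the modal decompositions of two summands expresses their sum with the combined number of product terms). Conversely, any $\gamma'$ with $\rank(\gamma') \le k$ is, by its own modal decomposition, a sum of at most $k$ product functions, which can be padded with zero components into the form $f \otimes g$ with $f \in \spcn{\cX}{k}, g \in \spcn{\cY}{k}$. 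Thus both problems minimize $\norm{\gamma - \gamma'}$ over $\{\gamma' : \rank(\gamma') \le k\}$, and it suffices to show $\mdnl{k}(\gamma)$ is a minimizer with optimal value $\norm{r_k(\gamma)}$. Note $\mdnl{k}(\gamma)$ has rank $\le k$, and by the orthogonality of modes in \factref{fact:md:ortho}, $\norm{r_k(\gamma)}^2 = \sum_{i=k+1}^{K}\sigma_i^2$ with $K \defeq \rank(\gamma)$; this is the value no rank-$\le k$ competitor may beat.

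The core step is a projection reduction. For any candidate $\gamma' = \sum_{j=1}^k a_j \otimes b_j$, let $\cU \defeq \spn\{a_1, \dots, a_k\} \subseteq \spc{\cX}$, a subspace of dimension at most $k$; then $\gamma' \in \cU \otimes \spc{\cY}$. Since the best approximation to $\gamma$ from the subspace $\cU \otimes \spc{\cY}$ is the projection $\proj{\gamma}{\cU \otimes \spc{\cY}}$, we have $\norm{\gamma - \gamma'}^2 \ge \norm{\gamma - \proj{\gamma}{\cU \otimes \spc{\cY}}}^2 = \norm{\gamma}^2 - \norm{\proj{\gamma}{\cU \otimes \spc{\cY}}}^2$. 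Minimizing the error over all rank-$\le k$ functions therefore amounts to maximizing $\norm{\proj{\gamma}{\cU \otimes \spc{\cY}}}^2$ over subspaces $\cU$ with $\dim \cU \le k$. Writing the modal decomposition $\gamma = \sum_{i=1}^{K} \sigma_i\, f_i^* \otimes g_i^*$ and using that projecting onto $\cU \otimes \spc{\cY}$ acts as projecting the $\cX$-factor onto $\cU$ while the orthonormal $\{g_i^*\}$ (\factref{fact:md:ortho}) leave the $\cY$-factor intact, I would compute $\norm{\proj{\gamma}{\cU \otimes \spc{\cY}}}^2 = \sum_{i=1}^{K} \sigma_i^2 \, \norm{\proj{f_i^*}{\cU}}^2$.

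It then remains to maximize $\sum_{i=1}^{K}\sigma_i^2\, t_i$ with $t_i \defeq \norm{\proj{f_i^*}{\cU}}^2$. Because each $t_i \in [0,1]$ and, on completing $\{f_i^*\}$ to an orthonormal basis of $\spc{\cX}$, $\sum_{i} t_i \le \tr(\text{projection onto }\cU) = \dim\cU \le k$, this is an elementary linear program whose optimum, given $\sigma_1^2 \ge \dots \ge \sigma_K^2$, is $\sum_{i=1}^{k}\sigma_i^2$, attained by $t_1 = \dots = t_k = 1$, i.e. $\cU^* = \spn\{f_1^*, \dots, f_k^*\}$. Hence $\min_{\rank(\gamma') \le k}\norm{\gamma - \gamma'}^2 = \norm{\gamma}^2 - \sum_{i=1}^{k}\sigma_i^2 = \sum_{i=k+1}^{K}\sigma_i^2 = \norm{r_k(\gamma)}^2$, and the bound is achieved by $\proj{\gamma}{\cU^* \otimes \spc{\cY}} = \mdnl{k}(\gamma)$, which finishes the argument.

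I expect the projection reduction together with the identity $\norm{\proj{\gamma}{\cU \otimes \spc{\cY}}}^2 = \sum_i \sigma_i^2 \norm{\proj{f_i^*}{\cU}}^2$ to be the main obstacle, since this is where the tensor structure and the orthonormality of the singular features must be combined carefully; the subsequent maximization is a routine linear program. As an alternative, one could fix orthonormal bases of $\spc{\cX}, \spc{\cY}$ (with respect to the product metric distribution $R_X R_Y$), represent $\gamma$ as a matrix whose SVD is exactly the modal decomposition, and invoke the classical Eckart--Young--Mirsky theorem directly; the argument above is essentially a self-contained rederivation of that fact in the present functional setting.
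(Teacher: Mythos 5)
Your proof is correct, but it is worth noting that the paper itself offers no proof of this statement: it is recorded as a Fact, attributed to the Eckart--Young--Mirsky theorem, and used as imported background (the paper's \factref{fact:md:ortho} and \factref{fact:spec} play the same role). What you have written is a self-contained rederivation in the functional setting, and the two key steps are sound. The identity $\bnorm{\proj{\gamma}{\cU \otimes \spc{\cY}}}^2 = \sum_i \sigma_i^2 \norm{\proj{f_i^*}{\cU}}^2$ follows exactly as you indicate: expanding the projection in an orthonormal basis $\{u_j \otimes e_l\}$ of $\cU \otimes \spc{\cY}$ gives $\proj{\gamma}{\cU \otimes \spc{\cY}} = \sum_i \sigma_i\, \proj{f_i^*}{\cU} \otimes g_i^*$, and the cross terms in the squared norm vanish by the orthonormality of the $g_i^*$ from \factref{fact:md:ortho}. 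The subsequent linear program with $t_i \in [0,1]$, $\sum_i t_i \le \dim\cU \le k$ is handled correctly. The one place you gloss is the equivalence of the two feasible sets: under the paper's definition, $\rank(\gamma')$ is defined via greedy rank-one deflation, so the claim that a sum of $k$ product functions has $\rank \le k$ requires observing that such a $\gamma'$ lies in $\cU \otimes \cV$ with $\dim\cU, \dim\cV \le k$, that each deflation step keeps the residual in this tensor subspace (by \factref{fact:spec} the maximizing $f_i, g_i$ can be taken in $\cU, \cV$), and hence that the deflation terminates in at most $k$ steps. This is the standard matrix-SVD fact in disguise, and your closing remark about fixing orthonormal bases and invoking the classical theorem covers it, so I would count this as a presentational gap rather than a mathematical one. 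Compared with simply citing the classical result, your projection-based argument has the advantage of transferring verbatim to the constrained setting of \propref{prop:md:constraint}, where the paper does supply a proof along closely related lines.
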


Therefore, we refer to $\mdnl{k}(\gamma)$ as the rank-$k$ approximation of $\gamma$, and the remainder $r_k(\gamma)$ represents the approximation error. %

\subsubsection{Constrained Modal Decomposition}\label{sec:notation-pre:const-md}
{We then introduce the constrained modal decomposition, which provides an effective implementation of projection operators.}
For given subspace $\sspc{\cX}$ of $\spc{\cX}$ and subspace $\sspc{\cY}$ of $\spc{\cY}$, we define
\begin{gather}
  \md(\gamma|\sspc{\cX}, \sspc{\cY}) \defeq \argmin_{\substack{\gamma'\colon \gamma' = f\otimes g\\
  f \in \sspc{\cX}, g \in \sspc{\cY }}}
  \norm{\gamma - \gamma'},\\
  \mdn{k}(\gamma|\sspc{\cX}, \sspc{\cY}) \defeq \md\left(\gamma - \sum_{i = 1}^{k- 1}\mdn{i}(\gamma|\sspc{\cX}, \sspc{\cY}) \middle|\sspc{\cX}, \sspc{\cY}\right),
  \quad \text{for all $ k \geq 1$},
  \label{eq:mdn:constrain:def}
\end{gather}
where $\mdn{1}(\gamma|\sspc{\cX}, \sspc{\cY}) \defeq \md(\gamma|\sspc{\cX}, \sspc{\cY})$.
Similarly, we denote  $\mdnl{k}(\gamma|\sspc{\cX}, \sspc{\cY}) \defeq \sum_{i = 1}^k \mdn{i}(\gamma|\sspc{\cX}, \sspc{\cY})$.

We can extend the properties of modal decomposition to the constrained case. In particular, we have the following extension of  \factref{fact:low:rank}, of which a proof is provided in \appref{app:prop:md:constraint}.
\begin{proposition}
  \label{prop:md:constraint}
  Suppose $\sspc{\cX}$ and   $\sspc{\cY}$
are subspace of  $\spc{\cX}$ and $\spc{\cY}$, respectively.
Then, for all $\gamma \in \spc{\cX \times \cY}$ and $k \geq 1$, %
 we have $\mdn{k}(\gamma|\sspc{\cX}, \sspc{\cY}) = \mdn{k}(\proj{\gamma}{\sspc{\cX} \otimes \sspc{\cY}})$, and
    \begin{align}
      \mdnl{ k}(\gamma|\sspc{\cX}, \sspc{\cY}) = \mdnl{k}(\proj{\gamma}{\sspc{\cX} \otimes \sspc{\cY}}) = \argmin_{\substack{\gamma'\colon \gamma' = f \otimes g,\\
    f \in \sspcn{\cX}{k},\, g \in \sspcn{\cY}{k}}} \norm{ \gamma - \gamma'},
      \label{eq:lora:constraint}
    \end{align}
    where we have defined $\sspcn{\cX}{k} \defeq \left(\sspc{\cX}\right)^k$ and $\sspcn{\cY}{k} \defeq \left(\sspc{\cY}\right)^k$.%
\end{proposition}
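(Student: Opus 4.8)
The plan is to reduce the constrained decomposition of $\gamma$ to the \emph{unconstrained} modal decomposition of its projection $\beta \defeq \proj{\gamma}{\sspc{\cX} \otimes \sspc{\cY}}$, exploiting throughout the standing assumption that the metric distribution has the product form $R_{X,Y} = R_X R_Y$. First I would establish a Pythagorean reduction. For any product function $f \otimes g$ with $f \in \sspc{\cX},\, g \in \sspc{\cY}$ we have $f \otimes g \in \sspc{\cX} \otimes \sspc{\cY}$, so by the orthogonality principle (\factref{fact:proj}) the residual $\gamma - \beta \in \spc{\cX \times \cY} \orthm (\sspc{\cX} \otimes \sspc{\cY})$ is orthogonal to $\beta - f \otimes g$, yielding
\begin{align*}
\norm{\gamma - f \otimes g}^2 = \norm{\gamma - \beta}^2 + \norm{\beta - f \otimes g}^2.
\end{align*}
Since the first term is constant in $(f,g)$, minimizing $\norm{\gamma - f\otimes g}$ and $\norm{\beta - f\otimes g}$ over $f \in \sspc{\cX},\, g \in \sspc{\cY}$ produce the same minimizers; hence $\md(\gamma|\sspc{\cX},\sspc{\cY}) = \md(\beta|\sspc{\cX},\sspc{\cY})$, and likewise for the rank-$k$ constrained objective in \eqref{eq:lora:constraint}.

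Second — and this is the crux — I would show the constraint becomes vacuous after projecting, i.e.\ the unconstrained modal decomposition of $\beta$ already has all its factors in the subspaces. Writing $\beta = \sum_j a_j \otimes b_j$ with $a_j \in \sspc{\cX},\, b_j \in \sspc{\cY}$ and invoking the product-metric factorization of \proptyref{propty:exp}, I would verify the identity
\begin{align*}
\ip{\beta}{f \otimes g} = \ip{\beta}{\proj{f}{\sspc{\cX}} \otimes \proj{g}{\sspc{\cY}}} \quad \text{for all } f \in \spc{\cX},\, g \in \spc{\cY},
\end{align*}
which holds because $\ip{a_j}{f} = \ip{a_j}{\proj{f}{\sspc{\cX}}}$ and analogously in $\cY$. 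Since $\norm{\proj{f}{\sspc{\cX}}} \leq \norm{f}$, normalizing the projected factors can only increase the objective in the variational characterization of \factref{fact:spec}; thus the maximizer $(f_1^*, g_1^*)$ may be taken in $\sspc{\cX} \times \sspc{\cY}$, giving $\md(\beta) = \md(\beta|\sspc{\cX},\sspc{\cY})$. Applying the same argument to the residuals $r_i(\beta)$, which remain in $\sspc{\cX}\otimes\sspc{\cY}$, shows every mode $\mdn{i}(\beta)$ has factors in the subspaces. Combined with the first step this gives the base case $\mdn{1}(\gamma|\sspc{\cX},\sspc{\cY}) = \mdn{1}(\beta)$.

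Finally I would prove $\mdn{k}(\gamma|\sspc{\cX},\sspc{\cY}) = \mdn{k}(\beta)$ by induction on $k$. Assuming it holds below $k$, the subtracted modes satisfy $\sum_{i=1}^{k-1}\mdn{i}(\gamma|\sspc{\cX},\sspc{\cY}) = \mdnl{k-1}(\beta) \in \sspc{\cX} \otimes \sspc{\cY}$, so by linearity of the projection the residual obeys $\proj{(\gamma - \mdnl{k-1}(\beta))}{\sspc{\cX}\otimes\sspc{\cY}} = \beta - \mdnl{k-1}(\beta) = r_{k-1}(\beta)$. Applying the first two steps to this residual and using the recursion \eqref{eq:mdn:constrain:def} gives $\mdn{k}(\gamma|\sspc{\cX},\sspc{\cY}) = \md(r_{k-1}(\beta)) = \mdn{k}(\beta)$, closing the induction and hence $\mdnl{k}(\gamma|\sspc{\cX},\sspc{\cY}) = \mdnl{k}(\beta)$. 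The remaining variational identity in \eqref{eq:lora:constraint} then follows from the Pythagorean reduction together with \factref{fact:low:rank} applied to $\beta$, whose optimal rank-$k$ factors already lie in $\sspc{\cX},\sspc{\cY}$ by the second step. I expect the main obstacle to be exactly that second step: confirming that projecting the factors leaves the inner product against $\beta$ unchanged is precisely where the product structure $R_{X,Y}=R_XR_Y$ is indispensable. I would also flag the non-uniqueness caveat from the preceding remark, reading every mode equality as holding for a compatible sequential choice of rank-1 approximations.
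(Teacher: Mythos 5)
Your proof follows essentially the same route as the paper's: project $\gamma$ onto $\sspc{\cX}\otimes\sspc{\cY}$, use the orthogonality principle to split the approximation error into a constant term plus the error against the projection, and then induct on $k$ using the linearity of the projection operator. The one place you go beyond the paper is the explicit verification that the unconstrained modal decomposition of $\proj{\gamma}{\sspc{\cX}\otimes\sspc{\cY}}$ already has its factors in $\sspc{\cX}$ and $\sspc{\cY}$ (so the constraint is vacuous after projecting) --- the paper leaves this implicit when it invokes the Eckart--Young characterization, and your inner-product identity correctly and usefully fills that gap.
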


Therefore, we can implement projection operators by solving an equivalent constrained low-rank approximation (or modal decomposition) problem.

\subsection{Statistical Dependence and Induced Features}
Given $(X, Y) \sim P_{X, Y}$, we consider the space $\spc{\cX \times \cY}$ with the metric distribution $R_{X, Y} = P_XP_Y$. Then, we can characterize the statistical dependence between $X$ and $Y$ by the CDK function $\lpmi_{X; Y}$, as defined in \eqref{eq:cdk:def}. {We can characterize the energy of $\lpmi_{X; Y}$
as $\norm{\lpmi_{X; Y}}^2$, also known as the \emph{mean square contingency} \citep{pearson1904mathematical}. Specifically, it is easy to verify that $\norm{\lpmi_{X; Y}} = 0$ if and only if $X$ and $Y$ are independent. }

Suppose $\rank(\lpmi_{X; Y}) = K$ and let the modal decomposition be [cf. \eqref{eq:modal:dcmp}] %
\begin{align}
  \lpmi_{X; Y} = \sum_{i = 1}^K \mdn{i}(\lpmi_{X; Y}) = \sum_{i = 1}^K  \sigma_i \cdot ( f_i^* \otimes g_i^*),
  \label{eq:dcmp:xy}
\end{align}                                                       where for each $i \in [K]$, $\mdn{i}(\lpmi_{X;Y})= \sigma_i \cdot ( f_i^* \otimes g_i^*)$ is the standard form of $i$-th rank-one dependence mode, with strength
characterized by   $\sigma_i = \norm{\mdn{i}(\lpmi_{X; Y})}$. Note that since different modes are orthogonal (cf. \factref{fact:md:ortho}), we have $\bnorm{\lpmi_{X; Y}}^2 = \sum_{i = 1}^K \sigma_i^2$.
From $\sigma_1 \geq \dots \geq \sigma_K$, these modes are ordered by their contributions to the joint dependence.

In particular, the features $f_i^*$'s, $g_i^*$'s are the maximally correlated features in $\spc{\cX}$, $\spc{\cY}$, known as Hirschfeld--Gebelein--R\'{e}nyi (HGR) maximal correlation functions \citep{hirschfeld1935connection, gebelein1941statistische, renyi1959measures}. To see this, let us denote the covariance for given $f \in \spc{\cX}, g \in \spc{\cY}$ as 
  \begin{align}
    \corr(f, g) \defeq \Ed{P_{X, Y}}{f(X)g(Y)} - \Ed{P_{X}P_{Y}}{f(X)g(Y)}.
    \label{eq:corr:def}
  \end{align}
 From  \factref{fact:spec} and the fact $\corr(f, g) = \ip{\lpmi_{X; Y}}{f \otimes g}$,  we obtain the following corollary.
\begin{corollary}[HGR Maximal Correlation Functions]
  \label{cor:hgr}
  For each $i = 1, \dots, K$, we have $\sigma_i = \corr(f_i^*, g_i^*) = \Ed{P_{X, Y}}{f_i^*(X)g_i^*(Y)}$ and %
{  %
   $\ds (f_i^*, g_i^*) = \argmax_{(f, g) \in \cD_i}\, \corr(f_i, g_i)$,
     where we have recursively defined each $\cD_i$ as
$\cD_i = \{(f, g) \in  \spc{\cX} \times \spc{\cY}\colon \norm{f} = \norm{g} = 1 \text{ and } \ip{f}{f_j^*} = \ip{g}{g_j^*} = 0\text{ for all }j \in [i - 1]\}$.
}
\end{corollary}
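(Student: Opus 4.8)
The plan is to deduce the corollary directly from \factref{fact:spec} applied to $\gamma = \lpmi_{X;Y}$, together with the bilinear identity $\corr(f,g) = \ip{\lpmi_{X;Y}}{f \otimes g}$ asserted just before the statement. The first task is to establish that identity. For scalar features $f \in \spc{\cX}$, $g \in \spc{\cY}$, specializing \eqref{eq:exp:cdk} of \proptyref{propty:exp} to $k = 1$ gives $\ip{\lpmi_{X;Y}}{f \otimes g} = \Ed{P_{X,Y}}{f(X)g(Y)} - \Ed{P_X}{f(X)}\,\Ed{P_Y}{g(Y)}$. Since the metric distribution is $R_{X,Y} = P_X P_Y$, we have $\Ed{P_X}{f(X)}\,\Ed{P_Y}{g(Y)} = \Ed{P_X P_Y}{f(X)g(Y)}$, so the right-hand side is exactly $\corr(f,g)$ as defined in \eqref{eq:corr:def}.

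Given the identity, the variational characterization is immediate: substituting $\ip{\lpmi_{X;Y}}{f_i \otimes g_i} = \corr(f_i, g_i)$ into \factref{fact:spec} (with $\gamma = \lpmi_{X;Y}$, whose modal decomposition is \eqref{eq:dcmp:xy}) turns the objective into $\corr(f_i, g_i)$ while leaving the normalization and orthogonality constraints unchanged, yielding $(f_i^*, g_i^*) = \argmax \corr(f_i, g_i)$ over the stated constraint set. For the value of the maximum, I would expand $\corr(f_i^*, g_i^*) = \ip{\lpmi_{X;Y}}{f_i^* \otimes g_i^*}$ using $\lpmi_{X;Y} = \sum_{j=1}^K \sigma_j (f_j^* \otimes g_j^*)$ and the product-function inner product of \proptyref{propty:exp}, namely $\ip{f_j^* \otimes g_j^*}{f_i^* \otimes g_i^*} = \ip{f_j^*}{f_i^*}\ip{g_j^*}{g_i^*}$. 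By the orthonormality in \factref{fact:md:ortho} this equals $\delta_{ij}$, so only the $j=i$ term survives and $\corr(f_i^*, g_i^*) = \sigma_i$.

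It remains to simplify $\corr(f_i^*, g_i^*)$ to $\Ed{P_{X,Y}}{f_i^*(X)g_i^*(Y)}$, which by \eqref{eq:corr:def} amounts to showing the correction term $\Ed{P_X}{f_i^*(X)}\,\Ed{P_Y}{g_i^*(Y)}$ vanishes; this is the one genuinely nonroutine step. The plan is to show each factor is zero using the fact that $\lpmi_{X;Y}$ is orthogonal to $\spc{\cX}$ and to $\spc{\cY}$ (viewed as subspaces of $\spc{\cX \times \cY}$). A direct computation gives $\sum_{y} P_Y(y)\,\lpmi_{X;Y}(x,y) = 0$ for every $x$, so $\ip{\lpmi_{X;Y}}{h} = 0$ for any $h \in \spc{\cX}$, and symmetrically for $\spc{\cY}$. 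Testing $\lpmi_{X;Y}$ against the function $f_i^* \otimes \mathbf{1}$, which lies in $\spc{\cX}$, then gives $0$ on the one hand and, by the same expansion as above, $\sigma_i \Ed{P_Y}{g_i^*(Y)}$ on the other; since $\sigma_i > 0$ for $i \in [K]$ by \factref{fact:md:ortho}, we conclude $\Ed{P_Y}{g_i^*(Y)} = 0$, and symmetrically $\Ed{P_X}{f_i^*(X)} = 0$. Hence the correction term drops and $\corr(f_i^*, g_i^*) = \Ed{P_{X,Y}}{f_i^*(X)g_i^*(Y)} = \sigma_i$.

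The main obstacle is precisely this last step: recognizing that the maximal-correlation functions are automatically zero-mean, which is what makes the covariance coincide with the bare expectation $\Ed{P_{X,Y}}{f_i^*(X)g_i^*(Y)}$. Everything else is substitution of the identity and orthonormality bookkeeping.
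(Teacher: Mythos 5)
Your proof is correct and follows exactly the route the paper indicates: it derives the corollary from \factref{fact:spec} applied to $\lpmi_{X;Y}$ together with the identity $\corr(f,g) = \ip{\lpmi_{X;Y}}{f\otimes g}$, which is all the paper offers by way of justification. The only addition is your explicit verification that $f_i^*$ and $g_i^*$ are zero-mean (via orthogonality of $\lpmi_{X;Y}$ to $\spc{\cX}$ and $\spc{\cY}$), a step the paper leaves implicit but which is needed to equate the covariance with $\Ed{P_{X,Y}}{f_i^*(X)g_i^*(Y)}$.
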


We can further extend the results to the constrained modal decomposition (cf. \secref{sec:notation-pre:const-md}) of $\lpmi_{X; Y}$. Specifically, given subspaces $\sspc{\cX}$ and $\sspc{\cY}$ of $\spc{\cX}$ and $\spc{\cY}$, respectively, let 
\begin{align}
  \mdn{i}(\lpmi_{X; Y}|\sspc{\cX}, \sspc{\cY}) = \sigmah_i \cdot (\fh_i^* \otimes \gh_i^*), \quad i \geq 1, %
\end{align} %
be corresponding standard form representations.
Then, we can interpret $\sigmah_i, \fh_i^*, \gh_i^*$ as the solution to a constrained maximal correlation problem, formalized as the following extension of \corolref{cor:hgr}. A proof is provided in \appref{app:prop:max:corr:const}.

  \begin{proposition}
    \label{prop:max:corr:const}
    Given subspaces $\sspc{\cX}$ and $\sspc{\cY}$ of $\spc{\cX}$ and $\spc{\cY}$, respectively, for each $i \geq 1$, we have
$\sigmah_i = \corr(\fh_i^*, \gh_i^*) = \bEd{P_{X, Y}}{\fh_i^*(X)\gh_i^*(Y)}$,
    $\ds  (\fh_i^*, \gh_i^*) = \argmax_{(f, g) \in \cDh_i}\, \corr(f, g)$,
    where $\corr$ denotes the covariance [cf. \eqref{eq:corr:def}], and 
{ where we have recursively defined each $\cDh_i$ as
$\cDh_i = \{(f, g) \in  \sspc{\cX} \times \sspc{\cY}\colon \norm{f} = \norm{g} = 1 \text{ and } \ip{f}{\fh_j^*} = \ip{g}{\gh_j^*} = 0\text{ for all }j \in [i - 1]\}$.}
\end{proposition}

In particular, we can interpret CCA (Canonical Correlation Analysis) as the modal decomposition constrained to linear functions.

\begin{example}[Canonical Correlation Analysis]
  \label{eg:cca}
  Suppose $\cX$ and $\cY$ are vector spaces, and $\sspc{\cX}, \sspc{\cY}$ are the space of all linear functions defined on $\cX$, $\cY$, respectively. Then, \propref{prop:max:corr:const} gives solutions to CCA (Canonical Correlation Analysis) \citep{hotelling1936relations}, where
$\sigmah_i$'s are canonical correlations.
\end{example}

\paragraph{Weak Dependence and Local Geometric Analyses}
In the particular case where the statistical dependence between $X$ and $Y$ is weak, we can establish further connections between feature geometry and conventional information measures. Such analyses have been extensively studied in \cite{HuangMWZ2024}, referred to as the local geometric analysis, formalized as follows.

\begin{definition}[$\eps$-Dependence]
 Given $(X, Y) \sim P_{X, Y}$, $X$ and $Y$ are $\eps$-dependent if $\norm{\lpmi_{X; Y}} = O(\eps)$.
\end{definition}

For such weakly dependent variables, we can characterize their mutual information as follows.

\begin{lemma}[{\citealt[Lemma 4.11]{HuangMWZ2024}}]
  \label{lem:mutual}
  If  $X$ and $Y$ are $\eps$-dependent, then we have the mutual information
  $\ds I(X; Y) = \frac12 \cdot \bnorm{\lpmi_{X; Y}}^2 + o(\eps^2)$.
\end{lemma}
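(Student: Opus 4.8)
The plan is to expand $I(X;Y)$ directly in terms of the density ratio and read off $\tfrac12\bnorm{\lpmi_{X;Y}}^2$ as the leading term. Writing $\delta \defeq \lpmi_{X;Y}$, the defining relation \eqref{eq:cdk:def} gives $P_{X,Y}(x,y) = P_X(x)P_Y(y)\bigl(1 + \delta(x,y)\bigr)$, so substituting into the definition of mutual information recasts it as an expectation under the metric distribution $R_{X,Y} = P_X P_Y$:
\begin{align*}
  I(X;Y) = \Ed{P_X P_Y}{\bigl(1 + \delta(X,Y)\bigr)\log\bigl(1 + \delta(X,Y)\bigr)}.
\end{align*}

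Next I would Taylor expand the integrand about $\delta = 0$ using $(1+\delta)\log(1+\delta) = \delta + \tfrac{\delta^2}{2} + O(\abs{\delta}^3)$, and integrate term by term. The linear term vanishes because $\lpmi_{X;Y}$ is zero-mean under $P_X P_Y$, i.e. $\Ed{P_X P_Y}{\delta} = \sum_{x,y}\bigl(P_{X,Y}(x,y) - P_X(x)P_Y(y)\bigr) = 0$. The quadratic term equals $\tfrac12\Ed{P_X P_Y}{\delta^2} = \tfrac12\bnorm{\lpmi_{X;Y}}^2$, directly by the definition of the inner product (and induced norm) on $\spc{\cX\times\cY}$. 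This already isolates the claimed leading contribution.

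The main obstacle --- and the only place where the $\eps$-dependence hypothesis must be used quantitatively --- is to verify that the remainder is genuinely $o(\eps^2)$. The assumption $\bnorm{\lpmi_{X;Y}} = O(\eps)$ controls $\delta$ only in the $L^2$ sense, whereas the cubic Taylor remainder needs a pointwise bound. To bridge this gap I would invoke the standing assumptions that the alphabets are finite and $R_{X,Y} = P_X P_Y \in \relint(\cP^{\cX \times \cY})$: these guarantee a constant $c > 0$ with $P_X(x)P_Y(y) \geq c$ for all $(x,y)$, whence $\delta(x,y)^2 \leq c^{-1}\bnorm{\lpmi_{X;Y}}^2 = O(\eps^2)$ and thus $\max_{x,y}\abs{\delta(x,y)} = O(\eps)$. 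With $\delta$ uniformly small, the remainder is $O(\abs{\delta}^3)$ pointwise, so $\Ed{P_X P_Y}{\abs{(1+\delta)\log(1+\delta) - \delta - \tfrac{\delta^2}{2}}} = O\bigl(\Ed{P_X P_Y}{\abs{\delta}^3}\bigr) = O(\eps^3) = o(\eps^2)$. Collecting the three contributions then yields $I(X;Y) = \tfrac12\bnorm{\lpmi_{X;Y}}^2 + o(\eps^2)$, as desired.
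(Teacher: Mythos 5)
Your proof is correct, and it is essentially the standard argument behind the cited result: the paper itself gives no proof (it imports the lemma from \citealt[Lemma 16]{huang2019universal}), and that reference's proof likewise rests on the second-order expansion of the KL divergence $I(X;Y) = \Ed{P_XP_Y}{(1+\delta)\log(1+\delta)}$ with the linear term vanishing by zero mean. Your explicit step converting the $L^2$ bound $\bnorm{\lpmi_{X;Y}} = O(\eps)$ into a uniform pointwise bound via finiteness of the alphabets and $P_XP_Y \in \relint(\cP^{\cX\times\cY})$ is exactly the point that needs care, and you handle it correctly.
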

Therefore, from \eqref{eq:dcmp:xy} we can also decompose the mutual information into different modes: $ I(X; Y) = \frac12\cdot\bnorm{\lpmi_{X; Y}}^2 + o(\eps^2) = \frac12\sum_{i = 1}^K \sigma_i^2 + o(\eps^2)$.

\section{Dependence Approximation and Feature Learning} %
\label{sec:dep:approx} %
In this section, we demonstrate the learning system design with feature geometry in a learning setting. In particular, we consider optimal feature representations of the statistical dependence, and present learning such features from data and assembling them to build inference models.
To begin, let $X$ and $Y$ denote the random variables of interest, with the joint distribution $P_{X, Y}$. We characterize the statistical dependence between $X$ and $Y$ as the CDK function [cf. \eqref{eq:cdk:def}] $\lpmi_{X; Y} \in \spc{\cX \times \cY}$. In our development, we consider the feature geometry  on $\spc{\cX \times \cY}$ with respect to the metric distribution $R_{X, Y} = P_XP_Y$. We also assume $\lpmi_{X; Y}$ has the modal decomposition \eqref{eq:dcmp:xy}.

\subsection{Low Rank Approximation of Statistical Dependence}%

In learning applications, the joint distribution $P_{X, Y}$ is typically unknown with enormous complexity, making direct computation or estimation of $\lpmi_{X; Y}$ infeasible. To tackle this problem, we consider the representation of $\lpmi_{X; Y}$ using features of $X$ and $Y$, and develop the feature learning algorithms that can be effectively implemented on $(X, Y)$ samples.

Specifically, for given $k \geq 1$ and $k$-dimensional features $f \in \spcn{\cX}{k}$ and $g \in \spcn{\cY}{k}$, we consider the approximation of $\lpmi_{X; Y}$ by the rank-$k$ joint function $f \otimes g = \sum_{i = 1}^k f_i \otimes g_i$. With this formulation, we can convert the computation of the rank-$k$ approximation $\mdnl{k}(\lpmi_{X; Y})$ to an optimization problem, where the objective is the approximation error $\norm{\lpmi_{X: Y} - f \otimes g}$, and where the optimization variables are $k$-dimensional features $f$ and $g$. Then,  $\mdnl{k}(\lpmi_{X; Y})$ can be represented 
 by the resulting optimal features in a factorized form.

However, we cannot directly compute the error $\norm{\lpmi_{X: Y} - f \otimes g}$ for given $f$ and $g$, due to the unknown $\lpmi_{X: Y}$. %
To address this issue, we introduce the H-score, proposed in  \citep{xu2020maximal, xu2022information}.%

\begin{definition}
  \label{def:H}
Given %
$k \geq 1$ and $f \in \spcn{\cX}{k}$, $g \in \spcn{\cY}{k}$, the H-score $\Hs(f, g)$ is defined as
\begin{align}
  \Hs(f, g)
  &\defeq \frac12\left(\bnorm{\lpmi_{X; Y}}^2 - \bbnorm{\lpmi_{X; Y} - f \otimes g}^2\right)  \label{eq:H}
\\
  &= \E{f^{\T}(X) g(Y)}
    - \left(\E{f(X)}\right)^{\T}\E{g(Y)}%
     - \frac12  \cdot \tr\left(\La_{f}\La_{g}\right),
     \label{eq:H:def}
\end{align}
where $\La_f = \E{f(X)f^\T(X)}$,  $\La_g = \E{g(Y)g^\T(Y)}$.
\end{definition}

The H-score measures the goodness of the approximation, with a larger H-score value  indicating a smaller approximation error. In particular, for $k$-dimensional feature inputs, the maximum value of H-score gives the total energy of top-$k$ dependence modes, achieved by the optimal rank-$k$ approximation. Formally, we have the following property
 from \factref{fact:low:rank}.

. %
\begin{property}
  \label{propty:hscore:max}
Given %
$k \geq 1$, let  $\sigma_i = \norm{\!\mdn{i}(\lpmi_{X; Y})}$  for $i \in [k]$. Then, for all $f \in \spcn{\cX}{k}$ and $g \in \spcn{\cY}{k}$, 
\begin{align}
\Hs(f, g) \leq \frac12 \norm{\!\mdnl{k}(\lpmi_{X; Y})}^2
 = \frac12 \sum_{i = 1}^k \sigma_i^2,\label{eq:Hs:leq}  
\end{align}
 where the inequality holds with equality if and only if
 $f \otimes g = \mdnl{k}(\lpmi_{X; Y})$.
\end{property}

In practice, for given features $f$ and $g$, 
we can efficiently compute the H-score $\Hs(f, g)$  from data samples,  by evaluating corresponding empirical averages in \eqref{eq:H:def}. Since $\Hs(f, g)$ is differentiable with respect to $f$ and $g$, we can use it as the training objective for learning the low-rank approximation of $\lpmi_{X; Y}$, where we use neural networks to parameterize $f$ and $g$ and optimize their parameters by batch (minibatch) gradient descent. Suppose the networks have sufficient expressive power, then the optimal solution gives the desired low-rank approximation $\mdnl{k}(\lpmi_{X; Y})$.

{
\begin{remark}
It is worth mentioning that the optimal features learned from finite data samples correspond to the modal decomposition of the associated empirical distribution   (cf. \secref{sec:geo:data}), which is generally different from the underlying distribution. As a result, the learned features will deviate from the theoretical values of features; see, e.g., \cite{huang2020sample, makur2020estimation}
for detailed discussions on the sample complexity of learning such features.
\end{remark}
}

Note that in this particular bivariate setting, the roles of $X$ and $Y$ (and the learned features $f$ and $g$) are symmetric. Moreover, we learn the features by directly factorizing the statistical dependence between $X$ and $Y$, instead of solving a specific inference task, e.g., predicting $Y$ based on $X$, or vice versa. Nevertheless, we can readily solve these inference tasks by simply assembling the learned features, as we will demonstrate next.

\subsection{Feature Assembling and Inference Models} %

With features $f \in \spcn{\cX}{k}, g \in \spcn{\cY}{k}$ learned from maximizing the H-score $\Hs(f, g)$, we then discuss the construction of different inference models by assembling these features. We first consider the case %
where $k \geq \rank(\lpmi_{X; Y})$ and we have learned
$f \otimes g = \lpmi_{X; Y}$ (cf. \proptyref{propty:hscore:max}). Then, we have the following proposition. A proof is provided in \appref{app:prop:pred:est}.
\begin{proposition}
  \label{prop:pred:est}
  Suppose $f \otimes g = \lpmi_{X; Y}$. Then, we have $    \norm{\lpmi_{X; Y}}^2 = \tr(\La_f  \La_g)$ and
  \begin{align}
    P_{Y|X}(y|x) =  P_Y(y) \left(1 + f^\T(x) g(y)\right).
    \label{eq:p:ygx}
  \end{align}
  In addition, for any $d$-dimensional function $\psi \in \spcn{\cY}{d}$, we have
  \begin{align}
    \E{\psi(Y)|X = x} = \E{\psi(Y)} +  \La_{\psi, g}  f(x),
  \label{eq:estimation}
  \end{align}
  {where $\La_{\psi, g} = \E{\psi(Y)g^\T(Y)}$.}
\end{proposition}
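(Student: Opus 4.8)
The plan is to treat the three claims in sequence, each obtained by unpacking one definition and substituting the standing hypothesis $f \otimes g = \lpmi_{X; Y}$, i.e., $f^\T(x) g(y) = \lpmi_{X; Y}(x, y)$ for all $x \in \cX, y \in \cY$. For the norm identity, I would simply note that $\norm{\lpmi_{X; Y}}^2 = \norm{f \otimes g}^2$ by hypothesis and then invoke the energy formula \eqref{eq:exp} of \proptyref{propty:exp}, which gives $\norm{f \otimes g}^2 = \tr(\La_f \La_g)$. This step is immediate.

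For the conditional distribution \eqref{eq:p:ygx}, I would start from the definition of the CDK in \eqref{eq:cdk:def} and rearrange it: since the metric distribution $P_X P_Y$ lies in the relative interior, $P_X$ and $P_Y$ are strictly positive on the support, so dividing through by $P_X(x) P_Y(y)$ shows that $\lpmi_{X; Y}(x, y) + 1 = P_{X, Y}(x, y)/(P_X(x) P_Y(y)) = P_{Y|X}(y|x)/P_Y(y)$. Multiplying by $P_Y(y)$ and substituting $\lpmi_{X; Y}(x, y) = f^\T(x) g(y)$ yields \eqref{eq:p:ygx} directly.

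For the conditional expectation \eqref{eq:estimation}, I would substitute the formula \eqref{eq:p:ygx} into $\E{\psi(Y)|X = x} = \sum_y P_{Y|X}(y|x) \psi(y)$ and split the resulting sum into two terms. The first term is $\sum_y P_Y(y) \psi(y) = \E{\psi(Y)}$. For the second, since $f^\T(x) g(y)$ is a scalar equal to $g^\T(y) f(x)$, I would factor $f(x)$ out on the right to obtain $\left(\sum_y P_Y(y) \psi(y) g^\T(y)\right) f(x) = \La_{\psi, g}\, f(x)$, recognizing the matrix $\La_{\psi, g} = \Ed{P_Y}{\psi(Y) g^\T(Y)}$, where the relevant $\cY$-marginal metric distribution is $P_Y$.

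There is no genuine obstacle here: the entire argument is a chain of definitional substitutions together with the single external input \eqref{eq:exp}. The only points that require care are bookkeeping ones, namely keeping track of which marginal metric distribution defines each $\La$ (here the $\cY$-marginal $P_Y$), and correctly transposing the scalar $f^\T(x) g(y)$ so that $f(x)$ is pulled out of the $y$-summation as a right factor, producing the matrix $\La_{\psi, g}$ rather than its transpose and giving the correct $d$-dimensional output.
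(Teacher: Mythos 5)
Your proposal is correct and follows essentially the same route as the paper's proof: the norm identity via \eqref{eq:exp}, the posterior by rearranging the definition of the CDK in \eqref{eq:cdk:def}, and the conditional expectation by substituting \eqref{eq:p:ygx} into the sum over $y$ and factoring out $f(x)$ to recognize $\La_{\psi, g}$. No gaps; the bookkeeping points you flag are exactly the ones that matter.
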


Therefore, we can compute the strength of $(X;Y)$ dependence, i.e., $\norm{\lpmi_{X; Y}}$ from the features $f$ and $g$. In addition, the posterior distribution \eqref{eq:p:ygx} and conditional expectation \eqref{eq:estimation} are useful for  supervised learning tasks. Specifically, we consider the case where 
 $X$ and $Y$ are the input variable and target variable, respectively. The $Y$ represents the categorical label in classification tasks, or the target to estimate in regression tasks.

In classification tasks, we can compute the posterior distribution $P_{Y|X}$ of the label $Y$ from \eqref{eq:p:ygx}. The corresponding corresponding MAP (maximum a posteriori) estimation is 
\begin{align}
  \yh_{\MAP}(x) = \argmax_{y \in \cY}P_{Y|X}(y|x) = \argmax_{y \in \cY} P_Y(y) \left(1 + f^\T(x) g(y)\right),
  \label{eq:map}
\end{align}
where $P_{Y}$ can be obtained from training set. This approach is also referred to as the maximal correlation regression (MCR) \citep{xu2020maximal}. {Similarly, the maximum likelihood estimation (MLE)
is given by
\begin{align}
  \yh_{\MLE}(x) = \argmax_{y \in \cY}P_{X|Y}(x|y) = \argmax_{y \in \cY} f^\T(x) g(y).
  \label{eq:ml}
\end{align}
}

If the target variable $Y$ is continuous, it is often of interest to estimate $Y$, or more generally, some transformation $\psi$ of $Y$. Then, the MMSE (minimum mean square error) estimation of $\psi(Y)$ based on $X = x$ is the conditional expectation $\E{\psi(Y)|X = x}$.
From \eqref{eq:estimation}, we can efficiently compute the conditional expectation, where
 $\E{\psi(Y)}$ and $\La_{\psi, g} = \E{\psi(Y)g^\T(Y)}$ can be evaluated from the training dataset by taking the corresponding empirical averages. Therefore, we obtain the model for estimating $\psi(Y)$ for any given $\psi$, by simply assembling the learned features without retraining.

 In practice, it can happen that feature dimension $k <  \rank(\lpmi_{X; Y})$, due to a potentially large $\rank(\lpmi_{X; Y})$. In such case, the best approximation of $\lpmi_{X; Y}$ would be 
the rank-$k$ approximation $\mdnl{k}(\lpmi_{X; Y})$, and we can establish a similar result as follows.  A proof is provided in \appref{app:prop:est:k}.

\begin{proposition}
  \label{prop:est:k}
  Suppose $f \otimes g = \mdnl{k}(\lpmi_{X; Y})$ for $k \geq 1$. Then, for all $d$-dimensional function $\psi \in \spnn{d}\{\cst{\cY}, g_{1}^*, \dots, g_k^*\}$, we have
    $\E{\psi(Y)|X = x} = \E{\psi(Y)} +  \La_{\psi, g}  f(x)$,
where $\cst{\cY}$ is the constant function $(y \mapsto 1) \in \spc{\cY}$,
 and where for each $i \in [k]$, $g_i^*$ is obtained from the standard form of $\mdn{i}(\lpmi_{X; Y})$: %
 $\sigma_i(f_i^* \otimes g_i^*) = \mdn{i}(\lpmi_{X; Y})$.
\end{proposition}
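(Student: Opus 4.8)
The plan is to reduce the claim to a single orthogonality fact about the remainder of the modal decomposition, starting from a rank-free posterior representation. First I would rewrite the conditional distribution in terms of the CDK: since $R_{X,Y} = P_XP_Y$, the definition \eqref{eq:cdk:def} gives $P_{X,Y}(x,y) = P_X(x)P_Y(y)\bigl(1 + \lpmi_{X;Y}(x,y)\bigr)$, hence $P_{Y|X}(y|x) = P_Y(y)\bigl(1 + \lpmi_{X;Y}(x,y)\bigr)$ exactly as in \eqref{eq:p:ygx}, with no assumption on $\rank(\lpmi_{X;Y})$. Multiplying by $\psi(y)$ and summing over $y \in \cY$ then yields
\begin{align}
  \E{\psi(Y)|X = x} = \E{\psi(Y)} + \Ed{P_Y}{\lpmi_{X;Y}(x, Y)\,\psi(Y)},
\end{align}
so it remains only to show that the second term equals $\La_{\psi, g}\,f(x)$.

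Next I would split $\lpmi_{X;Y} = (f \otimes g) + r_k(\lpmi_{X;Y})$, using the hypothesis $f \otimes g = \mdnl{k}(\lpmi_{X;Y})$ and the definition $r_k(\lpmi_{X;Y}) = \lpmi_{X;Y} - \mdnl{k}(\lpmi_{X;Y})$. The contribution of $f \otimes g$ is the desired linear term: since $(f \otimes g)(x, y) = f^\T(x) g(y) = \psi\text{-independent}\cdot g^\T(y) f(x)$, I get $\Ed{P_Y}{(f\otimes g)(x, Y)\,\psi(Y)} = \Ed{P_Y}{\psi(Y) g^\T(Y)}\,f(x) = \La_{\psi, g}\,f(x)$, because $\La_{\psi, g} = \Ed{P_Y}{\psi(Y)g^\T(Y)}$ under the marginal metric distribution $R_Y = P_Y$. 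Thus everything hinges on showing the remainder contributes nothing.

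The crux is therefore to prove $\Ed{P_Y}{r_k(\lpmi_{X;Y})(x, Y)\,\psi(Y)} = 0$ for every $\psi$ in the span of $g_0^*, \dots, g_k^*$. By linearity (and by working coordinate-wise on $\psi$) it suffices to check this for each $g_j^*$, $j \in \{0, 1, \dots, k\}$. Writing $r_k(\lpmi_{X;Y}) = \sum_{i = k+1}^K \sigma_i\,(f_i^* \otimes g_i^*)$ from the modal decomposition \eqref{eq:dcmp:xy}, I compute $\Ed{P_Y}{r_k(\lpmi_{X;Y})(x,Y)\,g_j^*(Y)} = \sum_{i=k+1}^K \sigma_i\,f_i^*(x)\,\ip{g_i^*}{g_j^*}$. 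For $j \in [k]$ every index $i \geq k+1$ satisfies $i \neq j$, so $\ip{g_i^*}{g_j^*} = \delta_{ij} = 0$ by \factref{fact:md:ortho}, and the sum vanishes. The only remaining case is $j = 0$, where $g_0^* \equiv 1$ and I must show $\ip{g_i^*}{g_0^*} = \Ed{P_Y}{g_i^*(Y)} = 0$ for each $i \geq 1$.

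I expect this last identity—that the singular features $g_i^*$ are zero-mean—to be the only nonroutine step. The cleanest route is to use that $\lpmi_{X;Y}$ is orthogonal to every function of $x$ alone: for $h \in \spc{\cX}$ viewed inside $\spc{\cX \times \cY}$, $\ip{\lpmi_{X;Y}}{h} = \Ed{P_X}{h(X)\,\Ed{P_Y}{\lpmi_{X;Y}(X, Y)}} = 0$, since a direct computation from \eqref{eq:cdk:def} gives $\Ed{P_Y}{\lpmi_{X;Y}(x,Y)} = 0$ for every $x$. Applying this with $h = f_i^* \otimes 1 \in \spc{\cX}$ and expanding via \proptyref{propty:exp} together with the orthonormality $\ip{f_i^*}{f_j^*} = \delta_{ij}$ from \factref{fact:md:ortho} gives $0 = \ip{\lpmi_{X;Y}}{f_i^* \otimes 1} = \sigma_i\,\ip{g_i^*}{1}$, and since $\sigma_i > 0$ I conclude $\ip{g_i^*}{1} = 0$. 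Combining the three cases establishes that the remainder term drops out, which completes the proof.
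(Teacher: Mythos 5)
Your proposal is correct and follows essentially the same route as the paper's proof: expand $P_{Y|X}(y|x) = P_Y(y)(1+\lpmi_{X;Y}(x,y))$, split $\lpmi_{X;Y}$ into $\mdnl{k}(\lpmi_{X;Y}) = f\otimes g$ plus the remainder $\sum_{i>k}\sigma_i (f_i^*\otimes g_i^*)$, and kill the remainder by orthogonality of $\psi$ to each $g_i^*$ with $i>k$. The only difference is that you explicitly verify $\Ed{P_Y}{g_i^*(Y)}=0$ via $\lpmi_{X;Y}\perp\spc{\cX}$, a step the paper leaves implicit.
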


\subsection{Constrained Dependence Approximation }
\label{sec:h:const}

We can readily extend the above analysis to the constrained low-rank approximation problem. Specifically,  we consider the constrained rank-$k$ approximation [cf. \eqref{eq:lora:constraint}] $\mdnl{k}(\lpmi_{X; Y}|\sspc{\cX}, \sspc{\cY})$ for $k \geq 1$, where $\sspc{\cX}$ and $\sspc{\cY}$ are subspaces of $\spc{\cX}$ and $\spc{\cY}$, respectively.
Analogous to \proptyref{propty:hscore:max}, when we restrict $f \in  \sspcn{\cX}{k}$ and $g \in \sspcn{\cY}{k}$, the H-score $\Hs(f, g)$ is maximized if and only if $f \otimes g = \mdnl{k}(\lpmi_{X; Y}|\sspc{\cX}, \sspc{\cY})$.

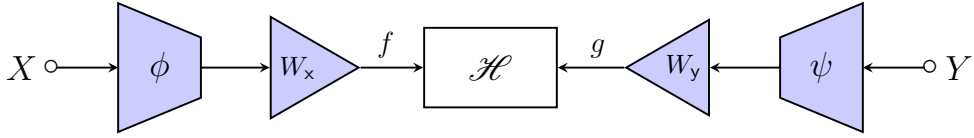
\begin{figure}[!ht]
  \centering
  \resizebox{.8\textwidth}{!}{%
\begin{tikzpicture}[auto, thick, node distance=2cm, >=stealth]
  \draw node [block] (H) at (0, 0) {\Large $\Hs$}
  node [extl, left of = H, node distance=2.8cm] (wx) {$\wx$}
  node [ext, left of = wx, rotate = -90] (phi) {}
  node [extl, right of = H, node distance=2.8cm, rotate = 180] (wy) {}
  node [ext, right of = wy, rotate = 90] (psi) {}
  node [input, left of = phi, node distance=1.5cm] (x)  {}
  node [xshift = -0.5cm] at (x) {\Large $X$}
  node [xshift = -0.7mm] at (x) {\Large \textopenbullet}
  node [input, right of = psi, node distance=1.5cm] (y)  {}
  node [xshift = 0.5cm] at (y){\Large $Y$} 
  node [xshift = 0.7mm] at (y) {\Large \textopenbullet} 
  node at (phi) {\Large$\phi$}
  node at (psi) {\Large$\psi$}
  node at (wy) {$\wy$};
  \draw [->] (x) -- (phi);
  \draw[->] (phi) -- (wx);
  \draw[->] (wx) -- (H) node [pos = .4] {\large $f$};
  \draw[->] (y) -- (psi);
  \draw[->] (psi) -- (wy);
  \draw [->] (wy) -- (H) node [pos = .4, above] {\large $g$};  
\end{tikzpicture}

}
  \caption{Features $f$, $g$ as the output of linear layers. The linear layers are represented as triangle modules, with inputs $\phi, \psi$, and weight matrices $\wx$, $\wy$, respectively.} %
  \label{fig:wx:wy}
\end{figure}

As an application, we can characterize the effects of the restricted expressive power on feature learning. To begin, we consider the maximization of H-score $\Hs(f, g)$, where features $f$ and $g$ are $k$-dimensional outputs of neural networks. In particular, we assume the last layers of the networks are linear layers, which is a common network architecture design in practice. The
 overall network architecture is shown in \figref{fig:wx:wy}, where
we express $f$ as the composition of feature extractor
 $\phi \in \spcn{\cX}{\dx}$ and the last linear layer 
with weight matrix $\wx \in \mathbb{R}^{k\times \dx}$. Similarly, we represent $g$ as the composition of $\psi \in \spcn{\cY}{\dy}$ and 
the linear layer with weight $\wy \in \mathbb{R}^{k\times \dy}$.

Suppose we have trained the weights $\wx, \wy$ and the parameters in $\phi, \psi$ to maximize the H-score $\Hs(f, g)$, and the weights $\wx$ and $\wy$ have converged to their optimal values with respect to $\phi$ and $\psi$. Note that for any given $\phi, \psi$, $f = \wx \phi$ takes values from the set $\{\wx \phi\colon \wx \in \mathbb{R}^{k\times \dx} \} = \spnn{k}\{\phi\}$, and, similarly, $g  = \wy \psi$ takes values from $\spnn{k}\{\psi\}$. Therefore, the optimal $(f, g)$ corresponds to the solution of a constrained low-rank approximation problem, %
and we have
$f \otimes g = \mdnl{k}(\lpmi_{X; Y}|\spn\{\phi\}, \spn\{\psi\})$.

In addition, from \propref{prop:md:constraint} and the orthogonality principle, we can express the approximation error as 
  \begin{align}
    \bnorm{\lpmi_{X; Y} - f \otimes g}^2 
    &=  \bnorm{\lpmi_{X; Y} - \lpmi'_{X; Y} + \lpmi'_{X; Y} - \mdnl{k}(\lpmi'_{X; Y})}^2 \notag\\
    &= \norm{\lpmi_{X; Y} - \lpmi'_{X; Y}}^2 + \norm{r_k(\lpmi'_{X; Y})}^2,
    \label{eq:fg:err:w}
  \end{align}
where $\lpmi'_{X; Y} \defeq \proj{\lpmi_{X; Y}}{\spn\{\phi\} \otimes \spn\{\psi\}}$. Note that \eqref{eq:fg:err:w} decomposes the overall approximation into two terms, where the first term characterizes the effects of insufficient expressive power of $\phi, \psi$, and the second term characterizes the impacts of feature dimension $k$.

\subsection{Relationship to Classification DNNs }
\label{sec:h:dnn}
We conclude this section by discussing a relation between the dependence approximation framework and deep neural networks, studied in \cite{xu2022information}. We consider a classification task where $X$ and $Y$ denote the input data and the target label to predict, respectively. Then, we can interpret the log-likelihood function of DNN as an approximation of the H-score, and thus DNN also learns strongest modes of  $(X; Y)$ dependence.

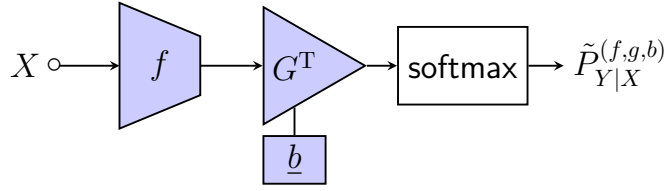
\begin{figure}[!ht]
  \centering  \resizebox{.55\textwidth}{!}{%
\begin{tikzpicture}[auto, thick, node distance=2cm, >=stealth]

  \draw node [block] (sl) at (0, 0) {\Large \sf softmax}
  node [extl, left of = sl, node distance=2.5cm] (G) {\Large $G^\T$}
  node [bias, below of = G, node distance=1.4cm] (b) {\Large $\vec{b}$}
  node [ext, left of = G, rotate = -90] (f) {}
  node [input, left of = f, node distance=1.5cm] (x)  {}
  node [xshift = -0.5cm] at (x) {\Large $X$}
  node [xshift = -0.7mm] at (x) {\Large \textopenbullet}
  node at (f) {\Large$f$};
  \draw [->] (x) -- (f);
  \draw [-] (G) -- (b);
  \draw[->] (f) -- (G);
  \draw[->] (G) -- (sl);
  \draw[->] (sl) -- ++ (1.5, 0) node [right] {\Large $\Pt^{(f, g, b)}_{Y|X}$};

\end{tikzpicture}

}
  \caption{A classification DNN for predicting label $Y$ based on the input $X$. All layers before the classification layer are represented as feature extractor $f$.  The weight and bias associated with each class $Y = y$ are denoted by $g(y)$ and $b(y)$, respectively, which give weight matrix $G^\T$ and bias vector $\vec{b}$ with $G = [g(1), \dots, g(|\cY|)], \vec{b} = [b(1), \dots, g(|\cY)]^\T$. The softmax module outputs a posterior probability,     parameterized by  $f$, $g$ and $b$.%
}
  \label{fig:dnn}
\end{figure}

To begin, let $\{(x_i, y_i)\}_{i=1}^n$ denote the training data, with empirical distribution $P_{X, Y}$ as defined in \eqref{eq:emp:dist:xy}. We depict the architecture of typical classification DNN
 in \figref{fig:dnn}, where we abstract all layers before classification layer as a $k$-dimensional feature extractor $f \in \spcn{\cX}{k}$. %
 The feature $f$ is then processed by 
a classification layer with weight matrix  $G^\T$ and the bias vector $\vec{b}$, and activated by the softmax function\footnote{The softmax function is defined such that, for all $k > 1$ and each $\vec{v} = (v_1, \dots, v_k)^\T \in \mathbb{R}^k$, we have $\softmax(\vec{v}) \in \mathbb{R}^k$, with each $i$-th entry being $\ds[\softmax(\vec{v})]_i \defeq \frac{\exp(v_i)}{\sum_{j = 1}^k \exp(v_j)}$, $i \in [k]$. }. Without loss of generality, we assume $\cY = \{1, \dots, |\cY|\}$, then we can represent
  $G$ and $\vec{b}$ as
\begin{align}
  G \defeq [g(1), \dots, g(|\cY|)] \in \mathbb{R}^{k \times |\cY|},\quad
  \vec{b} \defeq [b(1), \dots, b(|\cY|)]^\T \in \mathbb{R}^{|\cY|},
  \label{eq:def:G:b}
\end{align}
where $g(y) \in \mathbb{R}^k$ and $b(y)\in \mathbb{R}$ denote the weight and bias associated with each class $Y = y$, respectively.  
Then, the softmax output of $(G^\T f(x) + \vec{b})$ gives a parameterized posterior %
\begin{align}
  \Pt^{(f, g, b)}_{Y|X}(y|x)
  \defeq \frac{\exp(f(x) \cdot g(y) + b(y))}{\sum_{y' \in \cY}\exp(f(x) \cdot g(y') + b(y'))}.
  \label{eq:Pt}
\end{align}

\newcommand{\llog}{\mathcal{L}} %

The network parameters are trained to maximize\footnote{{This is equivalent to minimizing $-\llog(f, g, b)$, i.e., the log loss (cross entropy loss).}}
 the resulting log-likelihood\footnote{Throughout our development, all
  logarithms are base $e$, i.e., natural.}
 function
\begin{align}
 \llog(f, g, b) \defeq
 \frac{1}{n} \sum_{i = 1}^n \log\Pt^{(f, g, b)}_{Y|X}(y_i|x_i)  
  =\Ed{(\Xh, \Yh)\sim P_{X, Y}}{\log\Pt^{(f, g, b)}_{Y|X}(\Yh|\Xh)}.
  \label{eq:likelihood:def}
\end{align}
We further define $\ds \llog(f, g) \defeq \max_{b \in \spc{\cY}}\llog(f, g, b)$, by setting the bias $b$ to its optimal value with respect to given $f$ and $g$. It can be verified that $\llog(f, g)$ depends only on the centered versions of $f$ and $g$, formalized as follows. A proof is provided in \appref{app:propty:dnn:center}.
\begin{property}
\label{propty:dnn:center}
  For all $k \geq 1$ and $f \in \spcn{\cX}{k}$, $g \in \spcn{\cY}{k}$, we have $\llog(f, g) = \llog(\ft, \gt)$, where we have defined  $\ft \in \spcn{\cX}{k}, \gt \in \spcn{\cY}{k} $ as $\ft \defeq \proj{f}{\spct{\cX}}, \gt \defeq \proj{g}{\spct{\cY}}$, i.e.,
 $\ft(x) = f(x) - \E{f(X)}$,  $\gt(y) = g(y) - \E{g(Y)}$, for all $x \in \cX, y \in \cY$.
\end{property}

Therefore, it is without loss of generalities to restrict our discussions to zero-mean $f$ and $g$. Specifically, we can verify that for the trivial choice of feature $f = 0$, the resulting likelihood function is $\llog(0, g) = \llog(0, 0) = -H(Y)$, achieved when the posterior distribution satisfies
$\Pt_{Y|X}^{(0, g, b)} = P_Y$, where $H(\cdot)$ denotes the Shannon entropy. In general, we have the following characterization of $\llog(f, g)$, which extends \citep[Theorem 4]{xu2022information}. A proof is provided in \appref{app:prop:dnn}.

\begin{proposition}
  \label{prop:dnn}
  Suppose $X$ and $Y$ are $\eps$-dependent. For all $k\geq 1$, and $f \in \spctn{\cX}{k}$, $g \in \spctn{\cY}{k}$,  if $\llog(f, g) \geq \llog(0, 0) = -H(Y)$, then we have $\norm{f \otimes g} = O(\eps)$, and
  \begin{align}
    \llog(f, g) 
    & = \llog(0, 0) + \underbrace{\frac{1}{2} \cdot \left(\norm{\lpmi_{X; Y}}^2 - \bnorm{\lpmi_{X; Y} - f \otimes g}^2
 \right)}_{=\Hs(f, g)} + o(\eps^2)    \label{eq:l:fg}%
  \end{align}
which is maximized if and only if %
 $f \otimes g = \mdnl{k}(\lpmi_{X, Y}) + o(\eps)$.
\end{proposition}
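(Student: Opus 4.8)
The plan is to rewrite the likelihood gap $\llog(f,g)-\llog(0,0)$ exactly and then reduce the entire statement to a local, second-order expansion of a KL divergence. First I would note that, with $\Pt^{(f,g,b)}_{Y|X}$ denoting the softmax posterior in \eqref{eq:Pt}, the log-likelihood admits the exact decomposition
\begin{align}
  \llog(f,g,b) = -H(Y|X) - \Ed{X}{\bkld{P_{Y|X}}{\Pt^{(f,g,b)}_{Y|X}}},
\end{align}
since $\Ed{(X,Y)}{\log P_{Y|X}(Y|X)} = -H(Y|X)$ and the discrepancy between the true and model posteriors is precisely the expected KL divergence. Maximizing over $b$ and using $-H(Y|X) = -H(Y) + I(X;Y)$ together with $\llog(0,0) = -H(Y)$ gives
\begin{align}
  \llog(f,g) = \llog(0,0) + I(X;Y) - \min_b\Ed{X}{\bkld{P_{Y|X}}{\Pt^{(f,g,b)}_{Y|X}}}.
\end{align}
Invoking \lemref{lem:mutual} for $I(X;Y) = \tfrac12\norm{\lpmi_{X;Y}}^2 + o(\eps^2)$, the target formula \eqref{eq:l:fg} then reduces to the single identity
\begin{align}
  \min_b\Ed{X}{\bkld{P_{Y|X}}{\Pt^{(f,g,b)}_{Y|X}}} = \tfrac12\bnorm{\lpmi_{X;Y} - f\otimes g}^2 + o(\eps^2).
  \label{plan:kl}
\end{align}

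I would establish \eqref{plan:kl} by a local quadratic expansion. Parametrizing both posteriors through their density ratios against $P_Y$, namely $P_{Y|X}(y|x) = P_Y(y)(1 + \lpmi_{X;Y}(x,y))$ and $\Pt^{(f,g,b)}_{Y|X}(y|x) = P_Y(y)(1 + \mu(x,y))$ with both ratios $O(\eps)$, I would Taylor-expand the KL integrand to second order. The first-order term vanishes for every $b$, because both density ratios are zero-mean under $P_Y$ (the $\mu$ case being normalization, $\sum_y P_Y(y)\mu(x,y) = 0$), leaving $\Ed{X}{\bkld{P_{Y|X}}{\Pt^{(f,g,b)}_{Y|X}}} = \tfrac12\norm{\lpmi_{X;Y} - \mu}^2 + o(\eps^2)$ in the metric $R_{X,Y} = P_XP_Y$. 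A short computation with the log-partition function, using $\Ed{P_Y}{g(Y)} = 0$ to show $\log Z(x)$ is constant to leading order, gives $\mu = f\otimes g + \tilde\beta$ to leading order, where $\tilde\beta$ is a zero-mean function of $y$ alone and ranges over all of $\spcs{\cY}{\varnothing}$ as $b$ varies. Since $\lpmi_{X;Y} - f\otimes g$ is centered in $x$ while $\tilde\beta$ depends only on $y$, the two are orthogonal, so the minimization over $b$ is solved at $\tilde\beta = 0$ and yields \eqref{plan:kl}. Substituting back produces \eqref{eq:l:fg}, and the ``maximized if and only if'' clause is then immediate from \proptyref{propty:hscore:max} applied to $\Hs(f,g) = \tfrac12(\norm{\lpmi_{X;Y}}^2 - \norm{\lpmi_{X;Y} - f\otimes g}^2)$, whose maximizer over rank-$k$ products is $\mdnl{k}(\lpmi_{X;Y})$; the $o(\eps^2)$ error perturbs the optimal product only by $o(\eps)$.

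The step I expect to be the main obstacle is justifying the first conclusion, $\norm{f\otimes g} = O(\eps)$, which is also exactly what makes the expansion above legitimate, since \emph{a priori} $f$ and $g$ need not be small. I would extract it from the hypothesis $\llog(f,g) \geq \llog(0,0)$: combined with the universal bound $\llog(f,g) \leq -H(Y|X)$ (the true posterior maximizes the likelihood), the exact decomposition forces $\min_b\Ed{X}{\bkld{P_{Y|X}}{\Pt^{(f,g,b)}_{Y|X}}} \leq I(X;Y) = O(\eps^2)$. By Pinsker's inequality the optimally-biased softmax posterior is then $O(\eps)$-close to $P_{Y|X}$, hence to $P_Y$, so its log-odds against $P_Y$ --- which equals $f(x)\cdot g(y)$ plus a function of $x$ alone and a function of $y$ alone --- is $O(\eps)$; projecting onto the doubly-centered subspace isolates $f\otimes g$ and gives $\norm{f\otimes g} = O(\eps)$. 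The delicate point is passing between pointwise and norm estimates and between the various notions of closeness, but on finite alphabets with $R_{X,Y} = P_XP_Y \in \relint(\cP^{\cX\times\cY})$ these are equivalent up to fixed distribution-dependent constants, so the $O(\eps)$ estimates propagate and the local regime is entered cleanly.
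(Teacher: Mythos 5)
Your proposal is correct, but it takes a genuinely different route from the paper's. The paper works directly with the log-partition function: writing $\gamma = f\otimes g + a$ with $a(y) = b(y) - \log P_Y(y)$, it obtains the exact identity $\llog(f,g,b) = -H(Y) + \ip{\lpmi_{X;Y}}{\gamma} - \Ed{\Xh\sim P_X}{\log\sum_{y'}P_Y(y')\exp(\gamma(\Xh,y'))}$, and to enter the local regime it proves the bespoke convexity estimate of \lemref{lem:p:q}, which together with Jensen and a Cauchy--Schwarz bound on $\ip{\lpmi_{X;Y}}{\gamma}$ forces $\max_{x,y}|\gamma(x,y)| = O(\eps)$ before Taylor-expanding the log-partition term. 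You instead use the exact decomposition $\llog(f,g,b) = -H(Y|X)-\Ed{X}{\bkld{P_{Y|X}}{\Pt^{(f,g,b)}_{Y|X}}}$, deduce $\min_b\Ed{X}{\bkld{P_{Y|X}}{\Pt^{(f,g,b)}_{Y|X}}}\leq I(X;Y)=O(\eps^2)$ from the hypothesis plus the information inequality, and enter the local regime via Pinsker and finite-alphabet constants; the quadratic KL expansion you then need is exactly \factref{fact:local:z}, which the paper already invokes for \thmref{thm:pest:pml}. Both routes finish with the same orthogonality step ($\lpmi_{X;Y}-f\otimes g\perp\spc{\cY}$, so the bias-dependent residual separates and is optimally zero) and both treat the final ``maximized iff $\ldots + o(\eps)$'' clause at the same level of rigor. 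What your version buys is a conceptually cleaner, more standard information-theoretic entry into the local regime that reuses machinery from elsewhere in the paper and avoids \lemref{lem:p:q} entirely; what the paper's version buys is self-containedness and an explicit quantitative threshold on $\eps$. The one place to be slightly careful in your write-up is normalizing the near-optimal bias (taking $a = b-\log P_Y$ zero-mean) before concluding that the leftover function of $y$ alone is itself $O(\eps)$, but as you note this is routine on a finite alphabet with $P_XP_Y\in\relint(\cP^{\cX\times\cY})$.
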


From \propref{prop:dnn}, the H-score $\Hs(f, g)$ coincides with likelihood function $\llog(f, g)$ in the local regime. For a fully expressive feature extractor $f$ of dimension $k$, the optimal feature $f$ and weight matrix $G^\T$ are approximating the  rank-$k$ approximation of $(X; Y)$ dependence. In this sense, the weight matrix $G^\T$ in classification DNN
essentially characterizes a feature of the label $Y$, 
with a role symmetric to feature extractor $f$. 
However, unlike the H-score implementation, the classification DNN is restricted to categorical $Y$ to make the softmax function \eqref{eq:Pt} computable.

{
\begin{remark}
For general $X, Y$,
 the optimal features $f, g$ that optimize the likelihood function \eqref{eq:likelihood:def} can deviate from the low-rank approximation characterization in \propref{prop:dnn} and have different behaviors. See \cite{xu2018geometric} for detailed discussions.
\end{remark}
}

\section{Nesting Technique for Dependence Decomposition}
\label{sec:nest}
In multivariate learning applications, it is often difficult to summarize the statistical dependence as some bivariate dependence. Instead, the statistical dependence of interest is typically only a component decomposed from the original dependence. In this section, we introduce a nesting technique, 
{which allows us to implement such dependence decomposition operations by training corresponding neural feature extractors.} 
For the ease of presentation, we adopt the bivariate setting introduced previously and consider the feature geometry on $\spc{\cX \times \cY}$ with metric distribution $P_{X}P_Y$. We will discuss the multivariate extensions in later sections.

\subsection{Nesting Configuration and Nested H-score}

The nesting technique is a systematic approach to learn features representing projected dependence components or their modal decomposition. In particular, for a given dependence component of interest, we can construct corresponding training objective for learning the dependence component. 
The resulting training objective is an aggregation of different H-scores, where the inputs to these H-scores are features forming a nested structure. We refer to such functions as the \emph{nested H-scores}. To specify a nested H-score, we introduce its configuration, referred to as the \emph{nesting configuration}, defined as follows.

\begin{definition}
\label{def:nest}
Given $\cX, \cY$ and $k \geq l \geq 1$, we define an $l$-level nesting configuration for $k$-dimensional features as the tuple $\left\{(d_1, \dots, d_l);\,
  \bigl(\sspcs{\cX}{1}, \dots,   \sspcs{\cX}{l}\bigr);\,
  \sspc{\cY}
  \right\}$, 
where  %
\begin{itemize}
\item $(d_1,  \cdots, d_l)$ is a sequence with $d_i > 0$ and $\sum_{i = 1}^l d_i = k$;
\item $\bigl(\sspcs{\cX}{1}, \dots,   \sspcs{\cX}{l}\bigr)$ is an increasing sequence of $l$ subspaces of $\spc{\cX}$:
 $\sspcs{\cX}{1} \subset \dots \subset \sspcs{\cX}{l}$;
\item $\sspc{\cY}$ is a subspace of $\spc{\cY}$.
\end{itemize}
\end{definition}

\paragraph{Nested H-score}

Given a nesting configuration $\cC = \left\{(d_1, \dots, d_l);\,
  \bigl(\sspcs{\cX}{1}, \dots,   \sspcs{\cX}{l}\bigr);\,
  \sspc{\cY}
  \right\}$ for $k$-dimensional features, the associated nested H-score is a function of $k$-dimensional feature pair $f$ and $g$, which we denote by $\Hs(f, g; \cC)$, specified as follows. To begin, let us define $\dl{i} \defeq \sum_{j = 1}^i d_j$ for each $ 0 \leq i \leq l$, representing the total dimension up to $i$-th level. Then, we define the domain of $\Hs(f, g; \cC)$, denoted by $\dom(\cC)$,  as
 \begin{align}
 \dom(\cC) \defeq \left\{(f, g) \colon f \in \spcn{\cX}{k}, g \in \sspcn{\cY}{k},
   f_j \in \sspcs{\cX}{i}, \text{ for all $\dl{i-1} < j \leq \dl{i}$}
   \right\}.
   \label{eq:dom:c}
 \end{align}%
Then, for $(f, g) \in \dom(\cC)$ and each $i \in [l]$, we obtain the H-score $\Hs(f_{[\dl{i}]}, g_{[\dl{i}]})$ by taking the first $\dl{i}$ dimensions of $f, g$. We define the nested H-score $\Hs(f, g; \cC)$ by 
taking the sum of these $l$ H-scores, %
\begin{align}
  \Hs\left(f, g; \cC \right) \defeq
  \sum_{i = 1}^l \Hs(f_{[\dl{i}]}, g_{[\dl{i}]}),\qquad (f, g) \in \dom(\cC).
  \label{eq:hnest:def}
\end{align}

From \eqref{eq:hnest:def}, the nested H-score aggregates different H-scores with nested input features. The nested structure of features is specified by the increasing sequence of dimension indices:
 $[\dl{1}] \subset \dots \subset [\dl{l}] = [k]$, determined by the sequence $(d_1, \dots, d_l)$. The domain of features is specified by
subspaces in the configuration. When $\sspcs{\cX}{i} = \sspc{\cX}$ for all $i \in [l]$, we can simply write the configuration as $\cC = \left\{(d_1, \dots, d_l);\,
  \sspc{\cX};\,
  \sspc{\cY}
  \right\}$ without ambiguity. In particular, we can represent the original H-score for $k$-dimensional input features
as a nested H-score configured by $\{k;\, \spc{\cX} ; \,\spc{\cY}\}$ .

\begin{remark}
{Note that the nested H-score \eqref{eq:hnest:def} is obtained by using a sum function to aggregate different H-score terms $\Hs(f_{[\dl{i}]}, g_{[\dl{i}]})$, $ i = 1, \dots, l$. We shall comment
that the choice of such aggregation functions is not unique.  Generally, for an $l$-level nesting configuration, we can apply any differentiable  $\Gamma\colon \mathbb{R}^l \to \mathbb{R}$ as an aggregation function if $\Gamma$ is strictly increasing in each argument. The aggregated result 
$\Gamma(\Hs(f_{[\dl{1}]}, g_{[\dl{1}]}), \dots, \Hs(f_{[\dl{l}]}, g_{[\dl{l}]}))$ defines a nested H-score that satisfies the same collection of properties.
}
 For the ease of presentation, we adopt the sum form  \eqref{eq:hnest:def} throughout our development, but also provide general discussions in \appref{app:opt} for completeness.
\end{remark}

\begin{remark}
By symmetry, 
we can also define the configuration
$\left\{(d_1, \dots, d_l);\,
  \sspc{\cX};\,
  \bigl(\sspcs{\cY}{1}, \dots,   \sspcs{\cY}{l}\bigr)
  \right\} $ and the associated nested H-score,
for subspaces $\sspc{\cX}$ of $\spc{\cX}$ and $\sspcs{\cY}{1} \subset \dots \subset \sspcs{\cY}{l}$ of $\spc{\cY}$.  %
\end{remark}

\paragraph{Refinements of Nesting Configuration}

Given a nesting configuration for $k$-dimensional features $\cC = \left\{(d_1, \dots, d_l);~
  \bigl(\sspcs{\cX}{1}, \dots,   \sspcs{\cX}{l}\bigr);~
  \sspc{\cY}
  \right\}$, the sequence $(d_1,\dots, d_l)$ defines a partition that separates the $k$ dimensions into $l$ different groups. By refining such partition, we can construct new configurations with higher levels, which we refer to as refined configurations.
In particular, the finest refinement corresponds to the partition where each group has only one dimension. We use $\refine{\cC}$ to denote the finest refinement of $\cC$, given by
 \begin{align}
  \refine{\cC} \defeq   \left\{\ones{k};\,
  \Bigl(\bigl(\sspcs{\cX}{1}\bigr)^{d_1}, \dots,   \bigl(\sspcs{\cX}{l}\bigr)^{d_l}\Bigr);\,
  \sspc{\cY}
  \right\}, 
   \label{eq:refine}
\end{align}
where we have used $\ones{k}$ to denote the all-one sequence of length $k$, and where $\Bigl(\bigl(\sspcs{\cX}{1}\bigr)^{d_1}, \dots,   \bigl(\sspcs{\cX}{l}\bigr)^{d_l}\Bigr)$ represents the length-$k$ sequence
starting with $d_1$ terms of $\sspcs{\cX}{1}$, followed by $d_2$ terms of $\sspcs{\cX}{2}$, up to  $d_l$ terms of $\sspcs{\cX}{l}$.
From \eqref{eq:dom:c}, such refinements do not change the domain, and we have $\dom(\refine{\cC}) = \dom(\cC)$. The corresponding nested H-score is 
\begin{align}
  \Hs\left(f, g; \refine{\cC} \right) =
  \sum_{i = 1}^k \Hs(f_{[i]}, g_{[i]}), \quad (f, g) \in \dom(\cC).
  \label{eq:hnest:r:def}
\end{align}

\subsection{Nesting Technique for  Modal Decomposition }
We then demonstrate the application of nesting technique in learning modal decomposition. %
Given $k$-dimensional features $f, g$, we consider the nesting configuration
$\cnest$, which can also be obtained from the original H-score by the refinement \eqref{eq:refine}: $\cnest = \refine{\cnone}$.
The corresponding nested H-score is the sum of $k$ H-scores:
\begin{align}
  \Hs(f, g; \cnest) = \sum_{i = 1}^k \Hs(f_{[i]}, g_{[i]}).
  \label{eq:H:cnest}
\end{align}
Note that from \proptyref{propty:hscore:max}, for each $i \in [k]$, the H-score $\Hs(f_{[i]}, g_{[i]})$ is maximized if and only if $f_{[i]} \otimes g_{[i]} = \mdnl{i}(\lpmi_{X; Y})$. Therefore, all $k$ terms of H-scores are maximized simultaneously, if and only if we have
$f_{[i]} \otimes g_{[i]} = \mdnl{i}(\lpmi_{X; Y})$ for all $i \in [k]$. By definition, this is also equivalent to
\begin{align}
  f_i \otimes g_i = \mdn{i}(\lpmi_{X; Y}), \quad i \in [k].
  \label{eq:md:fg}
\end{align}

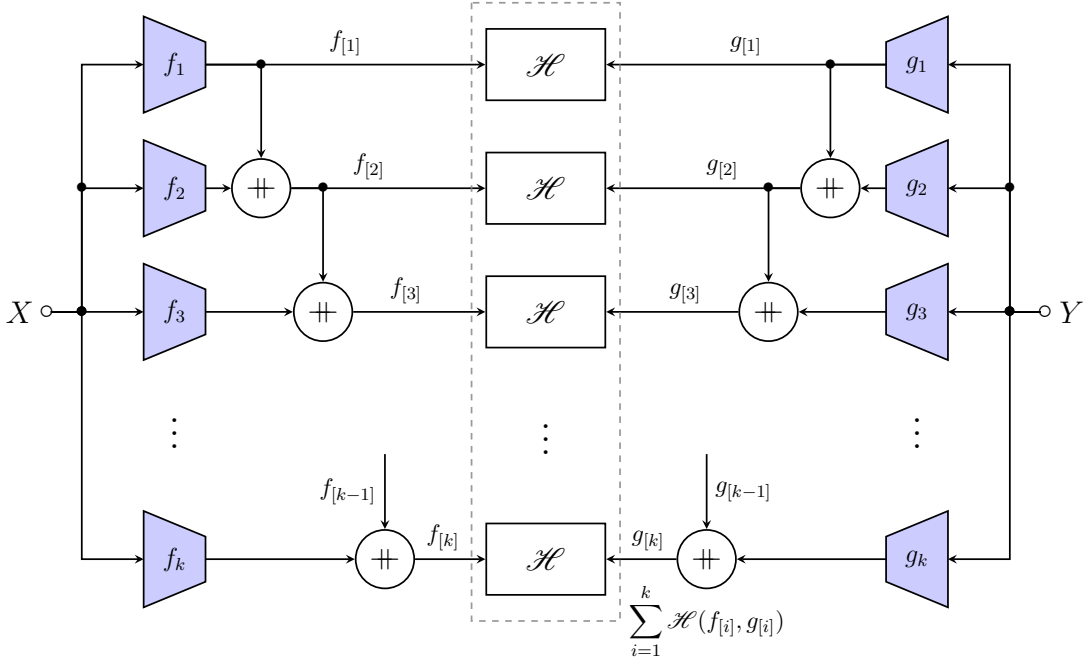
\begin{figure}[!t]
  \centering
  \resizebox{.9\textwidth}{!}{%
\def\dy{2}
\def\dx{.5}
\begin{tikzpicture}[auto, thick, node distance=2cm, >=stealth%
  ]
  \foreach \i/\text [count = \y] in {1/{$\Hs(f_{[1]},g_{[1]})$}, 2/{$\Hs(f_{[2]},g_{[2]})$}, 3/{$\Hs(f_{[3]},g_{[3]})$}, missing/, 4/{$\Hs(f_{[k]},g_{[k]})$}}
  {
    \draw node [block/.try, \i/.try] (H-\i) at (8, -\dy * \y cm) {};
}

  \foreach \i/\sub in {1/1, 2/2, 3/3, 4/k}
  {
    \draw node at (H-\i) [] {\Large $\Hs$};
    \draw node [exts, left of = H-\i, node distance=6cm, rotate = -90] (f-\i)  {} node at (f-\i) {\large $f_\sub$};
    \draw node [exts, right of = H-\i, node distance=6cm, rotate = 90] (g-\i)  {} node at (g-\i) {\large$g_\sub$};   
}
  \draw node [missing] at (f-1|-H-missing) {}
       node [missing] at (g-1 |- H-missing) {};

       \draw [->] (f-1) -- node {$f_{[1]}$}(H-1);
       \draw [->] (g-1) -- node [above] {$g_{[1]}$} (H-1);

  \foreach \i/\sup [count = \c] in {2/2, 3/3, 4/k}
  {
    \draw node [con, right of = f-\i, node distance=.4cm, xshift= \c cm] (conl-\c)  {\con};
    \draw node [con, left of = g-\i, node distance=.4cm, xshift = -\c cm] (conr-\c)  {\con};
    \draw [->] (f-\i) -- (conl-\c);
    \draw [->] (conl-\c) -- node [pos = .4] {$f_{[\sup]}$} (H-\i);
    \draw [->] (g-\i) -- (conr-\c);
    \draw [->] (conr-\c) -- node [above, pos = .4] {$g_{[\sup]}$} (H-\i);
}

\draw [->] (f-1) -| (conl-1);
\draw [->] (g-1) -| (conr-1);
\draw [->] (conl-1) -| (conl-2);
\draw [->] (conr-1) -| (conr-2);
\draw [->] ($(conl-3) + (0, 1.7)$) -- node [left] {$f_{[k-1]}$} (conl-3);
\draw [->] ($(conr-3) + (0, 1.7)$) -- node {$g_{[k-1]}$} (conr-3);

    \foreach \i in {1, 2}
    {
      \draw node at (conl-\i |- f-\i) {\textbullet} ;
      \draw node at (conr-\i |- g-\i) {\textbullet} ;
  }
\draw node [input] (x) at (0,-6) {}
     node [xshift = -0.5cm] at (x) {\Large $X$}
     node [xshift = -0.7mm] at (x) {\Large \textopenbullet};
        
     \draw node [input] (y) at (16,-6) {}
     node [xshift = 0.5cm] at (y){\Large $Y$} 
     node [xshift = 0.7mm] at (y) {\Large \textopenbullet} ;

  \foreach \i [count = \y] in {1, ..., 4}
  {
    \draw [->] (x) -- + (\dx, 0) node {\textbullet} |- (f-\i);
    \draw [->] (y) -- + (-\dx, 0) node {\textbullet} |- (g-\i);
  }

  \draw node at ($(x) + (\dx, \dy)$ ) {\textbullet}
  node at ($(y) + (-\dx, \dy)$ ) {\textbullet};

  \draw [color=gray,opacity = .8, dashed, thick](6.8,-1) rectangle (9.2,-11);
  \draw node at (9.2,-11) [above=-3mm, right=0mm] {$\ds  \sum_{i = 1}^k  \Hs(f_{[i]}, g_{[i]}) 
$%
};%
     
\end{tikzpicture}

}
  \caption{Nesting technique for modal decomposition: the nested H-score is computed with a nested architecture, where ``$\pplus$'' denotes the concatenation operation of two features. }
  \label{fig:nest}
\end{figure}

Hence, the nested H-score $\Hs(f, g; \cnest)$ is maximized if and only if we have \eqref{eq:md:fg}, which gives the top $k$ modes of $(X; Y)$ dependence. In practice, we can compute the nested H-score by using a nested architecture as shown in \figref{fig:nest}, where we have used the ``$\pplus$'' symbol to indicate the concatenation of two vectors, i.e., $\vec{u} \pplus \vec{v} \defeq
\begin{bsmallmatrix}
  \vec{u}\\
  \vec{v}
\end{bsmallmatrix}
$ for two column vectors $\vec{u}, \vec{v}$. By maximizing the nested H-score, we can use \eqref{eq:md:fg} to retrieve each $i$-th dependence mode from corresponding feature pair $(f_i, g_i)$, for $i \in [k]$.

Compared with the features learned in \secref{sec:dep:approx}, the nesting technique provides several new applications.
First, from \factref{fact:md:ortho}, the learned features $f$ and $g$ have orthogonal dimensions, i.e., different dimensions are uncorrelated. In addition, from \eqref{eq:md:fg}, we can compute the energy contained in each $i$-th dependence mode, via $\norm{\mdn{i}(\lpmi_{X; Y})}^2 = \norm{f_i \otimes g_i}^2 = \E{f_i^2(X)}\cdot \E{g_i^2(Y)}$, for $i \in [k]$. This provides a spectrum of $(X; Y)$ dependence and characterizes the usefulness or contribution of each dimension. 
Similarly, we can retrieve top $k$ maximal correlation functions $f_i^*, g_i^*$ and coefficients $\sigma_i$, by using the relations [cf. \eqref{eq:dcmp:xy} and \corolref{cor:hgr}]
\begin{subequations}
  \label{eq:f:g:simga:i}
  \begin{gather}
    f_i^* = \frac{f_i}{\sqrt{\E{f_i^2(X)}}},\quad
    g_i^* = \frac{g_i}{\sqrt{\E{g_i^2(Y)}}},\quad%
    \sigma_i = \sqrt{\E{f_i^2(X)}\cdot \E{g_i^2(Y)}},\qquad i \in [k].
  \end{gather}%
\end{subequations}

\paragraph{Nested Optimality}
From \eqref{eq:md:fg}, for any $d \leq k$, we can represent the optimal rank-$d$ approximation of $\lpmi_{X; Y}$ as $\mdnl{d}(\lpmi_{X; Y}) = f_{[d]} \otimes g_{[d]}$, which corresponds to top $d$-dimensions of learned features. We refer to this property as nested optimality: the learned features give a collection of optimal solutions for different dimensions, with a nested structure. 
This nested optimality provides a convenient and principled feature selection \emph{on the fly}: we can obtain the optimal selection of $d$ feature pairs by simply taking the top $d$ dimensions of learned features.
It is worth noting that, this optimal \emph{selection} indeed gives the optimal $d$-dimensional feature \emph{extraction}, corresponding to the most significant $d$ modes of the $(X; Y)$ dependence. This equivalence is hardly guaranteed in typical feature selection approaches. 
In practice, we can choose the dimension $d$ based on the dependence spectrum, such that selected features capture sufficient amount of dependence information, and then take $f_{[d]}, g_{[d]}$ for further processing.

We can readily extend the discussion to constrained modal decomposition problems. Let $\sspc{\cX}$ and $\sspc{\cY}$ be subspaces of
 $\spc{\cX}$ and $\spc{\cY}$, respectively. Then, the nested H-score $\Hs(f, g; \{\ones{k};\, \sspc{\cX} ; \, \sspc{\cY}\})$ defined for $k$-dimensional features  $f \in \sspcn{\cX}{k}$, $g \in \sspcn{\cY}{k}$, is maximized if and only if
 \begin{align}
   f_i \otimes g_i = \mdn{i}(\lpmi_{X; Y}|\sspc{\cX}, \sspc{\cY}), \quad \text{for all $i \in [k]$}.
   \label{eq:md:fg:sspc}
 \end{align}
 From \eqref{eq:md:fg:sspc}, we can establish a similar nested optimality in the constrained case. In particular, 
when $\sspc{\cX}$ and $\sspc{\cY}$ correspond to the collection of features that can be expressed by neural feature extractors, 
the result also characterizes the effects of restricted expressive power of neural networks (cf. \secref{sec:h:const}).
 Specifically, from \eqref{eq:md:fg:sspc}, when we use feature extractors with restricted expressive power, we can still guarantee the learned features have uncorrelated dimensions.%

\subsection{Nesting Technique for Projection}
\label{sec:nest:proj}%
With the nesting technique, we can also operate projections of statistical dependence in feature spaces. Such operations are the basis of multivariate dependence decomposition, which we will detail in the following sections.

To begin, let $\sspc{\cX}$ denote a subspace of $\spc{\cX}$. Then, from $\spc{\cX} = \sspc{\cX} \orthp (\spc{\cX} \orthm\sspc{\cX})$, we obtain an orthogonal decomposition of function space
\begin{align}
  \spc{\cX \times \cY} =   \spc{\cX} \otimes \spc{\cY} = 
(\sspc{\cX} \otimes \spc{\cY}) \orthp
  ( (\spc{\cX} \orthm\sspc{\cX}) \otimes \spc{\cY}).
  \label{eq:xy:dcmp}
\end{align}
Therefore, by projecting the statistical dependence $\lpmi_{X; Y}$ to these function spaces, we obtain its orthogonal decomposition [cf. \factref{fact:proj}]
 \begin{align}
   \lpmi_{X; Y}
   = \proj{\lpmi_{X; Y}}{\sspc{\cX} \otimes \spc{\cY}} + \proj{\lpmi_{X; Y}}{(\spc{\cX} \orthm \sspc{\cX}) \otimes \spc{\cY}}.
   \label{eq:sspc:dcmp}
 \end{align}
In particular, the first term $\proj{\lpmi_{X; Y}}{\sspc{\cX} \otimes \spc{\cY}}$ characterizes the dependence component aligned with the subspace $\sspc{\cX}$, and the second term represents the component orthogonal to $\sspc{\cX}$. For convenience, we denote
these two dependence components by $\pif{}(\lpmi_{X; Y})$ and $\pif{\perp}(\lpmi_{X; Y})$, respectively, and demonstrate the geometry of the decomposition in \figref{fig:proj}.

\def\olabel{$\lpmi_{X; Y}$}
\def\hlabel{$\pif{}(\lpmi_{X; Y})$} %
\def\vlabel{$\pif{\perp}(\lpmi_{X; Y})$}
\def\planelabel{$\sspc{\cX} \otimes \spc{\cY}$}
\def\olabelxshift{-0.3em}
\def\hlabelxshift{0.3em}%
\begin{figure}[!ht]
  \centering
  \resizebox{.45\textwidth}{!}{%
\tdplotsetmaincoords{75}{20} %
\begin{tikzpicture}[tdplot_main_coords,  >=latex', scale = .5]
  \coordinate (O) at (0,0,0);
  \def\x{5}
  \def\tscale{.7}
  \filldraw[
        draw=none,%
        fill=colorp,%
        fill opacity = .6,%
        ](-1, -1.5, 0)
        -- (\x, -1.5, 0)
        -- (\x, \x, 0)
        -- (-1, \x, 0)
        -- cycle;
        \tdplotsetcoord{Q}{1.1 * \x}{50}{40};
        \draw[->, draw = coloro, thick] (O) -- node[above, xshift = \olabelxshift, scale = \tscale] {\olabel} (Q);
        \draw[->, colorv, thick] (Qxy) -- node[right, scale = \tscale] {\vlabel}(Q);
        \draw[->, colorh, thick] (O) -- node[below, scale = \tscale, xshift = \hlabelxshift] {\hlabel}(Qxy); %

        \node [below, scale = \tscale] at (\x, -1, 0) {\planelabel};
        \draw  node [scale = 1] at (O) {.};
        \RightAngle{(Q)}{(Qxy)}{(O)}; 
\end{tikzpicture}

}
  \caption{Orthogonal decomposition of the CDK function $\lpmi_{X; Y}$: $\pif{}(\lpmi_{X; Y})$ denotes the projection onto the plane $\sspc{\cX} \otimes \spc{\cY}$, and $\pif{\perp}(\lpmi_{X; Y})$ denotes the residual, orthogonal to the plane.} %
  \label{fig:proj}
\end{figure}

In general, the information carried %
by decomposed dependence components depends on the choices of subspace $\sspc{\cX}$, which varies in different learning settings. In spite of such differences, we can learn the decomposition with a unified procedure, which we demonstrate as follows.

To begin, we consider the feature representations of the dependence components. For example, by applying the rank-$k$ approximation on the orthogonal component $\pif{\perp}(\lpmi_{X; Y})$, we obtain
\begin{align*}
  \mdnl{k}(\pif{\perp}(\lpmi_{X; Y}))  = \mdnl{k}(\proj{\lpmi_{X; Y}}{(\spc{\cX} \orthm \sspc{\cX}) \otimes \spc{\cY}})
  = \mdnl{k}(\lpmi_{X; Y}|\spc{\cX} \orthm \sspc{\cX}, \spc{\cY}),
\end{align*}
which can be represented as a pair of $k$-dimensional features.
To learn such feature representations, we introduce the two-level nesting configuration
\begin{align}
  \cpi \defeq  \{(\kb, k);~ (\sspc{\cX}, \spc{\cX}) ;~ \spc{\cY}\}
  \label{eq:cpi:def}
\end{align}
for some feature dimensions $\kb, k \geq 1$. The corresponding nested H-score is
\begin{align}
  \Hs\left(
  \begin{bmatrix}
    \fb\,\\
    f\,
  \end{bmatrix}
  ,
  \begin{bmatrix}
    \gb\\
    g
  \end{bmatrix}
  ; \cpi\right)
  =  \Hs(\fb, \gb) + \Hs\left(
  \begin{bmatrix}
    \fb\,\\
    f\,
  \end{bmatrix}
  ,
  \begin{bmatrix}
    \gb\\
    g
  \end{bmatrix}
  \right),
  \label{eq:h:nest:cpi}
\end{align}
 defined on the domain [cf. \eqref{eq:dom:c}]
\begin{align}
  \dom(\cpi) = \left\{ 
\left(\begin{bmatrix}
      \fb\,\\
      f\,
    \end{bmatrix}
  ,
      \begin{bmatrix}
    \gb\\
    g
  \end{bmatrix}
  \right)
  \colon \fb \in \sspcn{\cX}{\kb},   \gb \in \spcn{\cY}{\kb}, f \in \spcn{\cX}{k}, g \in \spcn{\cY}{k}
  \right\},
  \label{eq:dom:cpi}
\end{align}
where for convenience, we explicitly express the first-level features as $\fb, \gb$, both of dimension $\kb$. We can use a nested network architecture to compute the nested H-score \eqref{eq:h:nest:cpi}, as shown in \figref{fig:nest:two}.

\begin{figure}[!t]
  \centering
  \resizebox{.65\textwidth}{!}{%
\begin{tikzpicture}[auto, thick, node distance=2cm, >=latex'%
  ]
  \foreach \c in {1, 2}
  {
    \draw node [block] (H-\c) at (6.5, -3 * \c cm) {\Large $\Hs$};
  }

  \draw node [ext, %
  , left of = H-1, node distance=3.7cm, rotate = -90] (f-1)  {};
  \draw node [ext, right of = H-1, node distance=3.7cm, rotate = 90] (g-1)  {};

  \draw node [ext, left of = H-2, node distance=3.7cm, rotate = -90] (f-2)  {};
  \draw node [ext, right of = H-2, node distance=3.7cm, rotate = 90] (g-2)  {};

    \draw node at (f-1)  {\Large $\fb$}
     node at (g-1)  {\Large $\gb$};
       \draw node at (f-2)  {\Large $f$}
       node at (g-2)  {\Large $g$};

\draw node [con, left of = H-2, node distance = 2cm] (conl) {\con}
node [con, right of = H-2, node distance = 2cm] (conr) {\con};
\draw [->] (f-1) -- (H-1);
\draw [->] (g-1) -- (H-1);
  \draw [->] (f-2) -- (conl);
  \draw [->] (g-2) -- (conr);
  \draw [->] (conl) -- node[below] {%
  }(H-2);
  \draw [->] (conr) -- node[below] {%
  }(H-2);

  \draw [->] (f-1) -| (conl);
  \draw [->] (g-1) -| (conr);

\draw node at (conl |- f-1) [bullet] {\textbullet} ;
\draw node at (conr |- g-1) [bullet] {\textbullet} ;

\draw node [input] (x) at (1,-4.5) {}
     node [xshift = -0.5cm] at (x) {\Large $X$}
     node [xshift = -0.6mm] at (x) {\Large $\circ$};
        
     \draw node [input] (y) at (12,-4.5) {}
     node [xshift = 0.5cm] at (y){\Large $Y$} 
     node [xshift = 0.7mm] at (y) {\Large $\circ$} ;

  \foreach \s [count = \c] in {1, 3}
  {
    \draw [->] (x) -- + (.5, 0) node [bullet] {\textbullet} |- (f-\c);
    \draw [->] (y) -- + (-.5, 0) node [bullet] {\textbullet} |- (g-\c);
  }

\end{tikzpicture}

}
  \caption[]{Two-level Nested Network With Features  $\begin{bmatrix}
      \fb\,\\
      f\,
     \end{bmatrix}$ and $\begin{bmatrix}
      \gb\\
      g
     \end{bmatrix}$}
  \label{fig:nest:two} 
\end{figure}

To see the roles of the two H-score terms in \eqref{eq:h:nest:cpi}, note that if we maximize only the first term $\Hs(\fb, \gb)$ of the nested H-score over the domain \eqref{eq:dom:cpi}, we will obtain the solution to a constrained dependence approximation problem (cf. \secref{sec:h:const}): $\fb \otimes \gb = \mdnl{\kb}(\lpmi_{X; Y}|\sspc{\cX}, \spc{\cY})$. Specifically, if $\kb$ is sufficiently large, we would get $\fb \otimes \gb = \proj{\lpmi_{X; Y}}{\sspc{\cX} \otimes \spc{\cY}} = \pif{}(\lpmi_{X; Y})$, which gives the aligned component. With such $\fb$ and $\gb$, we can express the second H-score term as
\begin{align*}
  \Hs\left(
    \begin{bmatrix}
      \fb\,\\
      f\,
    \end{bmatrix}
    ,
    \begin{bmatrix}
      \gb\\
      g
    \end{bmatrix}\right) 
&=  
  \frac{1}{2}\cdot 
  \Bigl[\norm{\lpmi_{X; Y}}^2 - \norm{\underbrace{\lpmi_{X; Y} - \fb \otimes \gb}_{=\pif{\perp}(\lpmi_{X; Y})} - f \otimes g}^2\Bigr].%
\end{align*}

Therefore, if we maximize the second H-score over only $f$ and $g$, we would get the orthogonal dependence component: $f \otimes g = \mdnl{k}(\pif{\perp}(\lpmi_{X; Y}))$. This gives a two-phase training strategy for computing the decomposition \eqref{eq:sspc:dcmp}.

In contrast, the nested H-score  \eqref{eq:h:nest:cpi} provides a single training objective to obtain both dependence components simultaneously. We formalize the result as the following theorem, of which a proof is provided in \appref{app:thm:sspc:opt}.

\begin{theorem}
  \label{thm:sspc:opt}  
  Given
  $\kb \geq \rank(\proj{\lpmi_{X; Y}}{\sspc{\cX} \otimes \spc{\cY}})$,
  $\ds \Hs\left(
    \begin{bmatrix}
      \fb\,\\
      f\,
    \end{bmatrix}
    ,
    \begin{bmatrix}
      \gb\\
      g
    \end{bmatrix}
    ; \cpi
    \right)$
is maximized if and only if
\begin{subequations}
  \label{eq:sspc:opt}
  \begin{align}
    \fb \otimes \gb &= \proj{\lpmi_{X; Y}}{\sspc{\cX} \otimes \spc{\cY}},\\
    f \otimes g &= \mdnl{k}(\lpmi_{X; Y}|\spc{\cX} \orthm \sspc{\cX},  \spc{\cY}).
  \end{align}
\end{subequations}
\end{theorem}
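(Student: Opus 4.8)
The plan is to convert the maximization into a least-squares problem and then exploit the orthogonal splitting induced by $\sspc{\cX}$. Write $\gamma_1 \defeq \fb \otimes \gb$ and $\gamma_2 \defeq f \otimes g$, and note $\begin{bsmallmatrix}\fb\\f\end{bsmallmatrix} \otimes \begin{bsmallmatrix}\gb\\g\end{bsmallmatrix} = \gamma_1 + \gamma_2$. By \defref{def:H}, the two summands of the nested H-score \eqref{eq:h:nest:cpi} are $\frac12\bigl(\norm{\lpmi_{X;Y}}^2 - \norm{\lpmi_{X;Y} - \gamma_1}^2\bigr)$ and $\frac12\bigl(\norm{\lpmi_{X;Y}}^2 - \norm{\lpmi_{X;Y} - \gamma_1 - \gamma_2}^2\bigr)$. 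Hence maximizing the nested H-score over the domain \eqref{eq:dom:cpi} is equivalent to minimizing $J \defeq \norm{\lpmi_{X;Y} - \gamma_1}^2 + \norm{\lpmi_{X;Y} - \gamma_1 - \gamma_2}^2$.

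Next I would invoke the orthogonal decomposition \eqref{eq:sspc:dcmp}, setting $a \defeq \pif{}(\lpmi_{X;Y}) \in \sspc{\cX}\otimes\spc{\cY}$ and $b \defeq \pif{\perp}(\lpmi_{X;Y}) \in (\spc{\cX}\orthm\sspc{\cX})\otimes\spc{\cY}$, and split $\gamma_2 = c + d$ along the same two orthogonal subspaces, with $c \defeq \proj{\gamma_2}{\sspc{\cX}\otimes\spc{\cY}}$ and $d \defeq \proj{\gamma_2}{(\spc{\cX}\orthm\sspc{\cX})\otimes\spc{\cY}}$. Because the metric distribution is the product $P_XP_Y$, projecting a product function acts only on its $\cX$-factor, so $d = f^\perp \otimes g$ with $f^\perp \defeq \proj{f}{\spc{\cX}\orthm\sspc{\cX}}$; in particular $\rank(d) \leq k$. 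Since $\gamma_1, c$ lie in $\sspc{\cX}\otimes\spc{\cY}$ while $b, d$ lie in its orthogonal complement, the Pythagorean theorem gives
\[
J = \norm{a - \gamma_1}^2 + \norm{a - \gamma_1 - c}^2 + \norm{b - d}^2 + \norm{b}^2.
\]

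I would then bound the four terms individually. The term $\norm{b}^2 = \norm{\pif{\perp}(\lpmi_{X;Y})}^2$ is constant, the first two are nonnegative, and \factref{fact:low:rank} (Eckart--Young--Mirsky) gives $\norm{b - d}^2 \geq \norm{r_k(b)}^2$ since $\rank(d) \leq k$. Thus $J \geq \norm{b}^2 + \norm{r_k(b)}^2$. This bound is attained: the hypothesis $\kb \geq \rank(a)$ lets me take $\gamma_1 = a$ exactly (by \propref{prop:md:constraint} the modal decomposition of $a$ has its $\cX$-factors in $\sspc{\cX}$, so $a = \fb \otimes \gb$ for admissible $\fb,\gb$), together with $c = 0$ and $d = \mdnl{k}(b)$, so that $\gamma_2 = \mdnl{k}(b)$ has rank at most $k$. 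This yields the minimum value and hence the ``if'' direction.

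For the ``only if'' direction, the minimum of $J$ equals the sum of the individual lower bounds, so at any minimizer every term must meet its own bound: $\norm{a-\gamma_1}^2 = 0$ forces $\gamma_1 = a = \pif{}(\lpmi_{X;Y})$, the first identity in \eqref{eq:sspc:opt}; then $\norm{a-\gamma_1-c}^2 = 0$ forces $c = 0$, i.e.\ $\gamma_2 = d$; and $\norm{b-d}^2 = \norm{r_k(b)}^2$ forces $\gamma_2 = \mdnl{k}(b) = \mdnl{k}(\lpmi_{X;Y}|\spc{\cX}\orthm\sspc{\cX},\spc{\cY})$ by \propref{prop:md:constraint}, the second identity. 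The step I expect to require the most care is the coupling between $c$ and $d$: they are not free variables but the two orthogonal projections of a single rank-$\le k$ function $\gamma_2$, so I must confirm both that $d$ inherits the rank bound needed for Eckart--Young--Mirsky and that the candidate optimum $(c,d) = (0,\mdnl{k}(b))$ is realizable as such a $\gamma_2$. This is precisely where the first H-score term earns its keep: by pinning $\gamma_1$ to the full aligned component $\pif{}(\lpmi_{X;Y})$, it drives the aligned part $c$ of $\gamma_2$ to zero and frees all $k$ of its dimensions to approximate $\pif{\perp}(\lpmi_{X;Y})$.
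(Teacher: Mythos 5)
Your proof is correct and follows essentially the same route as the paper's: both hinge on splitting $\lpmi_{X;Y}$ and the rank-$k$ term $f\otimes g$ orthogonally along $\sspc{\cX}\otimes\spc{\cY}$ versus its complement (your $c+d$ is the paper's $\hb\otimes g + h\otimes g$), applying the Pythagorean identity, and closing with Eckart--Young via \propref{prop:md:constraint}. The only difference is packaging: the paper routes the optimality argument through the common-minimizer machinery of \propref{prop:joint:opt} and \propref{prop:opt:equiv} in Appendix B, whereas you lower-bound the sum $L_1+L_2$ term by term and check simultaneous attainability directly, which is a legitimate and slightly more self-contained way to reach the same conclusion.
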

We can further consider the modal decomposition of dependence components, to obtain features with nested optimality. To learn such features, it suffices to consider the refined configuration  $\cpir =   \left\{\ones{\kb + k};~ (\sspcn{\cX}{\kb}, \spcn{\cX}{k}) ;~ \spc{\cY}\right\}$, which we formalize as follows. A proof is provided in \appref{app:thm:sspc:opt:n}.

\begin{theorem}
  \label{thm:sspc:opt:n}  
  Given
  $\kb \geq \rank(\proj{\lpmi_{X; Y}}{\sspc{\cX} \otimes \spc{\cY}})$,
  $\ds \Hs\left(
    \begin{bmatrix}
      \fb\,\\
      f\,
    \end{bmatrix}
    ,
    \begin{bmatrix}
      \gb\\
      g
    \end{bmatrix}
    ; \cpir
  \right)$
  is maximized if and only if
  \begin{subequations}
    \label{eq:sspc:opt:n}
    \begin{gather}
      \fb_i \otimes \gb_i = \mdn{i}(\lpmi_{X; Y}|\sspc{\cX}, \spc{\cY})\quad \text{for all $ i \in [\kb]$,}\\
      f_i \otimes g_i = \mdn{i}(\lpmi_{X; Y}|\spc{\cX} \orthm \sspc{\cX}, \spc{\cY})\quad \text{for all $ i \in [k]$.}
    \end{gather}
  \end{subequations}
\end{theorem}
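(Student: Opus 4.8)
The plan is to expand the refined nested H-score into its $\kb+k$ constituent H-scores, bound each term separately, and then show the configuration \eqref{eq:sspc:opt:n} simultaneously saturates every bound. Write $\gamma_1 \defeq \pif{}(\lpmi_{X; Y}) = \proj{\lpmi_{X; Y}}{\sspc{\cX} \otimes \spc{\cY}}$ and $\gamma_2 \defeq \pif{\perp}(\lpmi_{X; Y})$, so that $\lpmi_{X; Y} = \gamma_1 + \gamma_2$ with $\gamma_1 \perp \gamma_2$, $\gamma_1 \in \sspc{\cX} \otimes \spc{\cY}$, and $\gamma_2 \in (\spc{\cX} \orthm \sspc{\cX}) \otimes \spc{\cY}$ [cf. \eqref{eq:sspc:dcmp}]. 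Since $\kb \geq \rank(\gamma_1)$, Proposition~\ref{prop:md:constraint} gives $\mdnl{\kb}(\lpmi_{X; Y}|\sspc{\cX}, \spc{\cY}) = \mdnl{\kb}(\gamma_1) = \gamma_1$, and likewise $\mdn{i}(\lpmi_{X; Y}|\sspc{\cX}, \spc{\cY}) = \mdn{i}(\gamma_1)$, $\mdn{i}(\lpmi_{X; Y}|\spc{\cX} \orthm \sspc{\cX}, \spc{\cY}) = \mdn{i}(\gamma_2)$; thus the target \eqref{eq:sspc:opt:n} is equivalent to $\fb_i \otimes \gb_i = \mdn{i}(\gamma_1)$ for $i \in [\kb]$ and $f_i \otimes g_i = \mdn{i}(\gamma_2)$ for $i \in [k]$.

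By \eqref{eq:hnest:r:def}, writing $F, G$ for the concatenated features $\fb \pplus f$ and $\gb \pplus g$, we have $\Hs(F, G; \cpir) = \sum_{i=1}^{\kb} \Hs(\fb_{[i]}, \gb_{[i]}) + \sum_{j=1}^{k} \Hs(F_{[\kb+j]}, G_{[\kb+j]})$. For $i \in [\kb]$ the inputs lie in $\sspcn{\cX}{i} \times \spcn{\cY}{i}$, so the constrained form of \proptyref{propty:hscore:max} (\secref{sec:h:const}) gives $\Hs(\fb_{[i]}, \gb_{[i]}) \leq \frac12 \norm{\mdnl{i}(\gamma_1)}^2$, with equality iff $\fb_{[i]} \otimes \gb_{[i]} = \mdnl{i}(\gamma_1)$. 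For the level-$(\kb+j)$ term, set $\gamma' \defeq F_{[\kb+j]} \otimes G_{[\kb+j]} = \fb \otimes \gb + f_{[j]} \otimes g_{[j]}$ and split $\gamma' = \gamma_1' + \gamma_2'$ along \eqref{eq:xy:dcmp}. Because $\fb \otimes \gb \in \sspc{\cX} \otimes \spc{\cY}$, the orthogonal part is $\gamma_2' = \proj{f_{[j]} \otimes g_{[j]}}{(\spc{\cX} \orthm \sspc{\cX}) \otimes \spc{\cY}}$, which has rank at most $j$ with left factors in $\spc{\cX} \orthm \sspc{\cX}$. Orthogonality then yields $\norm{\lpmi_{X; Y} - \gamma'}^2 = \norm{\gamma_1 - \gamma_1'}^2 + \norm{\gamma_2 - \gamma_2'}^2 \geq \norm{\gamma_2 - \mdnl{j}(\gamma_2)}^2$, where the lower bound drops the first term and applies Proposition~\ref{prop:md:constraint} (the constrained Eckart--Young bound \eqref{eq:lora:constraint}) to $\gamma_2$ inside $(\spc{\cX} \orthm \sspc{\cX}) \otimes \spc{\cY}$. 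Hence $\Hs(F_{[\kb+j]}, G_{[\kb+j]}) \leq \frac12(\norm{\gamma_1}^2 + \norm{\mdnl{j}(\gamma_2)}^2)$, with equality iff $\gamma_1' = \gamma_1$ and $\gamma_2' = \mdnl{j}(\gamma_2)$.

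Summing the per-term bounds gives a global upper bound on $\Hs(F, G; \cpir)$. For achievability I would check that the choice \eqref{eq:sspc:opt:n} produces $\fb \otimes \gb = \mdnl{\kb}(\gamma_1) = \gamma_1$ and $f_{[j]} \otimes g_{[j]} = \mdnl{j}(\gamma_2)$, so that every first-level term attains $\frac12\norm{\mdnl{i}(\gamma_1)}^2$ and every level-$(\kb+j)$ term attains $\frac12(\norm{\gamma_1}^2 + \norm{\mdnl{j}(\gamma_2)}^2)$; thus the global bound is met. For the converse, since the maximum equals the sum of the per-term maxima, any maximizer must saturate each term. The first $\kb$ equalities force $\fb \otimes \gb = \mdnl{\kb}(\gamma_1) = \gamma_1$ by the same telescoping used for the nested modal decomposition \eqref{eq:md:fg}; substituting into $\gamma_1' = \gamma_1$ forces $\proj{f_{[j]} \otimes g_{[j]}}{\sspc{\cX} \otimes \spc{\cY}} = 0$, whence $\gamma_2' = f_{[j]} \otimes g_{[j]}$ and the remaining condition reads $f_{[j]} \otimes g_{[j]} = \mdnl{j}(\gamma_2)$ for every $j \in [k]$; one more telescoping over $j$ recovers $f_j \otimes g_j = \mdn{j}(\gamma_2)$, establishing \eqref{eq:sspc:opt:n}.

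The main obstacle I anticipate is the level-$(\kb+j)$ bound, where the first $\kb$ coordinates are confined to $\sspc{\cX}$ while the remaining $j$ range over all of $\spc{\cX}$: one must rule out the unconstrained coordinates ``stealing'' approximation power from the aligned component $\gamma_1$. The orthogonal split along \eqref{eq:xy:dcmp} resolves this, since the projected residual $\gamma_2'$ is forced to have rank at most $j$ inside $(\spc{\cX} \orthm \sspc{\cX}) \otimes \spc{\cY}$, reducing the estimate to the constrained low-rank bound of Proposition~\ref{prop:md:constraint}; taking $j = k$ there recovers \thmref{thm:sspc:opt} as the coarsening of the present statement. The only remaining delicacy is the standard modal-decomposition non-uniqueness when singular values coincide, for which the equalities in \eqref{eq:sspc:opt:n} are read in the sequential sense of the Remark following \eqref{eq:def:md}.
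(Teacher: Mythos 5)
Your proof is correct and follows essentially the same route as the paper's: expand the refined nested H-score into its constituent terms, characterize each term's maximizer via the constrained low-rank approximation (Proposition~\ref{prop:md:constraint}), verify that \eqref{eq:sspc:opt:n} saturates all of them simultaneously, and conclude by the common-maximizer principle. The only difference is bookkeeping: the paper regroups each mixed-level term with a $k^{-1}\Hs(\fb,\gb)$ slice so that Theorem~\ref{thm:sspc:opt} applies verbatim to each group, whereas you bound the raw term $\Hs(F_{[\kb+j]},G_{[\kb+j]})$ directly via the orthogonal split along \eqref{eq:xy:dcmp} — which re-derives inline the same estimate used in the proof of Theorem~\ref{thm:sspc:opt} — and then recover $\fb\otimes\gb=\pif{}(\lpmi_{X;Y})$ from the first-level equalities in the converse step.
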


\subsection{Learning With Orthogonality Constraints}
\label{sec:nest:orth}
We conclude this section by discussing an application of the nesting technique, where the goal is to learn optimal features uncorrelated to some given features. {Such settings arise in many learning scenarios, where the given features correspond to some prior knowledge. For instance, when the given features are extracted from pre-trained models, we can use this formulation to avoid information overlapping and extract only the part not carried by these models. Another example is to extract privacy-preserving features, where we use the given features to indicate the sensitive information. 
}

In feature geometry, the uncorrelatedness conditions correspond to orthogonality constraints, and we can formalize the learning problem as follows. Given a $\kb$-dimensional feature $\phi \in \spcn{\cX}{\kb}$, our goal is to learn $k$-dimensional feature $f$ from $X$ for inferring $Y$, with the constraint that  $\spn\{f\} \perp \spn\{\phi\}$, i.e.,  $\E{f_i(X)\phi_j(X)} = \ip{f_i}{\phi_j} = 0$, for all $i \in [k], j \in [\kb]$. We therefore consider the constrained low-rank approximation problem
\begin{align}
  \minimize_{f \in \spcn{\cX}{k}, g  \in \spcn{\cY}{k}\colon  \spn\{f\} \perp \spn\{\phi\}} ~\bnorm{{\lpmi_{X; Y}} - f \otimes g}.
  \label{eq:opt:ortho} 
\end{align}

We can demonstrate that the solution to \eqref{eq:opt:ortho} corresponds to learning the decomposition \eqref{eq:sspc:dcmp}, with the choice $\sspc{\cX} = \spn\{\phi\}$. To see this, we rewrite \eqref{eq:opt:ortho} as
 \begin{align}
   \lpmi_{X; Y} = \pif{\phi}(\lpmi_{X; Y}) + \left(\lpmi_{X; Y} - \pif{\phi}(\lpmi_{X; Y})\right).
   \label{eq:pif:dcmp}
 \end{align}
where we have denoted the aligned component $\pif{\phi}(\lpmi_{X; Y}) \defeq \proj{\lpmi_{X; Y}}{\spn\{\phi\} \otimes \spc{\cY}}.$
In addition, note that the orthogonality constraint of \eqref{eq:opt:ortho} is $f \in \left(\spc{\cX} \orthm \spn\{\phi\}\right)^{k}, g  \in \spcn{\cY}{k}$. Therefore, it follows from  \propref{prop:md:constraint} that the solution to \eqref{eq:opt:ortho} is 
\begin{align}
f \otimes g  = \mdnl{k}(\lpmi_{X; Y}|\spc{\cX}\orthm \spn\{\phi\}, \spc{\cY})
  &= \mdnl{k}(\proj{\lpmi_{X; Y}}{(\spc{\cX}\orthm \spn\{\phi\}) \otimes \spc{\cY}})\notag\\
  &= \mdnl{k}\left(\lpmi_{X; Y} - \pif{\phi}(\lpmi_{X; Y}) \right),\label{eq:ortho:sol}
\end{align}
where to obtain the last equality we have used the orthogonal decomposition \eqref{eq:pif:dcmp}, as well as the fact that $(\spc{\cX}\orthm \spn\{\phi\}) \otimes \spc{\cY} = \spc{\cX \times \cY}\orthm (\spn\{\phi\}\otimes \spc{\cY})$.

\begin{remark}
From the decomposition \eqref{eq:pif:dcmp}, we can characterize the amount of dependence information captured by feature $\phi$,
as
the energy $\norm{\pif{\phi}(\lpmi_{X; Y})}^2$. %
This quantity (with a $1/2$ scaling factor)
is also referred to as
the \emph{single-sided H-score} \citep{xu2020maximal, xu2022information} of $\phi$, due to the connection: $\ds \max_{\gb \in \spcn{\cY}{\kb}} \Hs(\phi,\gb) = \frac12\norm{\pif{\phi}(\lpmi_{X; Y})}^2$.
\end{remark}

\begin{figure}[!ht]
  \centering
  \resizebox{.65\textwidth}{!}{%
\begin{tikzpicture}[auto, thick, node distance=2cm, >=latex'%
  ]
  \foreach \c in {1, 2}
  {
    \draw node [block] (H-\c) at (6.5, -3 * \c cm) {\Large $\Hs$};
  }

  \draw node [ext, frozenfill, left of = H-1, node distance=3.7cm, rotate = -90] (f-1)  {};
  \draw node [ext, right of = H-1, node distance=3.7cm, rotate = 90] (g-1)  {};

  \draw node [ext, left of = H-2, node distance=3.7cm, rotate = -90] (f-2)  {};
  \draw node [ext, right of = H-2, node distance=3.7cm, rotate = 90] (g-2)  {};

    \draw node at (f-1)  {\Large $\phi$}
     node at (g-1)  {\Large $\gb$};
       \draw node at (f-2)  {\Large $f$}
       node at (g-2)  {\Large $g$};

\draw node [con, left of = H-2, node distance = 2cm] (conl) {\con}
node [con, right of = H-2, node distance = 2cm] (conr) {\con};
\draw [->] (f-1) -- (H-1);
\draw [->] (g-1) -- (H-1);
  \draw [->] (f-2) -- (conl);
  \draw [->] (g-2) -- (conr);
  \draw [->] (conl) -- node[below] {%
  }(H-2);
  \draw [->] (conr) -- node[below] {%
  }(H-2);

  \draw [->] (f-1) -| (conl);
  \draw [->] (g-1) -| (conr);

\draw node at (conl |- f-1) [bullet] {\textbullet} ;
\draw node at (conr |- g-1) [bullet] {\textbullet} ;

\draw node [input] (x) at (1,-4.5) {}
     node [xshift = -0.5cm] at (x) {\Large $X$}
     node [xshift = -0.6mm] at (x) {\Large $\circ$};
        
     \draw node [input] (y) at (12,-4.5) {}
     node [xshift = 0.5cm] at (y){\Large $Y$} 
     node [xshift = 0.7mm] at (y) {\Large $\circ$} ;

  \foreach \s [count = \c] in {1, 3}
  {
    \draw [->] (x) -- + (.5, 0) node [bullet] {\textbullet} |- (f-\c);
    \draw [->] (y) -- + (-.5, 0) node [bullet] {\textbullet} |- (g-\c);
  }

\end{tikzpicture}

}
  \caption{Nesting technique for learning features orthogonal to feature $\phi$, where the $\phi$ block is frozen during learning. The feature $\phi$ can be given either in its analytical expressions, or as a pretrained neural network.
}
  \label{fig:fbar} 
\end{figure}
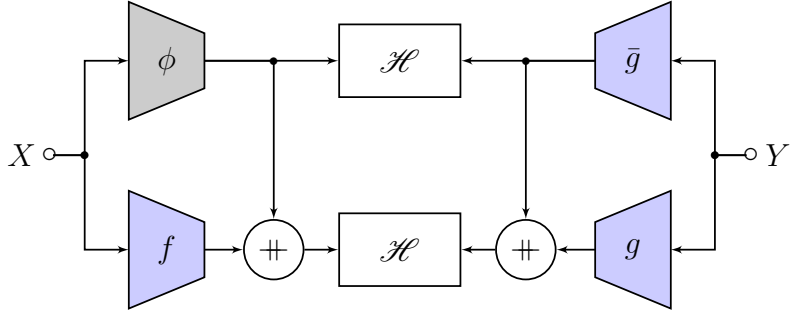

To learn the features \eqref{eq:ortho:sol}, we can apply the nesting technique and maximize the nested H-score configured by $\cpi$ with $\sspc{\cX} = \spn\{\phi\}$. Specifically, from \thmref{thm:sspc:opt}, $\fb = \phi$ is already in the optimal solution set. Therefore, we can fix $\fb$ to $\fb = \phi$, and optimize
\begin{align}
  \left. \Hs\left(
    \begin{bmatrix}
      \fb\,\\
      f\,
    \end{bmatrix}
    ,
    \begin{bmatrix}
      \gb\\
      g
    \end{bmatrix}
    ; \cpi
  \right)\right|_{\fb = \phi}
  =
  \Hs(\phi, \gb) + \Hs\left(
    \begin{bmatrix}
      \phi\,\\
      f\,
    \end{bmatrix}
    ,
    \begin{bmatrix}
      \gb\\
      g
    \end{bmatrix}
    \right)
  \label{eq:Hsf:def}
\end{align}
 over $\gb \in \spcn{\cY}{\kb}, f \in \spcn{\cX}{k}$, and $g \in \spcn{\cY}{k}$. We can compute the objective \eqref{eq:Hsf:def} by the nested network structure as shown in \figref{fig:fbar}.

It is also worth noting that from \propref{prop:dnn}, we can also interpret the solution to \eqref{eq:opt:ortho} as the features extracted by classification DNNs subject to the same orthogonality constraints. However, compared with the H-score optimization, putting such 
equality constraints in DNN training typically requires non-trivial implementation.

\section{Learning With Side Information}
\label{sec:side}
In this section, we study a multivariate learning problem involving  
external knowledge and demonstrate
learning algorithm design based on the nesting technique. Specifically, we consider the problem of learning features from $X$ to infer $Y$, and assume some external knowledge $S$ is available for the inference. We refer to $S$ as the side information, which corresponds to extra data sources for facilitating the inference. %
In particular, we consider the setting where we cannot obtain $(X, S)$ joint pair during the information processing, e.g., $X$ and $S$ are collected and processed by different agents in a distributed system. Otherwise, we can apply the bivariate dependence learning framework, by treating the $(X, S)$ pair as a new variable and directly learn features to predict $Y$.

\begin{figure}[!ht]
  \centering
   \resizebox{.45\textwidth}{!}{%
\begin{tikzpicture}[thick, node distance=2.8cm,auto,>=latex']
    \node [blk, pin={[pini]left:$X$}] (x) {{\sf Feature Extractor}};     %
    \node [blk] (c) [right of=x, pin={[pino]right:$\hat{Y}$}] {{ \sf Inference }};
    \node [] (s) [above of =c, node distance=1.2cm] {$S$};
    \node [coordinate] (end) [right of=c, node distance=1.2cm]{};
    \path[->] (x) edge node [pos=.5, above, scale = .8] {}  (c); %
    \path[->] (s) edge node [pos=.5, above, xshift = .2cm] {}  (c);
\end{tikzpicture}

}
   \caption{%
     Learning Setting With
     Side Information $S$  }
  \label{fig:side-info}
\end{figure}
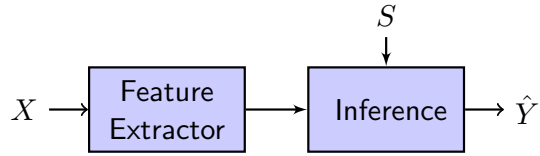

We depict this learning setting in \figref{fig:side-info}, where the inference is based on the both features extracted from $X$ and the side information $S$. Our goal is to design an efficient feature extractor which carries only the information not captured by $S$. In addition, we need to design a fusion mechanism for the inference module to combine such features with the side information $S$, and provide inference results conditioned on the side information.

Let $P_{X, S, Y}$ denote the joint distribution of $X$, $S$, $Y$. Throughout our development in this section, we consider the feature geometry on $\spc{\cX \times \cS \times \cY}$ with the metric distribution $R_{X, S, Y} \defeq P_{X}P_{S, Y}$.

\subsection{Dependence Decomposition and Feature Learning}
\label{sec:side:dcmp}
To begin, we represent the joint dependence in function space as the CDK function $\lpmi_{X; S, Y} \in \spc{\cX \times \cS \times \cY}$. Since the side information $S$ is provided for the inference, we focus on the dependence between $X$ and target $Y$ not captured by the side information. To this end, we separate the $(X; S)$ dependence from the joint dependence, by considering the orthogonal decomposition of function space [cf. \factref{fact:orthm} and \eqref{eq:spcs:def}]:   
\begin{align}
\spc{\cX \times \cS \times \cY} =
  \spc{\cX \times \cS} 
  \orthp
  \spcs{\cY}{(\cX \times \cS)}.\label{eq:side:dcmp}
\end{align}
This induces an orthogonal decomposition of the joint dependence
\begin{align}
  \lpmi_{X; S, Y} = \pim(\lpmi_{X; S, Y}) + \pic(\lpmi_{X; S, Y}),
  \label{eq:side:dcmp:i}
\end{align}   %
where we have defined $\pim(\gamma) \defeq \proj{\gamma}{\spc{\cX \times \cS}}$ and  $\pic(\gamma) \defeq \proj{\gamma}{\spcs{\cY}{(\cX \times \cS)}}$  for all $\gamma \in \spc{\cX \times \cS \times \cY}$. We characterize the decomposed components as follows, a proof of which is provided in \appref{app:prop:pim:pic}. 
\begin{proposition}
  \label{prop:pim:pic}
  We have $\pim(\lpmi_{X; S, Y})
  = \lpmi_{X; S} = \llrt{\Pm_{X, S, Y}}$, where  $\Pm_{X, S, Y} \defeq P_{X|S} P_S P_{Y|S}$. %
\end{proposition}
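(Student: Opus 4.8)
The plan is to verify the two equalities separately: the second, $\lpmi_{X;S} = \llrt{\Pm_{X,S,Y}}$, by direct substitution, and the first, $\pim(\lpmi_{X;S,Y}) = \lpmi_{X;S}$, via the orthogonality principle (\factref{fact:proj}).

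For the second equality, I would substitute $\Pm_{X,S,Y} = P_{X|S}P_SP_{Y|S}$ and $R_{X,S,Y} = P_XP_{S,Y} = P_XP_SP_{Y|S}$ into the definition of the density ratio $\llrt{\Pm_{X,S,Y}} = \llrrt{\Pm_{X,S,Y}}{R_{X,S,Y}}$. The common factor $P_S(s)P_{Y|S}(y|s)$ cancels between numerator and denominator, leaving
\begin{align*}
\llrt{\Pm_{X,S,Y}}(x,s,y) = \frac{P_{X|S}(x|s)}{P_X(x)} - 1,
\end{align*}
which is independent of $y$. Writing $P_{X|S}(x|s) = P_{X,S}(x,s)/P_S(s)$ then recovers exactly $\lpmi_{X;S}(x,s)$, so $\llrt{\Pm_{X,S,Y}} = \lpmi_{X;S}$, and this function already lies in $\spc{\cX \times \cS}$.

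For the first equality, I would show that $\lpmi_{X;S}$ is the projection of $\lpmi_{X;S,Y}$ onto $\spc{\cX \times \cS}$ by checking the two conditions of the orthogonality principle. The membership condition holds because $\lpmi_{X;S}$ depends only on $(x,s)$. For the orthogonality condition I need $\lpmi_{X;S,Y} - \lpmi_{X;S} \in \spc{\cX \times \cS \times \cY} \orthm \spc{\cX \times \cS}$, i.e., $\ip{\lpmi_{X;S,Y} - \lpmi_{X;S}}{h} = 0$ for every $h \in \spc{\cX \times \cS}$. Since $h$ depends only on $(x,s)$ and the inner product is taken with respect to $R_{X,S,Y} = P_XP_{S,Y}$, I would factor the expectation by the tower property as $\Ed{R}{h(X,S)\cdot \Ed{R}{(\lpmi_{X;S,Y} - \lpmi_{X;S})(X,S,Y)\mid X,S}}$ and reduce the claim to $\Ed{R}{\lpmi_{X;S,Y}(X,S,Y)\mid X=x,S=s} = \lpmi_{X;S}(x,s)$. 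The crucial observation is that under the metric distribution $R = P_XP_{S,Y}$, the conditional law of $Y$ given $(X,S)=(x,s)$ is $P_{Y|S}(\cdot|s)$; averaging $\lpmi_{X;S,Y}$ against $P_{Y|S}(\cdot|s)$ collapses the $P_{S,Y}(s,y)$ in its denominator to $P_S(s)$ and sums $P_{X,S,Y}(x,s,y)$ over $y$ to $P_{X,S}(x,s)$, yielding precisely $\lpmi_{X;S}(x,s)$.

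The main obstacle is conceptual rather than computational: one must track carefully which distribution governs the inner product. Because the metric is $R = P_XP_{S,Y}$ rather than the true joint $P_{X,S,Y}$, the conditional law of $Y$ given $(X,S)$ used in the averaging is $P_{Y|S}$, not $P_{Y|X,S}$. This is exactly what makes the spurious dependence on $Y$ average out cleanly while the genuine $X$--$S$ dependence is retained, and getting this conditional right is the only delicate point in the argument.
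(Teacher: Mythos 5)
Your proposal is correct and follows essentially the same route as the paper: the second equality by direct cancellation in the density ratio, and the first by verifying via the orthogonality principle that $\ip{\lpmi_{X;S,Y}-\lpmi_{X;S}}{h}=0$ for every $h\in\spc{\cX\times\cS}$. Your use of the tower property under $R_{X,S,Y}=P_XP_{S,Y}$ (where the conditional law of $Y$ given $(X,S)$ is indeed $P_{Y|S}$) is just a repackaging of the paper's direct summation over $y$, and your identification of the metric distribution as the delicate point is exactly right.
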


From \propref{prop:pim:pic}, we have $\pim(\lpmi_{X; S, Y})
  = \lpmi_{X; S} = \llrt{\Pm_{X, S, Y}}$, where  $\Pm_{X, S, Y} \defeq P_{X|S} P_S P_{Y|S}$. %
More generally, the space $\spc{\cX \times \cS}$ characterizes CDK functions associated with such Markov distributions, which we formalize as follows. 
A proof of which is provided in \appref{app:prop:markov}.

\begin{proposition}
  \label{prop:markov}
  Given $Q_{X, S, Y}$ with $Q_{X} = P_X, Q_{S, Y} = P_{S, Y}$, let $\lpmi_{X; S, Y}^{(Q)}$ denote the corresponding CDK function. Then, $\lpmi_{X; S, Y}^{(Q)} \in \spc{\cX \times \cS}$ if and only if $Q_{X, S, Y} = Q_{X|S}Q_SQ_{Y|S}$.
\end{proposition}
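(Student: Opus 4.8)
The plan is to make the membership condition $\lpmi_{X; S, Y}^{(Q)} \in \spc{\cX \times \cS}$ completely explicit and reduce it to an algebraic factorization of $Q_{X,S,Y}$. First I would write out the CDK function: since the metric distribution here is $R_{X,S,Y} = P_X P_{S,Y} = Q_X Q_{S,Y}$ (using the hypotheses $Q_X = P_X$ and $Q_{S,Y} = P_{S,Y}$), the density-ratio definition [cf. \eqref{eq:cdk:def}] gives
$$\lpmi_{X; S, Y}^{(Q)}(x,s,y) = \frac{Q_{X,S,Y}(x,s,y)}{Q_X(x)\, Q_{S,Y}(s,y)} - 1.$$
Because $R \in \relint(\cP^{\cX \times \cS \times \cY})$, all of $Q_X(x)$, $Q_{S,Y}(s,y)$, and hence $Q_S(s) = \sum_y Q_{S,Y}(s,y)$ are strictly positive, so no division issue arises.

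Next I would translate the subspace condition. Recalling the embedding convention whereby $\spc{\cX \times \cS}$ sits inside $\spc{\cX \times \cS \times \cY}$ as exactly the functions not depending on $y$, membership $\lpmi_{X; S, Y}^{(Q)} \in \spc{\cX \times \cS}$ is equivalent to the ratio $Q_{X,S,Y}(x,s,y)/(Q_X(x) Q_{S,Y}(s,y))$ being constant in $y$, i.e.\ equal to some $c(x,s)$, which rearranges to
$$Q_{X,S,Y}(x,s,y) = c(x,s)\, Q_X(x)\, Q_{S,Y}(s,y).$$

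For the ($\Rightarrow$) direction I would pin down $c$ by marginalizing over $y$: summing the display gives $Q_{X,S}(x,s) = c(x,s) Q_X(x) Q_S(s)$, whence $c(x,s) = Q_{X,S}(x,s)/(Q_X(x) Q_S(s))$. Substituting back and cancelling the $Q_X(x)$ factor yields $Q_{X,S,Y}(x,s,y) = Q_{X,S}(x,s) Q_{S,Y}(s,y)/Q_S(s) = Q_{X|S}(x|s) Q_S(s) Q_{Y|S}(y|s)$, which is precisely the Markov factorization. For the ($\Leftarrow$) direction I would substitute $Q_{X,S,Y} = Q_{X|S} Q_S Q_{Y|S}$, note that its $(S,Y)$-marginal is $Q_{S,Y} = Q_S Q_{Y|S}$, and compute the ratio to be $Q_{X|S}(x|s)/Q_X(x)$, manifestly independent of $y$.

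There is no serious obstacle here; once the subspace condition is read as ``independent of $y$'' the argument is entirely algebraic. The only points requiring care are bookkeeping the marginals consistently---in particular verifying that the $(S,Y)$-marginal of the Markov $Q$ is $Q_S Q_{Y|S}$ so that the $Q_{Y|S}$ factors cancel correctly---and confirming positivity of all denominators via $R \in \relint(\cP^{\cX \times \cS \times \cY})$. I also note that \propref{prop:pim:pic} is the special case of this proposition obtained by taking $Q_{X,S,Y} = \Pm_{X,S,Y}$, so the two results share essentially the same computation.
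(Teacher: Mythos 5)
Your proposal is correct and follows essentially the same route as the paper's proof: for the forward direction, both write $Q_{X,S,Y}=Q_XQ_{S,Y}\bigl(1+\gamma(x,s)\bigr)$, marginalize over $y$ to identify the $(x,s)$-dependent factor as $Q_{X,S}/(Q_XQ_S)$, and substitute back to obtain the Markov factorization; for the converse, both compute the CDK ratio and cancel the $Q_{Y|S}$ factor. Your explicit remarks on positivity of the denominators and on \propref{prop:pim:pic} being the special case $Q_{X,S,Y}=\Pm_{X,S,Y}$ are accurate and consistent with the paper.
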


Hence, we refer to the dependence component $\pim\bigl(\lpmi_{X; S, Y}\bigr) = \lpmi_{X; S}$ as the {\sf M}arkov component. Then, we have $\pic\bigl(\lpmi_{X; S, Y}\bigr) = \lpmi_{X; S, Y} - \lpmi_{X; S}$, which characterizes the joint dependence not captured by $S$. We refer to it as the {\sf C}onditional dependence component, and also denote it by $\lpmi_{X; Y|S}$, i.e., 
\begin{align}
\lpmi_{X; Y|S}(x, s, y)  \defeq   \lpmi_{X; S, Y}(x, s, y) -  \lpmi_{X; S}(x, s) =  \left[\frac{P_{X, S, Y} - \Pm_{X, S, Y}}{R_{X, S, Y}}\right](x, s, y).\label{eq:lpmi:cond}  
\end{align}
Therefore, the conditional dependence component $\lpmi_{X; Y|S}$ vanishes, i.e.,  $\norm{\lpmi_{X; Y|S}} = 0$, if and only if $X$ and $Y$ are conditionally independent given $S$. In general, from the Pythagorean relation, we can write 
\begin{align}
   \bbnorm{\lpmi_{X; Y|S}}^2 = \bbnorm{\lpmi_{X; S, Y}}^2 - \bbnorm{\lpmi_{X; S} }^2,
  \label{eq:pyth}
\end{align}
analogous to the expression of the conditional mutual information $I(X; Y|S) = I(X; S, Y) - I(X; S)$. Indeed, 
we can establish an explicit connection in the local regime where $X$ and $(S, Y)$ are $\eps$-dependent, i.e., $\bnorm{\lpmi_{X; S, Y}} = O(\eps)$.
Then, from \lemref{lem:mutual} we obtain $\bnorm{\lpmi_{X; S, Y}}^2 = 2 \cdot I(X; S, Y) + o(\eps^2)$,  and similarly, $\bnorm{\lpmi_{X; S} }^2
  =  2\cdot I(X; S) + o(\eps^2)$. %
Therefore,   \eqref{eq:pyth} becomes
\begin{align*}
  \norm{\lpmi_{X; Y|S}}^2
  = \bnorm{\lpmi_{X; S, Y}}^2 -  \bnorm{\lpmi_{X; S} }^2
  = 2\cdot I(X; Y|S) + o(\eps^2).
\end{align*}%

We then discuss learning these two dependence components by applying the nesting technique. To begin, note that since %
\begin{align}
  \lpmi_{X; S} =  \pim(\lpmi_{X; S, Y}) &= \proj{\lpmi_{X; S, Y}}{\spc{\cX} \otimes \spc{\cS}},\\
  \lpmi_{X; Y|S} = \pic(\lpmi_{X; S, Y}) &= \proj{\lpmi_{X; S, Y}}{\spc{\cX} \otimes \spcs{\cY}{\cS}},
\end{align}
we recognize the decomposition $\lpmi_{X; S, Y} = \lpmi_{X; S} + \lpmi_{X; Y|S}$ as a special case of \eqref{eq:sspc:dcmp}. Therefore, similar to our discussions in \secref{sec:nest:proj},
 we consider the nesting configuration $\cside$ and its refinement $\csider$, where [cf. \eqref{eq:cpi:def}]
\begin{align}
  \cside \defeq  \left\{(\kb, k);\, \spc{\cX} ;\, (\spc{\cS}, \spc{\cS \times \cY})\right\}.
  \label{eq:cside:def}
\end{align}
The corresponding nested H-scores are defined on 
\begin{align}
  \dom(\cside) =  \dom(\csider) = 
\left\{\left(
  \begin{bmatrix}
    \fb\,\\
    f\,
  \end{bmatrix}
,
  \begin{bmatrix}
    \gb\\
    g
\end{bmatrix}
\right)\colon
 \fb \in \spcn{\cX}{\kb}, \gb \in \spcn{\cS}{\kb}, f \in \spcn{\cX}{k}, g \in\spcn{\cY \times \cS}{k}\right\}.
\end{align}
In particular, we can compute the nested H-score configured by $\cside$ from a nested network structure as shown in \figref{fig:nn:side}. Then, we can obtain both dependence components by optimizing the nested H-scores. %
Formally, we have the following corollary of  \thmref{thm:sspc:opt} and \thmref{thm:sspc:opt:n}.

\begin{figure}[!t]
  \centering
  \resizebox{.7\textwidth}{!}{%
\begin{tikzpicture}[auto, thick, node distance=2cm, >=latex'%
  ]
  \foreach \i/\j/\k [count = \y] in {1/1/0, 2/3/3}
  {
    \draw node [block] (H-\i) at (6.5, -3 * \y cm) {\Large $\Hs$};
  }

  \draw node [ext, left of = H-1, node distance=4cm, rotate = -90] (f-1)  {};
  \draw node [ext, right of = H-1, node distance=4cm, rotate = 90] (g-1)  {};
  \draw node [ext, left of = H-2, node distance=4cm, rotate = -90] (f-2)  {};
  \draw node [ext, right of = H-2, node distance=4cm, rotate = 90] (g-2)  {};

  \draw node at (f-1)  {\Large $\fb$}
  node at (g-1)  {\Large $\gb$}
  node at (f-2)  {\Large $f$}
  node at (g-2)  {\Large $g$};

        \draw
        node [con, left of = H-2, node distance = 2.3cm] (conl) {\con}
        node [con, right of = H-2, node distance = 2.3cm] (conr) {\con};

    \draw [->] (f-1) -- (H-1);
    \draw [->] (g-1) -- (H-1);

    \draw [->] (f-2) -- (conl);
    \draw [->] (g-2) -- (conr);
    \draw [->] (conl) -- node[below] { $\ds 
      \begin{bmatrix}
        \fb\\
        f
      \end{bmatrix}
      $
    }(H-2);
    \draw [->] (conr) -- node[below] {$\ds 
      \begin{bmatrix}
        \gb\\
        g
      \end{bmatrix}
      $
    }(H-2);

    \draw [->] (f-1) -| (conl);
    \draw [->] (g-1) -| (conr);
    \draw node at (conl |- f-1) [bullet] {\textbullet} ;
    \draw node at (conr |- g-1) [bullet] {\textbullet} ;

    \draw node [input] (x) at (0.5,-4.5) {}
    node [xshift = -0.5cm] at (x) {\Large $X$}
    node [xshift = -0.6mm] at (x) {\Large $\circ$};

    \draw node [input] (s) at (13.5,-3) {}
    node [xshift = 0.5cm] at (s){\Large $S$} 
    node [xshift = 0.7mm] at (s) {\Large $\circ$} ;
    
    \draw node [input] (y) at (13.5,-6) {} %
    node [xshift = 0.5cm] at (y){\Large $Y$} 
    node [xshift = 0.7mm] at (y) {\Large $\circ$} ;

    \draw node [con, left of = y, node distance=1cm, xshift = -0 cm] (con-ys)  {\con};

    \foreach \i/\j/\k  in {1/1/0, 2/3/3}
    {
      \draw [->] (x) -- + (.5, 0) node [bullet] {\textbullet} |- (f-\i);
    }
    \draw [->] (s) |- (g-1);
    \draw [->] (s) -| (con-ys);
    \draw [->] (y) -- (con-ys);
    \draw [->] (con-ys) -- node[below] {$\ds 
      \begin{bmatrix}
        S\\
        Y
      \end{bmatrix}
      $} (g-2);
    \draw node at (con-ys |- s) [bullet] {\textbullet} ;

\end{tikzpicture}

}
  \caption{Nesting Technique for Learning With Side Information $S$} %
  \label{fig:nn:side}  
\end{figure}
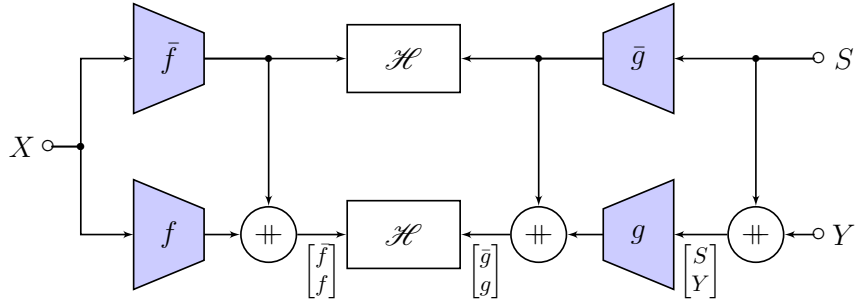

\begin{corollary}
Given $\kb \geq \rank(\lpmi_{X; S})$,   $\ds \Hs\left(
    \begin{bmatrix}
      \fb\,\\
      f\,
    \end{bmatrix}
    ,
    \begin{bmatrix}
      \gb\\
      g
    \end{bmatrix}
    ; \cside
    \right)$
is maximized if and only if
  \begin{subequations}
    \label{eq:opt:Hside}
    \begin{gather}
      \fb \otimes \gb = \lpmi_{X; S},
      \label{eq:opt:Hside:1}\\
      f \otimes g =  \mdnl{k}(\lpmi_{X; Y|S}).\label{eq:opt:Hside:2}
    \end{gather}
  \end{subequations}
In addition,    $\ds \Hs\left(
    \begin{bmatrix}
      \fb\,\\
      f\,
    \end{bmatrix}
    ,
    \begin{bmatrix}
      \gb\\
      g
    \end{bmatrix}
    ; \csider
    \right)$
is maximized if and only if %
\begin{subequations}
  \label{eq:opt:Hside:n}
  \begin{gather}
    \fb_i \otimes \gb_i = \mdn{i}(\lpmi_{X; S}), \quad i \in [\kb].\\
    f_i \otimes g_i = \mdn{i}(\lpmi_{X; Y|S}), \quad i \in [k]. %
  \end{gather}
\end{subequations}  
\end{corollary}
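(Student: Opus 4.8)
The plan is to recognize the configuration $\cside$ in \eqref{eq:cside:def} as an instance---via the symmetric form noted in the remark after \eqref{eq:hnest:def}---of the projection configuration $\cpi$ studied in \secref{sec:nest:proj}, and then to read off the conclusion directly from \thmref{thm:sspc:opt} and \thmref{thm:sspc:opt:n}. Concretely, I would treat the pair $(S, Y)$ as a single composite variable $W$, so that the ambient space is $\spc{\cX \times \cW} = \spc{\cX \times \cS \times \cY}$ equipped with the product metric distribution $R_{X, W} = P_X P_{S, Y}$, exactly the product form assumed throughout \secref{sec:nest}. Under this identification the nesting subspaces $\spc{\cS} \subset \spc{\cS \times \cY}$ appearing in $\cside$ sit on the $W$-side, with the full space $\spc{\cX}$ on the $X$-side; this is precisely the symmetric version of $\cpi$ with the subspace taken to be $\sspc{\cY} = \spc{\cS}$. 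The embedding of $\spc{\cS}$ into $\spc{\cS \times \cY}$ carries the marginal metric $P_S$, consistent with the convention in \eqref{eq:spcs:def}.

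Granting this identification, \thmref{thm:sspc:opt} states that the nested H-score is maximized if and only if $\fb \otimes \gb = \proj{\lpmi_{X; S, Y}}{\spc{\cX} \otimes \spc{\cS}}$ and $f \otimes g = \mdnl{k}(\lpmi_{X; S, Y} | \spc{\cX}, \spc{\cS \times \cY} \orthm \spc{\cS})$. It then remains only to identify these two quantities with the components in \eqref{eq:opt:Hside}. For the first, I would invoke \propref{prop:pim:pic}: since $\spc{\cX} \otimes \spc{\cS} = \spc{\cX \times \cS}$, the projection equals $\pim(\lpmi_{X; S, Y}) = \lpmi_{X; S}$, yielding \eqref{eq:opt:Hside:1}. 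For the second, the orthogonal complement is $\spc{\cS \times \cY} \orthm \spc{\cS} = \spcs{\cY}{\cS}$ by definition \eqref{eq:spcs:def}, and \propref{prop:md:constraint} converts the constrained modal decomposition into an unconstrained one on the projected function, giving $\mdnl{k}(\proj{\lpmi_{X; S, Y}}{\spc{\cX} \otimes \spcs{\cY}{\cS}}) = \mdnl{k}(\pic(\lpmi_{X; S, Y})) = \mdnl{k}(\lpmi_{X; Y|S})$, which is \eqref{eq:opt:Hside:2}.

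The rank hypothesis transfers directly: the theorem requires $\kb \geq \rank(\proj{\lpmi_{X; S, Y}}{\spc{\cX} \otimes \spc{\cS}})$, and by the identification above this projection is $\lpmi_{X; S}$, so the condition becomes $\kb \geq \rank(\lpmi_{X; S})$ as stated. The refined claim \eqref{eq:opt:Hside:n} follows in the identical manner from \thmref{thm:sspc:opt:n}, replacing the rank-$k$ and rank-$\kb$ statements by the per-mode identities $\mdn{i}(\proj{\lpmi_{X; S, Y}}{\spc{\cX} \otimes \spc{\cS}}) = \mdn{i}(\lpmi_{X; S})$ and $\mdn{i}(\proj{\lpmi_{X; S, Y}}{\spc{\cX} \otimes \spcs{\cY}{\cS}}) = \mdn{i}(\lpmi_{X; Y|S})$.

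I expect the only real obstacle to be bookkeeping rather than mathematics: one must apply the theorems in their symmetric form and keep careful track of which variable plays the role of $\cY$ (here the composite $(S, Y)$) and which subspace is being projected onto. Once that dictionary is fixed, the genuine content reduces to the two identifications of the projected dependence components---the \textsf{M}arkov/aligned component via \propref{prop:pim:pic} and the \textsf{C}onditional/orthogonal component via the definition of $\spcs{\cY}{\cS}$---both of which are already established earlier.
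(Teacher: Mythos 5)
Your proposal is correct and follows essentially the same route as the paper: the text preceding the corollary explicitly recognizes $\lpmi_{X;S,Y} = \lpmi_{X;S} + \lpmi_{X;Y|S}$ as a special case of the projection decomposition \eqref{eq:sspc:dcmp} (with $\lpmi_{X;S} = \proj{\lpmi_{X;S,Y}}{\spc{\cX}\otimes\spc{\cS}}$ via \propref{prop:pim:pic} and $\lpmi_{X;Y|S} = \proj{\lpmi_{X;S,Y}}{\spc{\cX}\otimes\spcs{\cY}{\cS}}$), and then reads off the corollary from the symmetric forms of \thmref{thm:sspc:opt} and \thmref{thm:sspc:opt:n} exactly as you do. Your identifications of the rank hypothesis and of the orthogonal complement $\spc{\cS\times\cY}\orthm\spc{\cS}=\spcs{\cY}{\cS}$ match the paper's bookkeeping.
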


\subsection{Feature Assembling and Inference Models}%

We then assemble the features for inference tasks, particularly the inference conditioned on $S$. We first consider the case where we have learned both dependence components $\lpmi_{X; S}$ and $\lpmi_{X; Y|S}$, for which we have the following characterization (cf. \propref{prop:pred:est}). A proof is provided in 
\appref{app:prop:pred:est:side}.

\begin{proposition}
\label{prop:pred:est:side}
Suppose features $\fb \in \spcn{\cX}{\kb}, \gb \in \spcn{\cS}{\kb}$ and %
$ f \in \spcn{\cX}{k}, g \in \spcn{\cS \times \cY }{k}$ satisfy $\fb \otimes \gb = \lpmi_{X; S}, f \otimes g = \lpmi_{X; Y|S}$. Then, we have $\norm{\lpmi_{X; S}}^2 = \tr(\La_{\fb} \La_{\gb}), \norm{\lpmi_{X; Y|S}}^2 = \tr(\La_f  \La_g)$, and
\begin{align}
  P_{Y|X, S}(y|x, s) = 
P_{Y|S}(y|s) \cdot \left(1 + \frac{f^\T(x)  g(s, y)}{1 + \fb^{\,\T}(x)  \gb(s)} \right).
  \label{eq:post:est}
\end{align}
In addition, for any function $\psi \in \spcn{\cY}{d}$,
\begin{align}
  \E{\psi(Y)|X = x, S = s} = \E{\psi(Y)|S = s} +  \frac{1}{1 + \fb^{\,\T}(x)  \gb(s)}\cdot \La^{(s)}_{\psi, g} f(x),
  \label{eq:side:estimation}
\end{align}
where we have defined $\La^{(s)}_{\psi, g} \defeq \E{\psi(Y)g^\T(s, Y)|S = s}$ for each $s \in  \cS$.
\end{proposition}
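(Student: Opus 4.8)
The plan is to reduce everything to the additive decomposition $\lpmi_{X; S, Y} = \fb \otimes \gb + f \otimes g$ together with the definition of the CDK. The two norm identities will be immediate: since the metric $R_{X, S, Y} = P_XP_{S,Y}$ restricts to the product form $P_XP_S$ on $\spc{\cX \times \cS}$, and is already in the product form $P_XP_{S,Y}$ on $\spc{\cX \times (\cS \times \cY)}$ when $(S,Y)$ is viewed as a single variable, I can apply \eqref{eq:exp} of \proptyref{propty:exp} with the ``$\cY$-side'' alphabet played by $\cS$ and by $\cS \times \cY$, respectively. This gives $\norm{\lpmi_{X; S}}^2 = \norm{\fb \otimes \gb}^2 = \tr(\La_{\fb}\La_{\gb})$ and $\norm{\lpmi_{X; Y|S}}^2 = \norm{f \otimes g}^2 = \tr(\La_{f}\La_{g})$.

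For the posterior, I would first combine \eqref{eq:side:dcmp:i}, \propref{prop:pim:pic}, and \eqref{eq:lpmi:cond} to write $\lpmi_{X; S, Y} = \lpmi_{X; S} + \lpmi_{X; Y|S} = \fb \otimes \gb + f \otimes g$, noting that $\fb \otimes \gb$ does not depend on $y$. Unwinding the definition \eqref{eq:cdk:def} of the CDK with respect to $R_{X, S, Y} = P_XP_{S,Y}$ then yields $P_{X, S, Y}(x, s, y) = P_X(x)P_{S,Y}(s, y)\bigl(1 + \fb^{\,\T}(x)\gb(s) + f^\T(x)g(s, y)\bigr)$. The crux is to compute the marginal $P_{X, S}(x, s) = \sum_y P_{X, S, Y}(x, s, y)$ and to show that the $f \otimes g$ contribution drops out. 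This is precisely where I use that $f \otimes g = \pic(\lpmi_{X; S, Y}) \in \spcs{\cY}{(\cX \times \cS)}$ is orthogonal to $\spc{\cX \times \cS}$: testing against the indicator $h(x, s) = \kron_{\{x = x_0, s = s_0\}} \in \spc{\cX \times \cS}$ gives $0 = \ip{f \otimes g}{h} = P_X(x_0)\sum_y P_{S,Y}(s_0, y)\,f^\T(x_0)g(s_0, y)$, so (using that $P_X$ has full support) $\sum_y P_{S,Y}(s, y)\,f^\T(x)g(s, y) = 0$ for every $x, s$. Hence $P_{X, S}(x, s) = P_X(x)P_S(s)\bigl(1 + \fb^{\,\T}(x)\gb(s)\bigr)$, and dividing $P_{X, S, Y}$ by $P_{X, S}$ produces \eqref{eq:post:est} once we recognize $P_{S,Y}(s, y)/P_S(s) = P_{Y|S}(y|s)$.

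Finally, for the conditional expectation I would integrate $\psi$ against \eqref{eq:post:est}: the constant term $1$ contributes $\sum_y P_{Y|S}(y|s)\psi(y) = \E{\psi(Y)|S = s}$, while the remaining term contributes $\frac{1}{1 + \fb^{\,\T}(x)\gb(s)}\bigl(\sum_y P_{Y|S}(y|s)\psi(y)g^\T(s, y)\bigr) f(x)$, which is exactly $\frac{1}{1 + \fb^{\,\T}(x)\gb(s)}\La^{(s)}_{\psi, g}\,f(x)$ by the definition $\La^{(s)}_{\psi, g} = \E{\psi(Y)g^\T(s, Y)|S = s}$, giving \eqref{eq:side:estimation}. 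I expect the main obstacle to be this marginalization step: keeping the vector/matrix bookkeeping straight and, above all, correctly invoking the orthogonality of the conditional component to $\spc{\cX \times \cS}$ to eliminate the cross term. Everything else is routine substitution.
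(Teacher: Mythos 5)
Your proposal is correct and follows essentially the same route as the paper: both rest on the decomposition $\lpmi_{X; S, Y} = \lpmi_{X; S} + \lpmi_{X; Y|S}$, the expansion of $P_{X, S, Y}$ via the CDK definition, and division by $P_{X, S}$, with the norm identities and the conditional expectation obtained exactly as you describe. The one place you anticipate difficulty --- the marginalization over $y$ via orthogonality of $\pic(\lpmi_{X; S, Y})$ to $\spc{\cX \times \cS}$ --- is valid but can be bypassed entirely, since $P_{X, S}(x, s) = P_X(x)P_S(s)\bigl(1 + \lpmi_{X; S}(x, s)\bigr)$ is just the definition of the bivariate CDK combined with the hypothesis $\fb \otimes \gb = \lpmi_{X; S}$, which is how the paper's proof proceeds.
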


Therefore, we can compute the strength of both the Markov component $\lpmi_{X; S}$ and the conditional component $\lpmi_{X: Y|S}$ from the features. Similarly, we can further compute the spectrum of the dependence components, by learning the modal decomposition according to \eqref{eq:opt:Hside:n}.

From \propref{prop:pred:est:side}, we can obtain inference models conditioned on the side information $S$. In particular, for classification task, we can use \eqref{eq:post:est} to compute the posterior probability, with the resulting MAP estimation conditioned on $S = s$  [cf. \eqref{eq:map}]:
\begin{align}
   \yh_{\MAP}(x; s) = \argmax_{y \in \cY}P_{Y|X, S}(y|x, s) = 
  \argmax_{y \in \cY}P_{Y|S}(y|s) \cdot \left(1 + \frac{f^\T(x)  g(s, y)}{1 + \fb^{\,\T}(x)  \gb(s)} \right).
   \label{eq:side:map}
 \end{align}
Specifically, $P_{Y|S}$ can be obtained by a separate discriminative model that predicts $Y$ from the side information $S$. In addition, when $Y$ is continuous, we can obtain the MMSE estimator of $\psi(Y)$ conditioned on $S = s$
from \eqref{eq:side:estimation}, where we can learn $\E{\psi(Y)|S = s}$ and
 $\La^{(s)}_{\psi, g} = \E{\psi(Y)g^\T(s, Y)|S = s}$ separately from $(S, Y)$ pairs. As we construct both models  by assembling learned features, the model outputs depend on input data $X$ only through the features $\fb$ and $f$ of $X$, as desired.

Moreover, we can conduct a conditional independence test %
without learning the complete conditional dependence $\lpmi_{X; Y|S}$. In particular, suppose we have learned features $ f \in \spcn{\cX}{k}, g \in \spcn{\cS \times \cY }{k}$ with $f \otimes g = \mdnl{k}(\lpmi_{X; Y|S})$ for some $k \geq 1$. Then we obtain
$\tr(\La_f \La_g) = \norm{\mdnl{k}(\lpmi_{X; Y|S})}^2 \geq 0$, where the equality holds
if and only if $\lpmi_{X; Y|S} = 0$, i.e., $X$ and $Y$ are conditionally independent given $S$.

\subsection{Theoretical Properties and Interpretations} 
 We conclude this section by demonstrating theoretical properties of the learned features. In particular, we focus on the conditional dependence component $\lpmi_{X; Y|S}$ and associated features, as the Markov component $\lpmi_{X; S}$ shares the same properties as discussed in 
the bivariate case.

To begin, let $K \defeq \rank(\lpmi_{X; Y|S})$, and let the modal decomposition of $\lpmi_{X; Y|S}$ be %
\begin{align}
  \mdn{i}(\lpmi_{X; Y|S}) = \sigma_i \cdot (f_i^* \otimes g_i^*), i \in [K],
  \label{eq:dcmp:pic}
\end{align}
where we have represented each mode in the standard form.
Then, we can interpret the $\sigma_i, f_i^*, g_i^*$ as the solution to a constrained maximal correlation problem. To see this, note that from
 $ \lpmi_{X; Y|S} =  \proj{\lpmi_{X; S, Y}}{ \spcs{\cY}{\cX, \cS}} = \proj{\lpmi_{X; S, Y}}{\spc{\cX} \otimes \spcs{\cY}{\cS}}$,
we can obtain
  $\sigma_i (f_i^* \otimes g_i^*) = \mdn{i}(\lpmi_{X; Y|S}) = \mdn{i}(\lpmi_{X; S, Y}|\spc{\cX}, \spcs{\cY}{\cS})$.
  Therefore, $f_i^*, g_i^*$ are the constrained maximal correlation function of $X$ and $(S, Y)$ as defined in \propref{prop:max:corr:const}, with the subspaces $\sspc{\cX} = \spc{\cX}, \sspc{\cS  \times \cY} = \spcs{\cY}{\cS}$.

\subsubsection{Local Posterior Distribution and Conditional Dependence}

In a local analysis regime, we can simplify the posterior distribution $P_{Y|X, S}$ as follows. %
A proof is provided in \appref{app:prop:posterior}. %

  \begin{proposition}
  \label{prop:posterior}
    If $X$ and $(S, Y)$ are $\eps$-dependent, we have %
  \begin{align}
    P_{Y|X, S}(y|x, s)%
      =  P_{Y|S}(y|s) \left(1 + \sum_{i = 1}^{K}\sigma_i f^*_i(x)g^*_i(s, y)\right) + o(\eps),
  \label{eq:posterior}
  \end{align}
  where $\sigma_i, f_i^*, g_i^*$ are as defined in \eqref{eq:dcmp:pic}.
\end{proposition}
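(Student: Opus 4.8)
The plan is to start from the exact posterior formula \eqref{eq:post:est} established in \propref{prop:pred:est:side}, specializing the features to satisfy $\fb \otimes \gb = \lpmi_{X; S}$ and $f \otimes g = \lpmi_{X; Y|S}$, and then to perform a first-order expansion under the $\eps$-dependence assumption. The crucial structural observations are that the numerator $f^\T(x) g(s, y)$ equals the conditional dependence component evaluated pointwise, namely $\lpmi_{X; Y|S}(x, s, y)$, which by the modal decomposition \eqref{eq:dcmp:pic} expands as $\sum_{i = 1}^K \sigma_i f_i^*(x) g_i^*(s, y)$, while the denominator is $1 + \fb^\T(x) \gb(s) = 1 + \lpmi_{X; S}(x, s)$.

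First I would establish that both $\lpmi_{X; S}$ and $\lpmi_{X; Y|S}$ are pointwise $O(\eps)$. Since $X$ and $(S, Y)$ are $\eps$-dependent, $\norm{\lpmi_{X; S, Y}} = O(\eps)$; because both components are orthogonal projections of $\lpmi_{X; S, Y}$ [cf. \eqref{eq:side:dcmp:i} and the Pythagorean relation \eqref{eq:pyth}], their norms are bounded by $\norm{\lpmi_{X; S, Y}}$ and are hence also $O(\eps)$. On a finite alphabet the metric distribution assigns strictly positive mass to each atom, so a bound on the norm translates into a uniform pointwise bound; thus $\lpmi_{X; S}(x, s) = O(\eps)$ and $\lpmi_{X; Y|S}(x, s, y) = O(\eps)$ for every $(x, s, y)$.

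Next I would Taylor-expand the reciprocal: since $\lpmi_{X; S}(x, s) = O(\eps)$, we have $(1 + \lpmi_{X; S}(x, s))^{-1} = 1 + O(\eps)$. Multiplying by the numerator, which is itself $O(\eps)$, gives
\begin{align*}
\frac{f^\T(x) g(s, y)}{1 + \fb^\T(x) \gb(s)} = \lpmi_{X; Y|S}(x, s, y) \cdot \left(1 + O(\eps)\right) = \sum_{i = 1}^K \sigma_i f_i^*(x) g_i^*(s, y) + O(\eps^2).
\end{align*}
Substituting back into \eqref{eq:post:est} and absorbing the bounded factor $P_{Y|S}(y|s)$ into the error term then yields the claimed expansion, with $O(\eps^2) = o(\eps)$.

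The main obstacle --- really the only point requiring care --- is the passage from the $L^2$-type norm bounds furnished by $\eps$-dependence to the pointwise bounds needed to control the reciprocal and the cross term; this relies on the finiteness of the alphabets and the positivity of the metric distribution masses. A secondary point is to verify that the residual is genuinely second order, i.e., that the product of the $O(\eps)$ numerator and the $O(\eps)$ correction from the denominator collapses to $O(\eps^2)$ rather than $O(\eps)$, so that it can legitimately be reported as $o(\eps)$.
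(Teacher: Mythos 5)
Your proposal is correct and follows essentially the same route as the paper: both start from the exact identity $P_{Y|X,S}(y|x,s) = P_{Y|S}(y|s)\bigl(1 + \lpmi_{X;Y|S}(x,s,y)/(1+\lpmi_{X;S}(x,s))\bigr)$ established in the proof of \propref{prop:pred:est:side}, deduce pointwise $O(\eps)$ bounds on both dependence components from the $\eps$-dependence assumption, expand the reciprocal so that the correction is $O(\eps^2)=o(\eps)$, and finish with the modal decomposition \eqref{eq:dcmp:pic}. The only difference is that you make explicit the (finite-alphabet, positive-mass) justification for passing from norm bounds to pointwise bounds, which the paper asserts without comment.
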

 From \eqref{eq:posterior}, the dominant term of 
 $P_{Y|X, S}(y|x, s)$ depends on $x$ only through $f_i^*(x)$, $i = 1, \dots, K$. Therefore, the feature $f_{[K]}^*(X) = (f_1^*(X), \dots, f_K^*(X))^\T$ captures the conditional dependence between $X$ and $Y$ given $S$ except
possibly higher-order terms of $\epsilon$.

\subsubsection{Relationship to Multitask Classification DNNs }
We can also establish a connection between the side information problem and deep neural networks for multitask learning. Specifically, we consider a multitask classification task where $X$ and $Y$ denote the input data and target label to predict, respectively, and $S$ denotes the index for tasks. When conditioned on different values of $S$, the dependence between data and label are generally different. %
We then demonstrate that  a multitask DNN also learns
 the optimal approximation of
 the conditional dependence component $\lpmi_{X; Y|S}$.

\begin{figure}[!ht]
  \centering  \resizebox{.6\textwidth}{!}{%
\def\dy{3.5}
\def\dx{1.5}
\begin{tikzpicture}[auto, thick, node distance=2cm, >=stealth%
  ]
  \foreach \i/\loc in {1/0, 2/3.5, missing/5.5, 3/7.5}
  {
    \draw node [block/.try, \i/.try] (sl-\i) at (8, -\loc cm) {};
}
  \foreach \i/\sub in {1/1, 2/2, 3/{|\cS|}}
  {
    \draw node at (sl-\i) [] {\Large \sf softmax}
    node [extl, left of = sl-\i, node distance=3cm, text opacity = 0] (G-\i) {\Large $G_{1}^\T$} %
    node at (G-\i) {\Large $G_{\sub}^\T$}
    node [bias, below of = G-\i, node distance=1.4cm, text opacity = 0] (b-\i) {\Large $\vec{b}_{1}$} %
    node at (b-\i) {\Large $\vec{b}_{\sub}$}
;
  }

  \draw node [ext, left of = G-2, node distance = 3.5cm,  rotate = -90] (f) {};%

  \draw node [input, left of = f, node distance=1.5cm] (x)  {}
  node [xshift = -0.5cm] at (x) {\Large $X$}
  node [xshift = -0.7mm] at (x) {\Large \textopenbullet}
  node at (f) {\Large$f$};

  \draw [->] (x) -- (f);

  \foreach  \s/\sub in {1/1, 2/2, 3/{|\cS|}}
  {
    \draw [-] (G-\s) -- (b-\s);
    \draw[->] (f) -- ++ (\dx, 0) |- (G-\s);
    \draw[->] (G-\s) -- (sl-\s);
    \draw[->] (sl-\s) -- ++ (1.5, 0) node [right] {\Large $\Pt^{(f, g, b)}_{Y|X, S=\sub}$};
  }

  \draw node at ($(f) + (\dx, 0)$ ) {\textbullet} ;

\end{tikzpicture}

}
  \caption{A multihead network for extracting  feature $f$ shared among different tasks in $\cS = \{1, \dots, |\cS|\}$. Each task $s \in \cS$ corresponds to a separate classification head with weight matrix $G_s^\T$ and bias vector $\vec{b}_s$ for generating the associated posterior $\Pt_{Y|X, S = s}^{(f, g, b)}$.
}
  \label{fig:dnn:mh}
\end{figure}
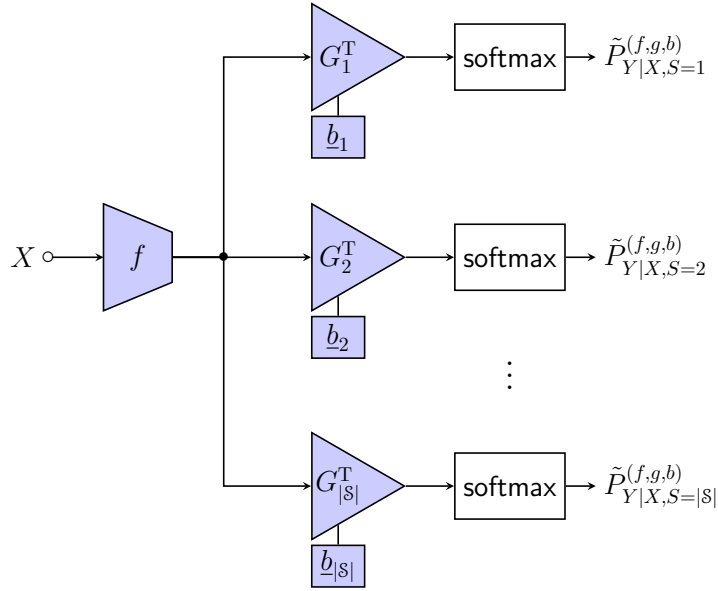

We consider a classical multitask classification DNN design \citep{caruana1993multitask, ruder2017overview}, as shown in \figref{fig:dnn:mh}.
In this figure, feature $f \in \spcn{\cX}{k}$ of $X$ is shared among all tasks. For each task $s \in \cS$, the corresponding classification head
 with weight matrix $G^\T_s \in \mathbb{R}^{|\cY| \times k}$ and bias $\vec{b}_s \in \mathbb{R}^{|\cY|}$ are applied to compute the corresponding posterior probability
\begin{align}
  \Pt^{(f, g, b)}_{Y|X, S}(y|x, s)
  \defeq
  \frac{\exp\left(f(x)\cdot g(s, y) + b(s, y)\right)}{\sum_{y'\in \cY} \exp\left(f(x)\cdot g(s, y') + b(s, y')\right)},
  \label{eq:Pt:s}
\end{align}
where $g \in \spc{\cS\times \cY}$ and $b \in \spc{\cY}$ are related to $G_s$ and $\vec{b}_s$ via [cf. \eqref{eq:def:G:b}] 
\begin{align}
  G_s(i, y) = g_i(s, y) \text{ for all $i \in [k], y \in \cY$}, \qquad \vec{b}_s = [b(s, 1), \dots, b(s, |\cY|)]^\T. 
  \label{eq:def:G:b:side}
\end{align}

Given data samples $\{(x_i, s_i, y_i)\}_{i = 1}^n$ with the empirical distribution %
$P_{X, S, Y}$, we write the corresponding likelihood function as
\begin{align}
 \llog_S(f, g, b) \defeq \frac1n \sum_{i = 1}^n \log \Pt^{(f, g, b)}_{Y|X, S}(y_i|x_i, s_i)
  = \Ed{(\Xh, \Yh, \Sh) \sim P_{X, Y, S}}{\log \Pt^{(f, g, b)}_{Y|X, S}(\Yh|\Xh, \Sh)}.
  \label{eq:likelihood:side:def}
\end{align}
Note that we can relate the posterior probability $\Pt^{(f, g, b)}_{Y|X, S}$ to
the posterior
 $\Pt^{(f, g, b)}_{Y|X}$ in the ordinary classification DNN, as defined in \eqref{eq:Pt}. To see this, note that for all $s \in \cS$, we have  $\Pt^{(f, g, b)}_{Y|X, S=s} = \Pt^{(f, g^{(s)}, b^{(s)})}_{Y|X}$, where 
we have defined $g^{(s)} \in \spcn{\cY}{k}$  and $b^{(s)} \in \spc{\cY}$ for each $s \in \cS$, as $g^{(s)}(y) \defeq g(s, y), b^{(s)}(y) \defeq b(s, y)$. Then, we rewrite \eqref{eq:likelihood:side:def} as $\llogs(f, g, b) = \sum_{s \in \cS} P_S(s)\llogsv{s}(f, g^{(s)}, b^{(s)})$, where $ \llogsv{s}(f, g, b) \defeq \Ed{(\Xh, \Yh) \sim P_{X, Y| S = s}}{\log \Pt^{(f, g, b)}_{Y|X}(\Yh|\Xh)}$
is the expected likelihood value conditioned on $S = s$. We further assume the all bias terms are trained to their optimal values with respect to $f$ and $g$. This gives the likelihood 
\begin{align}
\llogs(f, g) \defeq \sum_{s \in \cS} P_S(s)\llogsv{s}(f, g^{(s)}) = \max_{b \in \spc{\cS \times \cY}} \llogs(f, g, b), 
\end{align}
where we have denoted $\llogsv{s}(f, g) \defeq \max_{b \in \spc{\cY}}\llogsv{s}(f, g, b)$ for each $s \in \cS$. Then, from \proptyref{propty:dnn:center} we can verify that $\llogs(f, g)$ depends only on some centered features, formalized as follows. %
\begin{property}
\label{propty:dnn:center:side}
We have
  $\llogs(f, g) = \llogs(\ft, \gt)$,
where we have defined $\ft \defeq \proj{f}{\spct{\cX}}$, and $\gt \defeq \proj{g}{\spcs{\cY}{\cS}}$, i.e., $\ft(x) = f(x) - \E{f(X)}$ and
 $\gt(s, y) = g(s, y) - \E{g(s, Y)|S = s}$. %
\end{property}
Therefore, we can focus on centered features $f \in \spctn{\cX}{k}$ and $g \in \spcsn{\cY}{\cS}{k}$, i.e.,   $\E{f(X)} = 0$ and $\E{g(s, Y)|S = s} = 0$
for all $s \in \cS$.   %
We also restrict to features $f, g$ that perform better than the trivial choice of zero features, by assuming that %
\begin{align}
  \llogsv{s}(f, g^{(s)})
  \geq \llogsv{s}(0, g^{(s)}) = \llogsv{s}(0, 0) =
 -H(Y|S =s), \quad \text{for all $s \in \cS$}.
  \label{eq:better:than:trivial:s}
\end{align}

Then, we have the following characterization, which extends \propref{prop:dnn} to the multitask setting. A proof is provided in \appref{app:thm:dnn:mulithead}. \begin{theorem}
    \label{thm:dnn:mulithead}
Suppose $X$ and $(S, Y)$ are $\eps$-dependent. For $f \in \spctn{\cX}{k}$ and $g \in \spcsn{\cY}{\cS}{k}$ with \eqref{eq:better:than:trivial:s}, 
\begin{align}
  \llogs(f, g) = \llogs(0, 0) + \frac{1}{2} \cdot \left(\norm{\lpmi_{X; Y|S}}^2 - \bnorm{\lpmi_{X; Y|S} - f \otimes g}^2
 \right) + o(\eps^2),
\end{align}
which is maximized %
if and only if $f \otimes g = \mdnl{k}(\lpmi_{X; Y|S}) + o(\eps)$.
\end{theorem}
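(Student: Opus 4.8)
The plan is to reduce the multitask log-likelihood to a $P_S$-weighted sum of single-task log-likelihoods, apply \propref{prop:dnn} conditionally, and then reassemble the conditional H-scores into the global expression. I would start from the identity $\llogs(f, g) = \sum_{s \in \cS} P_S(s)\llogsv{s}(f, g^{(s)})$ established just before the statement, and observe that for each fixed $s$ the conditional likelihood $\llogsv{s}(f, g^{(s)}) = \Ed{P_{X, Y|S=s}}{\log\Pt^{(f, g^{(s)}, b^{(s)})}_{Y|X}}$ is exactly an ordinary single-task log-likelihood for the pair $(X, Y) \sim P_{X, Y|S=s}$, with the feature geometry induced by the metric $P_{X|S=s}P_{Y|S=s}$ and the conditional CDK $\lpmi_{X; Y|S=s}$. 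Global $\eps$-dependence of $X$ and $(S, Y)$ forces $\norm{\lpmi_{X; Y|S=s}} = O(\eps)$ for each $s$ (the conditional dependence is dominated by the joint one), and assumption \eqref{eq:better:than:trivial:s} is precisely the hypothesis $\llogsv{s}(f, g^{(s)}) \geq \llogsv{s}(0, 0)$ of \propref{prop:dnn}. Applying \propref{prop:dnn} to each $s$ then yields $\llogsv{s}(f, g^{(s)}) = -H(Y|S=s) + \Hs_s(\ft^{(s)}, g^{(s)}) + o(\eps^2)$ together with $\norm{f \otimes g^{(s)}} = O(\eps)$, where $\Hs_s$ is the conditional H-score, $\ft^{(s)} \defeq f - \E{f(X)|S=s}$ is the conditionally centered feature, and $g^{(s)}$ is already conditionally zero-mean since $g \in \spcsn{\cY}{\cS}{k}$.

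The second step is to show that the $P_S$-weighted sum of conditional H-scores collapses to the claimed global expression. Expanding $\Hs_s(\ft^{(s)}, g^{(s)}) = \ip{\lpmi_{X; Y|S=s}}{f \otimes g^{(s)}}_s - \frac12\norm{f \otimes g^{(s)}}_s^2 + \frac12\norm{c_s \otimes g^{(s)}}_s^2$ with $c_s \defeq \E{f(X)|S=s}$, I would treat the three pieces separately. The cross term aggregates \emph{exactly}: from the pointwise relation $\lpmi_{X; Y|S}(x, s, y) = \frac{P_{X|S}(x|s)}{P_X(x)}\,\lpmi_{X; Y|S=s}(x, y)$, the density ratio $P_{X|S}/P_X$ rewrites the global $X$-metric as the conditional one, giving $\sum_s P_S(s)\ip{\lpmi_{X; Y|S=s}}{f \otimes g^{(s)}}_s = \ip{\lpmi_{X; Y|S}}{f \otimes g}$. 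For the two remaining pieces I would show each contributes only $o(\eps^2)$ after weighting: the mismatch between the conditional $X$-metric $P_{X|S=s}$ and the marginal $P_X$ makes $\sum_s P_S(s)\norm{f \otimes g^{(s)}}_s^2 = \norm{f \otimes g}^2$ up to a term governed by $\lpmi_{X; S}$, while the centering correction $\sum_s P_S(s)\norm{c_s \otimes g^{(s)}}_s^2$ is controlled through $c_s = \E{\lpmi_{X; S}(X, s)f(X)}$. Collecting the pieces yields $\llogs(f, g) = \llogs(0, 0) + \frac12\bigl(\norm{\lpmi_{X; Y|S}}^2 - \norm{\lpmi_{X; Y|S} - f \otimes g}^2\bigr) + o(\eps^2)$, using $\llogs(0, 0) = \sum_s P_S(s)(-H(Y|S=s)) = -H(Y|S)$.

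The final step handles the maximizer. The leading-order part of the expansion is maximized exactly when $\norm{\lpmi_{X; Y|S} - f \otimes g}^2$ is minimized over $f \in \spcsn{\cX}{\varnothing}{k}$ and $g \in \spcsn{\cY}{\cS}{k}$. A direct computation shows $\lpmi_{X; Y|S}$ is zero-mean in $X$ under $P_X$ and conditionally zero-mean in $Y$, so $\lpmi_{X; Y|S} \in \spcs{\cX}{\varnothing} \otimes \spcs{\cY}{\cS}$; then \propref{prop:md:constraint} and \factref{fact:low:rank} identify the constrained best rank-$k$ approximation as $\mdnl{k}(\lpmi_{X; Y|S})$, and the $o(\eps^2)$ remainder shifts the maximizer by at most $o(\eps)$, exactly as in \propref{prop:dnn}. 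This gives the stated ``maximized if and only if $f \otimes g = \mdnl{k}(\lpmi_{X; Y|S}) + o(\eps)$.''

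The main obstacle is the bookkeeping in the second step: the conditional problems live in different geometries (metric $P_{X|S=s}P_{Y|S=s}$) than the global one (metric $P_XP_{S, Y}$), and the resulting metric- and centering-mismatches must each be shown to be $o(\eps^2)$ rather than merely $O(\eps^2)$. The crucial observation is that every such mismatch factors as a product of two independently $O(\eps)$ quantities, one coming from the weakness of the $X$--$S$ dependence ($\norm{\lpmi_{X; S}} = O(\eps)$) and the other from $\norm{f \otimes g} = O(\eps)$; a Cauchy--Schwarz estimate anchored on $\lpmi_{X; S}$ (rather than a crude operator-norm bound, which would only give $O(\eps^2)$) then delivers the required $o(\eps^2)$. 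I would isolate this product-of-two-small-factors estimate as a short lemma before assembling the pieces.
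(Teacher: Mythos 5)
Your proposal is correct and follows essentially the same route as the paper's proof: reduce $\llogs$ to the $P_S$-weighted sum of conditional log-likelihoods, apply \propref{prop:dnn} in each conditional geometry after centering $f$ by $\E{f(X)|S=s}$, aggregate the cross term exactly via the cancellation of the $P_{X|S}/P_X = 1+\lpmi_{X;S}$ density ratio, and dispose of the metric- and centering-mismatch terms (the paper's $\beta(s,y)=\vec{\mu}_s^\T g(s,y)$) as products of $O(\eps)$ factors yielding $o(\eps^2)$. The only differences are cosmetic bookkeeping in how the conditional H-score is split.
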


Therefore, the multitask classification network essentially learns features approximating the conditional dependence $\lpmi_{X; Y|S}$. Different from the nested H-score implementation, this multitask network implements the conditioning by directly applying a separate classification head for each task $S = s$. As a consequence, this design requires $|\cS|$ many different heads, and is not applicable when the side information $S$ is continuous or has complicated structures.

\section{Multimodal Learning With Missing Modalities}
\label{sec:mm}%
In this section, we demonstrate another multivariate learning application, where we need to conduct inferences based on different data sources. In particular, we focus on the setting where the goal is to infer $Y$ from two different data sources, denoted by $X_1$ and $X_2$. %

We refer to such problems as the multimodal learning\footnote{The literature typically uses ``multimodal'' to refer to the different forms (modalities) of data sources, e.g., video, audio, and text. However, such distinction is insignificant in our treatment, when we model each data source as a random variable.} problems, and are particularly interested in the  cases where we have missing modalities: either $X_1$ or $X_2$ can be missing during the inference. Our goal is to design a learning system to solve all the three problems: (i) inferring $Y$ based on $X_1$, (ii) inferring $Y$ based on $X_2$, and (iii) inferring $Y$ based on $(X_1, X_2)$. %

Throughout our discussions in this section, we use $P_{X_1, X_2, Y}$ to denote the joint distribution of $X_1 \in \cX_1, X_2 \in \cX_2, Y \in \cY$. For convenience, we also denote $X \defeq (X_1, X_2) \in \cX \defeq \cX_1 \times \cX_2$. We consider the feature geometry on $\spc{\cX \times \cY} = \spc{\cX_1 \times \cX_2 \times \cY}$, with the metric distribution $R_{X_1, X_2, Y} = P_{X_1,X_2}P_Y$, or equivalently, $R_{X, Y} = P_{X}P_Y$.

\subsection{Dependence Decomposition}
To begin, we %
decompose the joint dependence $\lpmi_{X_1, X_2; Y} \in  \spc{\cX_1\times\cX_2 \times \cY}$ as
\begin{align}
 \lpmi_{X_1, X_2; Y}
  = \pib(\lpmi_{X_1, X_2; Y}) + \pii(\lpmi_{X_1, X_2; Y}),
  \label{eq:pib:pii}
\end{align}
where we have defined
  $\pib(\gamma) \defeq \proj{\gamma}{\spc{\cX_1 \times \cY} +  \spc{\cX_2 \times \cY}}, \pii(\gamma) \defeq \gamma - \pib(\gamma)$. We refer to $\pib(\lpmi_{X_1, X_2; Y})$ 
as the {\sf B}ivariate dependence component, and refer to $\pii(\lpmi_{X_1, X_2; Y})$ as the
 {\sf I}nteraction component.

The bivariate dependence component $\pib(\lpmi_{X_1, X_2; Y})$ is uniquely determined by all pairwise dependencies among $X_1, X_2, Y$. Formally, let $\cqb$ denote the collection of distributions with the same pairwise marginal distributions as $P_{X_1, X_2, Y}$, i.e.,
\begin{align}
  \cqb \defeq \{Q_{X_1, X_2, Y} \in \cP^{\cX_1 \times \cX_2 \times \cY}\colon Q_{X_1, X_2} = P_{X_1, X_2}, Q_{X_1, Y} = P_{X_1, Y}, Q_{X_2, Y} = P_{X_2, Y}\}.
  \label{eq:def:q2}
\end{align}
Then we have the following result. A proof is provided in \appref{app:prop:pib}. 
\begin{proposition}
  \label{prop:pib}
  For all $Q_{X_1, X_2, Y}\in \cqb$, %
  we have
    $\pib(\lpmi^{(Q)}_{X_1, X_2; Y}) = \pib(\lpmi_{X_1, X_2; Y})$, where $\lpmi^{(Q)}_{X_1, X_2; Y}$ denotes the CDK function associated with $Q_{X_1, X_2, Y}$. %
\end{proposition}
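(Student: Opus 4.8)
The plan is to exploit the linearity of the orthogonal projection $\pib$ and reduce the claim to an orthogonality statement about the difference of the two CDK functions. First I would observe that every $Q_{X_1, X_2, Y} \in \cqb$ shares not only the pairwise marginals but in particular $Q_{X_1, X_2} = P_{X_1, X_2}$ and $Q_Y = P_Y$ (the latter following by marginalizing $Q_{X_1, Y} = P_{X_1, Y}$ over $X_1$), so the metric distribution $R_{X_1, X_2, Y} = P_{X_1, X_2}P_Y$ is common to both CDK functions. Hence $\lpmi^{(Q)}_{X_1, X_2; Y}$ and $\lpmi_{X_1, X_2; Y}$ live in the same inner product space $\spc{\cX_1 \times \cX_2 \times \cY}$, and because the subtracted product terms in the two density ratios coincide, their difference is
\begin{align*}
\Delta \defeq \lpmi^{(Q)}_{X_1, X_2; Y} - \lpmi_{X_1, X_2; Y} = \frac{Q_{X_1, X_2, Y} - P_{X_1, X_2, Y}}{R_{X_1, X_2, Y}}.
\end{align*}

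Since $\pib$ is the orthogonal projection onto the subspace $\spc{\cX_1 \times \cY} + \spc{\cX_2 \times \cY}$, it is linear, so the desired identity $\pib(\lpmi^{(Q)}_{X_1, X_2; Y}) = \pib(\lpmi_{X_1, X_2; Y})$ is equivalent to $\pib(\Delta) = 0$, which by the orthogonality principle (\factref{fact:proj}) holds if and only if $\Delta \in \spc{\cX_1 \times \cX_2 \times \cY} \orthm (\spc{\cX_1 \times \cY} + \spc{\cX_2 \times \cY})$. Because a vector is orthogonal to a sum of subspaces exactly when it is orthogonal to each summand, it suffices to check $\Delta \perp \spc{\cX_1 \times \cY}$ and $\Delta \perp \spc{\cX_2 \times \cY}$.

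The core computation is then as follows: for any $h \in \spc{\cX_1 \times \cY}$, viewed as a function of $(x_1, y)$, I would expand
\begin{align*}
\ip{\Delta}{h} = \sum_{x_1, x_2, y} \bigl(Q_{X_1, X_2, Y}(x_1, x_2, y) - P_{X_1, X_2, Y}(x_1, x_2, y)\bigr) h(x_1, y),
\end{align*}
where the factor $R_{X_1, X_2, Y}$ from the inner product cancels the denominator of $\Delta$. Summing over $x_2$ first replaces the joint distributions by their $(X_1, Y)$ marginals, giving $\sum_{x_1, y} h(x_1, y)(Q_{X_1, Y}(x_1, y) - P_{X_1, Y}(x_1, y))$, which vanishes precisely because $Q_{X_1, Y} = P_{X_1, Y}$. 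The identical argument with the roles of $x_1$ and $x_2$ swapped uses $Q_{X_2, Y} = P_{X_2, Y}$ to give $\Delta \perp \spc{\cX_2 \times \cY}$, which together complete the proof.

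The argument is essentially mechanical once set up, so I do not anticipate a serious obstacle; the only point requiring care is recognizing that matching a pairwise marginal is exactly the condition that the CDK difference is orthogonal to the corresponding two-variable function subspace — that is, ``same $(X_i, Y)$ marginal'' translates into ``orthogonal to $\spc{\cX_i \times \cY}$'' once the metric distribution cancels the density-ratio denominator. I would also note for completeness that $\spc{\cX_1 \times \cY} + \spc{\cX_2 \times \cY}$ need not be an orthogonal direct sum, but this is irrelevant to the argument, since orthogonality to each summand still implies orthogonality to the sum.
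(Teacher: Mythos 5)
Your proposal is correct and follows essentially the same route as the paper: both arguments reduce to the observation that, for any $h \in \spc{\cX_i \times \cY}$, the inner product $\ip{\lpmi^{(Q)}_{X_1, X_2; Y}}{h}$ depends only on the pairwise marginal $Q_{X_i, Y} = P_{X_i, Y}$ (the paper shows the inner products of both CDK functions against every element of $\spc{\cX_1 \times \cY} + \spc{\cX_2 \times \cY}$ agree, while you equivalently show the difference $\Delta$ is orthogonal to that subspace and invoke linearity of $\pib$). Your explicit remark that $Q_{X_1,X_2}Q_Y = P_{X_1,X_2}P_Y$, so the two CDK functions share the same metric distribution, is a point the paper uses only implicitly and is worth stating.
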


\begin{figure}[!ht]
  \centering
  \resizebox{.6\textwidth}{!}{%
\tdplotsetmaincoords{67}{25} %
\begin{tikzpicture}[tdplot_main_coords,  >=latex', scale = .5]
  \coordinate (O) at (0,0,0);
  \def\x{7}
  \def\tscale{.7}
  \filldraw[
        draw=none,%
        fill=colorp,
        fill opacity = .6,%
        ](-1, -1.5, 0)
        -- (\x, -1.5, 0)
        -- (\x, \x, 0)
        -- (-1, \x, 0)
        -- cycle;
        \coordinate (I) at (5,5,5.5);
        \coordinate (Ixy) at (5,5,0);
        \coordinate (B1) at (4,1,0);
        \coordinate (B2) at (1,4,0);
        \coordinate (M1) at (5.88, 1.47, 0); %
        \draw[->, thick, draw = coloro] (O) -- node[pos = .7, left, scale = \tscale] {$\lpmi_{X_1, X_2;Y}$} (I);
        \draw[->, thick, colori] (Ixy) -- node[right, scale = \tscale] {$\pii$}(I);
        \draw[->, thick, draw = colorb] (O) -- node[right, scale = \tscale, pos = 1] {$\textcolor{colorb}{\pib} = \textcolor{colorb1}{\pibn{1}} + \textcolor{colorb2}{\pibn{2}}$}(Ixy);
        \draw[->, thick, colorb1] (O) -- node[pos = .8, below, scale = \tscale] {$\pibn{1}$} (B1);
        \draw[->, colorb1, opacity = .7] (O) -- node[pos = .9, below, scale = \tscale] {$\pimn{1}$} (M1); %
        \draw[->, colori, opacity = .7] (M1) -- node[left, xshift = .2em, scale = \tscale] {$\picn{1}$} (I);
        \draw[->, thick, colorb2] (O) -- node[pos = 1, xshift = .1em, above, scale = \tscale] {$\pibn{2}$} (B2);
        \draw[draw opacity = .5, colorb1] (M1) -- ++ (1, .25, 0) node [pos = 1, right, scale = \tscale] {$\spc{\cX_1 \times \cY}$};
        \draw[draw opacity = .5, colorb2] (B2) -- ++ (.5, 2, 0) node [pos = 1, above, scale = \tscale] {$\spc{\cX_2 \times \cY}$};
        \draw[dashed] (B1) -- (Ixy);
        \draw[dashed] (B2) -- (Ixy);
        \draw[dashed] (M1) -- (Ixy);

        \node [ scale = \tscale] at (6.5, -1.5, 0) {$\spc{\cX_1 \times\cY} + \spc{\cX_2 \times\cY}$};

        \node [below, scale = \tscale] at (\x, -1, 0) {};
        \RightAngle{(I)}{(M1)}{(O)}; 
        \RightAngle{(Ixy)}{(M1)}{($(M1) +
          (1, .25, 0)$)}; 
\end{tikzpicture}

}
  \caption{Decompose the joint dependence $\lpmi_{X_1, X_2; Y}$ into bivariate dependence component $\pib$ and interaction dependence component $\pii$.
The plane denotes the sum of $\spc{\cX_1 \times\cY}$ and $\spc{\cX_2 \times\cY}$.  }
  \label{fig:mm:dcmp}
\end{figure}

We show the relation between different dependence components in \figref{fig:mm:dcmp}, where we have further decomposed the bivariate dependence component $\pib(\lpmi_{X_1, X_2; Y})$ as
$\pib(\lpmi_{X_1, X_2; Y}) = \pibn{1}(\lpmi_{X_1, X_2; Y}) + \pibn{2}(\lpmi_{X_1, X_2; Y})$ for some $\pibn{i}(\lpmi_{X_1, X_2; Y}) \in \spc{\cX_i\times \cY}, i = 1, 2$. For comparison, we have also demonstrated $\pimn{1}(\lpmi_{X_1, X_2; Y}) = \lpmi_{X_1; Y}$ and $\picn{1}(\lpmi_{X_1, X_2; Y}) = \lpmi_{X_1, X_2; Y} - \lpmi_{X_1; Y}$,  obtained from the decomposition introduced in \secref{sec:side:dcmp}. Note that since the interaction component $\pii(\lpmi_{X_1; X_2; Y})$ does not capture any bivariate dependence, we can also obtain $\pimn{1}(\lpmi_{X_1, X_2; Y})$ directly from the bivariate component $\pib(\lpmi_{X_1, X_2; Y})$ via a projection: $\pimn{1}(\lpmi_{X_1, X_2; Y}) = \pimn{1}(\pib(\lpmi_{X_1, X_2; Y}))$.

\subsection{Feature Learning from Complete Data}

We consider learning the features representations for the two dependence components. Here, we assume the data are complete $(X_1, X_2, Y)$ triplets with the empirical distribution $P_{X_1, X_2, Y}$. We will discuss the learning with incomplete data later. 

Again, we apply the nesting technique to design the training objective. Note that since $X = (X_1, X_2)$, with $\sspc{\cX} = \spc{\cX_1} + \spc{\cX_2}$, we can express the two components as [cf. \eqref{eq:sspc:dcmp}]
\begin{align}
  \pib(\lpmi_{X; Y}) &= \proj{\lpmi_{X; Y}}{\sspc{\cX}\otimes \spc{\cY}},\\
  \pii(\lpmi_{X; Y}) &= \proj{\lpmi_{X; Y}}{(\spc{\cX} \orthm \sspc{\cX})\otimes \spc{\cY}}.
\end{align}
Therefore, we consider the nesting configuration $\cbi$ and its refinement $\cbir$, as [cf. \eqref{eq:cpi:def}]
\begin{align}
  \cbi \defeq \left\{(\kb, k);\, (\spc{\cX_1} + \spc{\cX_2}, \spc{\cX});\, \spc{\cY}\right\}.
  \label{eq:cbi:def}
\end{align}
The corresponding nested H-scores are defined on 
\begin{align}
  \dom(\cbi) = \dom(\cbir) = \left\{\left(
    \begin{bmatrix}
      \fb\,\\
      f\,
    \end{bmatrix}
    ,
    \begin{bmatrix}
      \gb\\
      g
    \end{bmatrix}\right)
  \colon  \fb \in \spcn{\cX_1}{\kb} + \spcn{\cX_2}{\kb}, \gb \in \spcn{\cY}{\kb}, f \in \spcn{\cX}{k}, g \in \spcn{\cY}{k}\right\}.
  \label{eq:dom:cbi}
\end{align}

\begin{figure}[!t]
  \centering
  \resizebox{.85\textwidth}{!}{%

\begin{tikzpicture}[auto, thick, node distance=2cm, >=stealth%
  ]
  \draw  node [input] (x1) at (0,0) {}
  node [xshift = -6mm] {\Large$X_1$}
  node [xshift = -.8mm]  {\Large \textopenbullet} 
  node [input, below of = x1, node distance = 6cm] (x2) {$X_2$}
  node at (x2) [xshift = -6mm] {\Large $X_2$}
  node at (x2) [xshift = -.8mm]{\Large \textopenbullet} 
  node [ext, right of = x1, node distance = 4cm, rotate = -90] (f_p1) {}
  node at (f_p1) {\Large $\fb^{(1)}$}
  node [ext, below of = f_p1, node distance = 3cm, rotate = -90] (f_p2) {}
  node at (f_p2) {\Large $\fb^{(2)}$}
  node [ext, below of = f_p2, node distance = 3cm, rotate = -90] (f_p12) {}
  node at (f_p12) {\Large $f$}
  node [con, left of = f_p12, node distance = 2.5cm] (con12) {\con}
  node [sum, right of=f_p1, xshift = -.5cm, yshift = -1.5cm] (sum0) {\suma}
  node [block, right of = sum0, node distance = 4cm] (H1) {\Large $\Hs$}
;

\draw
   node [block, name=H2] at (H1|-f_p12)  {\Large $\Hs$}
   node [con, left of = H2, node distance = 2.8cm] (conl) {\con}
   node [con, right of = H2, node distance = 2.8cm] (conr) {\con}
   node [ext, right of = H1, xshift = 2.5cm, rotate = 90] (g1) {}
   node [ext, right of = H2, xshift = 2.5cm, rotate = 90] (g2) {}
   node [input, name=y, right of = g1, yshift = -2.2cm] {}
   node at (g1) {\Large $\gb$}
   node at (g2) {\Large $g$}
   node at (y) [xshift = .8mm] {\Large \textopenbullet} 
   node at (y) [xshift = 6mm] {\Large $Y$};

        \draw[->](x1) -- node {}(f_p1);
        \draw[->](x2) -- node {}(con12);
        \draw[->](con12) -- node [below]{\large $X$}(f_p12);%
        \draw[->](x1) -| node {}(con12); %
        \draw[->](x2) -- + (.4, 0) node {\textbullet}  |- node {}(f_p2); %
        \draw[->](f_p1) -| node [near start] {} (sum0);
        \draw[->](f_p2) -| node [near start, below]{} (sum0);
        \draw[->](sum0) -- node {\Large $\fb$} (H1);
        \draw[->](sum0) -| node {} (conl);
        \draw[->](f_p12) -- node {} (conl);
        \draw[->](conl) -- node [below] {\large$\ds 
          \begin{bmatrix}
            \fb\,\\
            f
          \end{bmatrix}
$} (H2);
        \draw[->](conr) -- node {\large$\ds 
          \begin{bmatrix}
            \gb\\
            g
          \end{bmatrix}
$} (H2);
        \draw[->](g1) -- node [above] {} (H1);
        \draw[->](g1) -| node {} (conr);
        \draw[->](g2) -- node {} (conr);
        \draw[->] (y) -- +(-.5, 0) node {\textbullet} |- node {} (g1);
        \draw[->] (y) -- +(-.5, 0) node {\textbullet} |- node {} (g2);

\draw
        node at (con12|-x1) {\textbullet} 
        node at (conr|-g1) {\textbullet} 
        node at (conl|-sum0) {\textbullet} 
        ;
\end{tikzpicture}

}
  \caption{Nesting Technique for Learning Features from Multimodal Data}%
  \label{fig:nn:mm}
\end{figure}
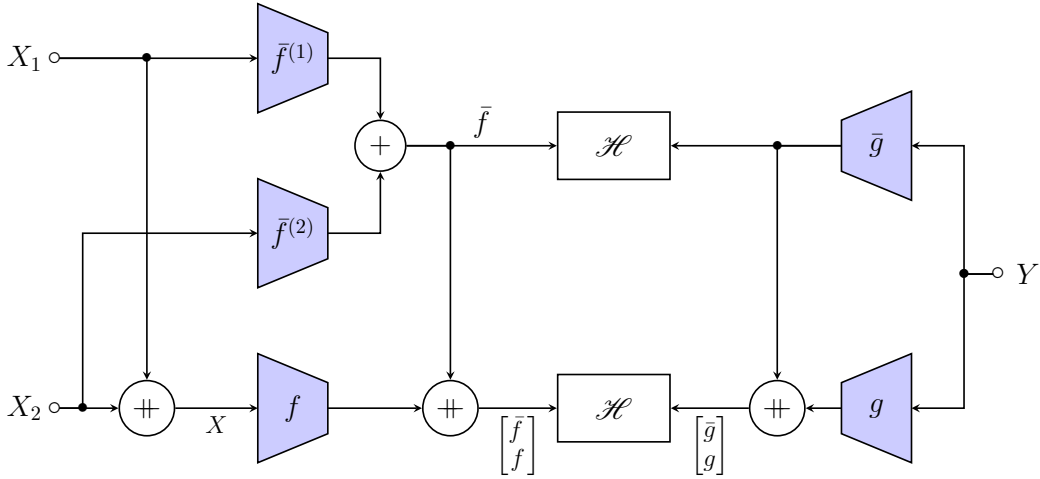

Specifically, we can compute the nested H-score configured by $\cbi$ using a nested network structure as shown in \figref{fig:nn:mm}. Then we can obtain both dependence components by maximizing the corresponding nested H-scores, formalized as follows (cf. \thmref{thm:sspc:opt} and \thmref{thm:sspc:opt:n}).

    \begin{corollary}
      \label{cor:hsbinest}
      Given $\kb \geq \rank(\pib(\lpmi_{X_1, X_2; Y}))$, the nested H-score 
$\ds \Hs\left(
    \begin{bmatrix}
      \fb\,\\
      f\,
    \end{bmatrix}
    ,
    \begin{bmatrix}
      \gb\\
      g
    \end{bmatrix}
    ; \cbi
    \right)$
is maximized if and only if
\begin{subequations}
  \begin{gather}
    \fb \otimes \gb = \pib(\lpmi_{X_1, X_2; Y}),\label{eq:pib:learn}
\\
    f \otimes g = \mdnl{k}\left(\pii(\lpmi_{X_1, X_2; Y})\right).
  \end{gather}
\end{subequations}
 In addition, $\ds \Hs\left(
    \begin{bmatrix}
      \fb\,\\
      f\,
    \end{bmatrix}
    ,
    \begin{bmatrix}
      \gb\\
      g
    \end{bmatrix}
    ; \cbir
    \right)$
 is maximized if and only if %
 \begin{subequations}
   \begin{gather}
     \fb_i \otimes \gb_i = \mdn{i}\left(\pib(\lpmi_{X_1, X_2; Y})\right), \quad i \in [\kb],
     \\
     f_i \otimes g_i = \mdn{i}\left(\pii(\lpmi_{X_1, X_2; Y})\right), \quad i \in [k].
   \end{gather}
 \end{subequations}
    \end{corollary}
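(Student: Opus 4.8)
The plan is to recognize the configuration $\cbi$ in \eqref{eq:cbi:def} (and its refinement $\cbir$) as the generic two-level configuration $\cpi$ in \eqref{eq:cpi:def} (respectively $\cpir$) under the identification $\sspc{\cX} = \spc{\cX_1} + \spc{\cX_2}$, viewed as a subspace of $\spc{\cX} = \spc{\cX_1 \times \cX_2}$, and then to invoke \thmref{thm:sspc:opt} and \thmref{thm:sspc:opt:n} directly. Since the metric distribution here is $R_{X, Y} = P_X P_Y$ with $X = (X_1, X_2)$, the bivariate geometry on $\spc{\cX \times \cY}$ underlying those theorems applies without change. Comparing \eqref{eq:cbi:def} with \eqref{eq:cpi:def}, the configurations coincide for this $\sspc{\cX}$; the domains \eqref{eq:dom:cbi} and \eqref{eq:dom:cpi} match as well, because $\fb \in \spcn{\cX_1}{\kb} + \spcn{\cX_2}{\kb}$ is exactly the condition $\fb \in \sspcn{\cX}{\kb}$ for this choice of $\sspc{\cX}$.

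The one substantive point is that the decomposition \eqref{eq:pib:pii} coincides with the orthogonal decomposition \eqref{eq:sspc:dcmp} for this $\sspc{\cX}$. I would verify the subspace identity
$$ (\spc{\cX_1} + \spc{\cX_2}) \otimes \spc{\cY} = \spc{\cX_1 \times \cY} + \spc{\cX_2 \times \cY}. $$
By bilinearity of the tensor product, each generator $(a + b) \otimes c$ with $a \in \spc{\cX_1}, b \in \spc{\cX_2}, c \in \spc{\cY}$ splits as $a \otimes c + b \otimes c$, so the left span equals $(\spc{\cX_1} \otimes \spc{\cY}) + (\spc{\cX_2} \otimes \spc{\cY})$; and for finite alphabets the indicator basis of $\spc{\cX_i \times \cY}$ consists of product functions, whence $\spc{\cX_i} \otimes \spc{\cY} = \spc{\cX_i \times \cY}$. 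Consequently $\proj{\lpmi_{X; Y}}{\sspc{\cX} \otimes \spc{\cY}} = \pib(\lpmi_{X_1, X_2; Y})$, and by the orthogonality principle (\factref{fact:proj}) together with \eqref{eq:xy:dcmp}, $\pii(\lpmi_{X_1, X_2; Y}) = \lpmi_{X_1, X_2; Y} - \pib(\lpmi_{X_1, X_2; Y}) = \proj{\lpmi_{X; Y}}{(\spc{\cX} \orthm \sspc{\cX}) \otimes \spc{\cY}}$.

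With these identifications, the rank hypothesis $\kb \geq \rank(\pib(\lpmi_{X_1, X_2; Y}))$ is exactly $\kb \geq \rank(\proj{\lpmi_{X; Y}}{\sspc{\cX} \otimes \spc{\cY}})$ required by \thmref{thm:sspc:opt}. Applying that theorem yields $\fb \otimes \gb = \proj{\lpmi_{X; Y}}{\sspc{\cX} \otimes \spc{\cY}} = \pib(\lpmi_{X_1, X_2; Y})$ and $f \otimes g = \mdnl{k}(\lpmi_{X; Y}|\spc{\cX} \orthm \sspc{\cX}, \spc{\cY})$, the latter being $\mdnl{k}(\pii(\lpmi_{X_1, X_2; Y}))$ by \propref{prop:md:constraint}; this gives the first pair of conditions. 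The refined statement follows verbatim from \thmref{thm:sspc:opt:n}, reading off $\fb_i \otimes \gb_i = \mdn{i}(\lpmi_{X; Y}|\sspc{\cX}, \spc{\cY}) = \mdn{i}(\pib(\lpmi_{X_1, X_2; Y}))$ and $f_i \otimes g_i = \mdn{i}(\lpmi_{X; Y}|\spc{\cX} \orthm \sspc{\cX}, \spc{\cY}) = \mdn{i}(\pii(\lpmi_{X_1, X_2; Y}))$, again via \propref{prop:md:constraint}. The only genuine obstacle is the subspace identity; everything else is notational translation, so I would only be careful that $\spc{\cX_1}, \spc{\cX_2}$ are treated as single-coordinate subspaces of $\spc{\cX}$ and that complements are taken in the ambient space under the metric $R_{X, Y} = P_X P_Y$.
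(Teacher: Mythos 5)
Your proposal is correct and follows exactly the paper's route: the paper presents this corollary as an immediate instance of Theorems~\ref{thm:sspc:opt} and \ref{thm:sspc:opt:n} with $\sspc{\cX} = \spc{\cX_1} + \spc{\cX_2}$, using the identification $\pib(\lpmi_{X;Y}) = \proj{\lpmi_{X;Y}}{\sspc{\cX}\otimes\spc{\cY}}$ and $\pii(\lpmi_{X;Y}) = \proj{\lpmi_{X;Y}}{(\spc{\cX}\orthm\sspc{\cX})\otimes\spc{\cY}}$ together with \propref{prop:md:constraint}. Your explicit check of the subspace identity $(\spc{\cX_1}+\spc{\cX_2})\otimes\spc{\cY} = \spc{\cX_1\times\cY}+\spc{\cX_2\times\cY}$ is a detail the paper leaves implicit, and it is the right thing to verify.
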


\subsection{Feature Assembling and Inference Models}
\label{sec:mm:assemble}
We then illustrate how to assemble the learned features for the inference tasks and deal with incomplete data. For convenience, we define the conditional expectation operators $\opxn{i}, i = 1, 2$, 
such that for  $f \in \spcn{\cX_1\times \cX_2}{k}$ with $k \geq 1$, we have
\begin{align}
  [\opxn{i}(f)](x_i) \defeq \E{f(X_1, X_2)|X_i = x_i}, \qquad\text{for all }x_i \in \cX_i.
  \label{eq:tau:i:def}  
\end{align}
Note that we can also interpret $\opxn{i}$ as to the projection onto $\spc{\cX_i}$, i.e., $\opxn{i}(f)= \proj{f}{\spc{\cX_i}}$. 
Then, we have the following result. A proof is provided in \appref{app:prop:pred:est:mm}.

\begin{proposition}
  \label{prop:pred:est:mm}
  Suppose we have  $\fb \otimes \gb = \pib(\lpmi_{X_1, X_2; Y}), f \otimes g = \pii(\lpmi_{X_1, X_2; Y})$ for features $\fb = \fb^{(1)} + \fb^{(2)}$ with $\ds\fb^{(i)} \in \spcn{\cX_i}{\kb}, i = 1, 2$,  $\gb \in \spcn{\cY}{\kb}$, $f \in \spcn{\cX}{k}$, and $g \in \spcn{\cY}{k}$.
 Then, we have
 \begin{subequations}
   \label{eq:p:mm}
   \begin{gather}
     P_{Y|X_1, X_2}(y|x_1, x_2) = P_Y(y) \left[1 + \fb^{\,\T}(x_1, x_2)\gb(y) + f^\T(x_1, x_2)g(y)\right],\label{eq:p:mm:12}\\
     P_{Y|X_1}(y|x_1) = P_Y(y) \left[1 + \left(\fb^{(1)}(x_1) + [\opxn{1}(\fb^{(2)})](x_1)\right)^\T \gb(y)\right],\label{eq:p:mm:1}\\
     P_{Y|X_2}(y|x_2) = P_Y(y) \left[1 + \left(\fb^{(2)}(x_2) + [\opxn{2}(\fb^{(1)})](x_2)\right)^\T \gb(y)\right].\label{eq:p:mm:2}
   \end{gather}
 \end{subequations}
  In addition, for all $\psi \in \spcn{\cY}{d}$, we have
  \begin{subequations}
    \label{eq:est:mm}
  \begin{gather}
    \E{\psi(Y)|X_1 = x_1, X_2 = x_2} = \E{\psi(Y)} + \La_{\psi, \gb} \fb(x_1, x_2) + \La_{\psi, g} f(x_1, x_2),\\
    \E{\psi(Y)|X_1 = x_1} = \E{\psi(Y)} + \La_{\psi, \gb} \left(\fb^{(1)}(x_1) + [\opxn{1}(\fb^{(2)})](x_1)\right),\\
    \E{\psi(Y)|X_2 = x_2} = \E{\psi(Y)} + \La_{\psi, \gb} \left(\fb^{(2)}(x_2) + [\opxn{2}(\fb^{(1)})](x_2)\right).
  \end{gather}
\end{subequations}
\end{proposition}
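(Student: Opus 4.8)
The plan is to start from the decomposition \eqref{eq:pib:pii}: combining it with the hypotheses $\fb \otimes \gb = \pib(\lpmi_{X_1, X_2; Y})$ and $f \otimes g = \pii(\lpmi_{X_1, X_2; Y})$ gives the pointwise identity
\[
  \lpmi_{X_1, X_2; Y}(x_1, x_2, y) = \fb^\T(x_1, x_2)\gb(y) + f^\T(x_1, x_2)g(y).
\]
Since the metric distribution is $R_{X, Y} = P_X P_Y$, the definition \eqref{eq:cdk:def} of the CDK function yields $P_{Y|X_1, X_2}(y|x_1, x_2) = P_Y(y)\bigl(1 + \lpmi_{X_1, X_2; Y}(x_1, x_2, y)\bigr)$, and substituting the identity above gives \eqref{eq:p:mm:12} directly, exactly as \eqref{eq:p:ygx} was obtained in \propref{prop:pred:est}.

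For the single-modality posteriors \eqref{eq:p:mm:1} and \eqref{eq:p:mm:2}, I would marginalize. Writing $P_{X_1, X_2, Y} = P_{X_1, X_2}P_Y\bigl(1 + \lpmi_{X_1, X_2; Y}\bigr)$, summing over $x_2$, and dividing by $P_{X_1}(x_1)$ gives
\[
  P_{Y|X_1}(y|x_1) = P_Y(y)\Bigl(1 + \E{\lpmi_{X_1, X_2; Y}(x_1, X_2, y)\mid X_1 = x_1}\Bigr).
\]
Substituting the identity and using $\fb = \fb^{(1)} + \fb^{(2)}$ with $\fb^{(1)}$ a function of $x_1$ alone, the $\fb$-contribution becomes $\bigl(\fb^{(1)}(x_1) + [\opxn{1}(\fb^{(2)})](x_1)\bigr)^\T \gb(y)$ by the definition \eqref{eq:tau:i:def} of $\opxn{1}$, while the $f$-contribution is $[\opxn{1}(f)]^\T(x_1)g(y)$. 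It then remains to show this last term vanishes.

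This vanishing is the crux. Because $\pii(\gamma) = \gamma - \pib(\gamma)$ is the residual of the projection onto $\spc{\cX_1 \times \cY} + \spc{\cX_2 \times \cY}$, the orthogonality principle (\factref{fact:proj}) makes $\pii(\lpmi_{X_1, X_2; Y}) = f \otimes g$ orthogonal to $\spc{\cX_1 \times \cY}$, so $\proj{f \otimes g}{\spc{\cX_1 \times \cY}} = 0$. Under the product metric $R = P_{X_1, X_2}P_Y$, in which $Y$ is independent of $(X_1, X_2)$, this projection equals the conditional expectation $\E{f(X_1, X_2)\mid X_1 = x_1}^\T g(y) = [\opxn{1}(f)]^\T(x_1)g(y)$, where I use $\opxn{1}(f) = \proj{f}{\spc{\cX_1}}$. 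Hence $[\opxn{1}(f)]^\T(x_1)g(y) \equiv 0$, the $f$-term drops out, and \eqref{eq:p:mm:1} follows; \eqref{eq:p:mm:2} is identical with the roles of $X_1$ and $X_2$ exchanged. I expect this orthogonality-to-conditional-expectation translation --- the fact that the interaction component is invisible to any single modality --- to be the main obstacle.

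Finally, the conditional expectations \eqref{eq:est:mm} follow by computing $\E{\psi(Y)\mid\cdots} = \sum_y \psi(y)P_{Y|\cdots}(y|\cdots)$ from the posteriors just derived. Multiplying \eqref{eq:p:mm:12} by $\psi(y)$ and summing over $y$, the constant term yields $\E{\psi(Y)}$, and the $\fb$- and $f$-terms yield $\sum_y P_Y(y)\psi(y)\gb^\T(y)\,\fb(x_1, x_2) = \La_{\psi, \gb}\fb(x_1, x_2)$ and $\La_{\psi, g}f(x_1, x_2)$, recovering the first line of \eqref{eq:est:mm}; the single-modality cases are obtained the same way from \eqref{eq:p:mm:1} and \eqref{eq:p:mm:2}. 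Equivalently, these follow from the joint case by the tower property, since $\La_{\psi, g}[\opxn{1}(f)](x_1) = \sum_y P_Y(y)\psi(y)\,[\opxn{1}(f)]^\T(x_1)g(y) = 0$ by the vanishing established above.
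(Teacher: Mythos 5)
Your proposal is correct and follows essentially the same route as the paper: derive \eqref{eq:p:mm:12} pointwise from the decomposition, marginalize over the missing modality for \eqref{eq:p:mm:1}--\eqref{eq:p:mm:2}, and kill the interaction term using the orthogonality $\pii(\lpmi_{X_1, X_2; Y}) \perp \spc{\cX_1 \times \cY}$ (the paper invokes this via \eqref{eq:pii:zero}, i.e., orthogonality tested against indicator functions, which is the same fact as your identification of $\proj{\cdot}{\spc{\cX_1\times\cY}}$ with the conditional-expectation operator under the product metric). The derivation of \eqref{eq:est:mm} by weighting the posteriors with $\psi$ also matches the paper.
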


From \propref{prop:pred:est:mm}, 
we can obtain inference models for all three different types of input data, 
by simply assembling the learned features in different ways. %
The resulting inference models also reveal the different roles of two dependence components. For example, the features associated with the interaction dependence component, i.e., $f$ and $g$, are used only when we have both $X_1$ and $X_2$ observations.

In practice, we can use \eqref{eq:p:mm} and \eqref{eq:est:mm} for classification and estimation tasks, respectively. To apply \eqref{eq:est:mm}, we can compute $\La_{\psi, \gb}$ and $\La_{\psi, g}$ from the corresponding empirical averages over the training dataset, and learn features $\opxn{1}(\fb^{(2)})$ and  $\opxn{2}(\fb^{(1)})$ from $(X_1, X_2)$ pairs. For example, we can
use \propref{prop:pred:est} to implement the conditional expectation operators $\opxn{1}$ and $\opxn{2}$ [cf. \eqref{eq:estimation}].

\subsection{Theoretical Properties and Interpretations}
We then introduce several theoretical properties of the dependence decomposition and induced feature representations, including their connections to the principle of maximum entropy \citep{jaynes1957informationa, jaynes1957informationb} and the optimal transformation of variables \citep{breiman1985estimating}.

\subsubsection{Dependence Decomposition}
We can relate the bivariate-interaction decomposition \eqref{eq:pib:pii} to decomposition operations in both the probability distribution space and the data space. 
\paragraph{Decomposition in Distribution Space}
We assume that for all  $(x_1, x_2, y) \in \cX_1 \times \cX_2 \times \cY$, 
\begin{align}
  [\pib(\lpmi_{X_1, X_2; Y})](x_1, x_2, y) \geq -1,\quad   [\pii(\lpmi_{X_1, X_2; Y})](x_1, x_2, y) \geq -1,
  \label{eq:gtr:n1}
\end{align}
and define the associated distributions
\begin{subequations}
  \label{eq:pbi:pint}
  \begin{align}
    \pbi_{X_1, X_2, Y}(x_1, x_2, y) &\defeq P_{X_1, X_2}(x_1, x_2)P_Y(y) (1 + [\pib(\lpmi_{X_1, X_2; Y})](x_1, x_2, y)),  \label{eq:pbi:pint:1}
\\
    \pint_{X_1, X_2, Y}(x_1, x_2, y) &\defeq P_{X_1, X_2}(x_1, x_2)P_Y(y) (1 + [\pii(\lpmi_{X_1, X_2; Y})](x_1, x_2, y)).
  \end{align}
\end{subequations}
Then, we have the following characterization, a proof of which is provided in \appref{app:prop:pbi:pint}.

\begin{proposition}
  \label{prop:pbi:pint}
  Under assumption \eqref{eq:gtr:n1}, we have  $\pbi_{X_1, X_2, Y}, \pint_{X_1, X_2, Y} \in \cP^{\cX_1 \times \cX_2 \times \cY}$, with marginal distributions $\pbi_{X_1, X_2} = \pint_{X_1, X_2} = P_{X_1, X_2}$ and $\pbi_{X_i, Y} = P_{X_i, Y},  \pint_{X_i, Y} = P_{X_i}P_{Y}$ for $i = 1, 2$.
\end{proposition}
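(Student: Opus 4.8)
The plan is to exploit the correspondence between marginalizing a distribution and projecting its centered density ratio onto the appropriate marginal feature subspace. By construction \eqref{eq:pbi:pint}, $\pbi_{X_1, X_2, Y}$ and $\pint_{X_1, X_2, Y}$ are precisely the measures on $\cX_1 \times \cX_2 \times \cY$ whose centered density ratios relative to $R_{X_1, X_2, Y} = P_{X_1, X_2}P_Y$ equal $\pib(\lpmi_{X_1, X_2; Y})$ and $\pii(\lpmi_{X_1, X_2; Y})$, respectively. Nonnegativity of every entry is immediate from assumption \eqref{eq:gtr:n1} together with $P_{X_1, X_2}, P_Y \geq 0$; that the total masses equal one will follow either from the identified marginals or directly from $\pib(\lpmi_{X_1, X_2; Y}), \pii(\lpmi_{X_1, X_2; Y}) \perp \spc{\varnothing}$ (zero mean). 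So the substantive content is the list of marginals.

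First I would record the marginalization principle: for any $Q_{X_1, X_2, Y} = R_{X_1, X_2, Y}\,(1 + \gamma)$ with $\gamma \in \spc{\cX_1 \times \cX_2 \times \cY}$, summing out $Y$ yields a density ratio (relative to $P_{X_1, X_2}$) equal to $\proj{\gamma}{\spc{\cX}}$, where $\cX = \cX_1 \times \cX_2$, while summing out $X_{3-i}$ yields a density ratio (relative to $P_{X_i}P_Y$) equal to $\proj{\gamma}{\spc{\cX_i \times \cY}}$; these identities are short computations using that under the product metric $R$ the relevant projections are conditional expectations (averaging $Y \sim P_Y$, resp.\ $X_{3-i} \sim P_{X_{3-i}|X_i}$). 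This reduces each marginal claim to evaluating a projection of $\gamma \in \{\pib(\lpmi_{X_1, X_2; Y}), \pii(\lpmi_{X_1, X_2; Y})\}$ onto $\spc{\cX}$ or onto $\spc{\cX_i \times \cY}$.

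The two projections I need as inputs are $\proj{\lpmi_{X_1, X_2; Y}}{\spc{\cX}} = 0$ (equivalently $\lpmi_{X_1, X_2; Y} \in \spcs{\cY}{\cX}$, since a CDK is zero-mean in $Y$ given $X$) and $\proj{\lpmi_{X_1, X_2; Y}}{\spc{\cX_i \times \cY}} = \lpmi_{X_i; Y}$ (a one-line conditional-expectation calculation turning $P_{Y|X_1, X_2}/P_Y - 1$ into $P_{Y|X_i}/P_Y - 1$). The crux is transferring these from $\lpmi_{X_1, X_2; Y}$ to $\pib(\lpmi_{X_1, X_2; Y})$ and $\pii(\lpmi_{X_1, X_2; Y})$. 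For this I would prove the structural decomposition of the projection target $W \defeq \spc{\cX_1 \times \cY} + \spc{\cX_2 \times \cY}$. Using $\spc{\cX_i \times \cY} = \spc{\cX_i} \orthp (\spc{\cX_i} \otimes \spcs{\cY}{\varnothing})$ and $\spcs{\cY}{\cX} = \spc{\cX} \otimes \spcs{\cY}{\varnothing}$, I get $W = U \orthp V$ with $U \defeq \spc{\cX_1} + \spc{\cX_2} \subseteq \spc{\cX}$ and $V \defeq (\spc{\cX_1} + \spc{\cX_2}) \otimes \spcs{\cY}{\varnothing} \subseteq \spcs{\cY}{\cX}$, the orthogonality $U \perp V$ being exactly $\spc{\cX} \perp \spcs{\cY}{\cX}$. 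Since $\lpmi_{X_1, X_2; Y} \perp \spc{\cX} \supseteq U$, it follows that $\pib(\lpmi_{X_1, X_2; Y}) = \proj{\lpmi_{X_1, X_2; Y}}{W} = \proj{\lpmi_{X_1, X_2; Y}}{V} \in V \subseteq \spcs{\cY}{\cX}$, and hence $\pii(\lpmi_{X_1, X_2; Y}) \in \spcs{\cY}{\cX}$ too.

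Assembling: the membership in $\spcs{\cY}{\cX}$ gives $\proj{\pib(\lpmi_{X_1, X_2; Y})}{\spc{\cX}} = \proj{\pii(\lpmi_{X_1, X_2; Y})}{\spc{\cX}} = 0$, i.e.\ $\pbi_{X_1, X_2} = \pint_{X_1, X_2} = P_{X_1, X_2}$ (and both total masses are $1$). For the $(X_i, Y)$ marginals I would use $\spc{\cX_i \times \cY} \subseteq W$ with the tower property $\proj{(\proj{\gamma}{W})}{\spc{\cX_i \times \cY}} = \proj{\gamma}{\spc{\cX_i \times \cY}}$ to get $\proj{\pib(\lpmi_{X_1, X_2; Y})}{\spc{\cX_i \times \cY}} = \lpmi_{X_i; Y}$, whereas $\pii(\lpmi_{X_1, X_2; Y}) \perp W \supseteq \spc{\cX_i \times \cY}$ forces the corresponding projection to vanish; through the marginalization principle these read $\pbi_{X_i, Y} = P_{X_i, Y}$ and $\pint_{X_i, Y} = P_{X_i}P_Y$. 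The main obstacle I anticipate is the structural step $W = U \orthp V$: one must check that the constant-in-$Y$ and zero-mean-in-$Y$ parts of $\spc{\cX_1 \times \cY}$ and $\spc{\cX_2 \times \cY}$ separate cleanly and stay orthogonal after forming the sum of the two subspaces. This is exactly where the product form $R_{X_1, X_2, Y} = P_{X_1, X_2}P_Y$ is essential, and it is what guarantees that $\pib(\lpmi_{X_1, X_2; Y})$ remains in $\spcs{\cY}{\cX}$ rather than picking up a spurious $\spc{\cX}$-component; everything else is bookkeeping with the marginalization principle and the tower property of orthogonal projections.
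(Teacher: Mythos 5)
Your proof is correct and rests on the same underlying mechanism as the paper's: every marginal claim is reduced to an orthogonality or projection relation of $\pib(\lpmi_{X_1, X_2; Y})$ and $\pii(\lpmi_{X_1, X_2; Y})$ against $\spc{\cX_1 \times \cX_2}$ and $\spc{\cX_i \times \cY}$, with nonnegativity and total mass handled exactly as you do. The organization differs in a way worth noting. The paper works pointwise, pairing the dependence components with indicator test functions to show that each marginal sum vanishes, and it obtains $\pbi_{X_i, Y} = P_{X_i, Y}$ indirectly from the identity $P_{X_1, Y} - \pbi_{X_1, Y} = \pint_{X_1, Y} - P_{X_1}P_Y$ combined with $\pii(\lpmi_{X_1, X_2; Y}) \perp \spc{\cX_1 \times \cY}$; you instead state a general marginalization-as-projection principle and invoke the tower property of nested projections. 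More substantively, the paper simply asserts that $\lpmi_{X_1, X_2; Y} \perp \spc{\cX_1 \times \cX_2}$ implies $\pib(\lpmi_{X_1, X_2; Y}) \perp \spc{\cX_1 \times \cX_2}$ --- a step that is not automatic for a projection onto an arbitrary subspace --- whereas you supply the missing justification by decomposing the projection target as $\spc{\cX_1 \times \cY} + \spc{\cX_2 \times \cY} = \bigl(\spc{\cX_1} + \spc{\cX_2}\bigr) \orthp \bigl((\spc{\cX_1} + \spc{\cX_2}) \otimes \spcs{\cY}{\varnothing}\bigr)$, one piece inside $\spc{\cX}$ and one inside $\spcs{\cY}{\cX}$, which is precisely where the product form of the metric distribution enters. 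Your version is therefore slightly longer but self-contained at the one point where the paper's argument leans on an unproved structural fact; both routes yield the proposition.
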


From \propref{prop:pbi:pint}, $\pint_{X_1, X_2, Y}$  has  marginal distributions 
 $\pint_{X_i, Y} = P_{X_i}P_{Y}$, $i = 1, 2$, %
and %
does not capture $(X_1; Y)$ or  $(X_2; Y)$ dependence.
On the other hand, $\pbi_{X_1, X_2, Y}$ 
has the same 
 pairwise marginal distributions as $P_{X_1, X_2, Y}$, i.e., 
  $\pbi_{X_1, X_2, Y} \in \cqb$ with $\cqb$ as defined in \eqref{eq:def:q2}. We can show that $\pbi_{X_1, X_2, Y}$ also achieves the maximum entropy in $\cqb$ in the local analysis regime. Formally, let 
    \begin{align}
      \pment_{X_1, X_2, Y} \defeq
      \argmax_{Q_{X_1, X_2, Y} \in \cqb} H(Q_{X_1, X_2, Y})
      \label{eq:pbi:def}
    \end{align}
denote the entropy maximizing distribution on $\cqb$, where $H(Q_{X_1, X_2, Y})$ denotes the entropy of $(X_1, X_2, Y)\sim Q_{X_1, X_2, Y}$. Then we have the following result.
    A proof is provided in \appref{app:prop:max-ent}.
    \begin{proposition}
      \label{prop:max-ent}
      Suppose $X = (X_1, X_2)$ and $Y$ are $\eps$-dependent, and let $\lpmi^{(\ent)}_{X_1, X_2; Y}$ denote the CDK function associated with $ \pment_{X_1, X_2, Y}$. Then, we have
      $\bnorm{\pib(\lpmi_{X_1, X_2; Y}) - \lpmi^{(\ent)}_{X_1, X_2; Y}} = o(\eps)$, 
or equivalently,
    \begin{align} %
      \pment_{X_1, X_2, Y}(x_1, x_2, y) = \pbi_{X_1, X_2, Y}(x_1, x_2, y) + o(\eps), \quad \text{for all $x_1, x_2, y$}.
      \label{eq:max:ent:bi}
    \end{align}
  \end{proposition}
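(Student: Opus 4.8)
The plan is to pass from entropy maximization to an orthogonal projection in the feature geometry, exploiting that $R \defeq R_{X_1, X_2, Y} = P_{X_1, X_2}P_Y$ already carries the correct pairwise marginals. First I would observe that for every $Q \in \cqb$ we have $Q_{X_1, X_2} = P_{X_1, X_2}$ and $Q_Y = P_Y$, so the cross term $\Ed{Q}{\log R}$ is constant on $\cqb$; writing $H(Q) = -\kld{Q}{R} - \Ed{Q}{\log R}$ then shows that $\pment_{X_1, X_2, Y} = \argmin_{Q \in \cqb} \kld{Q}{R}$. Thus the maximum-entropy problem becomes a minimum-divergence (information projection) problem onto $\cqb$.

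Next I would translate this into the CDK geometry on $\spc{\cX_1 \times \cX_2 \times \cY}$. For $Q \in \cqb$ write its density ratio $\gamma^{(Q)} \defeq \llrt{Q}$ relative to $R$. The three pairwise-marginal constraints defining $\cqb$ are equivalent to fixing the projections $\proj{\gamma^{(Q)}}{\spc{\cX_1 \times \cX_2}} = 0$, $\proj{\gamma^{(Q)}}{\spc{\cX_1 \times \cY}}$, and $\proj{\gamma^{(Q)}}{\spc{\cX_2 \times \cY}}$ to the values they take at the feasible point $P_{X_1, X_2, Y}$, whose CDK is $\gamma_0 \defeq \lpmi_{X_1, X_2; Y}$. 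Hence, with $\cV \defeq \spc{\cX_1 \times \cX_2} + \spc{\cX_1 \times \cY} + \spc{\cX_2 \times \cY}$, the admissible CDK functions lie in the affine flat $\gamma_0 + \cV^{\perp}$. Using the local expansion $\kld{Q}{R} = \tfrac12 \bnorm{\gamma^{(Q)}}^2 + o(\bnorm{\gamma^{(Q)}}^2)$ (the same second-order expansion underlying \lemref{lem:mutual}), the divergence minimization reduces, to leading order, to minimizing $\bnorm{\gamma}^2$ over this flat.

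The crux is then a one-line projection identity. By \propref{prop:pbi:pint}, $\pbi_{X_1, X_2, Y} \in \cqb$ and its CDK is exactly $\pib(\gamma_0)$; since $\pib(\gamma_0) \in \spc{\cX_1 \times \cY} + \spc{\cX_2 \times \cY} \subseteq \cV = (\cV^{\perp})^{\perp}$, it is orthogonal to the flat's direction space $\cV^{\perp}$. A feasible point orthogonal to the direction space is the unique minimum-norm element of the flat, so $\pib(\gamma_0)$ minimizes $\bnorm{\gamma}^2$ over $\gamma_0 + \cV^{\perp}$, and therefore also over the CDK-image of $\cqb$ (the flat minimizer is itself attained there). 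To confirm smallness I would use feasibility of $\pbi$: $\bnorm{\pib(\gamma_0)} \le \bnorm{\gamma_0} = O(\eps)$ gives $\kld{\pment}{R} \le \kld{\pbi}{R} = O(\eps^2)$, which forces $\bnorm{\gamma^{(\pment)}} = O(\eps)$, so the leading-order analysis applies. Concluding, $\lpmi^{(\ent)}_{X_1, X_2; Y} = \pib(\gamma_0) + o(\eps)$, and substituting into $\pment = R(1 + \lpmi^{(\ent)}_{X_1, X_2; Y})$ and $\pbi = R(1 + \pib(\gamma_0))$ yields \eqref{eq:max:ent:bi}.

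I expect the main obstacle to be the $o(\eps)$ bookkeeping that makes the reduction to the quadratic on the affine flat rigorous: one must show that the minimizer of the \emph{exact} divergence over the probability polytope $\cqb$ is within $o(\eps)$ of the minimizer of the leading quadratic over the \emph{relaxed} affine flat. This requires (i) the feasibility bound above to confine the minimizer to the weak-dependence regime, (ii) that for small $\eps$ the candidate $\pbi = R(1 + \pib(\gamma_0))$ is a genuine nonnegative distribution, so the polytope constraints are inactive and the relaxation is tight, and (iii) uniform control of the remainder in the local expansion of $\kld{\cdot}{R}$. An alternative that sidesteps (ii)--(iii) is to invoke the exact Gibbs form $\pment_{X_1, X_2, Y} \propto \exp\!\left(a(x_1, x_2) + b(x_1, y) + c(x_2, y)\right)$ of the maximum-entropy solution: its log density ratio lies in $\cV$ exactly, and linearizing $\log(1 + t) = t + O(t^2)$ in the local regime places $\lpmi^{(\ent)}_{X_1, X_2; Y}$ in $\cV$ up to $O(\eps^2)$, after which the fixed projection constraints again single out $\pib(\gamma_0)$.
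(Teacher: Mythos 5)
Your proposal is correct and follows essentially the same route as the paper's proof: both reduce entropy maximization over $\cqb$ to minimizing $\kld{Q}{P_{X_1,X_2}P_Y} = I_Q(X_1, X_2; Y)$, localize via the feasibility bound $\kld{\pment}{R} \le \kld{\pbi}{R} = O(\eps^2)$, and then apply the Pythagorean decomposition $\norm{\llrt{Q}}^2 = \norm{\pib(\llrt{Q})}^2 + \norm{\pii(\llrt{Q})}^2$ with the bivariate part fixed on $\cqb$. Your "minimum norm on the affine flat $\gamma_0 + \cV^\perp$" phrasing is just a repackaging of \propref{prop:pib} together with orthogonality, so the two arguments coincide.
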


\paragraph{Decomposition in Data Space} %
For each triplet $(x_1, x_2, y) \in \cX_1 \times \cX_2 \times \cY$, we consider the decomposition 
\begin{align}
  (x_1, x_2, y) \mapsto (x_1, x_2), (x_1, y), (x_2, y).
  \label{eq:dcmp:triple}
\end{align}

Suppose the dataset\footnote{Though the dataset is modeled as a multiset without ordering, we introduce the index $i$ for the convenience of presentation, which corresponds to a specific realization for traversing the dataset.} 
 $\Ds \defeq \left\{\bigl(x_1^{(i)}, x_2^{(i)}, y^{(i)}\bigr)\right\}_{i \in [n]}$ 
has the empirical distribution $P_{X_1, X_2, Y}$, where each tuple $\bigl(x_1^{(i)}, x_2^{(i)}, y^{(i)}\bigr) \in \cX_1 \times \cX_2 \times \cY$. Then, by applying this decomposition on $\Ds$ and grouping the decomposed pairs, we obtain 
three separate datasets 
\begin{align}
\left\{\bigl(x_1^{(i)}, x_2^{(i)}\bigr)\right\}_{i \in [n]},
\left\{\bigl(x_1^{(i)}, y^{(i)}\bigr)\right\}_{i \in [n]},
\left\{\bigl(x_2^{(i)}, y^{(i)}\bigr)\right\}_{i \in [n]},
\label{eq:ds:pair}  
\end{align}
which have empirical distributions $P_{X_1, X_2}$, $P_{X_1, Y}$, and $P_{X_2, Y}$, respectively.

Therefore, we can interpret the decomposition \eqref{eq:dcmp:triple} as extracting the bivariate dependence component from the joint dependence: the new pairwise datasets retain all pairwise dependence, but do not capture any interaction among $X_1, X_2, Y$. Indeed, it is easy to see that, for each dataset with an empirical distribution taken from  $\cqb$, the decomposition \eqref{eq:dcmp:triple} leads to the same pairwise datasets. Reversely, we can reconstruct $\pbi_{X_1, X_2, Y}$ from the pairwise datasets \eqref{eq:ds:pair}. We will discuss the reconstruction algorithm design later.

\subsubsection{Feature Representations} 
Let $\Kb \defeq \rank(\pib(\lpmi_{X_1, X_2; Y}))$ and $K \defeq \rank(\pii(\lpmi_{X_1, X_2; Y}))$. Then, we can represent the dependence modes of the bivariate component $\pib(\lpmi_{X_1, X_2; Y})$ and  $\pii(\lpmi_{X_1, X_2; Y})$ in their standard forms, as
\begin{subequations}
  \label{eq:mm:fb:f}
  \begin{align}
   \mdn{i}(\pib(\lpmi_{X_1, X_2; Y})) &=  \sigmab_i\, (\fb_i^*\otimes \gb_i^*), \quad i \in [\Kb],  \label{eq:mm:fb}\\
   \mdn{i}(\pii(\lpmi_{X_1, X_2; Y})) &=  \sigma_i \,(f_i^*\otimes g_i^*),\quad i \in [K].
  \end{align}
\end{subequations}

By applying \propref{prop:max:corr:const}, we can interpret these features as solutions to corresponding constrained maximal correlation problems. For example, since $\mdn{i}(\pib(\lpmi_{X_1, X_2; Y})) = \mdn{i}(\lpmi_{X_1, X_2; Y}|\spc{\cX_1} + \spc{\cX_2}, \spc{\cY})$,  $(\fb_i^*, \gb_i^*)$ is the $i$-th constrained maximal correlation function pair of $X = (X_1, X_2)$ and $Y$ restricted to subspaces $ \spc{\cX_1} + \spc{\cX_2}$ and $\spc{\cY}$, respectively.

The top mode $(\sigmab_1, \fb_1^*, \gb_1^*)$ in \eqref{eq:mm:fb} also characterizes the optimal solution to a classical regression formulation. Specifically, given input variables $X_1, X_2$ and the output variable $Y$, \cite{breiman1985estimating} formulated the  regression problem
  \begin{align}
 \minimize_{\substack{\phi^{(1)} \in \spct{\cX_1}, \phi^{(2)} \in \spct{\cX_2}\\\psi \in \spct{\cY}\colon \norm{\psi} = 1}} \E{\left(\psi(Y) - \phi^{(1)}(X_1) - \phi^{(2)}(X_2)\right)^2},
    \label{eq:opt:tran}
  \end{align}
where the minimization is over zero-mean functions $\phi^{(1)}, \phi^{(2)}$, and $\psi$. The solution of \eqref{eq:opt:tran}, referred to as the optimal transformations \citep{breiman1985estimating}, can be characterized as follows. A proof is provided in \appref{app:prop:opt:tran}.

\begin{proposition}
  \label{prop:opt:tran}
  The minimum value of optimization problem \eqref{eq:opt:tran} is $1 - \sigmab_1^2$, which can be achieved by $\phi^{(1)} + \phi^{(2)}= \sigmab_1 \cdot \fb_1^*$ and $\psi = \gb_1^*$.
\end{proposition}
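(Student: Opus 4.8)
The plan is to convert the quadratic regression objective in \eqref{eq:opt:tran} into a rank-one approximation problem for the bivariate component $\beta \defeq \pib(\lpmi_{X_1, X_2; Y})$, and then read off the answer from its modal decomposition \eqref{eq:mm:fb}. Writing $\Phi \defeq \phi^{(1)} + \phi^{(2)}$, I would first expand $\E{(\psi(Y) - \Phi(X))^2} = \E{\psi^2(Y)} - 2\,\E{\psi(Y)\Phi(X)} + \E{\Phi^2(X)}$. The first and last terms equal $\norm{\psi}^2 = 1$ and $\norm{\Phi}^2$, since $\psi$ and $\Phi$ depend only on $Y$ and on $X=(X_1,X_2)$, respectively. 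For the cross term, the zero-mean constraints $\Ed{P_X}{\Phi} = \Ed{P_Y}{\psi} = 0$ reduce it via \eqref{eq:exp:cdk} to $\corr(\Phi,\psi) = \ip{\lpmi_{X; Y}}{\Phi \otimes \psi}$; and because $\Phi \otimes \psi = \phi^{(1)} \otimes \psi + \phi^{(2)} \otimes \psi$ lies in $\spc{\cX_1 \times \cY} + \spc{\cX_2 \times \cY}$, and $\pii(\lpmi_{X; Y}) = \lpmi_{X; Y} - \beta$ is orthogonal to that subspace, I may replace $\lpmi_{X; Y}$ by $\beta$. Using $\norm{\Phi \otimes \psi}^2 = \norm{\Phi}^2\norm{\psi}^2 = \norm{\Phi}^2$ (\proptyref{propty:exp}), this yields the clean identity
\begin{align*}
  \E{(\psi - \phi^{(1)} - \phi^{(2)})^2} = 1 - \norm{\beta}^2 + \bnorm{\beta - \Phi \otimes \psi}^2 .
\end{align*}

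The key technical step, which I expect to be the main obstacle, is to show that $\beta$ already lies in the doubly centered subspace $\cV' \defeq (\spcs{\cX_1}{\varnothing} + \spcs{\cX_2}{\varnothing}) \otimes \spcs{\cY}{\varnothing}$, so that the zero-mean constraints cost nothing. Taking $g \equiv 1$ and $f \equiv 1$ in \eqref{eq:exp:cdk} shows $\lpmi_{X; Y} \perp \spc{\cX}$ and $\lpmi_{X; Y} \perp \spc{\cY}$. I would split the target subspace $\spc{\cX_1 \times \cY} + \spc{\cX_2 \times \cY}$ orthogonally into an $X$-only part lying in $\spc{\cX}$ and a remainder lying in $\spcs{\cY}{\cX}$ (the orthogonality of this split uses that each $\spc{\cX_i \times \cY}$ carries a product marginal). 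Projecting $\lpmi_{X; Y}$ onto the subspace annihilates the $X$-only part because $\lpmi_{X; Y} \perp \spc{\cX}$, giving $\beta \in \spcs{\cY}{\cX}$; within this remainder the pure-$Y$ directions $\spcs{\cY}{\varnothing}$ are orthogonal to $\spcs{\cX_i}{\varnothing} \otimes \spcs{\cY}{\varnothing}$, and $\beta \perp \spc{\cY}$ removes them, leaving $\beta \in \cV'$.

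It then remains to minimize $\bnorm{\beta - \Phi \otimes \psi}^2$ over $\Phi \in \spcs{\cX_1}{\varnothing} + \spcs{\cX_2}{\varnothing}$ and $\psi \in \spcs{\cY}{\varnothing}$ with $\norm{\psi} = 1$, which is a constrained rank-one approximation of $\beta$. Because $\beta \in \cV'$, \propref{prop:md:constraint} identifies this constrained optimum with the unconstrained top mode $\mdn{1}(\beta) = \sigmab_1(\fb_1^* \otimes \gb_1^*)$, whose error is $\norm{r_1(\beta)}^2 = \norm{\beta}^2 - \sigmab_1^2$ by the mode orthogonality in \factref{fact:md:ortho}. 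Substituting into the identity gives the minimum value $1 - \sigmab_1^2$, attained at $\psi = \gb_1^*$ and $\phi^{(1)} + \phi^{(2)} = \sigmab_1 \fb_1^*$; the containment $\beta \in \cV'$ guarantees $\gb_1^* \in \spcs{\cY}{\varnothing}$ and $\fb_1^* \in \spcs{\cX_1}{\varnothing} + \spcs{\cX_2}{\varnothing}$, so the feasibility constraints in \eqref{eq:opt:tran} are automatically met. Without the centering lemma $\beta \in \cV'$, the constrained rank-one error could strictly exceed $\norm{\beta}^2 - \sigmab_1^2$ and the stated optimum would fail to match, so that containment is where the real work lies.
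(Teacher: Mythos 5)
Your proof is correct and follows essentially the same route as the paper's: both rewrite the objective as $1 - \norm{\pib(\lpmi_{X_1,X_2;Y})}^2 + \norm{\pib(\lpmi_{X_1,X_2;Y}) - \phi\otimes\psi}^2$ using the orthogonality of $\pii(\lpmi_{X_1,X_2;Y})$ to $\spc{\cX_1\times\cY}+\spc{\cX_2\times\cY}$, and then invoke the optimality of the top mode of the bivariate component. The centering argument you single out as the main obstacle is left implicit in the paper---it is indeed what guarantees the claimed optimizer is feasible under the zero-mean constraints, but since the Eckart--Young lower bound already holds over the larger unconstrained set of rank-one functions, it only affects achievability, not the value of the bound.
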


Therefore, the optimal transformations depend on, and thus characterize only, the top mode of the bivariate dependence component $\pib(\lpmi_{X_1, X_2; Y})$. %

\subsection{Learning With Missing Modalities}%
We conclude this section by briefly discussing feature learning based on incomplete samples. %

\subsubsection{Learning from Pairwise Samples}
\label{sec:mm:miss:pair}
A special case of the incomplete samples is the pairwise datasets \eqref{eq:ds:pair} obtained
from the decomposition
 \eqref{eq:dcmp:triple}. Specifically, suppose we obtain \eqref{eq:ds:pair} from $\Ds \defeq \left\{\bigl(x_1^{(i)}, x_2^{(i)}, y^{(i)}\bigr)\right\}_{i \in [n]}$, and let  $P_{X_1, X_2, Y}$ denote the empirical distribution of $\Ds$.  Since the bivariate dependence is retained in the decomposition \eqref{eq:dcmp:triple}, we can learn  $\pib(\lpmi_{X_1, X_2; Y})$ from the pairwise datasets \eqref{eq:ds:pair}. 

In particular, when we set $k = 0$ in $\cbi$ [cf. \eqref{eq:cbi:def}], we have %
   $ \Hs\left(
    \begin{bmatrix}
      \fb\,\\
      f\,
    \end{bmatrix}
    ,
    \begin{bmatrix}
      \gb\\
      g
    \end{bmatrix}
    ; \cbi
    \right)
   = 2 \cdot \Hs(\fb, \gb)$, and
 \begin{align}
  \Hs(\fb, \gb)&= \Hs\left(\fb^{(1)} + \fb^{(2)}, \gb\right)\notag\\
  &= \E{\left(\fb^{(1)}(X_1) + \fb^{(2)}(X_2)\right)^\T \gb(Y)} - \left(\E{\fb^{(1)}(X_1) + \fb^{(2)}(X_2)}\right)^\T \E{\gb(Y)}- \frac12 \tr\left(\La_{\fb} \La_{\gb}\right)\notag\\
  &= \Hs(\fb^{(1)}, \gb) + \Hs(\fb^{(2)}, \gb) - \tr\left( \La_{\fb^{(1)}, \fb^{(2)}}  \La_{\gb}\right).\label{eq:Hs:fb:gb} 
\end{align}
Therefore, we can evaluate \eqref{eq:Hs:fb:gb} from the pairwise datasets \eqref{eq:ds:pair}, since each $\Hs(\fb^{(i)}, \gb)$ depends only on $P_{X_i, Y}$ for $i = 1, 2$, and  $\La_{\fb^{(1)}, \fb^{(2)}}$ depends only on $P_{X_1, X_2}$. Then, from \corolref{cor:hsbinest}, we can obtain $\pib(\lpmi_{X_1, X_2; Y})$ and the same set of features. %

\subsubsection{General Heterogeneous Training Data}
\label{sec:mm:miss:gen}
We then consider general forms of heterogeneous training data, as shown in \tabref{tab:mm:data}. In particular, suppose there are $n \defeq n_0 + n_1 + n_2$ training samples, and we group them into separate datasets: $\Ds_0$ contains $n_0$ complete observations of $(X_1, X_2, Y)$, and, for $i =1, 2$, each $\Ds_i$ has $n_i$ sample pairs of $(X_i, Y)$. Our goal is to learn features from the heterogeneous data and obtain similar inference models as we introduced in \secref{sec:mm:assemble}.

\begin{table}[!ht]
  \centering
  \begin{tabular}[h]{lccc}
    \toprule
   \quad\qquad Datasets  & Empirical Distribution & Remark\\
    \midrule
    $\ds\Ds_0 = \bigl\{(x^{(i)}_1, x^{(i)}_2, y^{(i)})\bigr\}_{i = 1}^{n_0}$ %
                                           & $\Ph^{(0)}_{X_1, X_2, Y}$ & Complete Observation\\
    $\ds\Ds_1 = \bigl\{(x^{(i)}_1, y^{(i)})\bigr\}_{i =  n_0 + 1}^{n_0 + n_1}$  %
                                           &  $\Ph^{(1)}_{X_1, Y}$ & $X_2$ missing \\
    $\ds\Ds_2 = \bigl\{(x^{(i)}_2, y^{(i)})\bigr\}_{i = n_0+n_1+1}^{n_0+n_1+n_2}$   %
                                           & $\Ph^{(2)}_{X_2, Y}$ & $X_1$ missing\\
    \bottomrule
  \end{tabular}
  \caption{Heterogeneous Training Data With Missing Modalities}
  \label{tab:mm:data}
\end{table}

In this case, in addition to the empirical distributions of these datasets, we also need to consider the sample sizes $n_0, n_1, n_2$ that indicate the relative qualities. To begin, we use a metric distribution of the product form $R_{X_1, X_2, Y} = R_{X_1, X_2}R_Y$, where $R_{X_1, X_2}$ and $R_Y$ correspond to some empirical distributions of training data. For example, we can set $R_{X_1, X_2} = \Ph^{(0)}_{X_1, X_2}$ and $R_Y = \eta_0 \Ph^{(0)}_Y + \eta_1 \Ph^{(1)}_Y + \eta_2\Ph^{(2)}_{Y}$ with $\eta_i \defeq n_i/n$ for $i = 0, 1, 2$, which correspond to the empirical distributions of all $(X_1, X_2)$ sample pairs and all $Y$ samples, respectively.

Then, for any given $Q_{X_1, X_2, Y} \in \cP^{\cX_1 \times \cX_2 \times Y}$, we
use a weighted sum $L(Q_{X_1, X_2, Y})$ to characterize the difference between $Q_{X_1, X_2, Y}$ and the heterogeneous observations, defined as  %
\begin{align}
 L(Q_{X_1, X_2, Y}) \defeq \eta_0 \cdot \bnorm{\llrt{\Ph^{(0)}_{X_1, X_2, Y}} - \llrt{Q_{X_1, X_2, Y}}}^2  +   \eta_1 \cdot \bnorm{ \llrt{\Ph^{(1)}_{X_1, Y}} - \llrt{Q_{X_1, Y}}}^2 +   \eta_2 \cdot \bnorm{\llrt{\Ph^{(2)}_{X_2, Y}} - \llrt{Q_{X_2, Y}}}^2.
  \label{eq:L:bi}
\end{align}
We use $\Pest_{X_1, X_2, Y}$ to denote the optimal distribution that minimizes \eqref{eq:L:bi}.

We can again apply the nesting technique to learn the feature representations associated with $\Pest_{X_1, X_2, Y}$.
 To begin, we use $ \Hs(f, g; Q_{X, Y})$ to denote the H-score computed over the joint distribution $Q_{X, Y}$, defined as %
\begin{align*}
  \Hs(f, g; Q_{X, Y}) &\defeq
  \frac12\left(\bnorm{\llrt{Q_{X, Y}}}^2 - \bbnorm{\llrt{Q_{X, Y}} - f \otimes g}^2\right)\\
    &= \Ed{Q_{X, Y}}{f^{\T}(X) g(Y)}
    - \left(\Ed{R_X}{f(X)}\right)^{\T}\Ed{R_Y}{g(Y)}%
     - \frac12  \cdot \tr\left(\La_{f}\La_{g}\right),
\end{align*}
with $\La_f = \Ed{R_X}{f(X)f^\T(X)}$ and  $\La_g = \Ed{R_Y}{g(Y)g^\T(Y)}$. %

Then, we define the H-score associated with 
the heterogeneous datasets shown in \tabref{tab:mm:data}, as
  \begin{align}
   \Hsm(f, g) \defeq \eta_0 \cdot \Hs\bigl(f, g; \Ph^{(0)}_{X_1, X_2, Y}\bigr)
    +   \eta_1 \cdot \Hs\left(\opxn{1}(f), g; \Ph^{(1)}_{X_1, Y}\right) +   \eta_2 \cdot  \Hs\left(\opxn{2}(f), g; \Ph^{(2)}_{X_2, Y}\right),
    \label{eq:hsm:def}
 \end{align}
 where we have defined conditional expectation operators $\opxn{i}, i = 1, 2$ as in \eqref{eq:tau:i:def}, with respect to the distribution $R_{X_1, X_2}$.  By applying the  same nesting configuration $\cbi$, we can obtain the corresponding nested H-score 
  \begin{align}
    \Hsm\left(   \begin{bmatrix}
     \fb\,\\
     f\, 
   \end{bmatrix},
      \begin{bmatrix}
     \gb\\
     g
   \end{bmatrix}; \cbi\right) =
   \Hsm(\fb, \gb) + \Hsm\left(
   \begin{bmatrix}
     \fb\,\\
     f\, 
   \end{bmatrix},
      \begin{bmatrix}
     \gb\\
     g
   \end{bmatrix}
   \right).
    \label{eq:hsbi:miss}
 \end{align}

 Then, we have the following theorem, which extends \corolref{cor:hsbinest} to the heterogeneous datasets. A proof is provide in \appref{app:thm:hsbih}.
 \begin{theorem}%
   \label{thm:hsbih}
   Given $\kb \geq \rank\Bigl(\pib\Bigl(\llrt{\Pest_{X_1, X_2, Y}}\Bigr)\Bigr)$, 
the nested H-score
   $ \Hsm\left(   \begin{bmatrix}
     \fb\,\\
     f\, 
   \end{bmatrix},
      \begin{bmatrix}
     \gb\\
     g
   \end{bmatrix}; \cbi\right)$ as defined in  \eqref{eq:hsbi:miss} is maximized  if and only if
 \begin{align}
    \fb \otimes \gb = \pib\Bigl(\llrt{\Pest_{X_1, X_2, Y}}\Bigr),\qquad
   f \otimes g = \mdnl{k}\left(\pii\Bigl(\llrt{\Pest_{X_1, X_2, Y}}\Bigr)\right).
   \label{eq:hsbih:opt}
 \end{align}
\end{theorem}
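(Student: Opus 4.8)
The plan is to show that, once every term of the heterogeneous nested objective \eqref{eq:hsbi:miss} is lifted to the full space $\spc{\cX_1 \times \cX_2 \times \cY}$, it equals (up to an additive constant) the negative of a single squared approximation error measured in a modified inner product; the statement then reduces to \corolref{cor:hsbinest} read in that modified geometry. First I would record the standard-form identity $\Hs(f,g; Q_{X,Y}) = \bip{\llrt{Q_{X,Y}}}{f \otimes g} - \frac12 \bnorm{f \otimes g}^2$, immediate from \defref{def:H} and \proptyref{propty:exp}. The two marginal scores in \eqref{eq:hsm:def} live in $\spc{\cX_i \times \cY}$, but since $\opxn{i}(f) \otimes g = \proj{f \otimes g}{\spc{\cX_i \times \cY}}$ and $\spc{\cX_i \times \cY}$ embeds isometrically into $\spc{\cX_1 \times \cX_2 \times \cY}$ (the metric $R_{X_1,X_2}R_Y$ marginalizes to $R_{X_i}R_Y$), I can lift each term. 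Writing $\Pi_i \defeq \proj{\cdot}{\spc{\cX_i \times \cY}}$, letting $\gamma_0, \gamma_1, \gamma_2$ denote the three lifted empirical density ratios, $\bar\gamma \defeq \eta_0 \gamma_0 + \eta_1 \gamma_1 + \eta_2 \gamma_2$, and $\cM \defeq \eta_0 I + \eta_1 \Pi_1 + \eta_2 \Pi_2$, I obtain $\Hsm(f,g) = \bip{\bar\gamma}{f \otimes g} - \frac12 \bip{f \otimes g}{\cM(f \otimes g)}$, using that each $\Pi_i$ is a self-adjoint idempotent so $\bnorm{\Pi_i \gamma}^2 = \bip{\gamma}{\Pi_i \gamma}$.

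Since $\eta_0 > 0$, $\cM$ is positive definite and self-adjoint, so $\bip{\cdot}{\cM \cdot}$ is an inner product with norm $\bnorm{\cdot}_\cM$; completing the square yields $\Hsm(f,g) = \frac12 \bnorm{\cM^{-1}\bar\gamma}_\cM^2 - \frac12 \bnorm{f \otimes g - \cM^{-1}\bar\gamma}_\cM^2$. I would then identify $\cM^{-1}\bar\gamma = \llrt{\Pest_{X_1,X_2,Y}}$: writing $L$ from \eqref{eq:L:bi} in terms of $\delta = \llrt{Q}$ and using $\llrt{Q_{X_i,Y}} = \Pi_i \delta$ gives $L = \eta_0 \bnorm{\gamma_0 - \delta}^2 + \eta_1 \bnorm{\gamma_1 - \Pi_1 \delta}^2 + \eta_2 \bnorm{\gamma_2 - \Pi_2 \delta}^2$, whose stationarity condition (at an interior minimizer) is $\cM \delta^\star = \bar\gamma$, i.e. $\delta^\star = \cM^{-1}\bar\gamma$.

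Next I would exploit the geometry of $\cM$ relative to $\spc{\cX \times \cY} = \cB \orthp \cB^\perp$ with $\cB \defeq \spc{\cX_1 \times \cY} + \spc{\cX_2 \times \cY}$, so $\pib = \proj{\cdot}{\cB}$ and $\pii = \proj{\cdot}{\cB^\perp}$. Because $\cB^\perp = (\spc{\cX} \orthm (\spc{\cX_1} + \spc{\cX_2})) \otimes \spc{\cY}$ is orthogonal to both $\spc{\cX_i \times \cY}$, we have $\Pi_i|_{\cB^\perp} = 0$, hence $\cM|_{\cB^\perp} = \eta_0 I$, while $\cB$ is $\cM$-invariant and $\cM$-orthogonal to $\cB^\perp$. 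Thus the $\cM$-projection onto $\cB$ coincides with $\pib$, and on $\cB^\perp$ the $\cM$-norm is a positive multiple of the ordinary one, so $\cM$-optimal low-rank approximation there agrees with the ordinary modal decomposition. Writing $\psi \defeq \llrt{\Pest}$, $A \defeq \fb \otimes \gb \in \cB$, $B \defeq f \otimes g$, and expanding $\Hsm(\fb, \gb) + \Hsm(\fb \pplus f, \gb \pplus g)$ through the $\cM$-orthogonal split $\psi = \pib(\psi) + \pii(\psi)$, the maximization becomes the minimization of
\[
  \bnorm{A - \pib(\psi)}_\cM^2 + \bnorm{A + \pib(B) - \pib(\psi)}_\cM^2 + \eta_0 \bnorm{\pii(B) - \pii(\psi)}^2 .
\]
The third term is at least $\eta_0 \bnorm{r_k(\pii(\psi))}^2$ by \factref{fact:low:rank}, since $\pii(B)$ is a rank-$\le k$ approximation of $\pii(\psi)$; the first two are nonnegative; and all three bounds are attained simultaneously exactly when $A = \pib(\psi)$, $\pib(B) = 0$, and $\pii(B) = \mdnl{k}(\pii(\psi))$. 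Feasibility of $A = \pib(\psi)$ as an $\fb \otimes \gb$ with $\fb \in \spcn{\cX_1}{\kb} + \spcn{\cX_2}{\kb}$ is guaranteed by $\kb \ge \rank(\pib(\psi))$, and $B = \mdnl{k}(\pii(\psi))$ is the required rank-$\le k$ element of $\cB^\perp$ (cf. \propref{prop:md:constraint}); this is precisely \eqref{eq:hsbih:opt}.

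The main obstacle is the clean identification $\cM^{-1}\bar\gamma = \llrt{\Pest}$: it needs the minimizer of $L$ over the simplex to lie in the relative interior so that stationarity holds without active inequality constraints, and it needs $\cM$ to restrict to an invertible operator on the zero-mean subspace, which I would verify by noting each $\Pi_i$ preserves zero mean. A secondary technical point is confirming the isometric lifting and the identity $\opxn{i}(f) \otimes g = \proj{f \otimes g}{\spc{\cX_i \times \cY}}$, together with the decoupling of $A$ and $B$ in the final step — in particular that no rank-$k$ $B$ with $\pib(B) \neq 0$ can improve the $\cB^\perp$ term, which holds because that term depends on $B$ only through $\pii(B)$.
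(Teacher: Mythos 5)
Your argument is correct, and it reaches \eqref{eq:hsbih:opt} by a genuinely different route from the paper's. The paper first proves an auxiliary identity (\lemref{lem:mm}): writing $f = \hb + h$ with $\hb = \proj{f}{\spc{\cX_1}+\spc{\cX_2}}$, it expands each of $\Hsm(\fb,\gb)$ and $\Hsm(\fb\pplus f,\gb\pplus g)$ into a constant minus $\eta_0\bnorm{\pii(\llrt{\Ph^{(0)}_{X_1,X_2,Y}})-h\otimes g}^2$ minus the weighted marginal loss $\Lb(\cdot)$ evaluated at the bivariate part, and then intersects the two optimizer sets via \propref{prop:joint:opt}. You instead absorb the three data terms into a single quadratic $\ip{\bar{\gamma}}{f\otimes g}-\tfrac12\ip{f\otimes g}{\cM(f\otimes g)}$ with $\cM=\eta_0 I+\eta_1\Pi_1+\eta_2\Pi_2$, complete the square in the $\cM$-inner product, identify the center with $\llrt{\Pest_{X_1,X_2,Y}}$ through the normal equation $\cM\delta^\star=\bar{\gamma}$, and use the block structure of $\cM$ on $\cB\orthp(\spc{\cX\times\cY}\orthm\cB)$ (a multiple of the identity off $\cB$, invariant on $\cB$) to reduce to the complete-data geometry; your final three-term objective is exactly what \lemref{lem:mm} produces after discarding constants. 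Your version is arguably more illuminating --- it explains \emph{why} $\Pest_{X_1,X_2,Y}$ is the right center (it is the $\cM$-projection target) and where $\eta_0>0$ enters (invertibility of $\cM$ off $\cB$) --- at the cost of carrying a modified metric. The one caveat you flag, that $\llrt{\Pest_{X_1,X_2,Y}}$ must solve the unconstrained normal equation (equivalently, that the minimizer of $L$ lies in the relative interior of the simplex), is a genuine assumption, but the paper's proof relies on it just as implicitly when it equates the minimizer of $\Lb$ over all of $\cB$ with $\pib(\llrt{\Pest_{X_1,X_2,Y}})$, so you are not introducing a new gap.
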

We can also use the refined configuration $\cbir$ to obtain modal decomposition of the dependence components. The inference models can be built by assembling learned features, as we have discussed in \secref{sec:mm:assemble}.

Furthermore, we can show that the estimation $\Pest_{X_1, X_2, Y}$ coincides with the maximum likelihood estimation (MLE) in a local analysis regime. Formally, let $\prob{\Ds_0, \Ds_1, \Ds_2; Q_{X_1, X_2, Y}}$ denote the probability of observing datasets $\Ds_0, \Ds_1, \Ds_2$, when all data samples are independently generated by $Q_{X_1, X_2, Y}$. Then, 
we can write the MLE solution as
\begin{align}%
  \Pml_{X_1, X_2, Y} \defeq \argmax_{Q_{X_1, X_2, Y}}~  \prob{\Ds_0, \Ds_1, \Ds_2; Q_{X_1, X_2, Y}},
  \label{eq:pml}
\end{align}
and we have the following characterization. A proof is provided in \appref{app:thm:pest:pml}. 
\begin{theorem}
  \label{thm:pest:pml}
  If $L(R_{X_1, X_2, Y}) = O(\eps^2)$, then we have %
  \begin{align}
    \Pml_{X_1, X_2, Y}(x_1, x_2, y) = \Pest_{X_1, X_2, Y}(x_1, x_2, y) + o(\eps), \quad \text{for all $x_1, x_2, y$}.
    \label{eq:pml:pest}
  \end{align}
\end{theorem}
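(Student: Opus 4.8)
The plan is to recast the maximum-likelihood problem \eqref{eq:pml} as the minimization of a weighted sum of Kullback--Leibler divergences, and then to use a local (second-order) expansion of the divergence to match it with the quadratic objective $L$ defined in \eqref{eq:L:bi}. First I would note that, since the samples in $\Ds_0, \Ds_1, \Ds_2$ are independent, the log-probability factorizes; dividing by $n$ and writing each sum as an empirical average gives
\begin{align*}
  \frac1n \log \prob{\Ds_0, \Ds_1, \Ds_2; Q_{X_1, X_2, Y}}
  = \eta_0 \Ed{\Ph^{(0)}_{X_1, X_2, Y}}{\log Q_{X_1, X_2, Y}}
  + \eta_1 \Ed{\Ph^{(1)}_{X_1, Y}}{\log Q_{X_1, Y}}
  + \eta_2 \Ed{\Ph^{(2)}_{X_2, Y}}{\log Q_{X_2, Y}}.
\end{align*}
Using $\Ed{\Ph}{\log Q} = -H(\Ph) - \kld{\Ph}{Q}$ and discarding the $Q$-independent entropy terms, maximizing the likelihood is equivalent to minimizing
\begin{align*}
  J(Q) \defeq \eta_0 \kld{\Ph^{(0)}_{X_1, X_2, Y}}{Q_{X_1, X_2, Y}}
  + \eta_1 \kld{\Ph^{(1)}_{X_1, Y}}{Q_{X_1, Y}}
  + \eta_2 \kld{\Ph^{(2)}_{X_2, Y}}{Q_{X_2, Y}}.
\end{align*}

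Second, I would invoke the local expansion of the divergence: for distributions with centered density ratios $p = \llrt{P}$, $q = \llrt{Q}$ with respect to a common metric distribution, a second-order Taylor expansion of $\log(1+\cdot)$ gives $\kld{P}{Q} = \frac12 \norm{p - q}^2 + o(\eps^2)$ whenever $\norm{p}, \norm{q} = O(\eps)$; this is the divergence counterpart of \lemref{lem:mutual}. The hypothesis $L(R_{X_1, X_2, Y}) = O(\eps^2)$ forces $\norm{\llrt{\Ph^{(0)}_{X_1,X_2,Y}}}, \norm{\llrt{\Ph^{(1)}_{X_1,Y}}}, \norm{\llrt{\Ph^{(2)}_{X_2,Y}}} = O(\eps)$, and since $J(R) = O(\eps^2)$ both minimizers $\Pest$ and $\Pml$ have density ratios of order $O(\eps)$. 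Applying the expansion to each of the three terms, against the respective marginal metric distributions so that the norms coincide with those appearing in \eqref{eq:L:bi}, yields $J(Q) = \frac12 L(Q) + o(\eps^2)$ uniformly over the relevant $O(\eps)$-neighborhood.

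Third, I would convert closeness of objectives into closeness of minimizers, viewing $J$ and $L$ as functions of the density ratio $q = \llrt{Q}$ on the affine set $\{\,\Ed{R}{q} = 0\,\}$. Since $R$ has strictly positive masses, the positivity constraints $q \geq -1$ are inactive in the $O(\eps)$-neighborhood, so both minimizers are interior and characterized by the vanishing of the (constrained) gradient. The quadratic form $L$ is strongly convex on the span of the data density ratios with a constant $c > 0$ independent of $\eps$ (guaranteed since $\eta_0 > 0$ contributes a full-rank term), while the gradient of the remainder $J - \frac12 L$ is $O(\eps^2)$, coming from the cubic-and-higher terms of the divergence expansion. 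Evaluating at $\llrt{\Pest}$, where $\nabla L$ vanishes, gives $\bnorm{\nabla J(\llrt{\Pest})} = O(\eps^2)$, so strong convexity yields $\bnorm{\llrt{\Pml} - \llrt{\Pest}} \leq c^{-1}\bnorm{\nabla J(\llrt{\Pest})} = O(\eps^2) = o(\eps)$. Multiplying this density-ratio difference by the bounded metric distribution $R$ then gives \eqref{eq:pml:pest} pointwise.

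The main obstacle is this last transfer step: bounding the error in the objective at $o(\eps^2)$ is routine, but converting it into an $o(\eps)$ bound on the \emph{minimizers} requires controlling the gradient of the higher-order divergence remainder and exploiting the uniform strong convexity of $L$ on the constrained subspace. Care is also needed to check that the three marginal divergence terms expand against metric distributions consistent with the norms in $L$, so that the leading quadratic part is exactly $\tfrac12 L(Q)$ rather than a different quadratic form.
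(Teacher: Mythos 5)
Your proposal is correct in outline and shares its first two steps with the paper's proof: the paper likewise rewrites $-\frac1n\log\prob{\Ds_0, \Ds_1, \Ds_2; Q}$ as the weighted divergence sum $\lml(Q) = \eta_0\kld{\Ph^{(0)}_{X_1,X_2,Y}}{Q_{X_1,X_2,Y}} + \eta_1\kld{\Ph^{(1)}_{X_1,Y}}{Q_{X_1,Y}} + \eta_2\kld{\Ph^{(2)}_{X_2,Y}}{Q_{X_2,Y}}$ plus $Q$-independent entropies, and then invokes exactly the local expansion you describe (its Fact, quoting \citealt[Lemma 10]{huang2019universal}) to get $\lml(Q) = \frac12 L(Q) + o(\eps^2)$. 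Where you diverge is the transfer from objective closeness to minimizer closeness. The paper avoids gradients entirely: it applies the expansion only on the ball $\bnorm{\llrt{Q} - \llrt{\Pest}} \leq \eps$ (where $\norm{\llrt{Q}} = O(\eps)$ is automatic), and for any $Q$ outside that ball it forms the convex combination $\Qb = \eps' Q + (1-\eps')\Pest$ sitting on the ball's boundary, shows $\lml(\Qb) > \lml(\Pest)$ there, and concludes via Jensen and the convexity of $\lml$ that $\lml(Q) > \lml(\Pest)$, so the global minimizer must lie inside the ball. Your route instead runs a gradient-perturbation argument: $\nabla(J - \frac12 L) = O(\eps^2)$ plus uniform strong convexity (constant $2\eta_0$ from the joint term) gives $\bnorm{\llrt{\Pml} - \llrt{\Pest}} = O(\eps^2)$. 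This buys a quantitatively sharper conclusion than the paper's $o(\eps)$, but it costs you two things the paper's trick sidesteps: you must establish \emph{a priori} that $\norm{\llrt{\Pml}} = O(\eps)$ (your appeal to $J(\Pml) \leq J(R) = O(\eps^2)$ needs a Pinsker-plus-finite-alphabet step to convert small divergence into a small $\chi^2$-type norm, since the expansion cannot be applied before the norm bound is known), and you must differentiate the divergence remainder rather than merely bound its value, since $|J - \frac12 L| = o(\eps^2)$ does not by itself control $\nabla(J - \frac12 L)$. Both are routine on a finite alphabet with $R \in \relint(\cP^{\cX_1\times\cX_2\times\cY})$, and you flag them yourself, so I count this as a valid alternative proof rather than a gap; note only that your strong-convexity constant relies on $\eta_0 > 0$, which is the same implicit assumption the paper needs for $\Pest$ to be uniquely defined.
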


\section{Experimental Verification}
\label{sec:exp}

To verify the learning algorithms as well as established theoretical properties, we design a series of experiments with various types of data. 
{The main goal is to compare the features learned by neural feature extractors and the corresponding theoretical results. To allow such comparisons, we generate data from given probability distributions of which we know the analytical form of optimal features.} The source codes for all experiments are available at \href{https://github.com/XiangxiangXu/NFE}{\texttt{github.com/XiangxiangXu/NFE}}, and
we defer the implementation details to \appref{app:exp}.

\subsection{Learning Maximal Correlation Functions}
\label{sec:exp:mc}
We first consider learning dependence modes, i.e., maximal correlation functions from sample pairs of $(X, Y)$, by maximizing the nested H-score \eqref{eq:H:cnest}. We verify the effectiveness by experiments on both discrete and continuous data, and also discuss one application in analyzing sequential data.     

\subsubsection{Discrete Data}
\label{sec:exp:mc:discrete}
The simplest case for dependence learning is when $X$ and $Y$ are both discrete with small alphabet sizes $|\cX|$ and $|\cY|$. In this case, we can design neural feature extractors with ideal expressive powers. Suppose $\cX = \{1, \dots, |\cX|\}$, then we can express $f \in \spcn{\cX}{k}$ on $\cX$ by first mapping each $i \in \cX$ to $i$-th standard basis vector in $\mathbb{R}^{|\cX|}$, also known as the ``one-hot encoding'' in practice, and then applying a linear function $\mathbb{R}^{|\cX|} \to \mathbb{R}^{k}$ to the mapped result, which we implement by using a linear layer. Then, any $f \in \spcn{\cX}{k}$ can be expressed in this way by setting corresponding weights in the linear layer. Similarly, we can express  $g \in \spcn{\cY}{k}$ using another linear layer.

\begin{figure}[!ht]
  \centering   
  \subfloat[$f_i^*(x)$ for each $i = 1, 2, 3$]{\includegraphics[height = .33\textwidth]{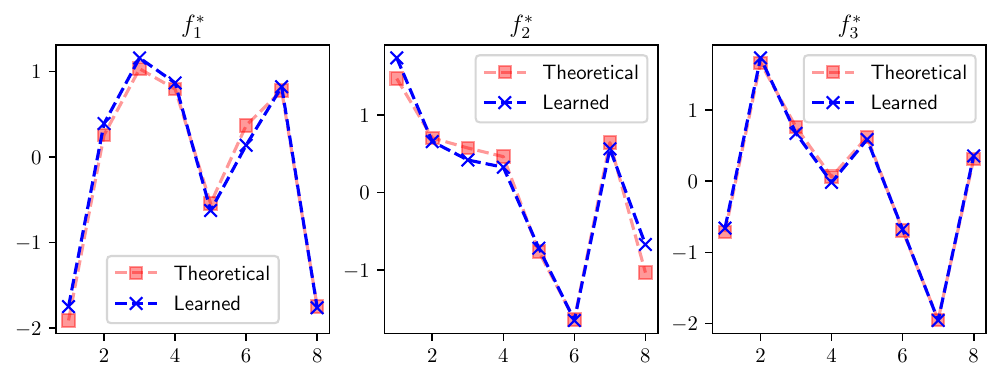}}\linebreak
  \subfloat[$g_i^*(y)$ for each $i = 1, 2, 3$]{\includegraphics[height = .35\textwidth]{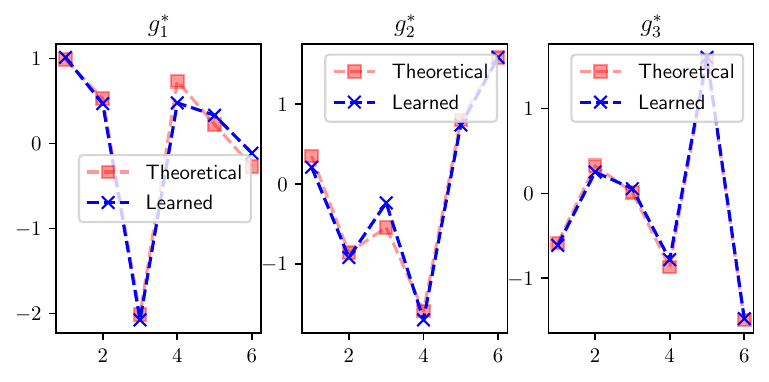}}
  \hspace{2em}
  \subfloat[$\sigma_i$]{\includegraphics[height = .33\textwidth]{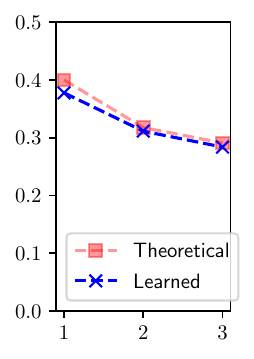}}
  \caption{Top three dependence modes learned from a discrete dataset, which are consistent with the theoretical results.}
  \label{fig:exp:discrete}
\end{figure}

In the experiment, we set $|\cX| = 8$, $|\cY| = 6$, and randomly generate a $P_{X, Y} \in \cP^{\cX \times \cY}$. We generate 
 $N = 30\,000$ training samples
from $P_{X, Y}$, and learn $k = 3$ dimensional features $f, g$ by maximizing $\Hs(f, g; \cnest)$. Then, we normalize each $f_i, g_i$ to obtain  corresponding estimations of $f_i^*$, $g_i^*$, and $\sigma_i$ by applying \eqref{eq:f:g:simga:i}. We show the estimated features and singular values in \figref{fig:exp:discrete}, which are consistent with the corresponding theoretical values computed from $P_{X, Y}$.

\subsubsection{Continuous Data}
\label{sec:exp:mc:cosine}
We proceed to consider a continuous dataset with degenerate dependence modes, i.e., the singular values $\sigma_i$'s are not all distinct.

 In particular, we consider $X, Y$ taking values from $\cX = \cY = [-1, 1]$, where the joint probability density function $p_{X, Y}$ takes a 
raised cosine form:
  \begin{align}
    p_{X, Y}(x, y) = \frac{1}{4} \cdot \left[ 1 + \cos(\pi(x- y))\right], \quad (x, y) \in [-1, 1]^2.
    \label{eq:pdf:cos}
  \end{align}
Then, it can be verified that the corresponding marginal distributions of $X, Y$ are uniform distributions $p_X = p_Y =\Unif([-1, 1])$. In addition, the resulting CDK function is
  \begin{align}
    \lpmi_{X; Y}(x, y) = \frac{p_{X, Y}(x, y) - p_X(x)p_Y(y)}{p_X(x)p_Y(y)}
    &=\cos(\pi(x - y)).%
      \label{eq:lpmi:cosine}
  \end{align}

  Note that we have
     $\cos(\pi(x - y)) = \cos(\pi x + \theta_0)\cdot\cos(\pi y + \theta_0) + \sin(\pi x + \theta_0)\cdot\sin(\pi y + \theta_0)$,
 for any $\theta_0 \in [-\pi, \pi)$. Therefore, we have   $\rank(\lpmi_{X; Y}) = 2$, and the associated dependence modes are given by 
 $\sigma_1 = \sigma_2 = 1/2$ and the maximal correlation functions
 \begin{subequations}
   \label{eq:cos:fg}
   \begin{gather}
     f_1^*(x) = \sqrt{2} \cdot \cos(\pi x + \theta_0), \qquad f_2^*(x) = \sqrt{2} \cdot \sin(\pi x + \theta_0),\\
     g_1^*(y) = \sqrt{2} \cdot \cos(\pi y + \theta_0), \qquad g_2^*(y) = \sqrt{2} \cdot \sin(\pi y + \theta_0),
   \end{gather}
\end{subequations}
for any $\theta_0 \in [-\pi, \pi)$.

\begin{figure}[!ht]
  \centering
  \subfloat[Histogram of Training Data]{\includegraphics[height = .31\textwidth]{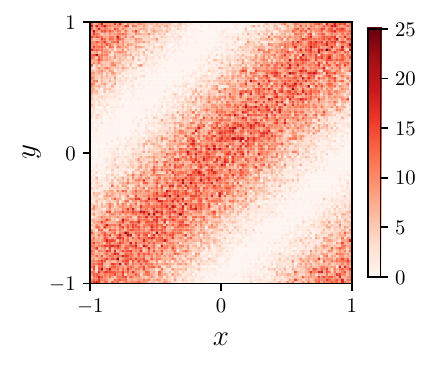}\label{fig:exp:cos:hist}}
  \subfloat[Learned Dependence Modes]{\includegraphics[height = .31\textwidth]{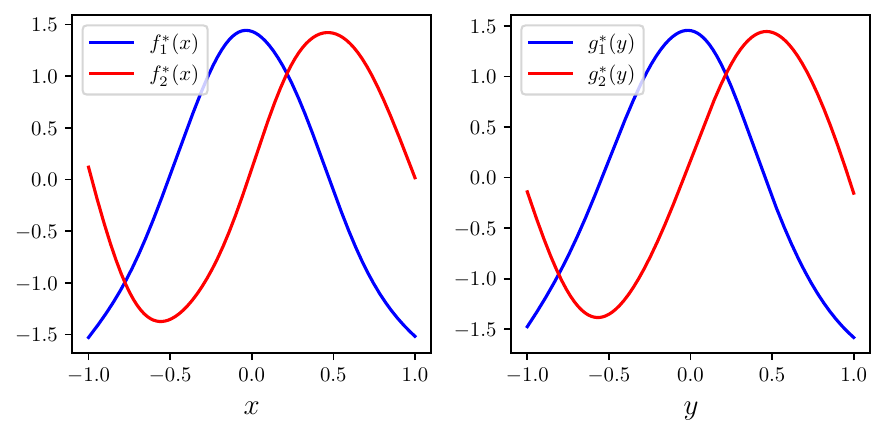}\label{fig:exp:cos:fg}}
  \caption{Dependence modes learned from continuous data, in consistent with theoretical results.}
  \label{fig:exp:cos}
\end{figure}

During this experiment, we first generate $N = 50\,000$ sample pairs of $X, Y$ for training, with histogram showing in \figref{fig:exp:cos:hist}. Then, we learn $k = 2$ dimensional features $f_1, f_2$ of $\cX$ and $g_1, g_2$ of $\cY$ by maximizing the nested H-score \eqref{eq:H:cnest}, where $f$ and $g$ are parameterized neural feature extractors detailed in \appref{app:exp:mc:cosine}. \figref{fig:exp:cos:fg} shows the learned functions after the normalization \eqref{eq:f:g:simga:i}. The learned results well match the theoretical results \eqref{eq:cos:fg}: (i) The learned $f_1^*$ and $f_2^*$ are sinusoids differ in phase by $\pi/2$, and (ii) $g_i^*$ coincides with $f_i^*$, for each $i = 1, 2$. It is also worth mentioning that due to the degeneracy $\sigma_1 = \sigma_2$, the initial phase $\theta_0$ in learned sinusoids \eqref{eq:cos:fg} can be different during each run of the training algorithm.

\begin{figure}[!ht]
  \centering
  \includegraphics[width = .7\textwidth]{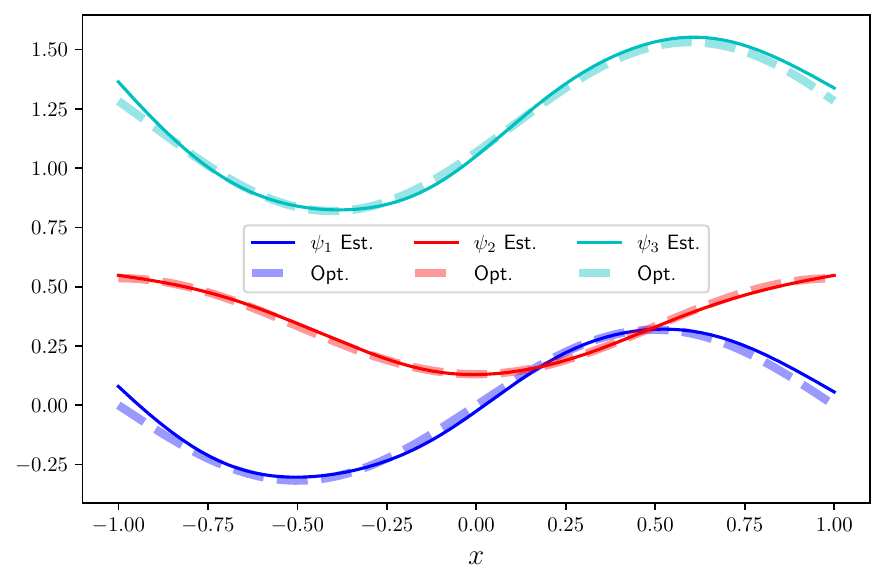}
  \label{fig:exp:cosine:est}
  \caption{MMSE estimators $\E{\psi_i(Y)|X = x}$ obtained from learning dependence modes, in comparison with theoretical results.}
\end{figure}

Based on the learned dependence modes, we then demonstrate estimating functions of $Y$ based on observed $X = x$. Here, we consider the functions $\psi_1(y) = y, \psi_2(y) = y^2, \psi_3(y) = e^y$. From \propref{prop:pred:est}, we can compute the learned MMSE estimator $\E{\psi_i(Y)|X = x}$ for each $i$, by estimating $\E{\psi_i(Y)}$ and $\La_{\psi_i, g} = \E{\psi_i(Y) g^\T (Y)}$ from the training set and then applying \eqref{eq:estimation}.

For comparison, we compute the theoretical values 
  \begin{align*}
    \E{\psi_i(Y)|X = x} = \int_{-1}^1 p_{Y|X}(y|x) \psi_i(y) \, dy,
  \end{align*}
with $p_{Y|X}(y|x) = \frac{1}{2} \cdot \left[ 1 + \cos(\pi(y- x))\right]$, which gives
  \begin{subequations}
    \label{eq:ce:psi}
  \begin{gather}
    \E{Y\middle|X = x}= \frac{1}{\pi}\cdot \sin(\pi x),\qquad
    \ds\E{Y^2\middle|X = x} = \frac13 - \frac{2}{\pi^2} \cos(\pi x),\\
    \E{e^Y\middle|X = x} = \frac{e^2 - 1}{2e(1 + \pi^2)} \cdot (\pi \sin(\pi x) - \cos(\pi x) + \pi^2 + 1).
  \end{gather}
\end{subequations}

\figref{fig:exp:cosine:est} shows the estimators {obtained by applying \eqref{eq:estimation} on the learned features}, which are consistent with the theoretically optimal estimators given by \eqref{eq:ce:psi}.

\subsubsection{Sequential Data}
\label{sec:exp:mc:seq}

We proceed with an example of learning dependence modes among sequence pairs. For simplicity, we consider binary sequences $\vec{X}$ and $\vec{Y}$, of lengths $l$ and $m$, respectively. Suppose we have the Markov relation $\vec{X} - U - V - \vec{Y}$ for some unobserved binary factors $U, V \in \cU = \cV = \{0, 1\}$. In addition, we assume\footnote{For convenience, we adopt the vector notation to represent sequences.} $\vec{X} = (X_1, \dots, X_{l})^\T, \vec{Y} = (Y_1, \dots, Y_{m})^\T$ satisfy
  \begin{align}
    \vec{X}|U = i \sim \bms(l, q_i), \quad \vec{Y}|V = i \sim \bms(m, q_i), \quad\text{for $i = 0, 1$},
    \label{eq:xy:bms}
  \end{align}
where $\bms(l, q)$ denotes the distribution of a binary first-order Markov sequence of length $l$ with state flipping probability $q$. The corresponding state transition diagram is shown in \figref{fig:exp:seq:state}. Therefore, if $\vec{Z} \sim \bms(l, q)$, then $Z_1 \sim \Unif(\{0, 1\})$ and $(Z_1, \dots, Z_l)$ forms a first order Markov chain over binary states $\{0, 1\}$, with flipping probability $\prob{Z_{i+1} \neq Z_i|Z_i = z} = q$ for both $z = 0, 1$. Formally, $\vec{Z} \sim \bms(l, q)$ if and only if
\begin{align*}
  P_{\vec{Z}}(\vec{z}) =   
\frac12 \cdot \prod_{i = 1}^{l -1}
  \left[(1-q)^{\delta_{z_i z_{i+1}}} \cdot q^{1 - \delta_{z_i z_{i+1}}}\right] \quad\text{for all $(z_1, \dots, z_l) \in \{0, 1\}^{l}$}.
\end{align*}

\begin{figure}[!ht]
  \centering
  \raisebox{-2.35cm}{ \subfloat[State Transitions of $\bms(l,q)$]{\raisebox{1.7cm}{\resizebox{.3\textwidth}{!}{\DrawMarkov{q}}}\label{fig:exp:seq:state}}}
  \hspace{.7em}
  \subfloat[Generated Samples of $\vec{X}, \vec{Y}$ Pair  ]{ %
    \begin{minipage}{.64\linewidth}
      \includegraphics[width = \textwidth]{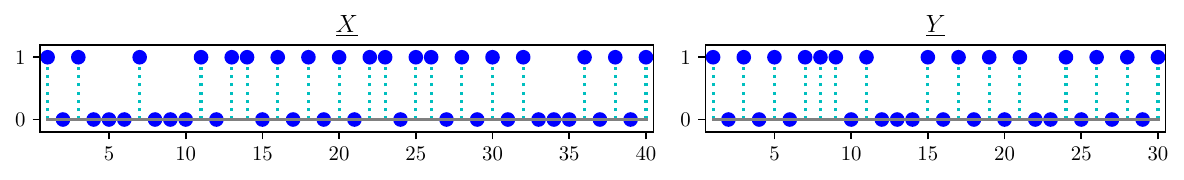}\textcolor{gray}{\hrule}
      \includegraphics[width = \textwidth]{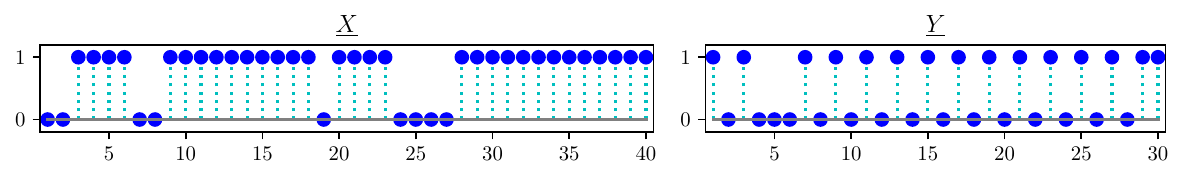}\textcolor{gray}{\hrule}
      \includegraphics[width = \textwidth]{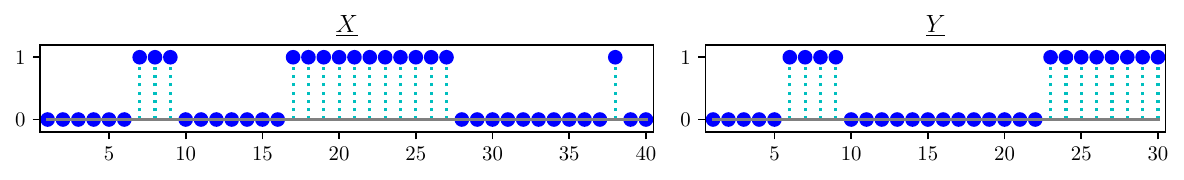}
    \end{minipage}
  \label{fig:exp:seq:sample}}
  \caption{Sequential Data: Model and Generated $\vec{X}, \vec{Y}$ Samples Pairs  }
  \label{fig:exp:seq:data}
\end{figure}

As a consequence, the resulting alphabets are $\cX = \{0, 1\}^{l}, \cY = \{0, 1\}^{m}$, with sizes $|\cX| = 2^l, |\cY| = 2^m$. In our experiment, we set $l = 40, m = 30, q_0 = 0.1, q_1 = 0.9$, and use the following joint distribution $P_{U, V}$:
\begin{center}
  \begin{tabular}{ccc}
    \toprule  
    Prob. & $U = 0$ & $U = 1$\\
    \midrule
    $V = 0$ &0.1 &0.2\\
    $V = 1$ &0.4 &0.3\\
    \bottomrule
  \end{tabular}
\end{center}
We generate $N = 50\,000$ training sample pairs of $\vec{X}, \vec{Y}$, with instances shown in \figref{fig:exp:seq:sample}. We also generate $N' = 10\,000$ sample pairs in the same manner, as the testing dataset.

\begin{figure}[!ht]
  \centering
  \includegraphics[width = .65\textwidth]{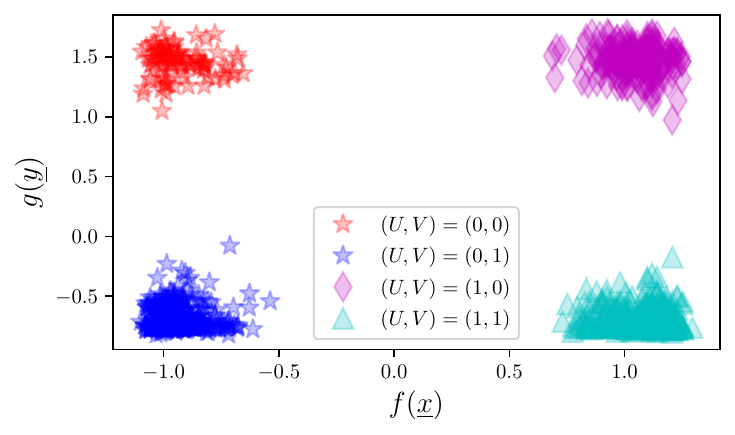}  
  \caption{Features Learned from Sequential Data $\vec{X}$ and $\vec{Y}$}
  \label{fig:exp:seq:fg}
\end{figure}

Then, we learn $k = 1$ dimensional features $f$ and $g$ by maximizing $\Hs(f, g)$ over the training set. We plot the extracted features in \figref{fig:exp:seq:fg}. In particular, each point represents an $(f(\vec{x}), g(\vec{y}))$ pair evaluated on an instance from testing set, with corresponding values of binary factors $(U, V)$ shown for comparison. For the ease of demonstration, here we plot only $1,000$ sample pairs randomly chosen from the testing set. From the figure, the learned features are clustered according to the underlying factors. This  essentially reconstructs the hidden factors $U, V$. For example, one can apply a standard clustering algorithm on the features, e.g., $k$-means \citep{hastie2009elements}, then count the proportion of each cluster, to obtain an estimation of $P_{U, V}$ up to permutation of symbols.

For a closer inspection, we can compare the learned features with the theoretical results, formalized as follows. A proof is provided in \appref{app:prop:seq}.
\begin{proposition}
\label{prop:seq}
Suppose $\vec{X}, \vec{Y}$ satisfy the Markov relation $\vec{X} - U - V - \vec{Y}$ with $U, V \in \{0, 1\}$ and the conditional distributions \eqref{eq:xy:bms}. Then, we have $\rank(\lpmi_{X; Y}) \leq 1$, and the corresponding maximal correlation functions $f_1^*, g_1^*$ satisfy
\begin{subequations}
  \label{eq:seq:fg}
  \begin{align}
    f_1^*(\vec{x}) &= c \cdot \left[ \tanh\left(2 w  \cdot \varphi(\vec{x}) + b_U\right) - \tanh(b_U)\right],\\
    g_1^*(\vec{y}) &= c' \cdot \left[\tanh\left(2 w  \cdot \varphi(\vec{y}) + b_V\right) - \tanh(b_V)\right],
  \end{align}
\end{subequations}
for some $c, c' \in \mathbb{R}$, where $w \defeq \frac12\log\frac{q_1}{q_0},  b_U \defeq \frac12 \log\frac{P_{U}(1)}{P_U(0)}, b_V \defeq \frac12 \log\frac{P_{V}(1)}{P_V(0)}$, and where
we have defined $\varphi \colon \{0, 1\}^* \to \mathbb{R}$, such that for each $\vec{z} = (z_1, \dots, z_l)^\T \in \{0, 1\}^l$, we have
  $\ds\varphi(\vec{z}) \defeq\frac{l - 1}2 -  \sum_{i = 1}^{l - 1}\delta_{z_i z_{i + 1}}.$
\end{proposition}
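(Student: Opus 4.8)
The plan is to exploit the Markov structure $\vec{X} - U - V - \vec{Y}$ to reduce the dependence between the two sequences to a $2\times 2$ coupling between the hidden binary factors. First I would write the joint law as
\[
  P_{\vec{X}, \vec{Y}}(\vec{x}, \vec{y}) = \sum_{u, v \in \{0,1\}} P_{\vec{X}|U}(\vec{x}|u)\, P_{U,V}(u, v)\, P_{\vec{Y}|V}(\vec{y}|v),
\]
and, dividing by the marginals $P_{\vec{X}}(\vec{x}) P_{\vec{Y}}(\vec{y})$ that define the metric distribution, express the CDK as a bilinear form $\lpmi_{X;Y}(\vec{x}, \vec{y}) = \vec{a}(\vec{x})^\T M\, \vec{b}(\vec{y}) - 1$, where $M \defeq P_{U,V}$ is viewed as a $2\times 2$ matrix, $a_u(\vec{x}) \defeq P_{\vec{X}|U}(\vec{x}|u)/P_{\vec{X}}(\vec{x}) = P_{U|\vec{X}}(u|\vec{x})/P_U(u)$, and $b_v$ is defined analogously on the $\vec{y}$-side.

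The key structural observation is that $M$ has row sums $\vec{p}_U \defeq (P_U(0), P_U(1))^\T$ and column sums $\vec{p}_V$, while $\vec{a}(\vec{x})^\T \vec{p}_U = \vec{p}_V^\T \vec{b}(\vec{y}) = 1$ by the law of total probability. Writing $M = \vec{p}_U \vec{p}_V^\T + \Delta$, the residual $\Delta$ has all row and column sums zero; for a $2\times 2$ matrix this forces $\Delta = \alpha\, \vec{e}\,\vec{e}^\T$ with $\vec{e} \defeq (1, -1)^\T$ and $\alpha = P_{U,V}(0,0) - P_U(0)P_V(0)$. Substituting gives $\vec{a}^\T M \vec{b} = 1 + \alpha\,(\vec{a}^\T \vec{e})(\vec{b}^\T \vec{e})$, hence
\[
  \lpmi_{X;Y}(\vec{x}, \vec{y}) = \alpha \cdot \bigl(\vec{a}(\vec{x})^\T \vec{e}\bigr)\bigl(\vec{b}(\vec{y})^\T \vec{e}\bigr),
\]
which is a single product function. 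This proves $\rank(\lpmi_{X;Y}) \le 1$ and identifies $f_1^*, g_1^*$ as the unit-norm normalizations of $\vec{x} \mapsto \vec{a}(\vec{x})^\T \vec{e}$ and $\vec{y} \mapsto \vec{b}(\vec{y})^\T \vec{e}$, with $\sigma_1 = \corr(f_1^*, g_1^*)$ absorbing $\alpha$ and the two norms.

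It then remains to put $\vec{a}(\vec{x})^\T \vec{e} = a_0(\vec{x}) - a_1(\vec{x})$ into the stated shape. A short computation yields $a_0 - a_1 = \bigl(P_U(1) - P_{U|\vec{X}}(1|\vec{x})\bigr)/\bigl(P_U(0)P_U(1)\bigr)$, so I would compute the posterior $P_{U|\vec{X}}$ explicitly. For $\bms(l, q_u)$ the likelihood is $P_{\vec{X}|U}(\vec{x}|u) = \tfrac12 (1 - q_u)^{S(\vec{x})} q_u^{(l-1) - S(\vec{x})}$ with $S(\vec{x}) \defeq \sum_i \delta_{x_i x_{i+1}} = \tfrac{l-1}{2} - \varphi(\vec{x})$, so the posterior log-odds $\log\frac{P_{U|\vec{X}}(1|\vec{x})}{P_{U|\vec{X}}(0|\vec{x})}$ is affine in $\varphi(\vec{x})$. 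Writing any Bernoulli probability in terms of its log-odds $\ell$ as $\tfrac12(1 + \tanh(\ell/2))$ then turns both $P_{U|\vec{X}}(1|\vec{x})$ and the prior $P_U(1)$ into $\tanh$ expressions, whose difference produces exactly the bracket $\tanh(2w\,\varphi(\vec{x}) + b_U) - \tanh(b_U)$; the $\vec{y}$-side is identical with $V, m, b_V$.

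The rank-one reduction is routine once the coupling matrix is isolated, so I expect the main obstacle to be the precise identification of the feature shape. The delicate point is that the posterior log-odds must not merely be affine in $\varphi$ but must reduce to the specific coefficients $2w$ and $b_U$: this relies on the symmetric flipping condition $q_0 + q_1 = 1$ (satisfied by the experimental values $q_0 = 0.1, q_1 = 0.9$), under which the $\varphi$-independent term in the log-likelihood ratio vanishes and the slope collapses to $4w$. I would make this hypothesis explicit; for general $(q_0, q_1)$ the same argument still gives $f_1^*$ as a shifted $\tanh$ of $\varphi$, but with a modified slope and offset. A final bookkeeping step is to fix the normalization constants $c, c'$ and the sign conventions so that $\norm{f_1^*} = \norm{g_1^*} = 1$ and $\sigma_1 \ge 0$.
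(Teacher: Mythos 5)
Your proposal is correct and follows essentially the same route as the paper's proof: both reduce the problem to the $2\times 2$ structure of the hidden pair $(U,V)$ — the paper by writing $\lpmi_{\vec{X};\vec{Y}} = \sum_{u,v} P_{U,V}(u,v)\,\lpmi_{\vec{X};U}(\cdot,u)\,\lpmi_{\vec{Y};V}(\cdot,v)$ and noting that each factor is itself rank one in $(u,\vec{x})$, you by peeling off the rank-one residual $\alpha\,\vec{e}\vec{e}^{\T}$ from the coupling matrix — and both then compute the posterior log-odds of $U$ given $\vec{x}$, which is affine in $\varphi(\vec{x})$, and convert to the $\tanh$ form. One point in your favor: you correctly flag that collapsing the log-likelihood ratio to the slope $2w$ (equivalently $4w$ for the full log-odds) requires $q_0(1-q_0)=q_1(1-q_1)$, i.e.\ $q_0+q_1=1$; the paper's own derivation uses this identity silently (it holds for the experimental values $q_0=0.1$, $q_1=0.9$), and making it an explicit hypothesis, as you propose, is the more careful statement.
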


Then, we compute the correlation coefficients between 
$f(\vec{X})$ and $f_1^*(\vec{X})$, and between $g(\vec{Y})$, $g_1^*(\vec{Y})$, respectively, using sample pairs in the testing set. The absolute values of both correlation coefficients are greater than $0.99$, demonstrating the effectiveness of the learning algorithm.

\subsection{Learning With Orthogonality Constraints}
\label{sec:exp:fbar}
We verify the feature learning with orthogonality constraints presented in \secref{sec:nest:orth} on the same dataset used for \secref{sec:exp:mc:cosine}. Here, we consider the settings
 $\kb = k = 1$, i.e., we learn one-dimensional feature $f(x)$ uncorrelated to given one dimensional feature $\phi \in \spc{\cX}$. 

Note that without any orthogonality constraints [cf. \eqref{eq:cos:fg}], the optimal feature will be sinusoids with any initial phase, i.e., 
 $f_1^*(x) = \sqrt{2} \cdot \cos(\pi x + \theta_0)$ for any $\theta_0 \in [-\pi, \pi)$. Here, we consider the following two choices of $\phi$, $(x \mapsto  x)$ and $(x \mapsto x^2)$, which are even and odd functions, respectively. Since the underlying $p_X$ is uniform on $[-1, 1]$, we can verify the optimal features under the two constraints are 
  $f^*_1(x) = \sqrt{2}\cos(\pi x)$ for $\phi(x) = x$, and
  $f^*_1(x) = \sqrt{2}\sin(\pi x)$ for $\phi(x) = x^2$.

\begin{figure}[!ht]
  \centering
  \subfloat[$\phi(x) = x$]{\includegraphics[width = .4\textwidth]{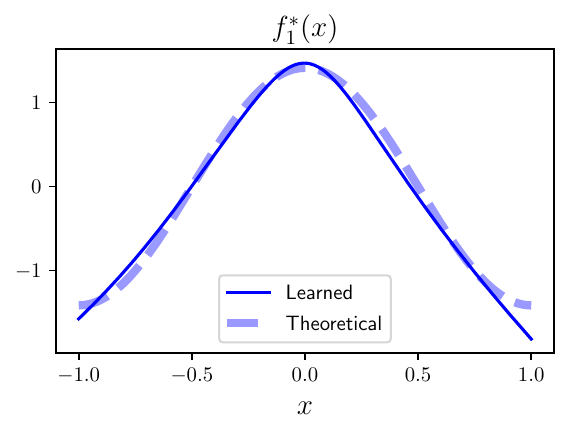}}
  \subfloat[$\phi(x) = x^2$]{\includegraphics[width = .4\textwidth]{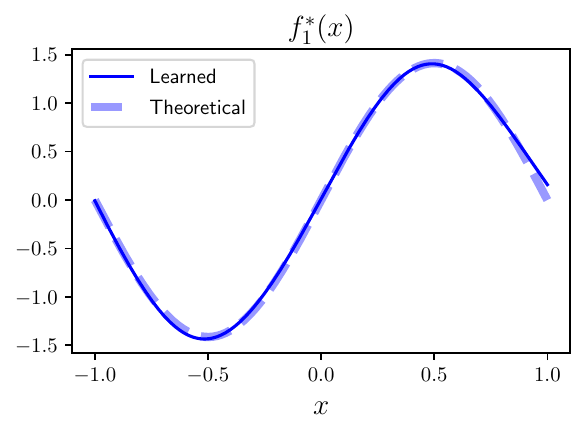}}
  \caption{Learning features uncorrelated to given $\phi$ under different settings of $\phi$. The learned results are compared with theoretical results.
}
  \label{fig:exp:cosine:fbar}
\end{figure}

By maximizing the nested H-score
restricted to $\fb = \phi$
 [cf.   \eqref{eq:Hsf:def}], we can learn the optimal feature $f_1^*$, as shown in \figref{fig:exp:cosine:fbar}. The learned features are in consistent with the theoretical ones, validating the effectiveness of the learning algorithm.

\subsection{Learning With Side Information}
\label{sec:exp:side}
We design an experiment to verify the connection between our learning algorithm and the  multitask classification DNN, as demonstrated in \thmref{thm:dnn:mulithead}.

\begin{figure}[!ht]
  \centering
  \subfloat[Learned feature $f(x)$]{\includegraphics[height = .28\textwidth]{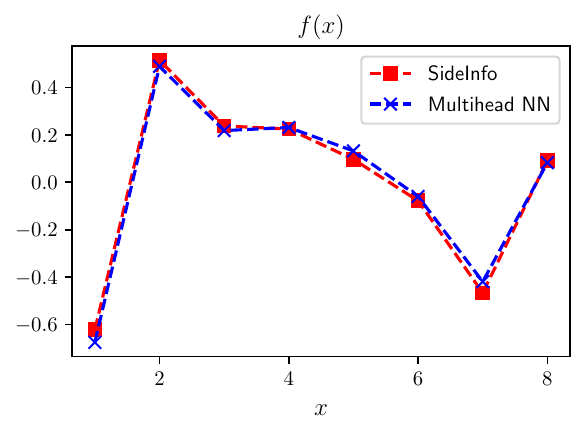}}
  \hspace{2em}
  \subfloat[Learned feature $g(s, y)$]{\includegraphics[height = .28\textwidth]{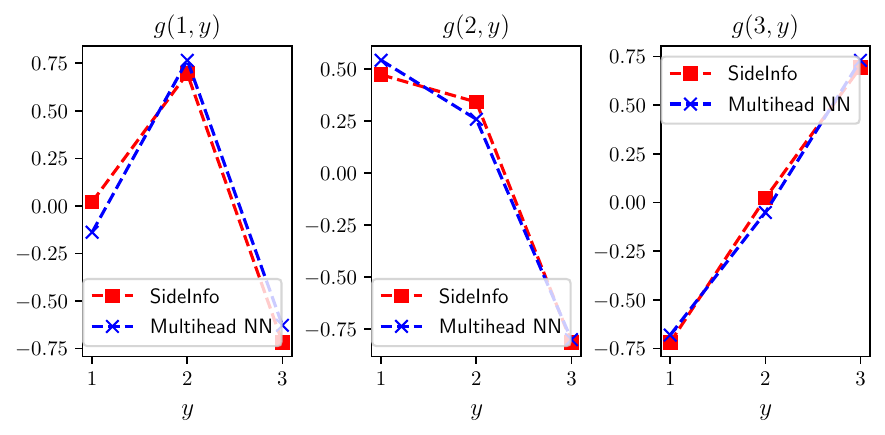}}
  \caption{Experimental verification of the connection between learning with side information and training a multihead neural network.
}
  \label{fig:exp:multihead}
\end{figure}

In particular, we consider the discrete $X, Y, S$ with 
$|\cX| = 8, |\cS| = |\cY| = 3$, and randomly choose a joint distribution $P_{X, S, Y}$ on $\cX \times \cS \times \cY$. Then, we generate $N = 50\,000$ training samples of $(X, S, Y)$ triples. In our implementation, we set $\kb = |\cS| - 1 = 2, k = 1$ and maximize the nested H-score configured by $\cside$ [cf. \eqref{eq:cside:def}] on the  training set.

For comparison, we also train the multihead network shown in \figref{fig:dnn:mh}, where we maximize the log-likelihood function \eqref{eq:likelihood:side:def} to learn the corresponding feature $f$ and weight matrices $G_s$ for all $s \in \cS$. Then, we convert the weights to  $g \in \spc{\cS \times \cY}$ via the correspondence [cf.   \eqref{eq:def:G:b:side}] $g(s, y) = G_s(1, y)$. The features learned by our algorithm (labeled as ``SideInfo'') and the multihead neural network are shown in \figref{fig:exp:multihead}, where the results are consistent.

\subsection{Multimodal Learning With Missing Modalities}
\label{sec:exp:mm}
To verify the multimodal learning algorithms presented in \secref{sec:mm}, we consider multimodal classification problems in two different settings. Suppose $X_1, X_2$ are multimodal data variables, and $Y \in \cY = \{-1, 1\}$ denotes the binary label to predict. In the first setting, we consider the training set with  complete $(X_1, X_2, Y)$ samples. In the second setting, only the pairwise observations of $(X_1, X_2)$, $(X_1, Y)$, and $(X_2, Y)$
 are available, presented in three different datasets. 

In both settings, we set
$\cX_1 = \cX_2 = [-1, 1]$ with
\begin{align}
  P_{X_1, X_2}(x_1, x_2) = \frac14 \cdot\left[ 1 + \cos(2\pi(x_1 - x_2))\right].
  \label{eq:cos:mm:dist}
\end{align}
We consider predicting $Y$ based on the learned features, where some modality in $X_1$, $X_2$ might be missing during the prediction. 
\subsubsection{Learning from Complete Observations}
\label{sec:exp:mm:comp}
We consider the $X_1, X_2, Y$ dependence specified by \eqref{eq:cos:mm:dist} and the conditional distribution
\begin{align}
  P_{Y|X_1, X_2}(y|x_1, x_2) = \frac12 + \frac{y}{4} \cdot \left[
\cos(\pi x_1) + \cos(\pi x_2)  +  \cos(\pi(x_1 + x_2))
\right]
\label{eq:cos:mm:cond}
\end{align}
for $x_1, x_2 \in [-1, 1]$ and $y = \pm 1$. It can be verified that $P_Y$ satisfies $P_Y(1) = P_Y(-1) = \frac12$. The corresponding CDK function and dependence components
[cf. \eqref{eq:pib:pii}]
 are given by
 \begin{subequations}
   \label{eq:exp:cos:pmi}
   \begin{align}
     \lpmi_{X_1, X_2; Y}(x_1, x_2, y)
     &= \frac{y}{2} \cdot \left[
       \cos(\pi x_1) + \cos(\pi x_2)  +  \cos(\pi(x_1 + x_2))
       \right]\\
     [\pib(\lpmi_{X_1, X_2; Y})](x_1, x_2, y) 
     &= \frac{y}{2} \cdot \left[
       \cos(\pi x_1) + \cos(\pi x_2)\right], 
     \label{eq:exp:cos:pmi:2}\\
     [\pii(\lpmi_{X_1, X_2; Y})](x_1, x_2, y)
     &= \frac{y}{2} \cdot
       \cos(\pi (x_1  +  x_2)).
   \end{align}
\end{subequations}
Therefore, we have $\rank(\pib(\lpmi_{X_1, X_2; Y})) = 
\rank(\pii(\lpmi_{X_1, X_2; Y})) = 1$, and the functions obtained from modal decompositions [cf. \eqref{eq:mm:fb:f}] are $\gb_1^*(y) = g_1^*(y) = y$ and
\begin{align}
  \fb_1^*(x_1, x_2) = \cos(\pi x_1) + \cos(\pi x_2),\qquad f_1^*(x_1, x_2) = \sqrt{2}\cdot \cos(\pi (x_1  +  x_2)).
  \label{eq:cos:fb:f}
\end{align}

\begin{figure}[!ht]
  \centering  \subfloat[$X_1, X_2$ Histogram ]{\raisebox{.13\textwidth}{\includegraphics[width = .24\textwidth]{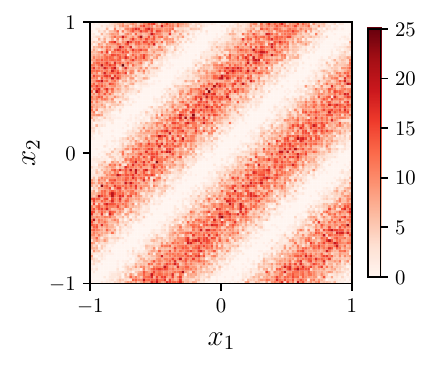}}
\label{fig:exp:mm:cosine:hist}
}
  \subfloat[$\fb_1^*(x_1, x_2)$]{\includegraphics[width = .25\textwidth]{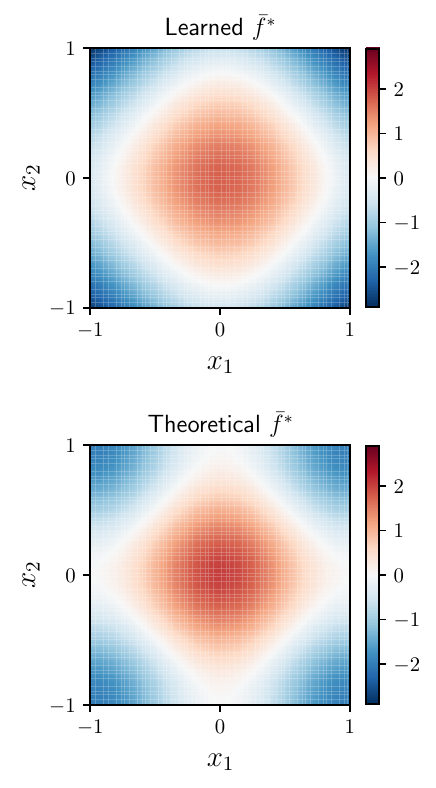}}
  \subfloat[$f_1^*(x_1, x_2)$]{\includegraphics[width = .25\textwidth]{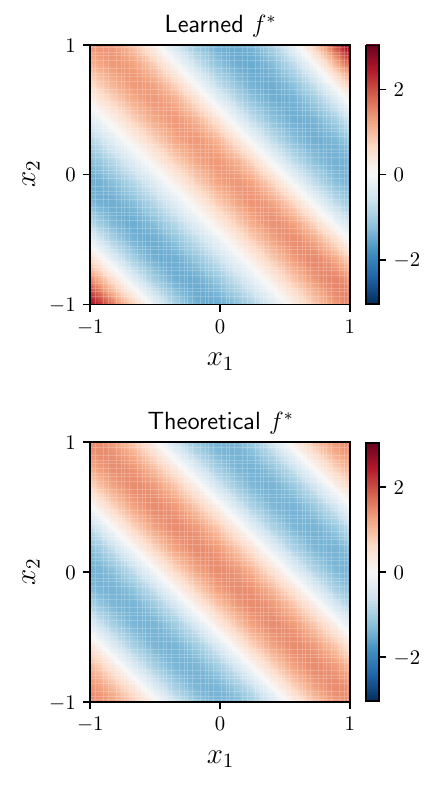}}
  \subfloat[$P_{Y|X_1, X_2}(1|x_1, x_2)$]{\includegraphics[width = .25\textwidth]{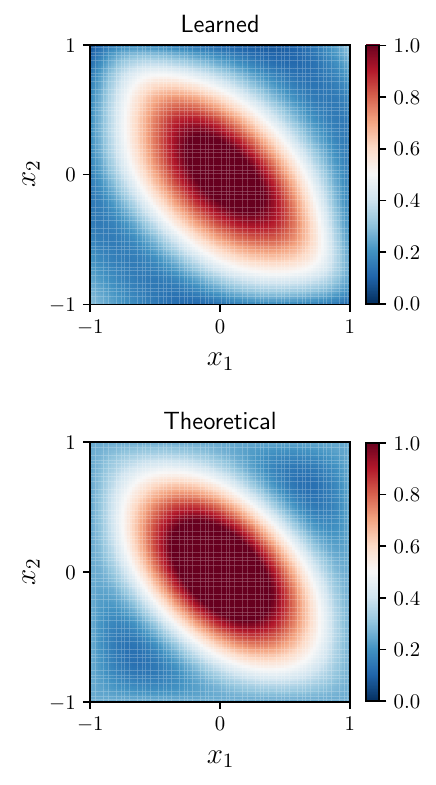}}
  \caption{Features and posterior probability learned from multimodal data $X_1, X_2, Y$, in comparison with theoretical results.}
  \label{fig:exp:mm:cosine}
\end{figure}

In the experiment, we first generate $N = 50\,000$ triples of $(X_1, X_2, Y)$ for training. The histogram of $(X_1, X_2)$ pair is shown in \figref{fig:exp:mm:cosine:hist}. Then, we set $\kb = k = 1$, and learn the features $\fb, f, \gb, g$ by maximizing the nested H-score configured by $\cbi$ [cf. \eqref{eq:cbi:def}]. We then normalize $\fb$, $f$ to obtain the estimated $\fb_1^*$ and $f_1^*$, and compute the posterior probability $P_{Y|X_1, X_2}(1|x_1, x_2)$ based on \eqref{eq:p:mm:12}. The results of learned features $\fb_1^*$, $f_1^*$ and posterior $P_{Y|X_1, X_2}(1|x_1, x_2)$ are shown in \figref{fig:exp:mm:cosine}, which are consistent with the theoretical results.

We then consider the prediction problem with missing modality, i.e., predict label $Y$ based on unimodal data $X_1$ or $X_2$. In particular, based on learned $\fb = \fb^{(1)} + \fb^{(2)}$, we train two separate networks to operate as $\opxn{1}$ and $\opxn{2}$, then apply \eqref{eq:p:mm:1} and \eqref{eq:p:mm:2} to estimate the posteriors $P_{Y|X_1}$ and $P_{Y|X_2}$.
Then, for each $i = 1, 2$, the MAP prediction of $Y$ based on observed $X_i = x_i$ can be obtained by comparing $P_{Y|X_i}(1|x_i)$ with the threshold $1/2$, via
\begin{align*}
  \argmax_{y \in \cY} P_{Y|X_i}(y|x_i)
  =
  \begin{cases}
    1& \text{if $P_{Y|X_i}(1|x_i) > 1/2$},\\
    -1& \text{if $P_{Y|X_i}(1|x_i) \leq 1/2$}.
  \end{cases}
\end{align*}

\begin{figure}[!ht]
  \centering
  \subfloat[$P_{Y|X_1}(1|x_1)$]{\includegraphics[width = .4\textwidth]{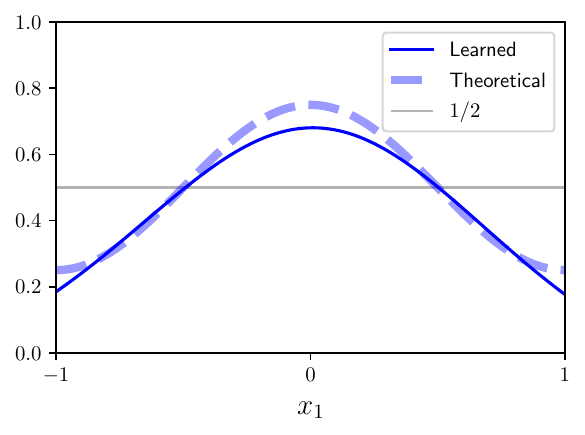}}
  \hspace{2em}
  \subfloat[$P_{Y|X_2}(1|x_2)$]{\includegraphics[width = .4\textwidth]{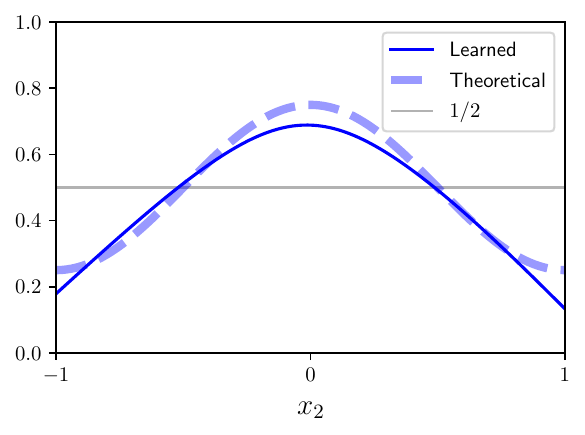}}
  \caption{Label prediction from unimodal data using learned features}
  \label{fig:exp:mm:x1x2}
\end{figure}

We plot the estimated results in \figref{fig:exp:mm:x1x2}, in comparison with the threshold $1/2$ and the theoretical values 
\begin{align}
  P_{Y|X_i}(y|x) = \frac12 + \frac{y}{4} \cdot
  \cos(\pi x),\quad \text{for $i = 1, 2$.}
  \label{eq:pygx:1:2}
\end{align}
From the figure, the estimated posteriors $P_{Y|X_1}, P_{Y|X_2}$ have consistent trends with the ground truth posteriors, and the induced $Y$ predictions are well aligned.

\subsubsection{Learning from Pairwise Observations}
\label{sec:exp:mm:pair}
We proceed to consider the multimodal learning with only pairwise observations. Specifically, we consider the joint distribution of $(X_1, X_2, Y)$, specified by  \eqref{eq:cos:mm:dist} and
\begin{align}
  P_{Y|X_1, X_2}(y|x_1, x_2) = \frac12 + \frac{y}{4} \cdot \left[
\cos(\pi x_1) + \cos(\pi x_2)
\right].
  \label{eq:cos:mm:cond:pair}
\end{align}
for $x_1, x_2 \in [-1, 1]$ and $y = \pm 1$. It can be verified that $P_Y(1) = P_Y(-1) = \frac12$, and the associated CDK function satisfies
\begin{align}
  \lpmi_{X_1, X_2; Y}(x_1, x_2, y)
  = \frac{y}{2} \cdot \left[
  \cos(\pi x_1) + \cos(\pi x_2)\right].
  \label{eq:exp:cos:pmi:pair}
\end{align}
and $\lpmi_{X_1, X_2; Y} = \pib(\lpmi_{X_1, X_2; Y})$. Therefore, the interaction dependence component $\pii(\lpmi_{X_1, X_2; Y}) = 0$, and the joint dependence can be learned from all pairwise samples, as we discussed in \secref{sec:mm:miss:pair}. 

We then construct an experiment to verify learning joint dependence from all pairwise observations. Specifically, we generate $N = 50\,000$ triples of $(X_1, X_2, Y)$ from \eqref{eq:cos:mm:dist} and \eqref{eq:cos:mm:cond:pair}. Then, we adopt the  decomposition \eqref{eq:dcmp:triple} on each triple, to obtain three pairwise datasets with samples of $(X_1, X_2)$, $(X_1, Y)$, $(X_2, Y)$, where each dataset has $N$ sample pairs.

\begin{figure}[!ht]
  \centering
  \subfloat[$\fb_1^*(x_1, x_2)$]{\includegraphics[width = .28\textwidth]{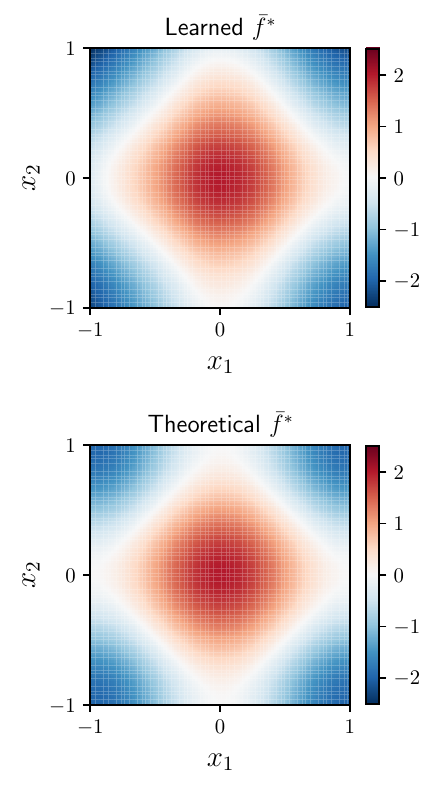}}
  \hspace{3em}
  \subfloat[$P_{Y|X_1, X_2}(1|x_1, x_2)$]{\includegraphics[width = .28\textwidth]{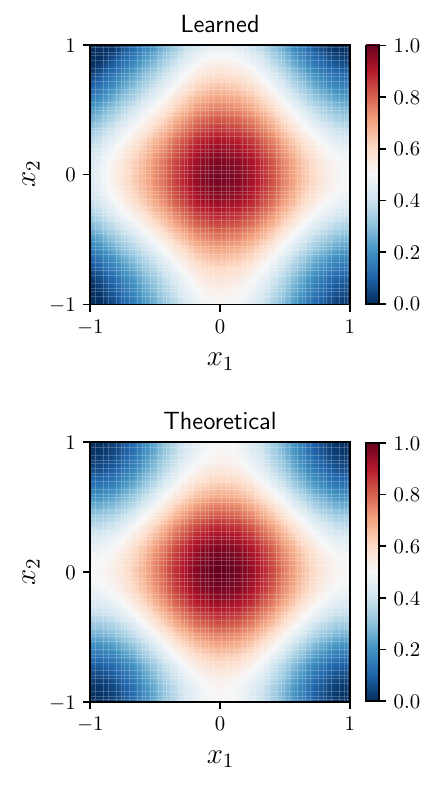}}
  \caption{Features and posterior probability learned from pairwise datasets of $(X_1, X_2), (X_1, Y)$, and $(X_2, Y)$, in comparison with theoretical results.}
  \label{fig:exp:mm:p:cosine}
\end{figure}

We use these three pairwise datasets for training and learn one dimensional $\fb \in \spc{\cX}, \gb \in \spc{\cY}$ that maximize $\Hs(\fb, \gb)$. Here, we compute $\Hs(\fb, \gb)$ based on the minibatches from the three pairwise datasets, according to \eqref{eq:Hs:fb:gb}. Based on learned $\fb, \gb$, we then compute the normalized $\fb_1^*$ and posterior distribution $P_{Y|X_1, X_2}(1|x_1, x_2)$, as shown in \figref{fig:exp:mm:p:cosine}, where the learned results match the theoretical values. 

\begin{figure}[!ht]
  \centering
  \subfloat[$P_{Y|X_1}(1|x_1)$]{\includegraphics[width = .4\textwidth]{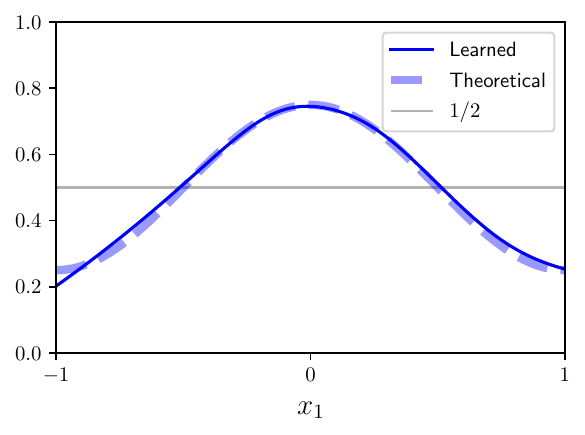}}
  \hspace{2em}
  \subfloat[$P_{Y|X_2}(1|x_2)$]{\includegraphics[width = .4\textwidth]{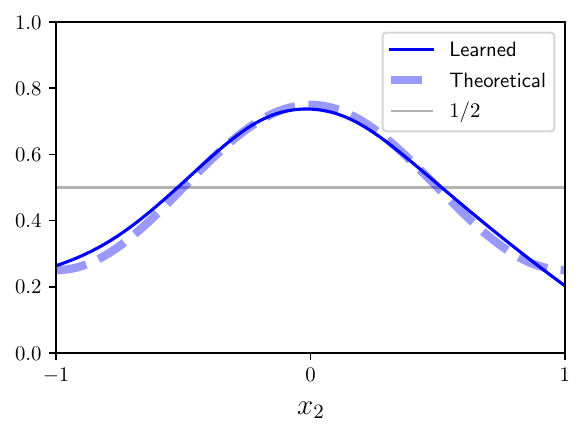}}
  \caption{Label prediction from unimodal data, with features learned from pairwise datasets.}
  \label{fig:exp:mm:p:x1x2}
\end{figure}

Similar to the previous setting, we consider the unimodal prediction problem, and show the estimated results in \figref{fig:exp:mm:p:x1x2}. It is worth noting that from \eqref{eq:exp:cos:pmi:2}, \eqref{eq:exp:cos:pmi:pair}, 
 the joint distributions $P_{X_1, X_2, Y}$ in both settings contain the same bivariate dependence component. Therefore, the theoretical results for $\fb_1^*$ and $P_{Y|X_1}, P_{Y|X_2}$ are the same.

\section{Related Works}
\label{sec:relate}
\paragraph{Maximal Correlation Functions: Optimality, Learning Algorithms, and Applications}
The Hirschfeld--Gebelein--R\'{e}nyi (HGR) maximal correlation \citep{hirschfeld1935connection, gebelein1941statistische, renyi1959measures} provides an important connection between statistical dependence and function space. The same concept has been studied with various formulations, and often in different terminologies, such as correspondence analysis \citep{greenacre2017correspondence}, functional canonical variates \citep{buja1990remarks}, and principal inertial components \citep{du2017principal}. See, e.g., \citep[Section II]{HuangMWZ2024}, for a detailed discussion. The optimality of HGR maximal correlation functions has been extensively studied in literature, e.g., as the optimal transformations in regression problems \citep{breiman1985estimating, buja1990remarks}. In particular, if one variable is categorical and represents the class label, the maximal correlation functions also provide the best inter-class separability \citep[Section III-B]{xu2020maximal}. The maximal correlation functions also play a fundamental role in the discussion of informative feature extraction and local information geometry, as we will detail next.
The first practical algorithm for learning such functions is the alternating conditional expectations (ACE) algorithm \citep{breiman1985estimating, buja1990remarks}, which learns the top dependence mode and is mostly used for processing low-dimensional data. 
{To tackle high-dimensional data, recent developments focused on learning maximal correlation functions by DNNs.} Specifically, the H-score was derived from the low-rank approximation of the canonical dependence kernel, referred as the Soft-HGR objective \citep{wang2019efficient}. It was also introduced as the local approximation of the log loss function \citep{xu2022information}, as we have discussed in \secref{sec:h:dnn}. In an independent line of work \citep{haochen2021provable}, a special form of H-score was proposed for self-supervised learning tasks, referred to as the \emph{spectral contrastive loss} therein. {There are also other formulations and algorithms proposed for computing the maximal correlation functions; see, e.g., \cite{andrew2013deep} and \cite{hsu2021generalizing}. Compared to the low-rank  approximation formulation, such approaches typically require explicit whitening procedures or computations of matrix inverses, which are less applicable for learning high-dimensional features. The sample complexity for learning maximal correlation functions have been investigated recently; see, e.g., \cite{huang2020sample, makur2020estimation}. The maximal correlation learning, particularly the H-score maximization formulation, has been widely applied in feature extraction for multimodal data, e.g., \cite{wang2019efficient} and the follow-up works. For example, \cite{xu2020maximal} developed maximal correlation regression (MCR), which uses maximal correlation functions to solve classification tasks.}

\paragraph{Informative Feature Extraction and Local Information Geometry}%

\cite{HuangMWZ2024} has provided an in-depth characterization of informative features by applying information-theoretic tools, with a focus on bivariate learning problems and the local analysis regime. Under such settings, it was shown that a series of statistical and learning formulations lead to the same optimal features, characterized as the HGR maximal correlation functions. {Examples of these formulations include the information bottleneck \citep{tishby2000information} and a cooperative game. There are similar information-theoretic discussions on multivariate problems. For example, \cite{xu2022multivariate} presented the information-theoretic optimality of the features defined in \eqref{eq:dcmp:pic}. The features introduced in \eqref{eq:mm:fb} were also studied by \cite{xu2021information} in characterizing distributed hypothesis testing problems.}

\paragraph{Decomposition of Probability Distributions}
The decomposition of probability distributions has been studied, particularly in the context of information geometry \citep{amari2000methods}. For example, \cite{amari2001information} established a decomposition in distribution space and also investigated the maximum entropy formulation \eqref{eq:pbi:def}, cf. \citep[Theorem 7]{amari2001information}. {The information geometry induced by  classification DNNs, with softmax activation and log loss function, was investigated in \cite{xu2018geometric}, which corresponds to the optimal features  without the weak dependence assumption [cf. \propref{prop:dnn}].  }

\paragraph{}  %

{The learning framework we presented is built upon these existing works and the perspectives from information theory, statistical analysis, and machine learning. The key component is a unified function-space view of feature learning design, which has provided nontrivial extensions to existing works. In particular, with the function-space viewpoint and nesting technique, we have developed several operations on general multivariate statistical dependence, with the existing bivariate learning algorithm \citep{wang2019efficient, xu2020maximal} being atomic operations and special cases. Our developments have also revealed the learning implications of existing theoretical developments, e.g., \cite{amari2000methods} and \cite{HuangMWZ2024}.
 }

\section{Conclusions and Discussion} %
\label{sec:conclusion}
We have presented a framework for designing learning systems with neural feature extractors, which allows us to learn informative features and assemble them to build different inference models. Based on the feature geometry, we 
 use feature representations as the interface, and convert learning problems to corresponding function-space operations. We then introduce the nesting technique for implementing such operations, which provides a systematic design of both feature learning  and feature assembling modules. We demonstrate its applications in learning multivariate dependence by considering conditional inference and multimodal learning problems.

{The developed framework has provided a unified solution to general feature-centric learning problems, where we have used DNNs as building blocks to directly represent the statistical dependence. 
The connection between feature and statistical dependence enables principled learning algorithm designs, especially in tackling complicated multivariate dependence. 
Such designs also provide a learning-based computational approach for investigating the statistical dependence behind high-dimensional data with often complicated structures.
}

\subsection{{Applications and Extensions}} %
{For simplicity, our presentation focused on  theoretical concepts and basic settings, which can be readily applied to practical learning scenarios and extended to more complicated settings. For instance, we can establish the feature optimality in multivariate problems by extending the bivariate characterizations \citep{HuangMWZ2024, xu2020maximal}; see, e.g, the discussions in \cite{xu2022multivariate}.

For applications, the results in \propref{prop:pred:est} can be applied to classification and estimation tasks, e.g., image classification \citep{xu2020maximal}. Similarly, we can employ the results in \secref{sec:nest:orth} to address physical or engineering constraints induced by practical learning applications. We can also apply the learning framework to process multimodal data dependencies and learn useful features, as discussed in \secref{sec:side} and \secref{sec:mm}. In all such examples, we can incorporate pre-trained models to reduce data requirements and computational costs.

Our discussions on the dependence decomposition framework can also be extended to general settings. In particular, we can get refined dependence components by iteratively applying the decomposition \eqref{eq:sspc:dcmp}. One example is to decompose the dependence of a random process into different dependence components, with each component representing the dependence at a certain time delay, as discussed in \cite{xu2023sequential}. More generally, the nesting technique can be extended by considering a configuration (cf. \defref{def:nest}) with subspace sequences of both $\spc{\cX}$ and $\spc{\cY}$.

We can also extend our discussions and analyses on supervised learning examples to other scenarios, e.g., contrastive \citep{chen2020simple} and self-supervised \citep{haochen2021provable} learning problems. In addition to the neural feature extractors, the feature geometry can also be applied to implicit features, e.g., the feature subspaces in kernel methods. In particular, \cite{xu2023kernel} developed a quantitative measure of kernel choices and demonstrated a connection between kernel methods and existing feature learning approaches.
}

\subsection{{Future Directions}} %
{
The established framework provides abundant opportunities for further explorations in both theoretical analyses and applications. To highlight the general framework, our development has adopted simplified or idealized assumptions on learning factors, e.g., the network expressive power.  By relaxing such idealized assumptions, further investigations can give a better theoretical understanding and also provide practical guidance.
Preliminary examples include our discussions on the expressive power of feature extractors (cf.  \secref{sec:h:const}) and
the sample size of training data (cf. \secref{sec:mm:miss:gen}).
More in-depth characterizations on the learning behaviors, e.g., generalizability, can be built upon the spectrum decomposition nature of this learning framework, by using existing analyses on linear operators \citep{dunford1988linear} or the spectral methods for matrices \citep{chen2021spectral}.
Another interesting direction is to investigate the interaction between feature geometry, the manifold structure of the neural feature extractor, and the geometry of data spaces \citep{bronstein2017geometric, bronstein2021geometric}. Moreover, the nested H-score and the induced optimization behaviors can be of independent interest to optimization studies, e.g., the landscape and convergence analyses.
In terms of applications, the established framework can be extended to other operations on statistical dependence beyond decomposing or learning given dependence. For instance, a potential application is to generate data satisfying certain dependence constraints, e.g., independence or conditional independence, by employing generative models. The integration  with generative models, as well as the learning algorithm design, is of practical interest for further studies. 
}

\acks{This work was supported by the Office of Naval Research (ONR) under grant N00014-19-1-2621.}

\appendix

\section{Data Alphabets in Feature Geometry}%
\label{app:notation}
{We discuss the feature geometry on general data alphabets, e.g., continuous alphabets in \appref{app:notation:con}.}
In \appref{app:notation:finite}, we briefly summarize the vector and matrix notation conventions established for discrete alphabets, which have been extensively used in related works, e.g., \cite{HuangMWZ2024, xu2022information}.

\subsection{{General Alphabets: Regularity Conditions and Examples}}
\label{app:notation:con}

{
Our development on feature spaces can be extended to general Hilbert spaces  \citep{young1988introduction, weidmann2012linear}, where the alphabet $\cZ$ and the metric distribution $P_Z$ (cf. \secref{sec:feature-space}) are extended to a measurable set and the measure \cite[Example 6]{weidmann2012linear}, respectively.

One particularly important example is $\cZ = \mathbb{R}^{d}$ with $d \geq 1$. We consider 
the random variable $\vec{Z} \in \cZ$. For simplicity, we restrict to the case where  
$\vec{Z}$ has the probability density function $p_{\vec{Z}}$. Then, we define the feature space $\spc{\cZ}$ and the inner product on $\spc{\cZ}$ as [cf. \secref{sec:feature-space}]
\begin{align*}
  \spc{\cZ} \defeq \left\{f \colon \cZ \to \mathbb{R},  \int f^2(\vec{z}) p_{\vec{Z}}(\vec{z}) d\vec{z} < \infty\right\}, \quad\text{and}\quad \ip{f_1}{f_2} \defeq \int f_1(\vec{z})f_2(\vec{z}) p_{\vec{Z}}(\vec{z}) d\vec{z} \text{ for }f_1, f_2 \in \spc{\cZ},
\end{align*}
respectively. %

We can define the joint function similarly for  $\cX$, $\cY$ under the corresponding regularity conditions. For instance, we consider $(X, Y)$
with alphabets $\cX = \cY = \mathbb{R}$ and 
the joint probability density function $p_{X, Y}$. Then, the corresponding definition of the CDK function becomes [cf. \eqref{eq:cdk:def}]
\begin{align}
  \lpmi_{X; Y}(x, y) = \frac{p_{X, Y}(x, y)}{p_X(x)p_Y(y)} - 1.
\end{align}

Note that to have $\lpmi_{X; Y} \in \spc{\cX \times \cY}$, we need to assume
\begin{align}
  \norm{\lpmi_{X; Y}}^2 = \iint \frac{[p_{X, Y}(x, y)]^2}{p_X(x)p_Y(y)} \,dx dy - 1 < \infty,
\end{align}
where the norm is defined with respect to the metric distribution $p_Xp_Y$.

\begin{remark}
The results can be further generalized to the case where density functions do not necessarily exist. See, e.g., \cite{lancaster1958structure} and \cite[Proposition 3.1]{buja1990remarks} for 
 detailed discussions.
\end{remark}
}

{

In particular, we have the following result for bivariate normal variables, which has been extensively discussed in literature; see, e.g., \cite{lancaster1958structure} and \cite{HuangMWZ2024}.

\begin{example}
  For bivariate normal variables
  $
  \begin{bmatrix}
    X\\Y
  \end{bmatrix}
  \sim
  \cN\left(  \begin{bmatrix}
      0\\0
    \end{bmatrix},
    \begin{bmatrix}
      1 & \rho\\
      \rho& 1
    \end{bmatrix}\right)
  $
  with $|\rho| < 1$, we have
  \begin{align}
    \lpmi_{X; Y}(x, y) = \sum_{i = 1}^\infty \frac{\rho^i}{i!} \cdot \He_i(x) \cdot \He_i(y).
    \label{eq:mehler}
  \end{align}
  where for each $i \geq 1$, $\He_i$ denotes the $i$-th (probabilist's) Hermite polynomial, defined as
  \begin{align*}
    \He_i(x) \defeq (-1)^{i} \cdot e^{\frac {x^{2}}{2}}{\frac {d^{i}}{dx^{i}}}e^{-{\frac {x^{2}}{2}}}.
  \end{align*}
  Then, each $i$-th mode of $\lpmi_{X; Y}$ can be written in the standard form as  $\mdn{i}(\lpmi_{X; Y}) = \sigma_i (f_i^* \otimes g_i^*)$, with
  \begin{align}
    \sigma_i = |\rho|^i,\quad
    f_i^* = \frac{1}{\sqrt{i!}} \cdot \He_i,\quad
    g_i^* = \frac{[\sign(\rho)]^i}{\sqrt{i!}} \cdot \He_i.
  \end{align}
  Finally, $\ds\norm{\lpmi_{X; Y}}^2 = \sum_{i \geq 1} \sigma_i^2 = \frac{\rho^2}{1-\rho^2}$.
\end{example}

The equation \eqref{eq:mehler} is referred to as Mehler’s identity or Mehler’s decomposition \citep{mehler1866ueber}.

In some scenarios, $\cX$ and $\cY$ can be of different types, e.g., $\cX$ is continuous, and $\cY$ is discrete. A classical example is mixture models, where $X$ and $Y$ corresponds to the observation variable and the latent categorical variable, respectively. Specifically, we introduce the following Gaussian mixture example, which is a multidimensional extension of an example discussed in \cite{buja1990remarks}.

\begin{example}
  \label{ex:mix:gaussian}
  We consider the probability model with $\cY = \{1, -1\}$, $P_Y \equiv \frac12$, and $\vec{X}|Y = y \sim \cN(y\cdot \vec{\mu}, \Lac)$ for $y \in \cY$, where $\vec{\mu} \in \mathbb{R}^d$ and $\Lac$ is a positive definite matrix of order $d$, for some $d \geq 1$. Then, we have $\rank( \lpmi_{\vec{X}; Y}) = 1$ with the standard form $\lpmi_{\vec{X}; Y} = \sigma_1 (f_1^*\otimes g_1^*)$, where  $g_1^*(y) = y$ and $f_1^*(\vec{x}) = c \cdot \tanh( \vec{\mu}^\T\Lac^{-1} \vec{x})$ for some $c \in \mathbb{R}$.
\end{example}

\begin{proof}
  We have
  \begin{align*}
    \lpmi_{\vec{X}; Y}(\vec{x}, y) 
    &=  \frac{p_{\vec{X}|Y}(\vec{x}| y)}{p_{\vec{X}}(\vec{x})} - 1\\
    &= 2\cdot \frac{\exp(- \frac12(\vec{x} - y \vec{\mu})^\T \Lac^{-1} (\vec{x} - y \vec{\mu}))}{\exp(- \frac12(\vec{x} + \vec{\mu})^\T \Lac^{-1} (\vec{x} + \vec{\mu})) + \exp(- \frac12(\vec{x} -  \vec{\mu})^\T \Lac^{-1} (\vec{x} -  \vec{\mu}))} - 1\\
    &=  \frac{2\exp(y \vec{\mu}^\T \Lac^{-1} \vec{x})}{\exp( \vec{\mu}^\T \Lac^{-1} \vec{x}) + \exp(- \vec{\mu}^\T \Lac^{-1} \vec{x})} - 1\\
    &=  y \cdot \frac{\exp( \vec{\mu}^\T \Lac^{-1} \vec{x}) - \exp(- \vec{\mu}^\T \Lac^{-1} \vec{x})}{\exp( \vec{\mu}^\T \Lac^{-1} \vec{x}) + \exp(- \vec{\mu}^\T \Lac^{-1} \vec{x})}\\
    &=  y \cdot \tanh(\vec{\mu}^\T \Lac^{-1} \vec{x}),
  \end{align*}
    which implies that $\rank( \lpmi_{\vec{X}; Y}) = 1$, $f_1^*(\vec{x}) \propto \tanh(\vec{\mu}^\T \Lac^{-1} \vec{x})$, and $g_1^*(y) = y$.
\end{proof}

From \exref{ex:mix:gaussian},
 the maximal correlation function $f_1^*$ can be represented
by a $d\times 1$ linear layer activated by $\tanh(\cdot)$, with zero bias and weight $\Lac^{-1}\vec{\mu}$.

}

\subsection{Finite Alphabets:  Information Vector and Canonical Dependence Matrix}%
\label{app:notation:finite}
For finite data alphabets, it can be convenient to introduce the vector and matrix representations of features. To begin, we assume the random variable $Z$ takes finite many possible values, i.e.,  $|\cZ| < \infty$, then the resulting feature space $\spc{\cZ}$ is a finite-dimensional vector space.  It is sometimes more convenient to represent features using vector and matrix notations. Specifically, each $f \in \spc{\cZ}$ can be equivalently expressed as one vector in $\mathbb{R}^{|\cZ|}$ as follows. Suppose $R_Z$ is the metric distribution, then we can construct an orthonormal basis  $\left\{\cb{Z}{z}\colon z \in \cZ\right\}$ of $\spc{\cZ}$, where
\begin{align}
  \cb{Z}{z}(z') \defeq \frac{\delta_{zz'}}{\sqrt{R_Z(z)}}\quad \text{for all $z, z' \in \cZ$}.
  \label{eq:cano:b:def}
\end{align}
We refer to this basis as the \emph{canonical basis} of $\spc{\cZ}$. %
For all $f \in \spc{\cZ}$, we can represent $f$ as a linear combination of these basis functions, i.e.,
\begin{align*}
  f = \sum_{z' \in \cZ} \xi(z') \cdot \cb{Z}{z'},
\end{align*}
where the coefficient $\xi(z)$ for each $z \in \cZ$ is given by %
 \begin{align}
   \xi(z) = \fip{f}{\cb{Z}{z}} =  f(z)\sqrt{R_Z(z)}.
   \label{eq:f:xi}
 \end{align}

In particular, when $f$ is the density ratio  $\llrt{P_Z}$ for some $P_Z \in \spc{\cZ}$, the corresponding coefficient $\xi(z)$ for each $z \in \cZ$ is %
\begin{align}
  \xi(z) = \fip{\llrt{P_Z}}{\cb{Z}{z}} =  \frac{P_Z(z) - R_Z(z)}{\sqrt{R_Z(z)}}.
  \label{eq:xi:llrt}
\end{align}
This establishes a one-to-one correspondence between $\llrt{P_Z}$ (or $P_Z$) and the  vector $\vec{\xi} \defeq [\xi(z), z \in \cZ]^\T \in \mathbb{R}^{|\cZ|}$, which is referred to as the information vector associated with  $\llrt{P_Z}$ (or $P_Z$).

Similarly, for $X$ and $Y$ with $|\cX| < \infty, |\cY| < \infty$, we can represent the CDK function $\lpmi_{X; Y} \in \spc{\cX \times \cY}$ as
an $|\cX|\times |\cY|$  matrix $\Bt_{X;Y}$:
  \begin{align}
    \Bt_{X;Y}(x; y)
    &\defeq \frac{P_{X, Y}(x, y) - P_{X}(x)P_{Y}(y)}{\sqrt{P_{X}(x)}\sqrt{P_{Y}(y)}},
      \label{eq:bt:def}
  \end{align}
which is referred to as the canonical dependence matrix (CDM) of $X$ and $Y$. With the metric distribution $P_XP_Y$ on $\spc{\cX \times \cY}$,  each $\Bt_{X;Y}(x; y)$ is the coefficient associated with the basis function $\cb{X, Y}{x, y}$ [cf. \eqref{eq:cano:b:def}].

In addition, we have $\rank(\Bt_{X; Y}) = \rank(\lpmi_{X; Y})$. Suppose $\sigma_i (f_i^* \otimes g_i^*) = \mdn{i}(\lpmi_{X; Y})$, for each $1 \leq i \leq \rank(\lpmi_{X; Y})$, then $\sigma_i$ is the $i$-th singular vector of $\Bt_{X; Y}$, and the corresponding $i$-th left and right singular vector pair are given by $\vec{\xi}_i^X \in \mathbb{R}^{|\cX|}, \vec{\xi}_i^Y \in \mathbb{R}^{|\cY|}$, with [cf. \eqref{eq:f:xi}]
\begin{align}
  \xi_i^X(x) = \sqrt{P_X(x)} \cdot f_i^*(x), \quad
  \xi_i^Y(y) = \sqrt{P_Y(y)} \cdot g_i^*(y).
  \label{eq:xi:xy}
\end{align}
Therefore, for small-scale discrete data, we can use the connection \eqref{eq:xi:xy} to obtain the modal decomposition by solving the SVD of the corresponding CDM.

\section{Characterization of Common Optimal Solutions} 
\label{app:opt}
We consider optimization problems defined on a common domain $\cD$. Specifically, given $k$ functions $h_i \colon \cD \to \mathbb{R}$, $i = 1, \dots, k$, let us consider optimization problems 
\begin{align}
  \maximize_{u \in \cD} h_i(u), \quad i = 1, \dots, k.
  \label{eq:opt:h}
\end{align}
For each $i = 1, \dots, k$, we denote the optimal solution and optimal value for $i$-th problem by
\begin{align}
  \cD_i^* \defeq \argmax_{u \in \cD} h_i(u), \quad t_i^* \defeq \max_{u \in \cD} h_i(u), 
  \label{eq:cD:star}
\end{align}
respectively, and suppose $\cD_i^* \neq \varnothing, i = 1, \dots, k$. 

Then, the set $\cD^* \defeq \cap_{i = 1}^k \cD_i^*$ represents the collection of common optimal solutions for all $k$ optimization problems \eqref{eq:opt:h}. When such common solutions exist, i.e., $\cD^*$ is nonempty, we can obtain $\cD^*$ by a single optimization program, by using an objective that aggregates original $k$ objectives. 
We formalize the result as follows. %
\begin{proposition}
  \label{prop:joint:opt}
  If $\cD^* \neq \varnothing$, we have
    $\ds\cD^* = \argmax_{u \in \cD}\,  \Gamma(h_1(u), h_2(u), \dots, h_k(u)) $
  for every $\Gamma\colon \mathbb{R}^k \to \mathbb{R}$ 
that is strictly increasing in each argument.
\end{proposition}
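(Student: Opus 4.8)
The plan is to show set equality $\cD^* = \argmax_{u \in \cD} \Gamma(h_1(u), \dots, h_k(u))$ by proving two inclusions, relying on the hypothesis that $\cD^* \neq \varnothing$ together with the strict monotonicity of $\Gamma$ in each argument. Fix any point $u^* \in \cD^*$; by definition $h_i(u^*) = t_i^*$ for all $i$, so $u^*$ simultaneously attains the per-coordinate maxima. The key quantitative observation is that for any $u \in \cD$ we have $h_i(u) \leq t_i^* = h_i(u^*)$ for every $i$, i.e.\ the vector $(h_1(u), \dots, h_k(u))$ is coordinatewise dominated by $(h_1(u^*), \dots, h_k(u^*))$.

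First I would establish the forward inclusion $\cD^* \subseteq \argmax_u \Gamma(\dots)$. Take $u^* \in \cD^*$ and any competitor $u \in \cD$. Since $h_i(u) \leq h_i(u^*)$ for all $i$ and $\Gamma$ is increasing (non-strictly suffices here, but we have strict) in each argument, a coordinatewise-monotonicity argument gives $\Gamma(h_1(u), \dots, h_k(u)) \leq \Gamma(h_1(u^*), \dots, h_k(u^*))$. Because $u$ was arbitrary, $u^*$ maximizes the aggregated objective, proving the inclusion. Here I only use monotonicity, applied one coordinate at a time: raising each argument from $h_i(u)$ up to $h_i(u^*)$ cannot decrease $\Gamma$.

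The reverse inclusion $\argmax_u \Gamma(\dots) \subseteq \cD^*$ is where strictness is essential. Suppose $u_0$ maximizes $\Gamma(h_1(\cdot), \dots, h_k(\cdot))$ but $u_0 \notin \cD^*$, so there is some index $j$ with $h_j(u_0) < t_j^* = h_j(u^*)$. Then $(h_1(u_0), \dots, h_k(u_0))$ is coordinatewise $\leq (h_1(u^*), \dots, h_k(u^*))$ with strict inequality in coordinate $j$. By strict monotonicity of $\Gamma$ in its $j$-th argument (holding the other arguments fixed at their $u^*$-values after first using plain monotonicity on the remaining coordinates), we get $\Gamma(h_1(u_0), \dots, h_k(u_0)) < \Gamma(h_1(u^*), \dots, h_k(u^*))$, contradicting the optimality of $u_0$. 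Hence every maximizer of the aggregate lies in $\cD^*$.

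The main obstacle, such as it is, is purely bookkeeping: handling the comparison $\Gamma(a) \le \Gamma(b)$ when $a \le b$ coordinatewise, and isolating the \emph{strict} drop in a single coordinate to force a strict inequality in $\Gamma$. The clean way to manage this is to introduce the hybrid point and move one coordinate at a time --- define intermediate vectors interpolating from $(h_i(u_0))_i$ to $(h_i(u^*))_i$ by replacing coordinates one by one, so that each step changes a single argument and invokes monotonicity (strictly at the coordinate $j$ where the values differ). This avoids needing any continuity or joint structure on $\Gamma$ beyond coordinatewise strict monotonicity, and makes both inclusions follow from elementary one-variable comparisons.
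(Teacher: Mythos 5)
Your proposal is correct and follows essentially the same route as the paper's proof: both establish the two inclusions by noting that any $u^*\in\cD^*$ coordinatewise dominates every competitor (giving $\cD^*\subseteq\argmax\Gamma$ by monotonicity), and that any $u_0\notin\cD^*$ has a strictly smaller $h_j$ in some coordinate $j$, so strict monotonicity forces $\Gamma(h_1(u_0),\dots,h_k(u_0))<\Gamma(h_1(u^*),\dots,h_k(u^*))$. Your coordinate-by-coordinate interpolation is just a more explicit justification of the inequality the paper asserts directly.
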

\begin{proof}
Let $\cD^{**} \defeq \argmax_{u \in \cD}\,  h(u)$ with $h(u) \defeq \Gamma(h_1(u), h_2(u), \dots, h_k(u))$. Then the proposition is equivalent to $\cD^* = \cD^{**}$. We then establish $\cD^{*} \subset \cD^{**}$ and $\cD^{**} \subset \cD^{*}$, respectively. 

To prove $\cD^{*} \subset \cD^{**}$, take any $u^* \in \cD^*$. Then, for all $u \in \cD$, we have $h_i(u) \leq h_i(u^*)$, $i = 1, \dots, k$, which implies that
  \begin{align*}
    h(u) = \Gamma(h_1(u), \dots, h_k(u)) \leq \Gamma(h_1(u^*), \dots, h_k(u^*)) = h(u^*).
  \end{align*}
  Therefore, we have $u^* \in \cD^{**}$. Since $u^*$ can be arbitrarily chosen from $\cD^*$, we have $\cD^* \subset \cD^{**}$.

We then establish $\cD^{**} \subset \cD^*$, which is equivalent to
  \begin{align}
    (\cD \setminus \cD^{*}) \subset (\cD \setminus \cD^{**}).
    \label{eq:cD:subset}
  \end{align}
  
  Note that \eqref{eq:cD:subset} is trivially true if $\cD \setminus \cD^* = \varnothing$. Otherwise, take any $u' \in \cD \setminus \cD^*$. Then, we have  $h_i(u') \leq h_i(u^*)$ for all $i \in [k]$, and the strict inequality holds for at least one $i \in [k]$. This implies that
  \begin{align*}
    h(u') = \Gamma(h_1(u'), \dots,  h_k(u')) < \Gamma(h_1(u^*), \dots, h_k(u^*)) = h(u^*).
  \end{align*}
  Hence, $u' \notin \cD^{**}$, and thus $u' \in (\cD \setminus \cD^{**})$, which establishes \eqref{eq:cD:subset}.
\end{proof}

To apply \propref{prop:joint:opt}, the first step is to test the existence of common optimal solutions. A naive test is to solve all optimization problems \eqref{eq:opt:h} and then check the definition, which can be difficult in practice. Instead, we can consider a related multilevel optimization problem as follows. Let $\cD_0 \defeq \cD$, and for each $ i = 1, \dots, k$ as, we define $\cD_i$ and $t_i \in \mathbb{R}$ as
\begin{align}
  \cD_{i} \defeq \argmax_{u \in \cD_{\!i-1}}\, h_i(u), \quad t_i \defeq \max_{u \in \cD_{\!i-1}}\, h_i(u).
  \label{eq:cD}
\end{align}
Note that for each $i$, the $\cD_{i}$ solved by level $i$ optimization problem gives the elements in $\cD_{i}$ that maximize $h_i$, and $t_i$ denotes the corresponding optimal value. Therefore, $\cD_k$ can be obtained by sequentially solving $k$ optimization problems as defined in \eqref{eq:cD}.

Then, the following result provides an approach to test the existence of common optimal solutions. %
  \begin{proposition}
    \label{prop:opt:equiv}
    The following three statements are equivalent:
    \begin{enumerate}
    \item The optimization problems \eqref{eq:opt:h} have common optimal solutions, i.e., $\cD^* \neq \varnothing$;
    \item $t_i = t_i^*$ for all $i = 1, \dots, k$;
    \item $\cD_{i-1} \cap \cD_i^* \neq \varnothing$ for all $i = 1, \dots, k$.   
    \end{enumerate}
    In addition, if one of these statements holds, then we have $\cD_k = \cD^*$.
  \end{proposition}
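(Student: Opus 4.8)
The plan is to reduce all three conditions to the single inequality $t_i \le t_i^*$ together with an inductive identification of the nested sets $\cD_i$. First I would record the preliminaries coming from the construction \eqref{eq:cD}: the sets are nested, $\cD = \cD_0 \supseteq \cD_1 \supseteq \dots \supseteq \cD_k$, and each is nonempty provided every maximum in \eqref{eq:cD} is attained (automatic for finite $\cD$, and guaranteed under the paper's regularity conditions); and since $\cD_{i-1} \subseteq \cD$, maximizing $h_i$ over a smaller set cannot increase the value, so $t_i \le t_i^*$ for every $i$. The key lemma, proved by induction on $i$, is that if $t_j = t_j^*$ for all $j \le i$, then $\cD_i = \bigcap_{j=1}^i \cD_j^*$: the base case is $\cD_1 = \cD_1^*$ from $\cD_0 = \cD$, and for the step I write $\cD_i = \{u \in \cD_{i-1} : h_i(u) = t_i\}$, substitute $t_i = t_i^*$ to obtain $\cD_{i-1} \cap \cD_i^*$, and invoke the hypothesis to replace $\cD_{i-1}$ by $\bigcap_{j<i}\cD_j^*$.

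With these in hand, I would close the cycle (1) $\Rightarrow$ (2) $\Rightarrow$ (3) $\Rightarrow$ (1). For (1) $\Rightarrow$ (2), I fix $u^* \in \cD^* = \bigcap_i \cD_i^*$ and induct: if $u^* \in \cD_{i-1}$, then $t_i \ge h_i(u^*) = t_i^*$, which together with $t_i \le t_i^*$ forces $t_i = t_i^*$ and places $u^*$ in $\cD_i$. The step (2) $\Rightarrow$ (3) is almost definitional: when $t_i = t_i^*$, an attained maximizer of $h_i$ on $\cD_{i-1}$ realizes the global value and hence lies in $\cD_{i-1} \cap \cD_i^*$. For (3) $\Rightarrow$ (1), I first note (3) $\Rightarrow$ (2) (any $u \in \cD_{i-1} \cap \cD_i^*$ gives $t_i \ge h_i(u) = t_i^*$, so $t_i = t_i^*$), then apply the lemma at $i = k$ to get $\cD_k = \bigcap_{j=1}^k \cD_j^* = \cD^*$; since $\cD_k$ is nonempty, so is $\cD^*$. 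This same application of the lemma at $i = k$ yields the additional claim $\cD_k = \cD^*$ whenever any of the conditions holds.

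The main obstacle is not a hard estimate but the careful handling of attainment and nonemptiness: the entire argument presumes each $\argmax$ in \eqref{eq:cD} is attained, so that the chain $\cD_0 \supseteq \dots \supseteq \cD_k$ stays nonempty and so that ``$t_i = t_i^*$'' can be upgraded to ``some maximizer of $h_i$ on $\cD_{i-1}$ lies in $\cD_i^*$.'' I would therefore state this attainment assumption explicitly at the outset; once it is in place, every implication collapses to the inequality $t_i \le t_i^*$ and the inductive description of $\cD_i$.
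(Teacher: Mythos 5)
Your proposal is correct and follows essentially the same route as the paper: the same cyclic chain (1) $\Rightarrow$ (2) $\Rightarrow$ (3) $\Rightarrow$ (1), the same induction carrying a common optimizer $u^*$ through the nested sets for (1) $\Rightarrow$ (2), and the same unrolling $\cD_k = \cD_{k-1}\cap\cD_k^* = \dots = \bigcap_{i=1}^k \cD_i^* = \cD^*$ to close the cycle and obtain the final claim. Your only departures are cosmetic — packaging the unrolling as an explicit lemma and making the attainment assumption (already implicit in the paper's hypothesis $\cD_i^*\neq\varnothing$) explicit up front.
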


\begin{proof}
  We establish the equivalence of the statements 1 to 3, by showing that ``1'' $\implies$ ``2'', ``2'' $\implies$ ``3'', and ``3'' $\implies$ ``1''.

  \paragraph{ ``1'' $\implies$ ``2''}
  Suppose ``1'' holds, and take any $u^* \in \cD^* = \cap_{i = 1}^k \cD_i^*$. We then establish ``2'' by induction. First, note that $u^* \in \cD_0$. For the induction step, we can show that for each $i = 1, \dots, k$, if $u^* \in \cD_{i-1}$, then $u^* \in \cD_i$ and $t_i^* = t_i$. Indeed, we have
  \begin{align*}
    t_i^* \geq t_i = \max_{u \in \cD_{i - 1}} h_i(u) \geq h_i(u^*) = t_i^*,
  \end{align*}
  where the first inequality follows from the fact that  %
  $\cD = \cD_0 \supset \cD_1 \supset \dots \supset \cD_k$, %
where the second inequality follows from the inductive assumption $u^* \in \cD_{i-1}$, 
 and where the last equality follows from  that $u^* \in \cD^* \subset \cD_i^*$.  %

  \paragraph{``2'' $\implies$ ``3''} For each $i = 2, \dots, k$, $t_i = t_i^*$ implies that there exists some $u_i \in \cD_{i - 1}$, such that $h_i(u_i) = t_i = t_i^* = \max_{u \in \cD} h_i(u)$, and thus $u_i \in \cD_i^*$. Therefore, $u_i \in \cD_{i -1} \cap \cD_i^*$, which establishes ``3''.

  \paragraph{``3'' $\implies$ ``1''} For each $i = 2, \dots, k$, from $\cD_{i-1} \cap \cD_i^* \neq \varnothing$ and the definitions \eqref{eq:cD:star} and \eqref{eq:cD}, we have $\cD_i = \cD_{i-1} \cap \cD_i^*$. It can also be verified that $\cD_i = \cD_{i-1} \cap \cD_i^*$ holds for $i = 1$. Therefore, we obtain
  \begin{align}
    \cD_k = \cD_{k - 1} \cap \cD_k^* = \cD_{k - 2} \cap \cD_{k-1}^* \cap \cD_k^* = \dots = \cD_0 \cap \left(\bigcap_{i = 1}^k \cD_i^*\right) = \cD \cap \cD^* = \cD^*.
    \label{eq:cD:cDs}
  \end{align}
  This implies that $\cD^* = \cD_{k -1} \cap \cD_k^* \neq \varnothing$.  

  Finally, from \eqref{eq:cD:cDs} we know that statement 3 implies $\cD_k = \cD^*$. Since all three statements are equivalent, each statement implies $\cD_k = \cD^*$.
\end{proof}

\section{Proofs}
\subsection{Proof of \propref{prop:md:constraint}}
\label{app:prop:md:constraint}
  Let $\gammah \defeq \proj{\gamma}{\sspc{\cX} \otimes \sspc{\cY}}$.  We first consider the second equality of \eqref{eq:lora:constraint}, i.e., 
    \begin{align}
      \mdnl{k}(\gammah) = \argmin_{\substack{\gamma'\colon \gamma' = f \otimes g,\\
    f \in \sspcn{\cX}{k},\, g \in \sspcn{\cY}{k}}} \norm{ \gamma - \gamma'}
      \label{eq:lora:gammah}
    \end{align}
  For each $k \leq \rank(\gammah)$, consider $ \gamma' = f \otimes g$ with
    $f \in \sspcn{\cX}{k}, g \in \sspcn{\cY}{k}$. Then, we have
\begin{align}
  \norm{ \gamma - \gamma'}^2
  =  \norm{ \gamma - \gammah + \gammah - \gamma'}^2
  =  \norm{ \gamma - \gammah}^2 + \norm{\gammah - \gamma'}^2
  \geq \norm{ \gamma - \gammah}^2 + \norm{r_k(\gammah)}^2,
  \label{eq:gamma:gammah}
\end{align}
where to obtain the second equality we have used the orthogonality principle with fact that $(\gamma - \gammah) \perp \sspc{\cX} \otimes \sspc{\cY}$ and 
$(\gammah - \gamma') \in \sspc{\cX} \otimes \sspc{\cY}$. Note that the inequality in \eqref{eq:gamma:gammah} holds with equality if and only if $\gamma' = \mdnl{k}(\gammah)$, which establishes \eqref{eq:lora:gammah}.

We then establish $\mdn{k}(\gamma|\sspc{\cX}, \sspc{\cY}) = \mdn{k}(\gammah)$ by induction. 
To begin, set $k = 1$ in \eqref{eq:lora:gammah}, and the right hand side becomes $\md(\gamma|\sspc{\cX}, \sspc{\cY})$, which implies
 $\md(\gamma|\sspc{\cX}, \sspc{\cY}) = \md(\gammah)$, i.e.,  $\mdn{1}(\gamma|\sspc{\cX}, \sspc{\cY}) = \mdn{1}(\gammah)$. As the inductive hypothesis, suppose
we have
\begin{align}
  \mdn{i}(\gamma|\sspc{\cX}, \sspc{\cY}) = \mdn{i}(\gammah),  \quad i = 1, \dots, m.
  \label{eq:ind:assumption}
\end{align}
From \eqref{eq:mdn:constrain:def}, we have
\begin{align}
  \mdn{m+1}(\gamma|\sspc{\cX}, \sspc{\cY})
  &= \md\left(\gamma - \sum_{i = 1}^{m}\mdn{i}(\gamma|\sspc{\cX}, \sspc{\cY}) \middle|\sspc{\cX}, \sspc{\cY}\right)\notag\\
  &=
\md\left(\gamma - \sum_{i = 1}^{m}\mdn{i}(\gammah) \middle|\sspc{\cX}, \sspc{\cY}\right)\label{eq:mdn:1}\\
  &= \md\left(
\proj{\gamma - \sum_{i = 1}^{m}\mdn{i}(\gammah)}{\sspc{\cX} \otimes \sspc{\cY}}
    \right)\label{eq:mdn:2}\\
  &= \md\left(
\gammah - \sum_{i = 1}^{m}\proj{\mdn{i}(\gammah)}{\sspc{\cX} \otimes \sspc{\cY}}
    \right)\label{eq:mdn:3}\\
  &= \md\left(
\gammah - \sum_{i = 1}^{m}\mdn{i}(\gammah)
    \right) = \mdn{m+1}(\gammah),
\end{align}
where \eqref{eq:mdn:1}--\eqref{eq:mdn:2} follow from the inductive assumption \eqref{eq:ind:assumption}, where \eqref{eq:mdn:3} follows from the linearity of projection operator.
To obtain the first equality of
\eqref{eq:mdn:3}, we have again applied the assumption \eqref {eq:ind:assumption}: for $ i = 1, \dots, m$,
$\mdn{i}(\gammah) = \mdn{i}(\gamma|\sspc{\cX}, \sspc{\cY}) \in \sspc{\cX} \otimes \sspc{\cY}$.

Finally, $\mdnl{k}(\gamma|\sspc{\cX}, \sspc{\cY}) = \mdnl{k}(\gammah)$ can be readily obtained by definition.\hfill\qed

\subsection{Proof of \propref{prop:max:corr:const}}
\label{app:prop:max:corr:const}
It is easy to verify that $\corr(\fh_i^*, \gh_i^*) = 
\bip{\lpmi_{X; Y}}{\fh_i^* \otimes \gh_i^*} 
= \norm{\mdn{i}(\lpmi'_{X; Y})} = \sigmah_i^*$, where $\lpmi'_{X; Y} = \proj{\lpmi_{X; Y}}{\sspc{\cX}\otimes \sspc{\cY}}$.  %

 From \factref{fact:spec}, we have
{$ (\fh_i^*, \gh_i^*) = \argmax_{(f, g) \in \cDh_i'}\, \bip{\lpmi'_{X; Y}}{f\otimes g}
$ for each $i \geq 1$, where we have defined each $\cDh_i'$ as
\begin{align*}
  \cDh_i' \defeq \{(f, g) \in  \spc{\cX} \times \spc{\cY}\colon \norm{f} = \norm{g} = 1 \text{ and } \ip{f}{\fh_j^*} = \ip{g}{\gh_j^*} = 0\text{ for all }j \in [i - 1]\}.
\end{align*}
Therefore, for each $i$ and $(f, g) \in \cDh_i \subset \cDh_i'$, we have
\begin{align*}
  \corr(f_i, g_i)
  = \ip{\lpmi_{X; Y}}{f_i \otimes g_i}
  = \bip{\lpmi'_{X; Y}}{f_i \otimes g_i} \leq \sigmah_i^* = \corr(\fh_i^*, \gh_i^*),
\end{align*}
where the second equality follows from that $f_i \otimes g_i \in \sspc{\cX}\otimes \sspc{\cY}$. Hence, we obtain
$(\fh_i^*, \gh_i^*) = \argmax_{(f, g) \in \cDh_i} \corr(f, g)$.\hfill\qed
}

\subsection{Proof of \propref{prop:pred:est}}
\label{app:prop:pred:est}
The result of $\norm{\lpmi_{X; Y}}$ directly follows from \proptyref{propty:exp}. In addition,
  \begin{align*}
    f^\T(x) g(Y) = \lpmi_{X; Y}(x, y) = \frac{P_{X, Y}(x, y) - P_X(x)P_Y(y)}{P_{X}(x)P_{Y}(y)}, \quad \text{for all }(x, y)\in \cX \times \cY,
  \end{align*}
  which implies $P_{Y|X}(y|x) =  P_Y(y) \left(1 + f^\T(x) g(y)\right)$, i.e., \eqref{eq:p:ygx}.
  
  Therefore, for all $\psi \in \spcn{\cY}{d}$, we have
  \begin{align*}
    \E{\psi(Y)|X = x} = \sum_{y \in \cY}P_{Y|X}(y|x)\psi(y)
                        &= \sum_{y \in \cY}P_{Y}(y)(1 + f^\T(x)g(y)) \psi(y)\\
                        &= \sum_{y \in \cY}P_{Y}(y)\psi(y) + \sum_{y \in \cY}P_{Y}(y)(\psi(y)g^\T(y)) f(x)\\
                        &= \E{\psi(Y)} +  \La_{\psi, g}  f(x),
  \end{align*}
  which gives \eqref{eq:estimation}. \hfill\qed

\subsection{Proof of \propref{prop:est:k}}
\label{app:prop:est:k}
We have
    \begin{align*}
    \E{\psi(Y)|X = x} &= \sum_{y \in \cY}P_{Y|X}(y|x)\psi(y)\\
                        &= \sum_{y \in \cY}P_{Y}(y)(1 + \lpmi_{X; Y}(x, y)) \psi(y)\\
                        &= \sum_{y \in \cY}P_{Y}(y)[1 + \mdnl{k}(\lpmi_{X; Y})](x, y) \psi(y) + 
\sum_{y \in \cY}P_{Y}(y)\sum_{i > k} \sigma_i^* f_i^*(x) g_i^*(y) \psi(y)
      \\
      &=\sum_{y \in \cY}P_{Y}(y)(1 + f^\T(x)g(y)) \psi(y)\\
                        &= \sum_{y \in \cY}P_{Y}(y)\psi(y) + \sum_{y \in \cY}P_{Y}(y)(\psi(y)g^\T(y)) f(x)\\
                        &= \E{\psi(Y)} +  \La_{\psi, g}  f(x),
  \end{align*}
where the fourth equality follows from the fact that
\begin{align*}
  \sum_{y \in \cY}P_{Y}(y)\sum_{i > k} \sigma_i^* f_i^*(x) g_i^*(y) \psi(y)
=  \sum_{i > k} \sigma_i^* f_i^*(x) \E{g_i^*(Y) \psi(Y)} = 0,
\end{align*}
due to $\psi_j \in \{\cst{\cY}, g_{1}^*, \dots, g_k^*\}$ for each $j \in [d]$.\hfill\qed

\subsection{Proof of \proptyref{propty:dnn:center}}
\label{app:propty:dnn:center}
The property is equivalent to  $\llog(f, g) = \llog(f + \vec{u}, g + \vec{v})$ for all $\vec{u}, \vec{v} \in \mathbb{R}^k$. Hence, it suffices to prove that 
\begin{align}
  \llog(f + \vec{u}, g + \vec{v}) \leq \llog(f, g), \qquad \text{for all $\vec{u}, \vec{v} \in \mathbb{R}^k$}.
\end{align}

Note that for all $b \in \spc{\cY}$, since
\begin{align}
    \left(f(x) + \vec{u}\right)^\T \left(g(y) + \vec{v}\right) + b(y)
  = f(x) \cdot g(y) + \vec{v}^\T f(x) +
  \vec{u}^\T g(y) + b(y) + \vec{u}^\T \vec{v},
\end{align}
we have [cf. \eqref{eq:Pt}]  $\Pt_{Y|X}^{(f + \vec{u}, g + \vec{v}, b)} = 
  \Pt_{Y|X}^{(f, g, b + \vec{u}^\T g)}$,
which implies that 
 $ \llog(f + \vec{u}, g + \vec{v}, b) = \llog(f, g, b + \vec{u}^\T g)$.

Therefore, we obtain
\begin{align}
  \llog(f + \vec{u}, g + \vec{v}) 
  = \max_{b \in \spc{\cY}}\llog(f + \vec{u}, g + \vec{v}, b)
  \leq \max_{b \in \spc{\cY}}\llog(f, g, b + \vec{u}^\T g) \leq \llog(f, g).
\end{align}
\hfill\qed

\subsection{Proof of \propref{prop:dnn}}
\label{app:prop:dnn}
  We first prove a useful lemma.
  \begin{lemma}
    \label{lem:p:q}
    Suppose $p > 0, q> 0$, $p + q = 1$, then we have
    \begin{align*}
      \log \left(p \cdot \exp \left(\frac{u}{p}\right) + q \cdot \exp \left(-\frac{u}{q}\right)\right)
      \geq
      \min\left\{
      u^2, u_0 |u|
      \right\}, \quad \text{for all $u \in \mathbb{R}$},
    \end{align*}
    where $u_0 \defeq \frac{\ln 2}{3} \cdot \min\{p, q\}$.
  \end{lemma}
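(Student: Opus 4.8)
The plan is to recognize the left-hand side as the cumulant generating function of a simple zero-mean random variable and to exploit its convexity. Define $F(u) \defeq \log\bigl(p\exp(u/p) + q\exp(-u/q)\bigr)$. Since $F$ is a log-sum-exp of two affine functions of $u$, it is convex; moreover $F(0) = \log(p+q) = 0$, and a direct differentiation gives $F'(u) = \frac{e^{u/p}-e^{-u/q}}{pe^{u/p}+qe^{-u/q}}$, so $F'(0) = 0$ (equivalently, the underlying two-point law placing mass $p$ at $1/p$ and mass $q$ at $-1/q$ has mean $p\cdot(1/p)+q\cdot(-1/q)=0$). Because $\min\{u^2, u_0|u|\}$ equals $u^2$ on $\{|u|\le u_0\}$ and equals $u_0|u|$ on $\{|u|\ge u_0\}$, it suffices to prove the two bounds $F(u)\ge u^2$ for $|u|\le u_0$ and $F(u)\ge u_0|u|$ for $|u|\ge u_0$ separately.

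For the near-origin regime I would establish a uniform lower bound on the second derivative. A short computation gives $F''(t) = \frac{1}{pq}\cdot \frac{e^{-s}}{(p+qe^{-s})^2}$, where $s \defeq t/(pq)$ (this is the variance of the tilted two-point law divided by $(pq)^2$). The key elementary estimate is $p + qe^{-s}\le e^{(-s)_+}$, which holds because $p+qe^{-s}\le p+q=1$ when $s\ge0$ and $p+qe^{-s}\le (p+q)e^{-s}=e^{-s}$ when $s\le0$; substituting yields $F''(t)\ge \frac{1}{pq}e^{-|s|}$. For $|t|\le u_0$ we have $|s| = |t|/(pq)\le \frac{\ln2}{3}\cdot\frac{\min\{p,q\}}{pq} = \frac{\ln 2}{3\max\{p,q\}}\le \frac{2\ln2}{3}$, whence $e^{-|s|}\ge 2^{-2/3}\ge \tfrac12$, and therefore $F''(t)\ge \frac{1}{2pq}\ge 2$ using $pq\le\tfrac14$. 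Taylor's formula with integral remainder, $F(u) = u^2\int_0^1(1-\theta)F''(\theta u)\,d\theta$ (the zeroth- and first-order terms vanish), then gives $F(u)\ge 2u^2\int_0^1(1-\theta)\,d\theta = u^2$ for all $|u|\le u_0$.

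For the far regime I would feed the endpoint bound $F(\pm u_0)\ge u_0^2$, which is the previous paragraph evaluated at $u=\pm u_0$, into convexity. For $u\ge u_0$, writing $u_0 = \frac{u_0}{u}\,u + \bigl(1-\frac{u_0}{u}\bigr)\cdot 0$ and using $F(0)=0$ and convexity gives $F(u_0)\le \frac{u_0}{u}F(u)$, i.e. $F(u)\ge \frac{u}{u_0}F(u_0)\ge u_0 u = u_0|u|$; the case $u\le -u_0$ is identical with $-u_0$ in place of $u_0$ and $\lambda=u_0/|u|$. Combining the two regimes establishes $F(u)\ge \min\{u^2, u_0|u|\}$ for every $u$. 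I expect the only delicate point to be the second paragraph, i.e. making the quadratic lower bound $F''\ge 2$ on $[-u_0,u_0]$ quantitative; the factor-of-two slack in $F''(0)=1/(pq)\ge 4$ is precisely what the choice $u_0 = \frac{\ln 2}{3}\min\{p,q\}$ is calibrated to preserve, and the remaining convexity steps are routine.
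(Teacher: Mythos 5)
Your proof is correct and follows essentially the same strategy as the paper's: establish the uniform curvature bound $F''\ge 2$ on $[-u_0,u_0]$ (calibrated by the choice of $u_0$), deduce the quadratic bound there by Taylor expansion, and extend linearly to $|u|\ge u_0$ by convexity. The only differences are cosmetic --- you lower-bound $F''$ via the substitution $s=t/(pq)$ and the estimate $p+qe^{-s}\le e^{(-s)_+}$ where the paper uses a Cauchy--Schwarz-type bound on the numerator of $h''$, and you use the chord through the origin for the far regime where the paper uses the tangent line at $\pm u_0$.
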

  \begin{proof}[Proof of \lemref{lem:p:q}]
    Let $p_{\min} \defeq \min\{p, q\}$.
    $h(u)\defeq   \log \left(p \cdot \exp \left(p^{-1}u\right) + q \cdot \exp \left(-q^{-1}u\right)\right).$
    Then, we have
    \begin{align*}
      h'(u) =  \frac{\exp \left(p^{-1}u\right) -  \exp \left(-q^{-1}u\right)}{ p \cdot \exp \left(p^{-1}u\right) + q \cdot \exp \left(-q^{-1}u\right)}
    \end{align*}
    \begin{align*}
      h''(u) &=   \left[p \cdot \exp \left(p^{-1}u\right) + q \cdot \exp \left(-q^{-1}u\right)\right]^{-2}\\
             &\quad\cdot \left[\left(\frac1p \exp \left(\frac{u}p\right) + \frac1q \exp \left(-\frac{u}{q}\right)\right)\cdot  \left(p \exp \left(\frac{u}p\right) + q \exp \left(-\frac{u}q\right)\right)
               - \left[\exp \left(\frac{u}p\right)
               -  \exp \left(-\frac{u}q\right)\right]^2\right]\\
             &\geq \frac{\left[\exp \left(p^{-1}u\right) +  \exp \left(-q^{-1}u\right)\right]^2- \left[\exp \left(p^{-1}u\right) -  \exp \left(-q^{-1}u\right)\right]^2}{ \left[p \cdot \exp \left(p^{-1}u\right) + q \cdot \exp \left(-q^{-1}u\right)\right]^2}\\
             &= 4 \cdot \frac{\exp\left((p^{-1} - q^{-1}) \cdot u\right)}{ \left[p \cdot \exp \left(p^{-1}u\right) + q \cdot \exp \left(-q^{-1}u\right)\right]^2}.
    \end{align*}

    Moreover, for all $|u| \leq u_0$, we have 
    \begin{gather*}
      \exp\left((p^{-1} - q^{-1}) \cdot u\right) \geq 
      \exp\left(- |p^{-1} - q^{-1}| \cdot u_0\right)
      \geq  \exp\left(- \frac{u_0}{p_{\min}}\right)\\
      p \cdot \exp \left(p^{-1}u\right) + q \cdot \exp \left(-q^{-1}u\right) \leq \exp\left(\frac{u_0}{p_{\min}}\right).
    \end{gather*}
    As a result, for all $|u| \leq u_0$, we have 
      $h''(u) \geq 4 \exp\left(- \frac{3u_0}{p_{\min}}\right) = 2$.

    Therefore,
      $h'(u_0) \geq h'(0) +  2 \cdot (u_0 - 0) = 2 u_0 > u_0$, and similarly, $-h'(-u_0) = h'(0) - h'(-u_0)
    \geq 2 u_0$,
    i.e., 
      $h'(-u_0) \leq - 2u_0 \leq -u_0$.
    Moreover, for all $|u| \leq u_0$,
      $h(u) \geq h(0) + h'(0) \cdot u + \frac{1}{2} u^2 \cdot 2
      = u^2.$
    Therefore, for all $u > u_0$, 
      $h'(u) \geq h'(u_0) > u_0$,
    which implies that
      $h(u) \geq h(u_0) + u_0(u - u_0) \geq u_0^2 + u_0u - u_0^2 = u_0u$.
    Similarly, we have $h(u) \geq -u_0u$ for all $u < -u_0$.
  \end{proof}
  Proceeding to the proof of \propref{prop:dnn}, we consider zero-mean $k$-dimensional $f$, $g$. Without loss of generality, we assume $b \in \spc{\cY}$ satisfies $\E{b(Y)} = -H(Y)$. Then, let $a \in \spct{\cY}$ be $a(y) \defeq b(y) - \log P_Y(y)$, and define 
 $\gamma \in \spc{\cX \times \cY}$ as $\gamma(x, y) \defeq f(x) \cdot g(y) + a(y)$.
 Note that since
    $$\exp(f(x) \cdot g(y) + b(y)) = P_Y(y)\exp(f(x) \cdot g(y) + a(y)) = P_Y(y) \exp(\gamma(x, y)), $$
  we have
  \begin{align*}
    \Pt^{(f, g, b)}_{Y|X}(y|x)
    = \frac{\exp(f(x) \cdot g(y) + b(y))}{\sum_{y' \in \cY}\exp(f(x) \cdot g(y') + b(y'))}
    &= \frac{P_Y(y)\exp(\gamma(x, y))}{\sum_{y' \in \cY}P_Y(y')\exp(\gamma(x, y'))}.
  \end{align*}

  Therefore,
  \begin{align}
    &\llog(f, g, b)\notag\\
    &\quad= \Ed{(\Xh, \Yh)\sim P_{X}P_{Y}}{(\lpmi_{X;Y}(\Xh, \Yh) + 1) \cdot \log\Pt^{(f, g, b)}_{Y|X}(\Yh|\Xh)}\notag\\
    &\quad= \Ed{(\Xh, \Yh)\sim P_{X}P_{Y}}{(\lpmi_{X;Y}(\Xh, \Yh) + 1) \cdot \left(\log P_Y(\Yh) + \gamma(\Xh, \Yh) - \log \sum_{y' \in \cY}P_Y(y')\exp(\gamma(\Xh, y'))\right)}\notag\\
    &\quad= -H(Y) + \ip{\lpmi_{X; Y}}{\gamma} - \Ed{\Xh \sim P_X}{\log \sum_{y' \in \cY}P_Y(y')\exp(\gamma(\Xh, y'))}.
      \label{eq:likelihood}
  \end{align}

  As a result, for all $(f, g, b)$ that satisfies
$\llog(f, g, b) \geq -H(Y)$, we have
  \begin{align}
    \ip{\lpmi_{X; Y}}{\gamma} \geq \Ed{\Xh \sim P_X}{\log \sum_{y' \in \cY}P_Y(y')\exp(\gamma(\Xh, y'))}.
    \label{eq:ip:lb}
  \end{align}
  In addition, note that for all $x \in \cX$ and $y \in \cY$, we have
  \begin{align*}
    \sum_{y' \in \cY}P_Y(y')\exp(\gamma(x, y'))
    &= P_Y(y) \exp(\gamma(x, y)) + (1 - P_Y(y)) 
\sum_{y' \in \cY \setminus \{y\}}\frac{P_Y(y')}{1 - P_Y(y)}\exp(\gamma(x, y'))\\
    &\geq P_Y(y) \exp(\gamma(x, y)) + (1 - P_Y(y)) \exp\left(- \frac{P_Y(y)}{1 - P_Y(y)} \cdot \gamma(x, y)\right).
  \end{align*}
  where the inequality follows from Jensen's inequality and $\sum_{y \in \cY} P_Y(y)\gamma(x, y) = 0$.

  Let us define
  \begin{align*}
     q_X \defeq \min_{x \in \cX} P_X(x) > 0, \quad q_Y \defeq \min_{y \in \cY} P_Y(y) > 0.
  \end{align*}
  Then, from \lemref{lem:p:q} we have
  \begin{align*}
    \log \sum_{y' \in \cY}P_Y(y')\exp(\gamma(x, y'))
    &\geq \log \left[P_Y(y) \exp(\gamma(x, y)) + (1 - P_Y(y)) \exp\left(- \frac{P_Y(y)}{1 - P_Y(y)} \cdot \gamma(x, y)\right)\right]\\
    &\geq \min\left\{
      (P_Y(y)\gamma(x, y))^2, \frac{\ln 2 }{3} \cdot q_Y |P_Y(y)\gamma(x, y)|
      \right\}\\
    &\geq  \frac{\ln 2 \cdot q_Y^2 }{3} \cdot \min\left\{
      (\gamma(x, y))^2,  |\gamma(x, y)|\right\},
  \end{align*}
  which implies that
  \begin{align}
    \Ed{\Xh \sim P_X}{\log \sum_{y' \in \cY}P_Y(y')\exp(\gamma(\Xh, y'))}
    &\geq P_{X}(x) \cdot \log \sum_{y' \in \cY}P_Y(y')\exp(\gamma(x, y'))\notag\\
    &\geq \frac{\ln 2 \cdot q_X q_Y^2 }{3} \cdot \min\left\{
      (\gamma(x, y))^2,  |\gamma(x, y)|\right\}.
      \label{eq:phi:lb}
  \end{align}
  
  On the other hand, since $\norm{\lpmi_{X; Y}} = O(\eps)$, there exists a constant $C > 0$ such that $\norm{\lpmi_{X; Y}} \leq C \eps$. Therefore,
  \begin{align}
    \ip{\lpmi_{X; Y}}{\gamma} \leq \norm{\lpmi_{X; Y}} \cdot \norm{\gamma} \leq \gamma_{\max} \cdot \eps,
    \label{eq:ip:ub}
  \end{align}
  where $\gamma_{\max} \defeq \max_{x\in\cX, y\in \cY}|\gamma(x, y)|$.

  Hence, by combining \eqref{eq:ip:lb}, \eqref{eq:phi:lb} and \eqref{eq:ip:ub}, we obtain
  \begin{align}
    \frac{\ln 2 \cdot q_X q_Y^2 }{3} \cdot \min\left\{
    \gamma_{\max}^2,  \gamma_{\max}\right\} \leq C \cdot  \gamma_{\max} \cdot \eps,
    \label{eq:phi:max}
  \end{align}
  where we have taken $(x, y) = \argmax_{x', y'} |\gamma(x', y')|$ in \eqref{eq:phi:lb}. 

  From \eqref{eq:phi:max}, if $\eps < \frac{\ln 2}{3} \cdot \frac{q_X q_Y^2 }{3C}$
  we have $\gamma_{\max} < \eps$. This implies that
  \begin{align}
    |\gamma(x, y)| < \eps,\quad\text{for all }x\in \cX, y \in \cY,
    \label{eq:local:phi}
  \end{align}
  and we obtain $\norm{\gamma} < \eps$. In addition, note that since
    $\norm{\gamma}^2 =  \norm{f \otimes g + a}^2
    = \norm{f \otimes g}^2 + \norm{a}^2$,
   we obtain $\norm{f \otimes g} = O(\eps)$. %

 From \eqref{eq:local:phi}, we have
 \begin{align*}
   \sum_{y' \in \cY}P_Y(y')\exp(\gamma(x, y'))
   &=  \sum_{y' \in \cY}P_Y(y')\left( 1 + \gamma(x, y') + \frac{(\gamma(x, y'))^2}{2} + o(\eps^2)\right)\\
   &= 1 + \frac12\sum_{y' \in \cY}P_Y(y') (\gamma(x, y'))^2 + o(\eps^2)\\
   &= 1 + \frac12\cdot\Ed{\Yh \sim P_Y}{\gamma(x, \Yh)}^2 + o(\eps^2).
 \end{align*}
Therefore, 
\begin{align*}
  \Ed{\Xh \sim P_X}{\log \sum_{y' \in \cY}P_Y(y')\exp(\gamma(\Xh, y'))}
  = \frac12\cdot \Ed{(\Xh, \Yh) \sim P_XP_Y}{\gamma(\Xh, \Yh)}^2 + o(\eps^2)
  = \frac12 \cdot \norm{\gamma}^2 + o(\eps^2),
\end{align*}
and the likelihood \eqref{eq:likelihood} becomes
  \begin{align}
    \llog(f, g, b) %
    &= -H(Y) + \ip{\lpmi_{X; Y}}{\gamma} - \Ed{\Xh \sim P_X}{\log \sum_{y' \in \cY}P_Y(y')\exp(\gamma(\Xh, y'))}\notag\\
    &= -H(Y) + \ip{\lpmi_{X; Y}}{\gamma} - \frac12 \cdot \norm{\gamma}^2 + o(\eps^2)\notag\\
    &= \frac{1}{2} \cdot \left(\norm{\lpmi_{X; Y}}^2 - \norm{\lpmi_{X; Y} - \gamma}^2\right) -H(Y) + o(\eps^2)\notag\\
    &= \frac{1}{2} \cdot \left(\norm{\lpmi_{X; Y}}^2 - \norm{\lpmi_{X; Y} - f \otimes g - a }^2\right) -H(Y) + o(\eps^2)\notag\\
    &= \frac{1}{2} \cdot \left(\norm{\lpmi_{X; Y}}^2 - \norm{\lpmi_{X; Y} - f \otimes g}^2 - \norm{a }^2\right) -H(Y) + o(\eps^2),
      \label{eq:l:fgb} 
  \end{align}
where  the last equality follows from the fact that
  \begin{align*}
    \norm{\lpmi_{X; Y} - f \otimes g - a }^2 
    &=  \norm{\lpmi_{X; Y} - f \otimes g}^2 + \norm{a }^2,
  \end{align*}
due to the orthogonality $(\lpmi_{X; Y} - f \otimes g) \perp \spc{\cY} \ni a$.

Finally, from \eqref{eq:l:fgb}, for given $f, g$, $\llog(f, g, b)$ is maximized when $\norm{a} = o(\eps)$. Therefore, we have
\begin{align*}
  \llog(f, g) = \max_{b \in \spc{\cY}}\llog(f, g, b) 
    &= \frac{1}{2} \cdot \left(\norm{\lpmi_{X; Y}}^2 - \norm{\lpmi_{X; Y} - f \otimes g}^2\right) -H(Y) + o(\eps^2),
\end{align*}
which gives \eqref{eq:l:fg}. \hfill\qed

  \subsection{Proof of \thmref{thm:sspc:opt} }
  \label{app:thm:sspc:opt}
  Throughout this proof, we consider
  \begin{align*}
    L_1(\fb, \gb, f, g) \defeq \bnorm{\lpmi_{X; Y} - \fb \otimes \gb}^2,\quad
L_2(\fb, \gb, f, g) \defeq \bnorm{\lpmi_{X; Y} - \fb \otimes \gb - f \otimes g}^2
  \end{align*}
  defined on the domain
     $\cD \defeq \{ (\fb, \gb, f, g) \colon \fb \in \sspcn{\cX}{\kb}, \gb \in \spcn{\cY}{\kb}, f  \in \spcn{\cX}{k}, g   \in \spcn{\cY}{k}\}$,
and let %
  \begin{align*}
    \cD_1^* \defeq \argmin_{(\fb, \gb, f, g) \in \cD} L_1(\fb, \gb, f, g),\quad
    \cD_2^* \defeq \argmin_{(\fb, \gb, f, g) \in \cD} L_2(\fb, \gb, f, g).
  \end{align*}
  We also define $\gammab, \gamma \in \spc{\cX\times \cY}$, as
  \begin{align}
    \gammab \defeq \proj{\lpmi_{X; Y}}{\sspc{\cX} \otimes \spc{\cY}},\quad
 \gamma \defeq \lpmi_{X; Y} - \gammab =  \proj{\lpmi_{X; Y}}{(\spc{\cX} \orthm \sspc{\cX}) \otimes \spc{\cY}}.
  \end{align}

  Then, it suffices to establish that
  \begin{align}
    \cD^* \defeq \cD_1^* \cap \cD_2^* =
    \{(\fb, \gb, f, g) \in \cD \colon 
  \fb \otimes \gb = \gammab,
  f \otimes g &= \mdnl{k}(\gamma)
    \}.
    \label{eq:cD:opt}
  \end{align}

  To see this, note that the common solution $\cD^*$ \eqref{eq:cD:opt} coincides with the optimal solution \eqref{eq:sspc:opt} to be established. From \propref{prop:joint:opt}, since $\cD^* \neq \varnothing$, we have
  \begin{align}
    \cD^* = \argmin_{(\fb, \gb, f, g) \in \cD} L_1(\fb, \gb, f, g) +  L_2(\fb, \gb, f, g).
    \label{eq:opt:L1L2}
  \end{align}
  Furthermore, from the definition of the H-score (cf. \defref{def:H}), we have
  \begin{align*}
    \Hs\left(
      \begin{bmatrix}
        \fb\,\\
        f\,
      \end{bmatrix}
    ,
    \begin{bmatrix}
      \gb\\
      g
    \end{bmatrix}; \cpi
    \right)
     &= \Hs(\fb, \gb) + \Hs\left(
      \begin{bmatrix}
        \fb\,\\
        f\,
      \end{bmatrix}
    ,
    \begin{bmatrix}
      \gb\\
      g
    \end{bmatrix}
    \right)\\
    &= \frac{1}{2} \cdot \left(\norm{\lpmi_{X; Y}}^2 - L_1(\fb, \gb, f, g)\right)
      + \frac{1}{2} \cdot \left(\norm{\lpmi_{X; Y}}^2 - L_2(\fb, \gb, f, g)\right)\\
    &=  \norm{\lpmi_{X; Y}}^2 - \frac12 \left(L_1(\fb, \gb, f, g) + L_2(\fb, \gb, f, g)\right),
  \end{align*}
  which implies that
  \begin{align*}
    \argmax_{(\fb, \gb, f, g) \in \cD} \Hs\left(
      \begin{bmatrix}
        \fb\,\\
        f\,
      \end{bmatrix}
    ,
    \begin{bmatrix}
      \gb\\
      g
    \end{bmatrix}; \cpi
    \right)  = 
    \argmin_{(\fb, \gb, f, g) \in \cD} L_1(\fb, \gb, f, g) + L_2(\fb, \gb, f, g) = \cD^*.
 \end{align*}

 It remains only to establish \eqref{eq:cD:opt}. From \propref{prop:opt:equiv}, it suffices to establish that $t_2 = t_2^*$ and (cf. \eqref{eq:cD:opt})
\begin{align}
  \cD_2 = \{(\fb, \gb, f, g) \in \cD \colon 
    \fb \otimes \gb = \gammab,
  f \otimes g = \gamma \},
  \label{eq:cD2:goal}
\end{align}
where we have defined   
 \begin{align*}
   t_2^* \defeq  \min_{(\fb, \gb, f, g) \in \cD} L_2(\fb, \gb, f, g),\quad t_2  \defeq \min_{(\fb, \gb, f, g)\in \cD_1^*} L_2(\fb, \gb, f, g),\quad   
   \cD_2 \defeq \argmin_{(\fb, \gb, f, g) \in \cD_1^*} L_2(\fb, \gb, f, g).
 \end{align*}
  To this end, let us define $\hb \in \sspcn{\cX}{k}$ and $\hb \in (\spc{\cX} \orthm \sspc{\cX})^k$ as $\hb \defeq \proj{f}{\sspc{\cX}}$, and $h \defeq f - \hb =  \proj{f}{\spc{\cX} \orthm \sspc{\cX}}$. Then, we have
  \begin{align}
    L_2(\fb, \gb, f, g) 
    &= \bnorm{\lpmi_{X; Y} - \fb \otimes \gb - f \otimes g}^2 \\
    &= \bnorm{\gammab
      +  \gamma
       - \fb \otimes \gb - (\hb + h) \otimes g}^2 \\
    &= \bnorm{\gammab
- \fb \otimes \gb - \hb \otimes g}^2 + 
\bnorm{\gamma   - h \otimes g}^2 \label{eq:ieq:L2:1}\\
    &\geq \bnorm{\gamma
      - h \otimes g}^2 \\
    &\geq \bnorm{r_k(\gamma)}^2 %
      \label{eq:ieq:L2}
  \end{align}
  where  \eqref{eq:ieq:L2:1} follows from the orthogonality principle, since
$\left(\gammab - \fb \otimes \gb - \hb \otimes g\right) \in \sspc{\cX} \otimes \spc{\cY}$ and $
\left(\gamma - h \otimes g\right)  \perp  \sspc{\cX} \otimes \spc{\cY}$.
Moreover, it is easily verified that the lower bound \eqref{eq:ieq:L2} is tight: all inequalities hold with equality for $(\fb, \gb, f, g)$ satisfying \eqref{eq:sspc:opt}.

Therefore, we have
\begin{align}
  t_2^* =  \min_{(\fb, \gb, f, g)\in \cD} L_2(\fb, \gb, f, g) =   \bnorm{r_k(\gamma)}^2
\end{align}

  On the other hand, since $\kb \geq \rank(\gammab)$, from \propref{prop:md:constraint}, we have
   $ \cD_1^* = \{(\fb, \gb, f, g)\colon \fb \otimes \gb = \gammab\}$. Hence, for all $(\fb, \gb, f, g) \in \cD_1^*$, we have
\begin{align*}
 L_2(\fb, \gb, f, g) =  
  \bnorm{\lpmi_{X; Y} - \fb \otimes \gb - f \otimes g}^2
=   \bnorm{\lpmi_{X; Y} - \gammab - f \otimes g}^2
=   \bnorm{\gamma - f \otimes g}^2
\geq   \bnorm{r_k(\gamma)}^2,
\end{align*}
where the inequality holds with equality if and only if $f \otimes g = \mdnl{k}(\gamma)$.

As a result,  $\cD_2  = \argmin_{(\fb, \gb, f, g)\in \cD_1^*} L_2(\fb, \gb, f, g) $ is given by \eqref{eq:cD2:goal}, and we have
\begin{gather*}
  t_2 = \min_{(\fb, \gb, f, g)\in \cD_1^*} L_2(\fb, \gb, f, g) = \bnorm{r_k(\gamma)}^2 = t_2^*.
\end{gather*}
\hfill\qed

\subsection{Proof of \thmref{thm:sspc:opt:n}}
\label{app:thm:sspc:opt:n}
To begin, we have
\begin{align*}
    \Hs\left(
    \begin{bmatrix}
      \fb\,\\
      f\,
    \end{bmatrix}
    ,
    \begin{bmatrix}
      \gb\\
      g
    \end{bmatrix}
    ; \cpir
  \right)
  &= \sum_{i = 1}^{\kb} \Hs(\fb_{[i]}, \gb_{[i]})
  + \sum_{i = 1}^{k}
    \Hs\left(
    \begin{bmatrix}
      \fb\\
      f_{[i]}
    \end{bmatrix}
    ,
    \begin{bmatrix}
      \gb\\
      g_{[i]}
    \end{bmatrix}
  \right)\\
  &= \sum_{i = 1}^{\kb-1} \Hs(\fb_{[i]}, \gb_{[i]})
    + \sum_{i = 1}^{k}
    \left(
    k^{-1} \cdot
    \Hs(\fb, \gb) + 
    \Hs\left(
    \begin{bmatrix}
      \fb\\
      f_{[i]}
    \end{bmatrix}
    ,
    \begin{bmatrix}
      \gb\\
      g_{[i]}
    \end{bmatrix}
  \right)\right).
\end{align*}
Note that for $\left( \begin{bmatrix}
      \fb\,\\
      f\,
    \end{bmatrix}
    ,
    \begin{bmatrix}
      \gb\\
      g
    \end{bmatrix}\right) \in \dom(\cpir)$ and each $i \in [\kb]$, $\Hs(\fb_{[i]}, \gb_{[i]})$ is maximized if and only if
  \begin{align}
    \fb_{[i]} \otimes \gb_{[i]} = \mdnl{i}(\lpmi_{X; Y}|\sspc{\cX} ,  \spc{\cY}).
    \label{eq:cpi:pf:1}
  \end{align}

In addition, from \thmref{thm:sspc:opt}, for each $i \in [k]$, the term
$\left( k^{-1} \cdot
    \Hs(\fb, \gb) + 
    \Hs\left(
    \begin{bmatrix}
      \fb\\
      f_{[i]}
    \end{bmatrix}
    ,
    \begin{bmatrix}
      \gb\\
      g_{[i]}
    \end{bmatrix}
  \right)\right)$ 
is maximized if and only if
\begin{align}
  \fb \otimes \gb = \proj{\lpmi_{X; Y}}{\sspc{\cX} \otimes \spc{\cY}},\qquad
  f_{[i]} \otimes g_{[i]} = \mdnl{i}(\proj{\lpmi_{X; Y}}{(\spc{\cX} \orthm \sspc{\cX}) \otimes \spc{\cY}}).
  \label{eq:cpi:pf:2}
\end{align}

Note that
\eqref{eq:sspc:opt:n} is the common solution of \eqref{eq:cpi:pf:1} and \eqref{eq:cpi:pf:2}. Hence, 
the proof is completed by applying \propref{prop:joint:opt}.
\hfill\qed

  \subsection{Proof of \propref{prop:pim:pic}}
  \label{app:prop:pim:pic}
  The relation $\lpmi_{X; S} = \llrt{\Pm_{X, S, Y}}$ can be directly verified from definition. To establish $\pim(\lpmi_{X; S, Y}) = \proj{\lpmi_{X; S, Y}}{\spc{\cX \times \cS}} = \lpmi_{X; S}$, 
  from \factref{fact:proj}, it suffices to show that %
  $(\lpmi_{X; S, Y} - \lpmi_{X; S}) \perp \spc{\cX \times \cS}$.

  To this end, note that 
  \begin{align*}
    \lpmi_{X; S, Y}(x, s, y) - \lpmi_{X; S}(x, s)
    &= \frac{P_{X, S, Y}(x, s, y) - \Pm_{X, S, Y}(x, s, y)}{R_{X, S, Y}(x, s, y)}\\
    &= \frac{P_{X, S, Y}(x, s, y) - P_{X|S}(x|s)P_{S}(s)P_{Y|S}(y|s)}{R_{X, S, Y}(x, s, y)}.
  \end{align*}  
  Therefore, for all $f \in \spc{\cX \times \cS}$, we have
  \begin{align*}
    \bip{\lpmi_{X; S, Y} - \lpmi_{X; S}}{f}%
      &= \sum_{x \in \cX, s \in \cS, y \in \cY} P_{X, S, Y}(x, s, y) f(x, s)%
      - \sum_{x \in \cX, s \in \cS, y \in \cY} P_{X|S}(x|s)P_{S}(s)P_{Y|S}(y|s)\cdot f(x, s)\\
    &= \Ed{P_{X, S}}{f(X, S)} - \Ed{P_{X, S}}{f(X, S)}\\
    &= 0.
  \end{align*}
  \hfill\qed

\subsection{Proof of \propref{prop:markov}}
\label{app:prop:markov}

Let $\gamma = \lpmi_{X; S, Y}^{(Q)}$, then the statement is equivalent to %
  \begin{align*}
     \gamma \in \spc{\cX \times \cS} \quad \iff \quad Q_{X, S, Y} = Q_{X|S}Q_{S}Q_{Y|S}.
  \end{align*}
  \paragraph{``$\implies$''} If $\gamma \in \spc{\cX \times \cS}$, then we have
  \begin{align}
    Q_{X, S, Y}(x, s, y)
    &= Q_X(x)Q_{S, Y}(s, y)\left(1 + \gamma(x, s)\right)\notag\\
    &= P_X(x)P_{S, Y}(s, y)\left(1 + \gamma(x, s)\right)\notag\\
    &= P_X(x)P_{S}(s)\left(1 + \gamma(x, s)\right) \cdot P_{Y|S}(y|s),\quad \text{for all $x, s, y$}.
      \label{eq:q:xsy}
  \end{align}
  Therefore,
  \begin{align}
    Q_{X, S}(x, s) = \sum_{y \in \cY}  Q_{X, S, Y}(x, s, y) = P_X(x)P_{S}(s)\left(1 + \gamma(x, s)\right).
      \label{eq:q:xs}
  \end{align}
  From \eqref{eq:q:xsy} and \eqref{eq:q:xs}, we obtain
  \begin{align}
    Q_{X, S, Y}(x, s, y) =  Q_{X, S}(x, s)P_{Y|S}(y|s) = Q_{X, S}(x, s)Q_{Y|S}(y|s) =  Q_{X|S}(x|s)Q_S(s)Q_{Y|S}(y|s).
    \label{eq:q:xsy:new}
  \end{align}

  \paragraph{``$\impliedby$''}
  It suffices to note that
  \begin{align*}
    \lpmi_{X; S, Y}^{(Q)}(x, s, y) 
    = \frac{Q_{X, S, Y}(x, s, y)}{Q_{X}(x)Q_{S, Y}(s, y)} - 1
    = \frac{Q_{X, S}(x, s)Q_{Y|S}(y|s)}{Q_{X}(x)Q_{S}(s) Q_{Y|S}(y|s)} - 1
    = \frac{Q_{X, S}(x, s)}{Q_{X}(x)Q_{S}(s)} - 1.
  \end{align*}
\hfill\qed

\subsection{Proof of \propref{prop:pred:est:side}}
\label{app:prop:pred:est:side}
The results on  $\norm{\lpmi_{X; S}}$ and $\norm{\lpmi_{X; Y|S}}$ directly follows from \proptyref{propty:exp}. To establish \eqref{eq:post:est}, note that from \propref{prop:pim:pic}, we have
\begin{align*}
    P_{X, S, Y}(x, s, y) -
   P_{X|S}(x|s)P_{S}(s)P_{Y|S}(y|s)%
    =  P_{X}(x)P_{S, Y}(s, y) \cdot  \lpmi_{X; Y|S}(x, s, y),
\end{align*}
which implies that
\begin{align}
    P_{Y|X, S}(y|x, s)
  &= P_{Y|S}(y|s)\cdot
    \left(1 +  \frac{P_X(x)P_S(s)}{P_{X, S}(x, s)}\cdot\lpmi_{X; Y|S}(x, s, y)\right)\notag\\
  &= P_{Y|S}(y|s)\cdot
    \left(1 +  \frac{1}{1 + \lpmi_{X;S}(x, s)}\cdot\lpmi_{X; Y|S}(x, s, y)\right)\label{eq:side:estimation:pf:1}\\
  &=P_{Y|S}(y|s) \cdot \left(1 + \frac{f^\T(x)  g(s, y)}{1 + \fb^{\,\T}(x)  \gb(s)} \right),                                              \label{eq:side:estimation:pf}
\end{align}
which further implies
 \eqref{eq:side:estimation}.\hfill\qed

\subsection{Proof of \propref{prop:posterior}}
\label{app:prop:posterior}
When $X$ and $(S, Y)$ are $\eps$-dependent, we have $\norm{\lpmi_{X; S, Y}} = O(\eps)$, and thus
\begin{align*}
  \lpmi_{X; S}(x, s) = O(\eps), \quad \lpmi_{X; Y|S}(x, s, y) = O(\eps).
\end{align*}
Therefore, we have
\begin{align*}
  \frac{1}{1 + \lpmi_{X; S}(x, s)} \cdot \lpmi_{X; Y|S}(x, s, y)
  = \left(1 - \lpmi_{X; S}(x, s)\right)\cdot \lpmi_{X; Y|S}(x, s, y) +  o(\eps)
  &= \lpmi_{X; Y|S}(x, s, y) +  o(\eps).
\end{align*}
Then, it follows from \eqref{eq:side:estimation:pf:1}  that
\begin{align*}
    P_{Y|X, S}(y|x, s)
= P_{Y|S}(y|s)\cdot
    \left(1 +\lpmi_{X; Y|S}(x, s, y)\right) + o(\eps).
\end{align*}

Finally, the proof is completed by using the decomposition \eqref{eq:dcmp:pic}.  \hfill\qed

\subsection{Proof of \thmref{thm:dnn:mulithead}}
\label{app:thm:dnn:mulithead}
For each $s, x, y$, let us define $R_{X, Y}^{(s)}(x, y) \defeq P_{X|S=s}(x) P_{Y|S=s}(y)$ and 
\begin{align}
  \lpmi^{(s)}_{X; Y}(x, y) 
  \defeq \frac{P_{X, Y|S = s}(x, y) - P_{X|S = s}(x) P_{Y|S = s}(y)}{P_{X|S = s}(x) P_{Y|S = s}(y)}.
\end{align}

In addition, we define $\vec{\mu}_s \in \mathbb{R}^k$ as $
\vec{\mu}_s \defeq \E{f(X)|S = s}$ for each $s \in \cS$. Also, let $\beta \in \spc{\cS \times \cY}$ be defined as $\beta(s, y) \defeq \vec{\mu}_s^\T g(s, y)$.

Then, for each $s \in \cS$, from
\eqref{eq:better:than:trivial:s} and \propref{prop:dnn} we have
\begin{align}
   |f^\T(x)g(s, y) - \beta(s, y)| = |(f(x) - \vec{\mu}_s)^\T g(s, y)| = O(\eps),\quad \text{for all $x, s, y$},
\label{eq:local:s:entry}
\end{align}
which implies that
\begin{align}
  \label{eq:fg:eps}
  \left|f^\T(x)g(s, y)\right| = O(\eps),\quad \text{for all $x, s, y$}.
\end{align}

In addition, from \proptyref{propty:dnn:center}, we have
\begin{align}
  &\llogsv{s}(f, g^{(s)})\notag\\
  &\qquad=
  \llogsv{s}(f - \vec{\mu}_s, g^{(s)})\notag\\
  &\qquad= \frac{1}{2} \cdot \left(\bnorm{\lpmi^{(s)}_{X; Y}}_{R_{X, Y}^{(s)}}^2 - \bnorm{\lpmi^{(s)}_{X; Y} - (f - \vec{\mu}_s) \otimes g^{(s)}}_{R_{X, Y}^{(s)}}^2
 \right)  - H(Y|S = s)  + o(\eps^2)\notag\\
 &\qquad=\bip{\lpmi^{(s)}_{X; Y}}{f \otimes g^{(s)}  - \vec{\mu}_s ^\T g^{(s)}}_{R_{X,Y}^{(s)}}
    -\frac{1}{2} \cdot \bnorm{f \otimes g^{(s)}   - \vec{\mu}_s^\T g^{(s)}}_{R_{X,Y}^{(s)}}^2 - H(Y|S = s) + o(\eps^2),
   \label{eq:log:s}
\end{align}
where $\ip{\,\cdot\,}{\,\cdot\,}_{R}$ and $\norm{\,\cdot\,}_{R}$ denote the inner product and corresponding induced norm on the function space, with respect to the metric distribution $R$.

For the first two terms in \eqref{eq:log:s}, we compute their expectations over $P_S$. For the first term, 
\begin{align}
  &\sum_{s \in \cS}P_S(s)\bip{\lpmi^{(s)}_{X; Y}}{f \otimes g^{(s)}   - \vec{\mu}_s ^\T g^{(s)}}_{R_{X,Y}^{(s)}}\notag\\  
  &\qquad=\sum_{x \in \cX, s \in \cS, y \in \cY}P_S(s) R_{X,Y}^{(s)}(x, y)\lpmi^{(s)}_{X; Y}(x, y) \left(f^\T(x) g^{(s)}(y) - \vec{\mu}_s ^\T g^{(s)}(y)\right)\notag\\
  &\qquad= \sum_{x \in \cX, s \in \cS, y \in \cY} P_X(x) P_{S, Y}(s, y)
    \lpmi_{X; Y|S}(x, s, y)  \cdot \left(f^\T(x) g(s, y) - \beta(s, y)\right)\notag\\
  &\qquad= \bip{\lpmi_{X; Y|S}}{f \otimes g} - 
\bip{\lpmi_{X; Y|S}}{\beta}\notag\\
  &\qquad= \bip{\lpmi_{X; Y|S}}{f \otimes g}
    \label{eq:e:s:1}
\end{align}
where to obtain the second equality we have used the facts that
\begin{align}
  \lpmi^{(s)}_{X; Y}(x, y) 
  &= \frac{P_{X, Y|S = s}(x, y) - P_{X|S = s}(x) P_{Y|S = s}(y)}{P_{X|S = s}(x) P_{Y|S = s}(y)}\notag\\
  &=\frac{P_{X, S, Y}(x, s, y) - \Pm_{X, S, Y}(x, s, y)}{P_X(x) P_{S, Y}(s, y)}\cdot \frac{P_X(x)}{P_{X|S = s}(x)}\notag\\
  &=\lpmi_{X; Y|S}(x, s, y) \cdot \frac{1}{1 + \lpmi_{X; S}(x, s)}
\end{align}
and
\begin{align}
  P_S(s) R_{X, Y}^{(s)}(x, y)
  = \Pm_{X, S, Y}(x, s, y)
  &= P_X(x)P_{S, Y}(s, y) \cdot (1 + \lpmi_{X;S}(x, s)),
    \label{eq:ps:r}
\end{align}
and where to obtain \eqref{eq:e:s:1} we have used the fact that $\lpmi_{X; Y|S} \perp \spc{\cS \times \cY} \ni \beta$.

For the second term of \eqref{eq:log:s}, we have
\begin{align}
&\sum_{s \in \cS}P_S(s)  \bnorm{f \otimes g^{(s)}    - \vec{\mu}_s^\T g^{(s)}}_{R_{X,Y}^{(s)}}^2\notag\\
&\qquad= \sum_{x \in \cX, s \in \cS, y \in \cY} \Pm_{X, S, Y}(x, s, y) \cdot
\left[(f(x) - \vec{\mu}_s)^\T g(s, y)\right]^2\\
&\qquad= \sum_{x \in \cX, s \in \cS, y \in \cY} P_{X}(x)P_{S, Y}(s, y) \cdot
\left[(f(x) - \vec{\mu}_s)^\T g(s, y)\right]^2 + o(\eps^2)\\
&\qquad= \sum_{x \in \cX, s \in \cS, y \in \cY} P_{X}(x)P_{S, Y}(s, y) \cdot
\left[f^\T(x)g(s, y) - \beta(s, y) \right]^2 + o(\eps^2)\\
&\qquad= \bbnorm{f \otimes g - \beta}^2 + o(\eps^2)\\
&\qquad= \bbnorm{f \otimes g}^2 + \bbnorm{\beta}^2 + o(\eps^2),
\label{eq:e:s:2}
\end{align}
where to obtain the second equality we have used
\eqref{eq:local:s:entry}, \eqref{eq:ps:r}, and the fact that $\norm{\lpmi_{X; S}} = O(\eps)$.
Furthermore, we can show that $\norm{\beta} = o(\eps)$. To see this, note that 
\begin{align*}
  \beta(s, y) = \vec{\mu}_s^\T g(s, y) &= \sum_{x \in \cX} P_{X|S}(x|s) f^\T(x)g(s, y)\\
              &= \sum_{x \in \cX} P_{X}(x) \cdot [ 1 + \lpmi_{X; S}(x, s) ] \cdot f^\T(x)g(s, y)\\
              &=\sum_{x \in \cX} P_{X}(x) \cdot \lpmi_{X; S}(x, s) \cdot f^\T(x)g(s, y).
\end{align*}
Then, from $\norm{\lpmi_{X; S}} = O(\eps)$ and \eqref{eq:fg:eps}, we obtain $|\beta(s, y)| = O(\eps^2)$ and $\norm{\beta} = O(\eps^2) = o(\eps)$.

Therefore, we can refine \eqref{eq:e:s:2} as
\begin{align}
  \sum_{s \in \cS}P_S(s)  \bnorm{f \otimes g^{(s)}    - \vec{\mu}_s^\T g^{(s)}}_{R_{X,Y}^{(s)}}^2
= \bbnorm{f \otimes g}^2 + o(\eps^2).
  \label{eq:e:s:2:n}
\end{align}

Combining
\eqref{eq:log:s},
\eqref{eq:e:s:1}, and \eqref{eq:e:s:2:n}, we have
\begin{align*}
  \llogs(f, g) = \sum_{s \in \cS} P_S(s)\cdot  \llogsv{s}(f, g^{(s)}) &=\fip{\lpmi_{X; Y|S}}{f\otimes g } - \frac12 \cdot\bnorm{f \otimes g}^2 - H(Y|S) + o(\eps^2)\\
&= \frac{1}{2} \cdot \left(\norm{\lpmi_{X; Y|S}}^2
 - \bnorm{\lpmi_{X; Y|S} - f \otimes g}^2
 \right)
  - H(Y|S) + o(\eps^2),
\end{align*}
which is maximized if and only if $f \otimes g = \mdnl{k}(\lpmi_{X; Y|S}) + o(\eps)$.
\hfill\qed

\subsection{Proof of \propref{prop:pib}}
\label{app:prop:pib}
Given any $h \in \spc{\cX_1 \times \cY} + \spc{\cX_2 \times \cY}$, we can represent $h = h^{(1)} + h^{(2)}$ with $h^{(i)} \in \spc{\cX_i \times \cY}, i = 1, 2$, i.e., $h(x_1, x_2, y) = h^{(1)}(x_1, y) + h^{(2)}(x_2, y)$.

Then, for any $Q_{X_1, X_2, Y} \in \cqb$, we have
\begin{align*}
  \ip{\lpmi^{(Q)}_{X_1, X_2; Y}}{h^{(i)}}
  &= \sum_{x_1 \in \cX_1, x_2 \in \cX_2, y \in \cY} P_{X_1, X_2}(x_1, x_2) P_Y(y) \lpmi^{(Q)}_{X_1, X_2; Y}(x_1, x_2, y) \cdot h^{(i)}(x_i, y) \\
  &= \sum_{x_1 \in \cX_1, x_2 \in \cX_2, y \in \cY} [Q_{X_1, X_2, Y}(x_1, x_2, y) - P_{X_1, X_2}(x_1, x_2) P_Y(y) ] \cdot h^{(i)}(x_i, y) \\
  &=\sum_{x_i \in \cX_i, y \in \cY} [P_{X_i, Y}(x_i, y) - P_{X_i}(x_i) P_Y(y) ] \cdot h^{(i)}(x_i, y)\\
  &=\ip{\lpmi_{X_i; Y}}{h^{(i)}} \qquad\qquad\text{for $i = 1, 2$.}
\end{align*}

As a result,
\begin{align*}
  \ip{\pib(\lpmi^{(Q)}_{X_1, X_2; Y})}{h}
  &=\ip{\lpmi^{(Q)}_{X_1, X_2; Y} - \pii(\lpmi^{(Q)}_{X_1, X_2; Y})}{h}
  \\
  &=\ip{\lpmi^{(Q)}_{X_1, X_2; Y}}{h}\\
  &=\ip{\lpmi^{(Q)}_{X_1, X_2; Y}}{h^{(1)}} + \ip{\lpmi^{(Q)}_{X_1, X_2; Y}}{h^{(2)}} \equiv \ip{\lpmi_{X_1; Y}}{h^{(1)}} + \ip{\lpmi_{X_2; Y}}{h^{(2)}},
\end{align*}
where the second equality follows from the fact that
$ \ip{\pii(\lpmi^{(Q)}_{X_1, X_2; Y})}{h} = 0$.

Hence, we obtain
$  \ip{\pib(\lpmi^{(Q)}_{X_1, X_2; Y}) - \pib(\llrt{P_{X_1, X_2, Y}})}{h} = 0$
for all $h \in  \spc{\cX_1 \times \cY} + \spc{\cX_2 \times \cY}$, which implies that $\pib(\lpmi^{(Q)}_{X_1, X_2; Y}) - \pib(\llrt{P_{X_1, X_2, Y}}) = 0$.\hfill\qed

\subsection{Proof of \propref{prop:pred:est:mm}}
\label{app:prop:pred:est:mm}

From the definition, we have
  \begin{align*}
  P_{X_1, X_2, Y}(x_1, x_2, y) 
    &= P_{X_1, X_2}(x_1, x_2)P_Y(y) \left(1 + 
      [\pib(\lpmi_{X_1, X_2; Y})](x_1, x_2, y) + [\pii(\lpmi_{X_1, X_2; Y})](x_1, x_2, y)\right)\\
    &= P_{X_1, X_2}(x_1, x_2)P_Y(y) \left[1 + 
\fb^{\,\T}(x_1, x_2)\gb(y) + f^\T(x_1, x_2)g(y)
\right],
\end{align*}
which implies \eqref{eq:p:mm:12}.

To obtain \eqref{eq:p:mm:1}, note that from \eqref{eq:pii:zero}, we have
\begin{align*}
  P_{X_1, Y}(x_1, y) 
  &= \sum_{x_2 \in \cX_2} \pbi_{X_1, X_2, Y}(x_1, x_2, y) \\
  &= P_{X_1}(x_1)P_Y(y) \left[1 + 
    \E{\fb^{\,\T}(x_1, X_2)\gb(y)|X_1 = x_1}
\right]\\
  &=P_{X_1}(x_1)P_Y(y) \left[1 + \left(\fb^{(1)}(x_1) + [\opxn{1}(\fb^{(2)})](x_1)\right)^\T \gb(y)\right],
\end{align*}
where to obtain the last equality we have used the fact that
$\opxn{1}(\fb) = \fb^{(1)} + \opxn{1}(\fb^{(2)})$.
Similarly, we can obtain \eqref{eq:p:mm:1}.

Finally, \eqref{eq:est:mm} can be readily obtained from \eqref{eq:p:mm}.\hfill\qed

\subsection{Proof of \propref{prop:pbi:pint}}
\label{app:prop:pbi:pint}  Note that since $\lpmi_{X_1, X_2; Y} \in \spct{\cX_1 \times \cX_2 \times \cY}$, we have
 $\pib(\lpmi_{X_1, X_2; Y}) \in \spct{\cX_1 \times \cX_2 \times \cY}$, which implies that
 \begin{align*}
   \sum_{(x_1, x_2, y)\in \cX_1 \times \cX_2 \times \cY}
P_{X_1, X_2}(x_1, x_2)P_Y(y)\cdot [\pib(\lpmi_{X_1, X_2; Y})](x_1, x_2, y) = \ip{\pib(\lpmi_{X_1, X_2; Y})}{1} = 0.
 \end{align*}
Therefore, it follows from the definition of $\pbi_{X_1, X_2, Y}$ that $\ds\sum_{(x_1, x_2, y)\in \cX_1 \times \cX_2 \times \cY}\pbi_{X_1, X_2, Y}(x_1, x_2, y) = 1$. Similarly, we have
$\ds\sum_{(x_1, x_2, y)\in \cX_1 \times \cX_2 \times \cY}\pint_{X_1, X_2, Y}(x_1, x_2, y) = 1$. 

Moreover, from \eqref{eq:gtr:n1}, we have
$\pbi_{X_1, X_2, Y}(x_1, x_2, y) \geq 0, \pint_{X_1, X_2, Y}(x_1, x_2, y) \geq 0$ for all $(x_1, x_2, y)$. Therefore, 
we obtain $\pbi_{X_1, X_2, Y}, \pint_{X_1, X_2, Y} \in \cP^{\cX_1 \times \cX_2 \times \cY}$.

Since $\lpmi_{X_1, X_2; Y} \perp \spc{\cX_1 \times \cX_2}$, we have $\pib(\lpmi_{X_1, X_2; Y}) \perp \spc{\cX_1 \times \cX_2}$. Therefore, for any $(\hat{x}_1, \hat{x}_2) \in \cX_1 \times \cX_2$, let $f(x_1, x_2) =  \delta_{x_1\hat{x}_1}
 \delta_{x_2\hat{x}_2}$, then we have
\begin{align*}
  \sum_{y \in \cY} [\pib(\lpmi_{X_1, X_2; Y})](\hat{x}_1, \xh_2, y)
  &=   \sum_{(x_1, x_2, y) \in \cX_1 \times \cX_2 \times \cY} [\pib(\lpmi_{X_1, X_2; Y})](x_1, x_2, y) \cdot \delta_{x_1\xh_1}
 \delta_{x_2\xh_2}\\
  &=\ip{\pib(\lpmi_{X_1, X_2; Y})}{f} = 0.
\end{align*}
This implies that $\pbi_{X_1, X_2} = P_{X_1, X_2}$. Similarly, we have $\pint_{X_1, X_2} = P_{X_1, X_2}$.

Finally, note that since
\begin{align*}
  P_{X_1, X_2, Y}(x_1, x_2, y) = P_{X_1, X_2}(x_1, x_2)P_Y(y)
\left[1 + [\pib(\lpmi_{X_1, X_2; Y})](x_1, x_2, y) + 
[\pii(\lpmi_{X_1, X_2; Y})](x_1, x_2, y)\right],
\end{align*}
we have
\begin{align*}
  P_{X_1, Y}(x_1, y) - \pbi_{X_1, Y}(x_1, y) 
  &= 
  \pint_{X_1, Y}(x_1, y) - P_{X_1}(x_1)P_{Y}(y)\\
  &= \sum_{x_2' \in \cX_2} P_{X_1, X_2}(x_1, x_2') P_Y(y) [\pii(\lpmi_{X_1, X_2; Y})](x_1, x_2', y)\\
  &=0.
\end{align*}
To obtain the last equality, note that for any $\xh_1, \yh \in \cX_1 \times \cY$, let us define $\gamma \in \spc{\cX_1 \times \cY}$ as $\gamma(x_1, y) = \delta_{x_1\xh_1}\delta_{y\yh}$. Then, from $\pii(\lpmi_{X_1, X_2; Y}) \perp \spc{\cX_1 \times \cY}$, we have
\begin{align}
  0 =\ip{\pii(\lpmi_{X_1, X_2; Y})}{\gamma} 
  = \sum_{x_2' \in \cX_2} P_{X_1, X_2}(\xh_1, x_2')P_Y(\yh)
  [\pii(\lpmi_{X_1, X_2; Y})](\xh_1, x_2', \yh).
  \label{eq:pii:zero}
\end{align}
Similarly, we can show that 
\begin{align*}
    P_{X_2, Y}(x_2, y) - \pbi_{X_2, Y}(x_2, y) &= 
  \pint_{X_2, Y}(x_2, y) - P_{X_2}(x_2)P_{Y}(y) = 0.
\end{align*}
\hfill\qed

\subsection{Proof of \propref{prop:max-ent}}
\label{app:prop:max-ent}

Note that since
    \begin{align*}
      H(Q_{X_1, X_2, Y})
      &= H(P_{X_1, X_2}) + H(P_{Y}) - I_Q(X_1, X_2; Y),
    \end{align*}
    we have
    \begin{align}
      \pment_{X_1, X_2, Y} = \argmin_{Q_{X_1, X_2, Y} \in \cqb} I_Q(X_1, X_2; Y),
      \label{eq:max:ent:min}
    \end{align}
    where $I_Q(X_1, X_2; Y)$ denotes the mutual information between $(X_1, X_2)$ and $Y$ with respect to $Q_{X_1, X_2, Y}$.

      Specifically, when we take
$P_{X_1, X_2, Y}$ as the $Q_{X_1, X_2, Y}$, we have
      $\ds I_P(X_1, X_2; Y) = \frac12 \cdot \norm{\lpmi_{X_1, X_2; Y}}^2 + o(\eps^2).$
    Therefore, to solve \eqref{eq:max:ent:min}, it suffices to consider $Q_{X_1, X_2, Y}$ with $I_Q(X_1, X_2; Y) < I_P(X_1, X_2; Y) = O(\eps^2)$. Since $Q_{X_1, X_2}Q_{Y} = P_{X_1, X_2}P_Y \in \relint(\cP^{\cX_1 \times \cX_2 \times \cY})$, we have $\norm{\llrt{Q_{X_1, X_2, Y}}}^2 = O(\eps^2)$ [cf. \cite[Eq. (338)]{Igal2016ieq}].  %

Therefore,
    \begin{align*}
      I_Q(X_1, X_2; Y)
      &=\frac12\cdot\norm{\llrt{Q_{X_1, X_2, Y}}}^2 + o(\eps^2)\\
      &=\frac12\cdot\norm{\pib(\llrt{Q_{X_1, X_2, Y}})}^2
        + \frac12\cdot\norm{\pii(\llrt{Q_{X_1, X_2, Y}})}^2 + o(\eps^2)\\
      &=\frac12\cdot\norm{\pib(\lpmi_{X_1, X_2; Y})}^2
        + \frac12\cdot\norm{\pii(\llrt{Q_{X_1, X_2, Y}})}^2 + o(\eps^2)\\
      &\geq \frac12\cdot\norm{\pib(\lpmi_{X_1, X_2; Y})}^2 + o(\eps^2)
    \end{align*}
    where the inequality holds with equality when $\bnorm{\pii(\llrt{Q_{X_1, X_2, Y}})} = o(\eps)$. Hence, we obtain
     $ \lpmi^{(\ent)}_{X_1, X_2; Y} = \llrt{\pment_{X_1, X_2, Y}} =\pib(\lpmi_{X_1, X_2; Y}) + o(\eps)$, 
    which gives \eqref{eq:max:ent:bi}.\hfill\qed

\subsection{Proof of \propref{prop:opt:tran}}
\label{app:prop:opt:tran}
For all $\phi \defeq \phi^{(1)} + \phi^{(2)} \in \spc{\cX_1}+\spc{\cX_2}$ and $\psi \in \spc{\cY}$ with $\norm{\psi} = 1$, we have $\Ed{P_{X_1, X_2}P_Y}{\psi(Y)\phi(X_1, X_2)} = 0
$. Hence,
\begin{align*}
  \E{\psi(Y)\phi(X_1, X_2)}  &=\Ed{P_{X_1, X_2}P_Y}{(1 + \lpmi_{X_1, X_2; Y}(X_1, X_2, Y)) \cdot \psi(Y)\phi(X_1, X_2)}\\
  &=  \ip{\lpmi_{X_1, X_2; Y}}{\phi \otimes \psi}\\
  &= \ip{\pib(\lpmi_{X_1, X_2; Y})}{\phi \otimes \psi} + \ip{\pii(\lpmi_{X_1, X_2; Y})}{\phi \otimes \psi}\\
  &= \ip{\pib(\lpmi_{X_1, X_2; Y})}{\phi \otimes \psi},
\end{align*}
where the first equality follows from the definition of $\lpmi_{X_1, X_2; Y}$, and where
 the last equality follow from the fact that $\pii(\lpmi_{X_1, X_2; Y}) \perp (\spc{\cX_1 \times \cY} + \spc{\cX_2 \times \cY}) \ni \phi \otimes \psi$.

Therefore, we can rewrite the objective \eqref{eq:opt:tran} as
\begin{align*}
  \E{\left(\psi(Y) - \phi^{(1)}(X_1) - \phi^{(2)}(X_2)\right)^2}  
  &=\E{\left(\psi(Y) - \phi(X_1, X_2)\right)^2}  \\
  &= \norm{\psi}^2 + \norm{\phi}^2 - 2 \cdot \E{\psi(Y)\phi(X_1, X_2)}\\
  &= 1 + \norm{\phi \otimes \psi}^2 - 2\ip{\pib(\lpmi_{X_1, X_2; Y})}{\phi \otimes \psi}\\
  &= 1 + \norm{\pib(\lpmi_{X_1, X_2; Y}) - \phi \otimes \psi}^2 - \norm{\pib(\lpmi_{X_1, X_2; Y})}^2.
\end{align*}
Finally, the proof is completed by noting that $\norm{\pib(\lpmi_{X_1, X_2; Y})}^2 = \sum_{i = 1}^{\Kb} \sigmab_i^2$, and we have
\begin{align*}
\norm{\pib(\lpmi_{X_1, X_2; Y}) - \phi \otimes \psi}^2 
\geq \norm{{r_1(\pib(\lpmi_{X_1, X_2; Y}))}}^2
= \sum_{i = 2}^{\Kb} \sigmab_i^2,
\end{align*}
where the inequality becomes equality if and only if $\phi \otimes \psi = \md(\pib(\lpmi_{X_1, X_2; Y})) = \sigmab_1 (\fb_1^* \otimes \gb_1^*)$. 

\hfill\qed

\subsection{Proof of \thmref{thm:hsbih}}
\label{app:thm:hsbih}
  We first introduce a useful lemma.  
  \begin{lemma}
    \label{lem:mm}
  For all $k \geq 1, f \in \spcn{\cX}{k}, g \in \spcn{\cY}{k}$, let $\hb \defeq \proj{f}{\spc{\cX_1} + \spc{\cX_2}}$, $h = f - \hb$.  Then, we have
  \begin{align}
     \Hsm(f, g) 
    &=\frac12 \cdot
     \left[ L(R_{X_1, X_2, Y}) -\eta_0
       \cdot  \bnorm{\pii(\llrt{\Ph^{(0)}_{X_1, X_2, Y}}) - h \otimes g}^2 - \Lb( \hb \otimes g)\right],
  \end{align}
   where we have defined %
   \begin{align}
     \Lb(\gamma) \defeq
     \eta_0 \cdot \bnorm{\pib(\llrt{\Ph^{(0)}_{X_1, X_2, Y}}) - \pib(\gamma)}^2
     + \eta_1 \cdot \bnorm{\llrt{\Ph^{(1)}_{X_1, Y}} - \pimn{1}(\gamma)}^2
     + \eta_2 \cdot \bnorm{\llrt{\Ph^{(2)}_{X_2, Y}} - \pimn{2}(\gamma)}^2.
     \label{eq:L:b}
   \end{align}

 \end{lemma}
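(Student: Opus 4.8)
The plan is to expand each of the three H-score terms via the low-rank form $\Hs(f,g;Q_{X,Y}) = \tfrac12\bigl(\bnorm{\llrt{Q_{X,Y}}}^2 - \bnorm{\llrt{Q_{X,Y}} - f\otimes g}^2\bigr)$ and then recombine. Write $\gamma_0 \defeq \llrt{\Ph^{(0)}_{X_1,X_2,Y}}$, $\gamma_1 \defeq \llrt{\Ph^{(1)}_{X_1,Y}}$, and $\gamma_2 \defeq \llrt{\Ph^{(2)}_{X_2,Y}}$. The crux is a set of projection identities for the product feature $f\otimes g$: first, $\pib(f\otimes g) = \hb\otimes g$ and $\pii(f\otimes g) = h\otimes g$; and second, $\opxn{i}(f) = \opxn{i}(\hb)$ and $\pimn{i}(f\otimes g) = \opxn{i}(f)\otimes g$ for $i=1,2$. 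All of these rest on the single fact that $h = f - \hb$ is orthogonal to $\spc{\cX_1}+\spc{\cX_2}$, combined with the product form $R_{X_1,X_2,Y} = R_{X_1,X_2}R_Y$ of the metric distribution: since the inner product factorizes across $(X_1,X_2)$ and $Y$, one checks that each $\ip{h_j\otimes g_j}{w}$ with $w \in \spc{\cX_i\times\cY}$ reduces to $\langle h_j, v\rangle_{R_{X_1,X_2}}$ for some $v\in\spc{\cX_i}$, which vanishes because $h_j \perp \spc{\cX_i}$. I would prove these identities first as a short standalone computation.

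With the identities in hand, expand the objective as $\Hsm(f,g) = \tfrac12\sum_{i=0}^2 \eta_i\bnorm{\gamma_i}^2 - \tfrac12\bigl[\eta_0\bnorm{\gamma_0 - f\otimes g}^2 + \eta_1\bnorm{\gamma_1 - \opxn{1}(f)\otimes g}^2 + \eta_2\bnorm{\gamma_2 - \opxn{2}(f)\otimes g}^2\bigr]$. For the first group, observe that $\llrt{R_{X_1,X_2,Y}} = 0$ (the density ratio of a distribution against itself vanishes) and that the $(X_1,Y)$- and $(X_2,Y)$-marginals of $R_{X_1,X_2,Y}$ are $R_{X_1}R_Y$ and $R_{X_2}R_Y$; comparing with the definition of $L$ in \eqref{eq:L:bi} then gives $\sum_{i=0}^2\eta_i\bnorm{\gamma_i}^2 = L(R_{X_1,X_2,Y})$.

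For the residual group, decompose $\gamma_0 - f\otimes g = \bigl(\pib(\gamma_0) - \hb\otimes g\bigr) + \bigl(\pii(\gamma_0) - h\otimes g\bigr)$ using the first identity; the two summands lie in $\spc{\cX_1\times\cY}+\spc{\cX_2\times\cY}$ and its orthogonal complement, respectively, so the Pythagorean relation yields $\eta_0\bnorm{\gamma_0 - f\otimes g}^2 = \eta_0\bnorm{\pib(\gamma_0) - \hb\otimes g}^2 + \eta_0\bnorm{\pii(\gamma_0) - h\otimes g}^2$. Since $\hb\otimes g$ lies in the range of $\pib$, the first term equals $\eta_0\bnorm{\pib(\gamma_0) - \pib(\hb\otimes g)}^2$; using the second identity to rewrite $\opxn{i}(f)\otimes g = \pimn{i}(\hb\otimes g)$ in the remaining two residuals, the $\pib$-, $\pimn{1}$-, and $\pimn{2}$-terms assemble exactly into $\Lb(\hb\otimes g)$ as defined in \eqref{eq:L:b}, leaving the isolated interaction term $\eta_0\bnorm{\pii(\gamma_0) - h\otimes g}^2$. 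Substituting back produces the claimed identity.

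The main obstacle is establishing the projection identities cleanly, especially $\pimn{i}(f\otimes g) = \opxn{i}(f)\otimes g$: one must verify that tensoring a conditional-expectation (marginal) projection in $\spc{\cX_1\times\cX_2}$ by $g$ coincides with projecting $f\otimes g$ onto $\spc{\cX_i\times\cY}$, which is precisely where the product structure of the metric is essential. Everything after that is a bookkeeping exercise in orthogonal decomposition.
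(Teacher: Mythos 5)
Your proposal is correct and follows essentially the same route as the paper's proof: both expand each H-score in its low-rank-approximation form, split $\llrt{\Ph^{(0)}_{X_1,X_2,Y}} - f\otimes g$ into its $\pib$- and $\pii$-parts via the orthogonality of $h\otimes g$ to $\spc{\cX_1\times\cY}+\spc{\cX_2\times\cY}$, use $\opxn{i}(f)=\opxn{i}(\hb)$ and $\pimn{i}(\hb\otimes g)=\opxn{i}(\hb)\otimes g$ for the pairwise terms, and identify $\sum_i\eta_i\bnorm{\gamma_i}^2$ with $L(R_{X_1,X_2,Y})$ before reassembling into $\Lb(\hb\otimes g)$. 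The projection identities you flag as the main obstacle are exactly the ones the paper invokes, justified by the same product-metric factorization argument.
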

 \begin{proof}%
   By definition, we have
  \begin{align}
    \Hs(f, g; \Ph^{(0)}_{X_1, X_2, Y})
    &= \frac12 \left(\bnorm{\llrt{\Ph^{(0)}_{X_1, X_2, Y}}}^2
    - \bnorm{\llrt{\Ph^{(0)}_{X_1, X_2, Y}} - f \otimes g}^2
    \right)\notag\\
    &= \frac12 \left(\bnorm{\llrt{\Ph^{(0)}_{X_1, X_2, Y}}}^2
    - \bnorm{(\pii(\llrt{\Ph^{(0)}_{X_1, X_2, Y}}) - h \otimes g) + (\pib(\llrt{\Ph^{(0)}_{X_1, X_2, Y}}) - \hb \otimes g)}^2
    \right)\notag\\
    &=
    \frac12 \left(\bnorm{\llrt{\Ph^{(0)}_{X_1, X_2, Y}}}^2
    - \bnorm{\pii(\llrt{\Ph^{(0)}_{X_1, X_2, Y}}) - h \otimes g}^2
    - \bnorm{\pib(\llrt{\Ph^{(0)}_{X_1, X_2, Y}}) - \hb \otimes g}^2
    \right),
      \label{eq:hs:ph:joint}
  \end{align}
  where to obtain the last equality we have used the orthogonality between
$\left(\pii(\llrt{\Ph^{(0)}_{X_1, X_2, Y}}) - h \otimes g\right) \perp (\spc{\cX_1 \times \cY} + \spc{\cX_2 \times \cY})$ and
$\left(\pib(\llrt{\Ph^{(0)}_{X_1, X_2, Y}}) - \hb \otimes g\right) \in (\spc{\cX_1 \times \cY} + \spc{\cX_2 \times \cY})$.

In addition, for each $i = 1, 2$, from
$\opxn{i}(f) = \opxn{i}(\hb)$ we have
  \begin{align}
    \Hs(\opxn{i}(f), g; \Ph^{(i)}_{X_i, Y}) =
    \Hs(\opxn{i}(\hb), g; \Ph^{(i)}_{X_i, Y}) &=
        \frac12 \left(\bnorm{\llrt{\Ph^{(i)}_{X_i, Y}}}^2
    - \bnorm{\llrt{\Ph^{(i)}_{X_i, Y}} - \opxn{i}(\hb) \otimes g}^2\right),\notag\\
&=      \frac12 \left(\bnorm{\llrt{\Ph^{(i)}_{X_i, Y}}}^2
  - \bnorm{\llrt{\Ph^{(i)}_{X_i, Y}} - \pimn{i}(\hb \otimes g)}^2\right),
  \label{eq:hs:opxn:i}
  \end{align}
where the last equality follows from that $\pimn{i}(\hb \otimes g) = \opxn{i}(\hb) \otimes g$.

From \eqref{eq:hs:ph:joint} and \eqref{eq:hs:opxn:i}, we can rewrite \eqref{eq:hsm:def} as
\begin{align*}
     \Hsm(f, g) 
  &= \eta_0 \cdot \Hs(f, g; \Ph^{(0)}_{X_1, X_2, Y})
    +   \eta_1 \cdot \Hs\left(\opxn{1}(f), g; \Ph^{(1)}_{X_1, Y}\right) +   \eta_2 \cdot  \Hs\left(\opxn{2}(f), g; \Ph^{(2)}_{X_2, Y}\right)\\
  &=     \frac{\eta_0}2 \cdot \left(\bnorm{\llrt{\Ph^{(0)}_{X_1, X_2, Y}}}^2
    - \bnorm{\pii(\llrt{\Ph^{(0)}_{X_1, X_2, Y}}) - h \otimes g}^2
    - \bnorm{\pib(\llrt{\Ph^{(0)}_{X_1, X_2, Y}}) - \hb \otimes g}^2
    \right)\\
  &\qquad+ \sum_{i = 1}^2 \frac{\eta_i}{2}\cdot \left(\bnorm{\llrt{\Ph^{(i)}_{X_i, Y}}}^2
    - \bnorm{\llrt{\Ph^{(i)}_{X_1, Y}} - \pimn{i}(\hb \otimes g)}^2\right)\\
  &= \frac12 \cdot \left[L(R_{X_1, X_2, Y}) - \eta_0 \cdot \bnorm{\pii(\llrt{\Ph^{(0)}_{X_1, X_2, Y}}) - h \otimes g}^2 \right]\\
  &\qquad - \frac12 \left[\eta_0\cdot \bnorm{\pib(\llrt{\Ph^{(0)}_{X_1, X_2, Y}}) - \hb \otimes g}^2
+ \sum_{i = 1}^2\eta_i \cdot \bnorm{\llrt{\Ph^{(i)}_{X_1, Y}} - \pimn{i}(\hb \otimes g)}^2
\right]\\
    &=\frac12 \cdot
     \left[ L(R_{X_1, X_2, Y}) -\eta_0
       \cdot  \bnorm{\pii(\llrt{\Ph^{(0)}_{X_1, X_2, Y}}) - h \otimes g}^2 - \Lb( \hb \otimes g)\right],
\end{align*}
where the last equality follows from the fact that 
$\hb \otimes g = \pib(\hb \otimes g)$.

 \end{proof}

  Let use define $\hb \defeq \proj{f}{\spc{\cX_1} + \spc{\cX_2}}$ and $h \defeq f - \hb$.  Then, from \lemref{lem:mm}, 
we have
   \begin{align}
     & \Hsm(\fb, \gb) 
     = \frac12\cdot\left[
 L(R_{X_1, X_2, Y}) -\eta_0
       \cdot  \bnorm{\pii(\llrt{\Ph^{(0)}_{X_1, X_2, Y}})}^2 - \Lb( \fb \otimes \gb)
       \right],
       \label{eq:hsm:fb:gb}
   \end{align}
   and,  similarly, 
   \begin{align}
     \Hsm\left(
     \begin{bmatrix}
       \fb\,\\
       f\, 
     \end{bmatrix},
     \begin{bmatrix}
       \gb\\
       g
     \end{bmatrix}
     \right)
     =
     \frac12 \cdot
     \left[
     L(R_{X_1, X_2, Y}) -\eta_0
       \cdot  \bnorm{\pii(\llrt{\Ph^{(0)}_{X_1, X_2, Y}}) - h \otimes g}^2 - \Lb(\fb \otimes \gb + \hb \otimes g)
     \right].
       \label{eq:hsm:f:g}
   \end{align}

   Therefore, \eqref{eq:hsm:fb:gb} is minimized if and only if 
   \begin{align}
     \fb \otimes \gb = \pib\Bigl(\llrt{\Pest_{X_1, X_2, Y}}\Bigr),
     \label{eq:hsm:fb:gb:opt}
   \end{align}
and \eqref{eq:hsm:f:g} is minimized if and only if 
   \begin{align}
     \fb \otimes \gb + \hb \otimes g = \pib\Bigl(\llrt{\Pest_{X_1, X_2, Y}}\Bigr), \qquad 
     h \otimes g = \mdnl{k}\left(\pii\Bigl(\llrt{\Ph^{(0)}_{X_1, X_2, Y}}\Bigr)\right).
     \label{eq:hsm:f:g:opt}
   \end{align}

   Hence, the common solution of \eqref{eq:hsm:fb:gb:opt} 
 and  \eqref{eq:hsm:f:g:opt} is
   \begin{align}
     \fb \otimes \gb = \pib\Bigl(\llrt{\Pest_{X_1, X_2, Y}}\Bigr), \qquad \hb \otimes g = 0, \qquad
     h \otimes g = \mdnl{k}\left(\pii\Bigl(\llrt{\Ph^{(0)}_{X_1, X_2, Y}}\Bigr)\right),
     \label{eq:hsm:opt}
   \end{align}
   which is equivalent to \eqref{eq:hsbih:opt}. Finally, from \propref{prop:joint:opt}, this is also the solution that maximizes the nested H-score \eqref{eq:hsbi:miss}.\hfill\qed

\subsection{Proof of \thmref{thm:pest:pml}}
\label{app:thm:pest:pml}
  We first introduce  two useful facts.

  \begin{fact}[{\citealt[Theorem 11.1.2]{cover2006elements}}]
    \label{fact:type}
    Given $m$ samples $Z_1, \dots, Z_m$ i.i.d. generated from $Q_Z \in \cP^\cZ$, th probability of observing $\{Z_i\}_{i = 1}^m= \{z_i\}_{i = 1}^m$, denoted by $\prob{\{z_i\}_{i = 1}^m; Q_{Z}}$, satisfies 
$\ds      -\log \prob{\{z_i\}_{i = 1}^m; Q_{Z}} = m \left[H(\Ph_Z)  + \kld{\Ph_Z}{Q_Z}\right]$,
    where $\Ph_Z$ is the empirical distribution of $\{z_i\}_{i = 1}^m$.
\end{fact}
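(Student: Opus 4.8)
The plan is to verify the identity by direct computation, exploiting the i.i.d.\ structure. Since $Z_1, \dots, Z_m$ are independently generated from $Q_Z$, the probability of observing the particular realization $\{z_i\}_{i=1}^m$ factorizes as $\prob{\{z_i\}_{i=1}^m; Q_Z} = \prod_{i=1}^m Q_Z(z_i)$. First I would regroup this product by the distinct symbols of $\cZ$: by the definition of the empirical distribution in \eqref{eq:emp:z}, each symbol $z \in \cZ$ occurs exactly $m\Ph_Z(z)$ times among $z_1, \dots, z_m$, so that $\prod_{i=1}^m Q_Z(z_i) = \prod_{z \in \cZ} Q_Z(z)^{m \Ph_Z(z)}$.

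Next, taking negative logarithms converts the product into the sum $-\log \prob{\{z_i\}_{i=1}^m; Q_Z} = -m \sum_{z \in \cZ} \Ph_Z(z) \log Q_Z(z)$. The remaining step is the standard algebraic decomposition of this cross-entropy expression: writing $\log Q_Z(z) = \log \Ph_Z(z) - \log\bigl(\Ph_Z(z)/Q_Z(z)\bigr)$ splits the sum into $-\sum_z \Ph_Z(z)\log \Ph_Z(z) = H(\Ph_Z)$ and $\sum_z \Ph_Z(z)\log\bigl(\Ph_Z(z)/Q_Z(z)\bigr) = \kld{\Ph_Z}{Q_Z}$, which yields $m\bigl[H(\Ph_Z) + \kld{\Ph_Z}{Q_Z}\bigr]$ as claimed. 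The logarithms are handled under the usual conventions: terms with $\Ph_Z(z) = 0$ contribute zero via $0\log 0 = 0$, and if $Q_Z(z) = 0$ while $\Ph_Z(z) > 0$ for some $z$, then both sides equal $+\infty$ (the realization has probability $0$, while $\kld{\Ph_Z}{Q_Z} = +\infty$), so the identity still holds.

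The computation is elementary, so I do not expect a substantive obstacle; the one point requiring care is the interpretation of $\prob{\{z_i\}_{i=1}^m; Q_Z}$. The statement as worded, carrying no combinatorial prefactor, refers to the probability of the specific \emph{ordered} realization, equivalently the product of per-sample probabilities used above, rather than the probability of the type class of $\{z_i\}_{i=1}^m$; the latter would carry a multinomial coefficient and hence an extra $\log$-combinatorial term. I would simply observe that the convention matches the factorized form, so no such term arises, consistent with the cited reference \citep[Theorem 11.1.2]{cover2006elements}.
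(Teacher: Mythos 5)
Your computation is correct and is exactly the standard argument behind the cited result: the paper states this as a \factref{fact:type} with a citation to \cite{cover2006elements} and gives no proof of its own, and your factorize-regroup-take-logs derivation, including the entropy-plus-divergence split of the cross-entropy, is the same proof found in that reference. Your handling of the zero-probability conventions and your observation that $\prob{\{z_i\}_{i = 1}^m; Q_Z}$ denotes the probability of the specific ordered realization (no multinomial prefactor) are both correct and consistent with how the fact is used in the paper's proof of \thmref{thm:pest:pml}, where any $Q$-independent combinatorial term would in any case be absorbed into the constant entropy terms.
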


    \begin{fact}[{\citealt[Lemma 4.5]{HuangMWZ2024}}]
      \label{fact:local:z}
      Given a reference distribution $R \in \relint(\cP^\cZ)$. Then, for $P, Q \in \cP^\cZ$ with $\norm{\llrrt{P}{R}} = O(\eps)$, $\norm{\llrrt{Q}{R}} = O(\eps)$, we have
      \begin{align*}
        D(P\|Q) = \frac12 \cdot \norm{\llrrt{P}{R} - \llrrt{Q}{R}}^2 + o(\eps^2).
      \end{align*}
    \end{fact}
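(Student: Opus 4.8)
The plan is to prove \thmref{thm:pest:pml} by rewriting both the MLE objective and the least-squares objective $L(\cdot)$ in terms of KL divergences, and then applying the local quadratic approximation \factref{fact:local:z} to show the two minimizers coincide to first order in $\eps$. The key observation is that \factref{fact:type} lets me convert the log-probability of observing each dataset $\Ds_0, \Ds_1, \Ds_2$ into a sum of an entropy term (independent of $Q$) plus a KL divergence term. Since all data samples are independent, the joint probability $\prob{\Ds_0, \Ds_1, \Ds_2; Q_{X_1, X_2, Y}}$ factors, and I would apply \factref{fact:type} separately to each dataset: $\Ds_0$ contributes $\bkld{\Ph^{(0)}_{X_1, X_2, Y}}{Q_{X_1, X_2, Y}}$, while $\Ds_1$ and $\Ds_2$ contribute $\bkld{\Ph^{(1)}_{X_1, Y}}{Q_{X_1, Y}}$ and $\bkld{\Ph^{(2)}_{X_2, Y}}{Q_{X_2, Y}}$, respectively, each weighted by its sample count.

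First I would write, up to additive constants not depending on $Q$,
\begin{align*}
  -\frac1n\log \prob{\Ds_0, \Ds_1, \Ds_2; Q_{X_1, X_2, Y}}
  = \eta_0 \bkld{\Ph^{(0)}_{X_1, X_2, Y}}{Q_{X_1, X_2, Y}}
  + \eta_1 \bkld{\Ph^{(1)}_{X_1, Y}}{Q_{X_1, Y}}
  + \eta_2 \bkld{\Ph^{(2)}_{X_2, Y}}{Q_{X_2, Y}} + c,
\end{align*}
so that $\Pml_{X_1, X_2, Y}$ minimizes the weighted sum of KL divergences over $Q$. Next I would observe that the hypothesis $L(R_{X_1, X_2, Y}) = O(\eps^2)$ forces all the relevant empirical distributions to be $O(\eps)$-close to $R$ in the density-ratio norm, since each term in the definition \eqref{eq:L:bi} of $L$ is a squared density-ratio distance between an empirical distribution and $R$ (via the corresponding marginals). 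This $\eps$-smallness, together with the fact that the minimizing $Q$ can be taken close to $R$ as well, puts me exactly in the regime where \factref{fact:local:z} applies with reference distribution $R$.

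Then I would apply \factref{fact:local:z} to each KL term to replace it by its quadratic surrogate:
\begin{align*}
  \bkld{\Ph^{(0)}_{X_1, X_2, Y}}{Q_{X_1, X_2, Y}}
  &= \frac12 \bnorm{\llrt{\Ph^{(0)}_{X_1, X_2, Y}} - \llrt{Q_{X_1, X_2, Y}}}^2 + o(\eps^2),
\end{align*}
and analogously for the two marginal terms, taking care that the marginal density ratios $\llrt{Q_{X_1, Y}}, \llrt{Q_{X_2, Y}}$ are computed with respect to the corresponding marginals of $R$. Summing with weights $\eta_0, \eta_1, \eta_2$ recovers precisely $\frac12 L(Q_{X_1, X_2, Y}) + o(\eps^2)$, the objective minimized by $\Pest_{X_1, X_2, Y}$. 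Thus the MLE objective and the quadratic objective $L$ agree up to $o(\eps^2)$ uniformly over the relevant neighborhood, and I would conclude that their minimizers agree up to $o(\eps)$, giving \eqref{eq:pml:pest}.

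The main obstacle I anticipate is the final step of passing from ``objectives that agree to $o(\eps^2)$'' to ``minimizers that agree to $o(\eps)$'': this is not automatic from pointwise closeness of the objectives and requires controlling the problem's conditioning. The cleanest route is to exploit that both objectives are (to leading order) strongly convex quadratics in the information-vector representation of $Q$ on the relevant affine constraint set, so that an $o(\eps^2)$ perturbation of a quadratic with $\Theta(1)$ curvature shifts its minimizer by $o(\eps)$; I would need to verify that the quadratic form underlying $L$ is nondegenerate (positive definite) on the subspace of admissible perturbations, and handle the $o(\eps^2)$ error terms carefully so they remain uniform rather than merely pointwise over the neighborhood of minimizers.
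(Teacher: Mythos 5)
You have proved the wrong statement. The statement at hand is \factref{fact:local:z} itself---the local quadratic approximation of the KL divergence---but your argument treats that fact as a black box and instead outlines a proof of \thmref{thm:pest:pml}. What you describe in fact tracks the paper's own proof of that downstream theorem quite closely (factor the likelihood of the three datasets via \factref{fact:type}, reduce the MLE to the weighted-KL objective $\lml$, apply \factref{fact:local:z} termwise to identify $\lml$ with $\tfrac12 L + o(\eps^2)$ near the minimizer, then pass from closeness of objectives to closeness of minimizers---which the paper handles not by a strong-convexity conditioning argument but by restricting to the ball $\bnorm{\llrt{Q} - \llrt{\Pest}} \leq \eps$ and using convexity of $\lml$ with Jensen's inequality to rule out distant $Q$). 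None of this, however, establishes the displayed identity $D(P\|Q) = \frac12\bnorm{\llrrt{P}{R} - \llrrt{Q}{R}}^2 + o(\eps^2)$, which is the actual claim; your proposal contains no step that bounds or expands a KL divergence.

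For the record, the paper itself does not prove this fact either: it is imported as Lemma 10 of \cite{huang2019universal}. A self-contained proof is a second-order Taylor expansion. Write $\phi \defeq \llrrt{P}{R}$ and $\psi \defeq \llrrt{Q}{R}$, so that $P = R(1+\phi)$, $Q = R(1+\psi)$, with $\Ed{R}{\phi(Z)} = \Ed{R}{\psi(Z)} = 0$; since $\cZ$ is finite and $R \in \relint(\cP^\cZ)$, the norm bounds $\norm{\phi} = O(\eps)$, $\norm{\psi} = O(\eps)$ imply the pointwise bounds $|\phi(z)| \leq \norm{\phi}/\sqrt{R(z)} = O(\eps)$ and likewise for $\psi$. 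Then
\begin{align*}
  D(P\|Q) = \Ed{R}{\left(1+\phi(Z)\right)\left[\log\left(1+\phi(Z)\right) - \log\left(1+\psi(Z)\right)\right]},
\end{align*}
and expanding $\log(1+t) = t - t^2/2 + O(t^3)$ gives, for the bracketed term, $(\phi - \psi) - \frac12(\phi^2 - \psi^2) + O(\eps^3)$; multiplying by $(1+\phi)$, taking the expectation, and using the zero-mean property to kill the linear term yields
\begin{align*}
  D(P\|Q) = \Ed{R}{\phi(\phi - \psi) - \tfrac12\left(\phi^2 - \psi^2\right)} + o(\eps^2)
  = \tfrac12 \cdot \Ed{R}{(\phi - \psi)^2} + o(\eps^2)
  = \tfrac12 \cdot \norm{\phi - \psi}^2 + o(\eps^2),
\end{align*}
which is the claim. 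One residual merit of your write-up: your closing worry about passing from $o(\eps^2)$-close objectives to $o(\eps)$-close minimizers is a real issue in \thmref{thm:pest:pml}, and the paper's resolution (the $\eps$-ball restriction plus the Jensen/convexity step in Appendix C.19) is worth comparing against your proposed quadratic-conditioning route---but that discussion belongs to the theorem, not to this fact.
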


Note that since the data from three different datasets are generated independently, we have 
  \begin{align*}
    \prob{\Ds_0, \Ds_1, \Ds_2; Q_{X_1, X_2, Y}}
    &=  \prob{\Ds_0; Q_{X_1, X_2, Y}} \cdot
      \prob{\Ds_1; Q_{X_1, X_2, Y}}
      \cdot       \prob{\Ds_2; Q_{X_1, X_2, Y}}\\
    &=  \prob{\Ds_0; Q_{X_1, X_2, Y}} \cdot
      \prob{\Ds_1; Q_{X_1, Y}}
      \cdot  \prob{\Ds_2; Q_{X_2, Y}}.
  \end{align*}
  Therefore, from \factref{fact:type},
  \begin{align*}
    &- \log \prob{\Ds_0, \Ds_1, \Ds_2; Q_{X_1, X_2, Y}}\\
    &\qquad= -\log \prob{\Ds_0; Q_{X_1, X_2, Y}} - \log      \prob{\Ds_1; Q_{X_1, Y}} - \log \prob{\Ds_2; Q_{X_2, Y}}\\
    &\qquad=n_0\cdot \bkld{\Ph^{(0)}_{X_1, X_2, Y}}{Q_{X_1, X_2, Y}} + n_1 \cdot \bkld{\Ph_{X_1, Y}^{(1)}}{Q_{X_1, Y}} + n_2 \cdot \bkld{\Ph_{X_2, Y}^{(2)}}{Q_{X_2, Y}}\\
    &\qquad\qquad + n_0\cdot H(\Ph^{(0)}_{X_1, X_2, Y}) + n_1 \cdot H(\Ph_{X_1, Y}^{(1)}) + n_2 \cdot H(\Ph_{X_2, Y}^{(2)})\\
    &\qquad= n \cdot \lml(Q_{X_1, X_2, Y}) + n_0\cdot H(\Ph^{(0)}_{X_1, X_2, Y}) + n_1 \cdot H(\Ph_{X_1, Y}^{(1)}) + n_2 \cdot H(\Ph_{X_2, Y}^{(2)})
  \end{align*}
  where the second equality follows from \factref{fact:type}, and where we have defined
  \begin{align*}
    \lml(Q_{X_1, X_2, Y}) \defeq \eta_0\cdot \bkld{\Ph^{(0)}_{X_1, X_2, Y}}{Q_{X_1, X_2, Y}} + \eta_1 \cdot \bkld{\Ph_{X_1, Y}^{(1)}}{Q_{X_1, Y}} + \eta_2 \cdot \bkld{\Ph_{X_2, Y}^{(2)}}{Q_{X_2, Y}}.
  \end{align*}

  Hence, we can rewrite the maximum likelihood solution $\Pml_{X_1, X_2, Y}$ as
  \begin{align*}
    \Pml_{X_1, X_2, Y}
    &= \argmax_{Q_{X_1, X_2, Y}}~  \prob{\Ds_0, \Ds_1, \Ds_2; Q_{X_1, X_2, Y}} = \argmin_{Q_{X_1, X_2, Y}} \lml(Q_{X_1, X_2, Y}).
  \end{align*}

From $L(R_{X_1, X_2, Y}) = O(\eps^2)$, we obtain $\bnorm{\llrt{\Ph^{(0)}_{X_1, X_2, Y}}} = O(\eps), \bnorm{\llrt{\Ph^{(1)}_{X_1, Y}}} = O(\eps),  \bnorm{\llrt{\Ph^{(2)}_{X_2, Y}}} = O(\eps)$. By definition, we have $L({\Pest_{X_1, X_2, Y}}) < L(R_{X_1, X_2, Y}) = O(\eps^2)$, which implies that $\bnorm{\llrt{\Pest_{X_1, X_2, Y}}} = O(\eps)$.

We first consider $Q_{X_1, X_2, Y}$ with
\begin{align}
 \bnorm{\llrt{Q_{X_1, X_2, Y}} - \llrt{\Pest_{X_1, X_2, Y}}} \leq \eps.\label{eq:q:ball}  
\end{align}
Then, we have
\begin{align}
 \bnorm{  \llrt{Q_{X_1, X_2, Y}}} \leq \bnorm{\llrt{Q_{X_1, X_2, Y}} -  \llrt{\Pest_{X_1, X_2, Y}}} + \bnorm{\llrt{\Pest_{X_1, X_2, Y}}} = O(\eps),
\end{align}
and it follows from \factref{fact:local:z} that %
\begin{align}
  \lml(Q_{X_1, X_2, Y}) &= \eta_0\cdot \bkld{\Ph^{(0)}_{X_1, X_2, Y}}{Q_{X_1, X_2, Y}} + \eta_1 \cdot \bkld{\Ph_{X_1, Y}^{(1)}}{Q_{X_1, Y}} + \eta_2 \cdot \bkld{\Ph_{X_2, Y}^{(2)}}{Q_{X_2, Y}}\notag\\
  &= \frac12 \Bigl(
    \eta_0 \cdot \bnorm{\llrt{\Ph^{(0)}_{X_1, X_2, Y}} - \llrt{Q_{X_1, X_2, Y}}}^2  +   \eta_1 \cdot \bnorm{ \llrt{\Ph^{(1)}_{X_1, Y}} - \llrt{Q_{X_1, Y}}}^2\notag\\
  &\qquad\quad+   \eta_2 \cdot \bnorm{\llrt{\Ph^{(2)}_{X_2, Y}} - \llrt{Q_{X_2, Y}}}^2
    \Bigr) + o(\eps^2)\notag\\
                        &=  \frac12\cdot L({Q_{X_1, X_2, Y}}) + o(\eps^2).
   \label{eq:lml:local:q}
\end{align}
Therefore, for $Q_{X_1, X_2, Y}$ that satisfies \eqref{eq:q:ball}, the minimum value of $\lml(Q_{X_1, X_2, Y})$ is achieved by $Q_{X_1, X_2, Y} = \Pest_{X_1, X_2, Y} + o(\eps)$.

Now we consider the case of $Q_{X_1, X_2, Y}$ with 
 $\bnorm{\llrt{Q_{X_1, X_2, Y}} - \llrt{\Pest_{X_1, X_2, Y}}} > \eps$. Let $\eps' = \eps/\bnorm{\llrt{Q_{X_1, X_2, Y}} - \llrt{\Pest_{X_1, X_2, Y}}} < 1$ and define  
 \begin{align}
   \Qb_{X_1, X_2, Y} \defeq \eps'\cdot Q_{X_1, X_2, Y} + (1-\eps') \cdot \Pest_{X_1, X_2, Y}.
   \label{eq:qb:def}
 \end{align}
Then, we have   $\llrt{\Qb_{X_1, X_2, Y}} = \eps'\cdot \llrt{Q_{X_1, X_2, Y}} + (1-\eps') \cdot \llrt{\Pest_{X_1, X_2, Y}}$, which implies
\begin{align}
  \bnorm{\llrt{\Qb_{X_1, X_2, Y}} -  \llrt{\Pest_{X_1, X_2, Y}}} = \eps'\cdot \bnorm{\llrt{Q_{X_1, X_2, Y}}  -  \llrt{\Pest_{X_1, X_2, Y}}} = \eps.
  \label{eq:dist:qb:pest}
\end{align}
   As a result, we can apply the same analysis on $\Qb_{X_1, X_2, Y}$ and obtain [cf. \eqref{eq:lml:local:q}]
\begin{align}
  \lml(\Qb_{X_1, X_2, Y}) =  \frac12\cdot L({\Qb_{X_1, X_2, Y}}) + o(\eps^2).
\end{align}

Hence, we obtain from \eqref{eq:dist:qb:pest} that
\begin{align}
  \lml(\Qb_{X_1, X_2, Y}) >   \lml(\Pest_{X_1, X_2, Y}) = 
  \frac12\cdot L({\Pest_{X_1, X_2, Y}}) + o(\eps^2)
  \label{eq:ieq:qb:pest}
\end{align}
for $\eps$ sufficiently small.

In addition, since $\lml$ is convex, it follows from Jensen's inequality that (cf. \eqref{eq:qb:def})
\begin{align}
  \lml(\Qb_{X_1, X_2, Y}) \leq \eps' \cdot \lml(Q_{X_1, X_2, Y}) + (1-\eps') \cdot \lml (\Pest_{X_1, X_2, Y}).
  \label{eq:ieq:cvx}
\end{align}
As a result, from \eqref{eq:ieq:qb:pest} and \eqref{eq:ieq:cvx} we have  $\lml(Q_{X_1, X_2, Y}) > \lml (\Pest_{X_1, X_2, Y})$.

Combining both cases of $Q_{X_1, X_2, Y}$, we obtain \eqref{eq:pml:pest}. \hfill\qed

\subsection{Proof of \propref{prop:seq}}
\label{app:prop:seq}
It suffices to establish that there exist $\alpha \colon \cU \to \mathbb{R}$, $\beta \colon \cV \to \mathbb{R}$, such that
\begin{subequations}  
  \label{eq:lpmi:xu:yv}
  \begin{align}
    \lpmi_{\vec{X}; U}(\vec{x}, u) &= \alpha(u) \cdot 
                                     \left[ \tanh\left(2 w  \cdot \varphi(\vec{x}) + b_U\right) - \tanh(b_U)\right],\label{eq:lpmi:xu}\\
    \lpmi_{\vec{Y}; V}(\vec{y}, v) &= \beta(v) \cdot \left[\tanh\left(2 w  \cdot \varphi(\vec{y}) + b_V\right) - \tanh(b_V)\right].
  \end{align}
\end{subequations}

To see this, note that from the Markov relation $\vec{X}-U-V-\vec{Y}$, we have
\begin{align*}
  P_{\vec{X}, \vec{Y}}(\vec{x}, \vec{y})
  &= \sum_{u \in \cU, v \in \cV} P_{\vec{X}|U}(\vec{x}|u) \cdot
  P_{\vec{Y}|V}(\vec{y}|v) \cdot P_{U, V}(u, v)\\
  &= P_{\vec{X}}(\vec{x}) P_{\vec{Y}}(\vec{y}) \sum_{u \in \cU, v \in \cV}  P_{U, V}(u, v) \cdot ( 1 + \lpmi_{\vec{X}; U}(\vec{x}, u)) \cdot( 1 + \lpmi_{\vec{Y}; V}(\vec{y}, v))\\
  &= P_{\vec{X}}(\vec{x}) P_{\vec{Y}}(\vec{y}) \left(1 + \sum_{u \in \cU, v \in \cV}  P_{U, V}(u, v) \cdot \lpmi_{\vec{X}; U}(\vec{x}, u) \cdot  \lpmi_{\vec{Y}; V}(\vec{y}, v)\right),
\end{align*}
where to obtain the last equality we have used the fact that
\begin{align*}
  \sum_{u \in \cU} P_U(u) \cdot \lpmi_{\vec{X}; U}(\vec{x}, u)
  = \sum_{v \in \cV} P_V(v) \cdot \lpmi_{\vec{Y}; V}(\vec{y}, v) = 0.  
\end{align*}

Therefore, we obtain
\begin{align*}
  \lpmi_{\vec{X}; \vec{Y}}(\vec{x}, \vec{y})
  &= \frac{P_{\vec{X}, \vec{Y}}(\vec{x}, \vec{y})}{P_{\vec{X}}(\vec{x})P_{\vec{Y}}(\vec{y})} - 1\\
  &=\sum_{u \in \cU, v \in \cV}  P_{U, V}(u, v) \cdot \lpmi_{\vec{X}; U}(\vec{x}, u) \cdot  \lpmi_{\vec{Y}; V}(\vec{y}, v)\\
  &= \E{\alpha(U)\beta(V)} \cdot \left[ \tanh\left(2 w  \cdot \varphi(\vec{x}) + b_U\right) - \tanh(b_U)\right]
    \cdot \left[\tanh\left(2 w  \cdot \varphi(\vec{y}) + b_V\right) - \tanh(b_V)\right],
\end{align*}
which gives \eqref{eq:seq:fg}.

It remains only to establish \eqref{eq:lpmi:xu:yv}. For symmetry, we consider only \eqref{eq:lpmi:xu}. To begin, 
by definition, we have
\begin{align*}
  P_{\vec{X}|U = u}(\vec{x}) &=  \frac12\cdot \prod_{i = 1}^{l - 1} \left[q_u^{(1 - \delta_{x_i x_{i + 1}})} (1 - q_u)^{\delta_{x_i x_{i + 1}}}\right],
\end{align*}
which implies
\begin{align*}
  \log P_{\vec{X}|U = u}(\vec{x}) = \log \frac12 + \log q_u\cdot \sum_{i = 1}^{l - 1} (1 - \delta_{x_i x_{i + 1}}) + \log (1 - q_u) \cdot \sum_{i = 1}^{l - 1}\delta_{x_i x_{i + 1}}.
\end{align*}

Therefore, we obtain
\begin{align*}
 \frac12 \log\frac{P_{\vec{X}|U = 1}(\vec{x})}{P_{\vec{X}|U = 0}(\vec{x})}
  &= \frac12 \left[\log{P_{\vec{X}|U = 1}(\vec{x})} - \log{P_{\vec{X}|U = 0}(\vec{x})}\right]\\
  &= \frac12 \left[\log \frac{q_1}{q_0}\cdot \sum_{i = 1}^{l - 1} (1 - \delta_{x_i x_{i + 1}}) + \log \frac{1 - q_1}{1 - q_0} \cdot \sum_{i = 1}^{l - 1}\delta_{x_i x_{i + 1}}\right]\\
  &=  \log \frac{q_1}{q_0} \cdot \left(\frac{l - 1}{2} -  \sum_{i = 1}^{l - 1}\delta_{x_i x_{i + 1}}\right)\\
  &= 2 w \cdot \varphi(\vec{x}).
\end{align*}

As a consequence, 
\begin{align*}
\frac{P_{\vec{X}|U = 1}(\vec{x}) - P_{\vec{X}|U = 0}(\vec{x})}{P_{\vec{X}}(\vec{x})}
  &=  \frac{P_{\vec{X}|U = 1}(\vec{x}) - P_{\vec{X}|U = 0}(\vec{x})}{P_U(1)\cdot P_{\vec{X}|U = 1}(\vec{x}) + P_U(0)\cdot P_{\vec{X}|U = 0}(\vec{x})}\\
  &= \frac1{P_U(0)} \cdot \frac{\frac{P_{\vec{X}|U = 1}(\vec{x})}{P_{\vec{X}|U = 0}(\vec{x})} - 1}{\frac{P_U(1)}{P_U(0)} \cdot\frac{P_{\vec{X}|U = 1}(\vec{x})}{P_{\vec{X}|U = 0}(\vec{x})} + 1}\\
  &= \frac{1}{2 P_U(0)P_U(1)} \cdot \left[ \frac{\frac{P_U(1)}{P_U(0)} \cdot\frac{P_{\vec{X}|U = 1}(\vec{x})}{P_{\vec{X}|U = 0}(\vec{x})} - 1}{\frac{P_U(1)}{P_U(0)} \cdot\frac{P_{\vec{X}|U = 1}(\vec{x})}{P_{\vec{X}|U = 0}(\vec{x})} + 1} - \frac{\frac{P_U(1)}{P_U(0)} - 1}{\frac{P_U(1)}{P_U(0)} + 1}
\right]\\
  &=  \frac{1}{2 P_U(0)P_U(1)}\cdot\left[
\tanh\left(2 w  \cdot \varphi(\vec{x}) + b_U\right)
-
\tanh(b_U)
    \right], 
\end{align*}
where to obtain the last equality we have used the fact that $  \frac{t - 1}{t + 1} = \tanh\left(\frac12 \log t\right)$.

Hence, with $u' = 1-u$, we have
\begin{align*}
\lpmi_{\vec{X}; U}(\vec{x}, u)
  &=\frac{P_{\vec{X}, U}(\vec{x}, u) - P_{\vec{X}}(\vec{x})P_U(u)}{P_{\vec{X}}(\vec{x}) P_U(u)}\\
  &=  \frac{P_{\vec{X}|U=u}(\vec{x}) - P_{\vec{X}}(\vec{x})}{P_{\vec{X}}(\vec{x})}\\
  &=  \frac{P_{\vec{X}|U=u}(\vec{x}) - P_U(u)P_{\vec{X}|U = u}(\vec{x}) - P_U(u')P_{\vec{X}|U = u'}(\vec{x})}{P_{\vec{X}}(\vec{x}) }\\ %
  &=  P_U(u') \cdot\frac{P_{\vec{X}|U = u}(\vec{x}) - P_{\vec{X}|U = u'}(\vec{x})}{P_{\vec{X}}(\vec{x})}\\%
  &=  P_U(u') \cdot (-1)^{u+1} \cdot\frac{P_{\vec{X}|U = 1}(\vec{x}) - P_{\vec{X}|U = 0}(\vec{x})}{P_{\vec{X}}(\vec{x})}\\
  &= \frac{ (-1)^{u+1}}{2 P_U(u)}\cdot\left[
\tanh\left(2 w  \cdot \varphi(\vec{x}) + b_U\right)
-
\tanh(b_U)
    \right], 
\end{align*}
which gives \eqref{eq:lpmi:xu} as desired, with $\alpha(u) =\frac{ (-1)^{u+1}}{2 P_U(u)}$.\hfill\qed

\section{Implementation Details of Experiments}
\label{app:exp}
We implement our experiments in Python 3 \citep{python3}, where we use 
the \texttt{PyTorch} \citep{paszke2019pytorch} library for neural network training and use the \texttt{Matplotlib} \citep{hunter2007matplotlib} library for plotting. We also make use of  \texttt{NumPy}  \citep{harris2020array} and \texttt{SciPy} \citep{virtanen2020scipy} for the computation. 

In the experiments, we apply Adam \citep{KingmaB14} as the optimizer with the default parameters: a learning rate of $10^{-3}$, $\beta_1 = 0.9, \beta_2 = 0.999$, and $\eps = 10^{-8}$. For each MLP (multilayer perceptron) used in the experiments, we set the activation function to be the softplus function $x \mapsto \log(1 + e^x)$, which are applied to all layers except the output layer.

It is worth mentioning that our choices of network architectures, optimizers and hyperparameters are not optimized with respect to the used data distributions. It is possible to further optimize such choices to improve the performance or convergence. 

\subsection{Learning Maximal Correlation Functions}
\label{app:exp:mc}
We first introduce the implementation details for \secref{sec:exp:mc}, where the goal is to learn maximal correlation functions for different data. The corresponding learning objective is the nested H-score \eqref{eq:H:cnest}, which are maximized during the training.

\subsubsection{Implementation of \secref{sec:exp:mc:discrete}}
\label{app:exp:mc:discrete}
We set $|\cX| = 8, |\cY| = 6$, and feature dimension $k = 3$. To generate the discrete distributions $P_{X, Y}$, we draw $(|\cX| \cdot |\cY|)$ i.i.d. numbers from $\Unif[0, 1]$ and divide each number by their sum. We then use the resulting $|\cX| \times |\cY|$ table as the values for the probability mass function $P_{X, Y}$. To ensure reproducible results, we set the random seed of \texttt{NumPy} to $20\,230\,606$ in the generating process.

Then, we generate $N = 30\,000$ training sample pairs of $(X, Y)$ from $P_{X, Y}$, then apply one-hot encoding such that the inputs are represented as $|\cX|$ and $|\cY|$ dimensional vectors. Then, we use two one-layer linear networks as the feature extractors $f$ and $g$. 

We train the networks with a minibatch size of 128 for 100 epochs. Then, we obtain the estimated $f_i^*$, $g_i^*$, and $\sigma_i$ by applying \eqref{eq:f:g:simga:i} and compare them with corresponding theoretical values, which we compute from the SVD of corresponding CDM matrix [cf. \eqref{eq:bt:def}], with the results shown in \figref{fig:exp:discrete}. Note that since $f_i^* \otimes g_i^* = (-f_i^*) \otimes (-g_i^*)$, both $(f_i^*, g_i^*)$ and $(-f_i^*, -g_i^*)$ are the optimal feature pairs. For the sake of presentation, we applied a sign modification before the plotting.

\subsubsection{Implementation of \secref{sec:exp:mc:cosine}}
\label{app:exp:mc:cosine}

In this experiment, we first generate $N = 50\,000$ samples of $(X, Y) \in \mathbb{R}^2$ for training, to learn $k = 2$ dimensional features $f$ and $g$. We use two MLPs of the same architecture as the feature extractors for $f$ and $g$. Specifically, each MLP is with three layers, where the dimensions for all intermediate features, from input to output, are: $\text{input} = 1$ -- 32 -- 32 -- $2 = \text{output}$. We then train the networks with a minibatch size of 256 for 100 epochs and use the learned features for estimation tasks, as demonstrated in \secref{sec:exp:mc:cosine}.

\subsubsection{Implementation of \secref{sec:exp:mc:seq}}
\label{app:exp:mc:seq}

In this experiment, we set $k = 1$. To extract $f$ and $g$ from input sequences $\vec{X}$ and $\vec{Y}$, we use one-dimensional convolutional neural networks as the feature extractors, which are used in sentence classifications \citep{kim-2014-convolutional, zhang2017sensitivity}. In particular, $f$ and $g$ are of the same architecture, composed of an embedding (linear) layer, a 1 dimensional convolutional layer, an average pooling layer, and a fully connected (linear) layer. 

We use feature extractor $f$ as an example to illustrate the processing of sequential data. First, we represent $\vec{x}$ sequence as a one-hot encoded list, i.e., each  $x_i \in \{(1, 0)^\T, (0, 1)^\T\}$. Then, the embedding layer maps each $x_i$ to a 4-dimensional vector. The one-dimensional convolutional layer then processes the length-$l$ list of embedded 4-dimensional vectors, by 32 convolutional kernels of size 4. We then activate the convolution results by the ReLU function $x \mapsto \max\{x, 0\}$. The output from each convolutional kernel is further averaged by the average pooling layer, leading to a  32 dimensional feature, with each dimension corresponding to a convolutional kernel.
 Finally, we feed the 32 dimensional feature to the fully connected layer and generate $k = 1$ dimensional output.

Then, we train the feature extractors $f$ and $g$ with a minibatch size of 128 for 100 epochs. The learned features are shown in \secref{sec:exp:mc:seq}.

\subsection{Learning With Orthogonality Constraints}
\label{app:exp:fbar}
In this experiment, we use the same dataset generated in \appref{app:exp:mc:cosine}. We set $\kb = k = 1$, i.e., we learn one-dimensional feature $f$ from $X$ orthogonal to given one-dimensional $\fb$. To this end, we use three MLPs of the same architecture as the feature extractors for $\gb, f, g$, with dimensions
$\text{input} = 1$ -- 32 -- 32 -- $1 = \text{output}$. We then train the networks with a minibatch size of 256 for 100 epochs to maximize the nested H-score restricted to $\fb = \phi$ [cf. \eqref{eq:Hsf:def}], for $\phi(x) = x$ and $\phi(x) = x^2$, respectively.

\subsection{Learning With Side Information}
\label{app:exp:side}
We set $|\cX| = 8, |\cS| = |\cY| = 3$, and generate $P_{X, S, Y}$ in a manner similar to \appref{app:exp:mc:discrete}, with the same random seed. 
Then, we generate $N = 50\,000$ training samples of $(X, S, Y)$ triples. In our implementation, we set $\kb = |\cS| - 1 = 2$ and $k = 1$, and set  feature extractors  $\fb \in \spcn{\cX}{2}, \gb \in \spcn{\cS}{2}$, $f \in \spc{\cX}$, $g \in \spc{\cS \times \cY}$ as corresponding one-layer linear networks with one-hot encoded inputs. In particular, we convert each $(s, y)$ to one unique one-hot vector of  in $\mathbb{R}^{|\cS|\cdot|\cY|}$ as the input to the network $g$. Then, we train these feature extractors on the training set with a minibatch size of 256 for 100 epochs, to maximize the nested H-score configured by $\cside$ [cf. \eqref{eq:cside:def}].

For comparison, we train a multihead network shown in \figref{fig:dnn:mh} on the same dataset, with the same minibatch size and epochs. The feature $f$ is again implemented by a one-layer linear network. In particular, we maximize the log-likelihood function \eqref{eq:likelihood:side:def} to learn the corresponding feature and weights. Then, we convert the weights to  $g \in \spc{\cS \times \cY}$ via the correspondence [cf.   \eqref{eq:def:G:b:side}] $g(s, y) = G_s(1, y)$.
The comparison between features learned from two approaches are shown in  \figref{fig:exp:multihead}. For the sake of presentation, we have normalized the features before plotting, such that $f$ and each $g(s, \cdot)$ are zero-mean, and unit variance with respect to $\Unif(\cX)$ and $\Unif(\cY)$, respectively. 

\subsection{Multimodal Learning With Missing Modalities}
\label{app:exp:mm}

\subsubsection{Implementation of \secref{sec:exp:mm:comp}}
\label{app:exp:mm:comp}
We first generate $N = 50\,000$ triples of $(X_1, X_2, Y)$ for training. In implementing the algorithm, we $\kb = k = 1$. To represent $\fb \in \spc{\cX}$, we set each $\fb^{(i)}$, $i \in \{1, 2\}$ as an MLP with 
dimensions $\text{input} = 1$ -- 32 -- 32 -- $1 = \text{output}$. To represent $f$, we use an MLP with dimensions $\text{input} = 2$ -- 32 -- 32 -- $1 = \text{output}$ with the input set to $X_1 \pplus X_2 = (X_1, X_2)^\T$. Since $Y$ is discrete, we use one-layer linear network as $\gb$ and $g$, where the inputs are one-hot encoded $Y$. Therefore, both $\gb$ and $g$ are with one linear layer, taking $|\cY| = 2$ dimensional input and outputting one-dimensional feature.

We then train these feature extractors on the training set with a minibatch size of 256 for 100 epochs, to maximize the nested H-score configured by $\cbi$ [cf. \eqref{eq:cbi:def}].
The learned features are then shown in \figref{fig:exp:mm:cosine}.

For the prediction problem with missing modality, we use two MLPs to represent conditional expectation operators $\phi_1 \defeq \opxn{1}(\fb^{(2)})$ and $\phi_2 \defeq \opxn{2}(\fb^{(1)})$, each with dimensions $\text{input} = 1$ -- 32 -- 32 -- $1 = \text{output}$. These two networks are optimized by minimizing the corresponding mean square error. For example, we train $\phi_1$ by minimizing
$\E{\left(\fb^{(2)}(X_2) - \phi_1(X_1)\right)^2}$ over all parameters in $\phi_1$ network. The training of $\phi_1, \phi_2$ is with  minibatch size of 256, and 100 epochs.

\subsubsection{Implementation of \secref{sec:exp:mm:pair}}
\label{app:exp:mm:pair}

We generate $N = 50\,000$ triples of $(X_1, X_2, Y)$ from \eqref{eq:cos:mm:dist} and \eqref{eq:cos:mm:cond:pair}, and adopt the  decomposition \eqref{eq:dcmp:triple} on each triple. This gives three pairwise datasets with samples of $(X_1, X_2)$, $(X_1, Y)$, $(X_2, Y)$, where each dataset has $N$ sample pairs. Then, we adopt the same setting of networks to represent one-dimensional $\fb^{(1)}, \fb^{(2)}$ and $g$.

We then train these feature extractors on the three datasets for 
 100 epochs with a minibatch size of 256, to maximize  $\Hs(\fb, \gb)$.
Here, we compute $\Hs(\fb, \gb)$ based on the minibatches from the three pairwise datasets according to \eqref{eq:Hs:fb:gb}. 

To solve the prediction problem with missing modality, we use the same network architectures and training settings to learn $\opxn{1}(\fb^{(2)})$ and $\opxn{2}(\fb^{(1)})$, as introduced in \appref{app:exp:mm:comp}.

\bibliography{ref}

\end{document}